\documentclass[11pt]{article}
\usepackage[margin=1in]{geometry}
% Required packages
\usepackage{amsmath,amssymb,amsfonts,cancel}
\usepackage{algorithm}
\usepackage[noend]{algpseudocode}
\usepackage{mathtools}
\usepackage{tikz}
\usepackage{dsfont}
\usepackage{subfigure}
\usepackage{xcolor}
\definecolor{edits}{rgb}{1,0,0}
\usepackage{caption}
\usepackage[colorlinks=true,breaklinks=true,bookmarks=true,urlcolor=magenta,citecolor=magenta,linkcolor=magenta,bookmarksopen=false,draft=false]{hyperref}

% When hyperref is used, otherwise outcomment 
         % When hyperref is used, otherwise outcomment 
% \usepackage{algorithmic}
%\usepackage[page,header]{appendix}
%\usepackage{placeins}
\usepackage{booktabs}
\usepackage{xspace}

\newcommand{\NN}{\mathbb{N}}
\newcommand{\RR}{\mathbb{R}}
\renewcommand{\S}{\mathcal{S}}
\renewcommand{\P}{\mathcal{P}}
\newcommand{\E}{\mathbb{E}}
\newcommand{\D}{\mathcal{D}}
\newcommand{\A}{\mathcal{A}}
\newcommand{\B}{\mathcal{B}}
\newcommand{\F}{\mathcal{F}}

\newcommand{\regret}{\textsc{Regret}}
\newcommand{\diam}[1]{\operatorname{diam}\left( #1 \right)}

\newcommand{\Ind}[1]{\mathds{1}_{\left[ #1 \right]}}

\newcommand{\Exp}[1]{\mathbb E \left[ #1 \right]} % Variance

\renewcommand{\Pr}{\mathbb{P}}
\newcommand{\pfail}{\delta}
\newcommand{\relevant}{\textsc{Relevant}}
\newcommand{\Qhat}[2]{\ensuremath{\overline{\mathbf{Q}}_{#1}^{#2}}}

\newcommand{\Vhat}[2]{\overline{\mathbf{V}}_{#1}^{#2}}
\newcommand{\Vtilde}[2]{\widetilde{\mathbf{V}}_{#1}^{#2}}
\newcommand{\gap}{\textsc{gap}}

\newcommand{\X}{\mathcal{X}}
\newcommand{\I}{\mathcal{I}}

\newcommand{\rbar}[2]{\overline{\mathbf{r}}_{#1}^{#2}}
\newcommand{\rhat}[2]{\hat{\mathbf{r}}_{#1}^{#2}}

\newcommand{\Tbar}[2]{\overline{\mathbf{T}}_{#1}^{#2}}
\newcommand{\rbonus}[2]{\textsc{Rucb}_{#1}^{#2}}
\newcommand{\tbonus}[2]{\textsc{Tucb}_{#1}^{#2}}
\newcommand{\bbonus}[2]{\textsc{Bias}}
\newcommand{\bias}{\textsc{Bias}}

\newcommand{\clip}[1]{\textsc{clip}\left[#1 \right]}
\newcommand{\lev}[1]{\ensuremath{\ell(#1)}}
\newcommand{\Pkh}[1][k]{\ensuremath{\mathcal{P}^{#1}_h}}
\newcommand{\nplus}[1]{\ensuremath{n_{+}(#1)}}
 % Variance
\newcommand{\frall}{\ensuremath{\,\forall\,}}
\newcommand{\AdaMB}{\textsc{AdaMB}\xspace}
\newcommand{\AdaQL}{\textsc{AdaQL}\xspace}
\newcommand{\EpsQL}{\textsc{EpsQL}\xspace}
\newcommand{\EpsMB}{\textsc{EpsMB}\xspace}
\newcommand{\dyad}[1]{\S(\P_{#1})}
\newcommand{\gam}{\ensuremath{\gamma}}
\newcommand{\conf}{\textsc{Conf}}
\newcommand{\confcons}{\widetilde{C}}
\newcommand{\conslip}{C_{L}}
\newcommand{\bcenter}{\textsc{Center}}

\ifdefined \argmax
\else \DeclareMathOperator*{\argmax}{arg\,max}
\fi
\ifdefined \argmin
\else \DeclareMathOperator*{\argmin}{arg\,min}
\fi

\usepackage[suppress]{color-edits}
\addauthor{sb}{magenta}
\addauthor{srs}{blue}
\addauthor{cy}{purple}
\addauthor{srstodo}{orange}

\mathchardef\mhyphen="2D % Define a "math hyphen"
\DeclarePairedDelimiter{\norm}{\lVert}{\rVert}

\ifdefined\informs \else \usepackage{amsthm} \fi
\usepackage[capitalize]{cleveref}

\newenvironment{rproof}[1]{ \ifdefined\informs \proof{Proof of #1.}
\else \begin{proof} \fi }{ \ifdefined\informs 
\endproof \else \end{proof} \fi  }

\crefname{assumption}{Assumption}{Assumptions}

\newtheorem{informaltheorem}{Informal Theorem}
\newcommand{\Halmos}{{}}
\usepackage[numbers]{natbib}

\allowdisplaybreaks

\newtheorem{theorem}{Theorem}
\numberwithin{theorem}{section}
\newtheorem{definition}[theorem]{Definition}

\newtheorem{lemma}[theorem]{Lemma}

\newtheorem{corollary}[theorem]{Corollary}

\newtheorem{assumption}{Assumption}

\begin{document}
	\title{Adaptive Discretization in Online Reinforcement Learning}
	\author{Sean R. Sinclair 
			\,\, Siddhartha Banerjee  
			\,\, Christina Lee Yu \\
			School of Operations Research and Information Engineering \\
			Cornell University}
	\date{}
	\maketitle

	\begin{abstract}
    Discretization based approaches to solving online reinforcement learning problems have been studied extensively on applications like resource allocation and cache management. The two major questions in designing discretization based algorithms are how to create the discretization and when to refine it.  There have been several experimental results investigating heuristic approaches to these questions but little theoretical treatment.  In this paper we provide a unified theoretical analysis of \emph{model-free} and \emph{model-based} tree-based adaptive hierarchical partitioning methods for online reinforcement learning.  We show how our algorithms take advantage of inherent problem structure by providing guarantees that scale with respect to the ``zooming dimension'' instead of the ambient dimension, an instance-dependent quantity measuring the benignness of the optimal $Q_h^\star$ function.

Many applications in computing systems and operations research require algorithms that compete on three facets: low sample complexity, mild storage requirements, and low computational burden for policy evaluation and training.  Our algorithms are easily adapted to operating constraints, and our theory provides explicit bounds across each of the three facets.
	\end{abstract}
	\newpage
	\setcounter{tocdepth}{2}
	\tableofcontents
	\newpage
	
\section{Introduction}
\label{sec:introduction}

Reinforcement learning (RL) is a popular approach for sequential decision-making and has been successfully applied to games~\citep{silver2016mastering} and systems applications~\citep{alizadeh2010dctcp}.  In these models, a principal interacts with a system that has stochastic transitions and rewards. The principal aims to control the system either online through exploring available actions using real-time feedback, or offline by exploiting known properties of the system and an existing dataset.

These sequential decision making problems have been considered across multiple communities.  As data has become more readily available and computing power improves, the new zeitgeist for these fields is developing \textit{data-driven decision algorithms}: algorithms which adapt to the structure of information, constraints, and objectives in any given domain.  This paradigm highlights the importance of taking advantage of data collected and inherent structure and geometry of the problem to help algorithms scale to complex domains.

With the successes of neural networks as a universal function approximator, RL has received a lot of interest in the design of algorithms for large-scale systems using parametric models~\citep{jiang2017contextual,mozur_2017}.
While these results highlight the power of RL in learning complex control policies, they are \emph{infeasible} for many applications arising in operations research and computing systems~\citep{hubbs2020or}.  As an example, the AlphaGo Zero algorithm that mastered Chess and Go from scratch was trained over seventy-two hours using four TPUs and sixty-four GPUs~\citep{silver2016mastering}.  The limiting factor in using these large scale parametric algorithms is implementing regression oracles or gradient steps on computing hardware, and the large storage burden in maintaining the models.  These issues don't typically arise in game-based or robotics applications.  However, they are key algorithmic ingredients which are ignored in theoretical treatment analyzing storage and time complexity.  Moreover, these models require strict parametric assumptions, suffer under model misspecification, and do not adapt to the underlying geometry of the problem.  

In contrast, RL has received interest in designing small-scale and efficient controllers for problems arising in operations research (OR) and computing systems, including memory systems~\citep{alizadeh2010dctcp} and resource allocation in cloud-based computing~\citep{10.1145/1394608.1382172}.  Their engineering approaches use discretizations at various levels of coarseness to learn estimates in a data-efficient manner.  Common to these examples are computation and storage limitations on the devices used for the controller, requiring algorithms to compete on three major facets: efficient learning, low computation, and low storage requirements, \textbf{a trifecta for RL in OR.}

Motivated by this paradigm we consider nonparametric discretization (or quantization) techniques which map complex problems to discrete ones.  These algorithms are based on simple primitives which are easy to implement on hardware, can leverage existing hardware quantization techniques, and have been tested heuristically in practice~\citep{pyeatt2001decision,7929968,uther1998tree,araujo2020single, araujo2020control}.  A key challenge in this approach is picking a discretization to manage the trade-off between the discretization error and the errors accumulated from solving the discrete problem.  Moreover, if the discretization is fixed a priori then the algorithm cannot \emph{adapt} to the underlying structure in the problem, and so adaptive discreitzations are necessary for instance specific gains.  We develop theoretical foundations for an adaptive discretization of the space, where the discretization is only refined on an \textit{as needed} basis using \textit{collected data}.  We answer the two important aspects of designing adaptive discretization algorithms: how to create the discretization and when to refine it, by exploiting the metric structure induced by specific problem instances.

\subsection{Our Contributions}

We provide a unified analysis of \AdaMB and \AdaQL, \textit{model-based} and \textit{model-free} algorithms that discretize the state action space in a data-driven way so as to minimize regret.  This extends the well-studied adaptive discretization techniques seen in contextual multi-armed bandits to dynamic environments \citep{Kleinberg:2019:BEM:3338848.3299873,slivkins_2014}.  Moreover, it illustrates that adaptive discretization can be viewed as an all-purpose tool for improving fixed discretization algorithms to better take advantage of problem structure.

These algorithms require that the state and action spaces are embedded in compact metric spaces, and the problem primitives are Lipschitz continuous with respect to this metric.  This encompasses discrete and continuous state action spaces with mild assumptions on the transition kernel and rewards. Our algorithms only requires access to the metric, unlike prior nonparametric algorithms which require access to simulation oracles or impose additional assumptions on the action space to be computationally efficient.  In fact, the assumption that $|\A| < \infty$ is quite common in theoretical treatment for simplicity.  However, this ignores the technical and computational hurdles required.  Our algorithms avoid this issue through the efficient discretization of the action space.

We show that \AdaMB and \AdaQL achieve near optimal dependence of the regret on the \emph{zooming dimension} of the metric space, an instance dependent quantity which measures the intrinsic complexity and geometry of the problem by scaling with the dimension of the level sets of the optimal $Q_h^\star$ function instead of the ambient dimension.  Our main result is summarized in the following informal theorem.
\begin{informaltheorem}
For an $H$-step MDP played over $K$ episodes, our algorithms achieve regret:
\begin{align*}
        \regret(K) & \lesssim \begin{cases}
        	\AdaQL : & H^{5/2} K^{\frac{z_{max}+1}{z_{max}+2}} \\
            \AdaMB: & H^{3/2}K^{\frac{z_{max}+d_\S - 1}{z_{max}+d_\S}} \quad d_S > 2 \\
            \AdaMB: & H^{3/2}K^{\frac{z_{max}+ 1}{z_{max}+2}} \quad d_S \leq 2
                \end{cases}
\end{align*}
where $d_\S$ is the covering of the state space and $z_{max} = \max_{h} z_h$ is the worst-case over the step $h$ zooming dimensions $z_h$ (see Definition~\ref{def:zooming}).  
\end{informaltheorem}

Our bounds are uniformly better in terms of dependence on $K$ and $H$ than the best existing bounds for nonparametric RL (see \cref{tab:comparison_of_bounds}).  Our bounds exhibit explicit dependence on the zooming dimension instead of the ambient dimension, leading to exponential improvements in sample complexity since the zooming dimension is trivially upper bounded by the ambient dimension.  In addition, \AdaQL matches the lower bound up to polylogarithmic factors, while \AdaMB suffers from additional $d_\S$ terms when $d_\S > 2$.  In general, the lower bound shows that this exponential scaling is necessary in nonparametric settings (a fundamental trade-off of using nonparametric algorithms), but note that the exponential scaling is with respect to the zooming dimension instead of ambient dimensions.  We also show that \AdaMB matches the bounds of \AdaQL under additional assumptions of the transition distribution.

\begin{table}[!tb]

\setlength\tabcolsep{0pt} % let LaTeX compute intercolumn whitespace
\centering
\begin{tabular*}{\columnwidth}{@{\extracolsep{\fill}}lcccc}
\toprule
  Algorithm  & Type & Regret & Time & Space \\
\midrule
  \AdaMB $(d_\S > 2)$ & MB & $H^{3/2}K^{\frac{z_{max} + d_\S - 1}{z_{max}+d_\S}}\;\;\;$ & $HK^{\frac{d+2d_\S}{d+d_\S}}$ & 
  $HK$ \\
    \AdaMB $(d_\S \leq 2)$ & MB & $H^{3/2}K^{\frac{z_{max} + 1}{z_{max}+2}}$ & $HK^{\frac{d+d_\S + 2}{d+d_\S}}$ & 
  $HK^{\frac{d+d_\S}{d+2}}$ \\
  \AdaQL & MF & $H^{5/2}K^{\frac{z_{max} + 1}{z_{max}+2}}$ & $HK\log_d(K)$  & $HK^{\frac{d}{d+2}}$\\
  % \textsc{UCCRL-Kernel Density}~\citep{lakshmanan2015improved} & $HK^{(2d+1)/(2d+2)}$ & ?? & ?? \\
  \textsc{Kernel UCBVI}~\citep{domingues2020regret} & MB & $H^3\;\;\;\,K^{\frac{2d}{2d+1}}$ & $HAK^2$ & $HK$ \\
  \textsc{Net-Based $Q$-Learning}~\citep{song2019efficient} & MF &  $H^{5/2}K^{\frac{d+1}{d+2}}$ & $HK^2$ & $HK$ \\
  \textsc{Net-Based UCBVI} & MB & $H^{3/2}K^{\frac{2d+1}{2d+2}}$ & $H^2K^2$ & $HK$ \\
\midrule
  \textsc{Lower Bound} & N/A & $H\;\;\;\;K^{\frac{z_{max} + 1}{z_{max}+2}}$ & - & - \\
\bottomrule
\end{tabular*}
\caption{\em Comparison of our bounds with several state-of-the-art bounds for nonparametric RL.  $d$ is the covering dimension of the state-action space, $d_\S$ is the covering dimension of the state space, $H$ is the horizon of the MDP, and $K$ is the total number of episodes.  Under {Type} we denote whether the algorithm is model-based (MB) or model-free (MF).  As implementing \textsc{Kernel UCBVI}~\citep{domingues2020regret} is unclear under general action spaces, we specialize the time complexity under a finite set of actions of size $A$, but more details are included in their paper.  See \cref{app:full_algo} for a discussion on \textsc{Net-Based UCBVI}.}
\label{tab:comparison_of_bounds}
\end{table}

In addition to having lower regret, \AdaMB and \AdaQL are also simple and practical to implement, with low query complexity and storage compared to other techniques (see~\cref{tab:comparison_of_bounds}).  To the best of our knowledge, our algorithms are the first to have provably sublinear regret with improved time and storage complexity in the setting of continuous state and action MDPs.  We complement our theory with synthetic experiments comparing model-free and model-based algorithms using both fixed and adaptive discretizations.  We picked experiments of varying complexity in low-dimensional spaces, including those with provably smaller zooming dimensions, to help highlight the adaptive discretizations matching the level sets of the underlying $Q_h^\star$ value.  Through our experiments we measure the three aspects of the \textbf{trifecta for RL in OR}, comparing the performance of these algorithms in terms of regret, time complexity, and space complexity. Our experiments show that with a fixed discretization, model-based algorithms outperform model-free ones while suffering from worse storage and computational complexity.  However, when using an adaptive partition of the space, model-based and model-free algorithms perform similarly.  
\subsection{Motivating Examples}
\label{sec:examples}

Reinforcement learning has enjoyed remarkable success in recent years in large scale game playing and robotics.  These results, however, mask the high underlying costs in terms of computational resources, energy costs, training time, and hyperparameter tuning that their demonstrations require \citep{silver2017mastering,mnih2016asynchronous,mnih2013playing,silver2016mastering}.  On the other hand, RL has been applied heuristically in the following problems:

\medskip \noindent \textbf{Memory Management}: Many computing systems have two sources of memory: on-chip memory which is fast but limited, and off-chip memory which has low bandwidth and suffers from high latency.  Designing memory controllers for these system requires a scheduling policy to adapt to changes in workload and memory reference streams, ensuring consistency in the memory, and controlling for long-term consequences of scheduling decisions \citep{alizadeh2010dctcp,alizadeh2013pfabric,chinchali2018cellular}.
	
\medskip \noindent \textbf{Online Resource Allocation}: Cloud-based clusters for high performance computing must decide how to allocate computing resources to different users or tasks with highly variable demand.  Controllers for these algorithms make decisions online to manage the trade-offs between computation cost, server costs, and delay in job-completions~\citep{10.1145/1394608.1382172,lykouris2018competitive,nishtala2013scaling,tessler2021reinforcement}. 

\medskip Common to these examples are computation and storage limitations on the devices used for the controller.
\begin{enumerate}
	\item \textit{Limited Memory}: As any RL algorithm requires memory to store estimates of relevant quantities, algorithms for computing systems must manage their storage requirements so frequently needed estimates are stored in on-chip memory.
	\item \textit{Power Consumption}: Many applications require low-power consumption for executing RL policies on general computing platforms. 
	\item \textit{Latency Requirements}: Problems for computing systems (e.g. memory management) have strict latency quality of service requirements that limits reinforcement learning algorithms to execute their policy quickly.
\end{enumerate}

A common technique in these domains is cerebellar model articulation controllers (CMACs) or other quantization and hashing based methods, which have been used in optimizing controllers for dynamic RAM access~\citep{10.1145/1394608.1382172,lykouris2018competitive,nishtala2013scaling}.  The CMAC technique uses a random discretization of the space at various levels of coarseness combined with hashing.  The other approaches are hierarchical decision trees, where researchers have investigated splitting heuristics for refining the adaptive partition by testing their empirical performance~\citep{pyeatt2001decision,7929968,uther1998tree}.  These quantization based algorithms address the computation and storage limitations by allowing algorithm implementations to exploit existing hashing and caching techniques for memory management, since the algorithms are based on simple look-up tables.  Our algorithms are motivated by these approaches, taking a first step towards designing theoretically efficient reinforcement learning algorithms for continuous spaces.

\subsection{Related Work}
\label{sec:related_work}

There is an extensive and growing literature on reinforcement learning, below we highlight the work which is closest to ours, but for more extensive references see~\cite{sutton2018reinforcement,agarwal2019reinforcement,puterman_1994,powell2019reinforcement} for RL, and~\cite{bubeck2012regret,aleks2019introduction} for bandits.

\medskip

\noindent \textbf{Tabular RL}:  There is a long line of research on the regret for RL in tabular settings.  In particular, the first asymptotically tight regret bound for tabular model-based algorithms with non-stationary dynamics was established to be $O(H^{3/2} \sqrt{SAK})$ where $S$ and $A$ are the size of the state and action spaces respectively~\citep{azar2013minimax}.
These bounds were matched (in terms of $K$) using an ``asynchronous value-iteration'' (or one-step planning) approach~\citep{azar2017minimax,efroni2019tight}, which is simpler to implement~\citep{10.1145/1394608.1382172,lykouris2018competitive,nishtala2013scaling,tessler2021reinforcement}.  This regret bound was also matched (in terms of $K$) for model-free algorithms~\citep{jin_2018}. More recently, the analysis was extended to develop instance dependent bounds as a function of the variance or shape of the underlying $Q_h^\star$ function~\citep{zanette2019tighter,simchowitz2019}.  Our work extends this latter approach to continuous spaces using adaptive discretization to obtain instance specific guarantees scaling with the zooming dimension instead of the ambient dimension of the space.

\medskip

\noindent\textbf{Parametric Algorithms}:  For RL in continuous spaces, several recent works have focused on the use of linear function approximation~\citep{jin2019provably, du2019provably,zanette2019limiting, wang2019optimism,wang2020provably,osband2014model,chen2021estimating}.  These works assume that the principal has a feature-extractor under which the process is well-approximated by a linear model. In practice, these algorithms require an initial ``feature engineering'' process to learn features under which the problem is linear, and the guarantees then hinge upon a perfect construction of features.  If the requirements are violated, it has been shown that the theoretical guarantees degrade poorly~\citep{du2019good}.  Other work has extended this approach to problems with bounded eluder dimension and other notions of dimension of parametric problems~\citep{wang2020provably,russo2013eluder}.

\medskip

\noindent\textbf{Nonparametric Algorithms}: In contrast, nonparametric algorithms only require mild local assumptions on the underlying process, most commonly that the $Q$-function is Lipschitz continuous with respect to a given metric.  For example,~\cite{yang2019learning} and~\cite{shah2018q} consider nearest-neighbour methods for deterministic, infinite horizon discounted settings.  
Others assume access to a generative model~\citep{kakade2003exploration,henaff2019explicit,shah2020sample}. 

The works closest to ours concerns online algorithms for finite horizon problems with continuous state action spaces (see also \cref{tab:comparison_of_bounds}).
In model-free settings, tabular algorithms have been adapted to continuous state-action spaces via fixed discretization (i.e. $\epsilon$-nets)~\citep{song2019efficient}. In model-based settings, researchers have tackled continuous spaces using kernel methods based on either a fixed discretization of the space \citep{lakshmanan2015improved}, or with smooth kernel functions~\citep{domingues2020regret}. While the latter learns a data-driven representation of the space, it requires solving a complex optimization problem over actions at each step, and hence is efficient mainly for finite action sets (more discussion on this is in \cref{sec:main_results}).
Finally, adaptive discretization has been successfully implemented and analyzed in model-free and model-based settings~\citep{Sinclair_2019,cao2020provably,sinclair2020adaptive}.  This work serves as a follow up providing a unified analysis between the two approaches, improved performance guarantees for \AdaMB, instance dependent guarantees scaling with the zooming dimension instead of the ambient dimension, and additional numerical simulations.

\medskip

\noindent\textbf{Discretization Based Approaches}: Discretization-based approaches to reinforcement learning have been explored heuristically in different settings.  One line of work investigates adaptive basis functions, where the parameters of the functional model (e.g. neural network) are learned online while simultaneously adapting the basis functions~\citep{keller2006automatic,menache2005basis,whiteson2006evolutionary}.  Similar techniques are done with soft state aggregation~\citep{singh1995reinforcement}.  Most similar to our algorithm, though, are tree based partitioning rules, which store a hierarchical partition of the state and action space which is refined over time~\citep{pyeatt2001decision,7929968,uther1998tree}.  These were tested heuristically with various splitting rules (e.g. Gini index, etc), where instead our algorithm splits based off the metric and statistical uncertainty in the estimates.  Researchers have also extended our adaptive discretization techniques to using a single partition in infinite horizon time-discounted settings, and the algorithm was benchmarked on various control tasks in \textsc{Open AI} showing comparative performance between discretization and deep learning techniques~\citep{araujo2020control,araujo2020single}.

\subsection{Outline of Paper}

Section~\ref{sec:preliminary} introduced the model, nonparametric assumptions, and the zooming dimension.  Our algorithms, \AdaQL and \AdaMB, are described in Section~\ref{sec:algorithm} with the regret bound and proof sketch given in Section~\ref{sec:main_results} and Section~\ref{sec:proof_sketch} respectively.  Proof details are included in \cref{sec:concentration,sec:partition,sec:optimism,sec:clipping,sec:regret_decomp,sec:lp_bound}, with miscellaneous technical results in \cref{app:technical_details}.  The experimental results are in~\cref{sec:experiments}.

\section{Preliminaries}
\label{sec:preliminary}

\subsection{MDP and Policies}

We consider the online episodic reinforcement learning setting, where an agent is interacting with an underlying finite-horizon Markov Decision Process (MDP) over $K$ sequential episodes, denoted $[K] = \{1, \ldots, K\}$\citep{puterman_1994}.

\begin{definition}

An \textbf{Episodic Markov Decision Process} (MDP) is given by a five-tuple $(\S, \A, H, T, R)$ where the horizon $H$ is the number of steps indexed $[H] = \{1,2,\ldots,H\}$ in each episode, and $(\S,\A)$ denotes the set of states and actions in each step.  State transitions are governed by a collection of transition kernels $T = \{T_h(\cdot \mid x,a)\}_{h \in [H], x \in \S, a \in \A}$, where $T_h(\cdot \mid x, a) \in \Delta(\S)$ gives the distribution over states in $\S$ if action $a$ is taken in state $x$ at step $h$.  The instantaneous rewards are bounded in $[0,1]$, and their distributions are specified by a collection of parameterized distributions $R = \{R_h\}_{h \in [H]}$, $R_h : \S \times \A \rightarrow \Delta([0, 1])$. We let $r_h(x,a) = \mathbb{E}_{r \sim R_h(x,a)}[r]$ denote the mean reward.
\end{definition}

The agent interacts with the MDP by selecting a policy, where a policy $\pi$ is a sequence of distributions $\pi = \{ \pi_h \mid h \in [H] \}$ where each $\pi_h : \S \rightarrow \Delta(\A)$ is a mapping from a given state $x \in \S$ to a distribution over actions in $\A$.

\subsection{Value Function and Bellman Equations}

For any policy $\pi$, let $A^{\pi}_h$ denote the (potentially random) action taken in step $h$ under policy $\pi$, i.e., $A^{\pi}_h = \pi_h(X_h^k)$.
\begin{definition}
We define the \textbf{policy value function} at step $h$ under policy $\pi$ to be the expected sum of future rewards under policy $\pi$ starting from $X_h = x$ in step $h$ until the end of the episode, which we denote $V_h^\pi: \S \rightarrow \RR$. Formally,
%\vspace*{-6pt}
\begin{align}
    V_h^\pi(x) := \Exp{\textstyle\sum_{h'=h}^H R_{h'} ~\Big|~ X_h = x} ~~\text{ for }~~ R_{h'} \sim R_{h'}(X_{h'},A_{h'}^{\pi}).
\end{align}
We define the \textbf{state-action value function} (or $Q$-function) $Q_h^\pi : \S \times \A \rightarrow \RR$ at step $h$ as the sum of the expected rewards received after taking action $A_h = a$ at step $h$ from state $X_h = x$, and then following policy $\pi$ in all subsequent steps of the episode. Formally,
%\vspace*{-6pt}
\begin{align}
    Q_h^\pi(x,a) := r_h(x,a) + \Exp{\textstyle\sum_{h'=h+1}^H R_{h'} ~\Big|~ X_{h+1} \sim T_h\left(\cdot \mid x,a\right)} ~~\text{ for }~~ R_{h'} \sim R_{h'}(X_{h'},A_{h'}^{\pi}).
\end{align}
\end{definition}
Under suitable assumptions on $\S \times \A$ there exists a deterministic optimal policy $\pi^\star$ which gives the optimal value $V_h^\star(x) = \sup_\pi V_h^\pi(x)$ for all $x \in \S$ and $h \in [H]$~\citep{puterman_1994}. For ease of notation we denote $Q^\star = Q^{\pi^\star}$ and $V^\star = V^{\pi^\star}$. 
Recall the Bellman equations~\citep{puterman_1994} which state that,
\begin{align}
\label{eqn:bellman_equation}
V_{h}^{\pi}(x) & =Q_{h}^{\pi}\left(x, \pi_{h}(x)\right) &\frall x\in \S\nonumber\\ 
Q_{h}^{\pi}(x, a) &= r_{h}(x, a) + \Exp{V_{h+1}^{\pi}(X_{h+1}) \mid X_h = x, A_h = a} &\frall (x,a)\in \S\times\A \\ 
V_{H+1}^{\pi}(x) & =0 &\frall x \in \S \nonumber.
\end{align}
For the optimal policy $\pi^{\star}$, it additionally holds that $V_h^\star(x) = \max_{a \in \A} Q_h^\star(x,a).$

\noindent \textbf{Online Interaction Structure}: We consider an agent interacting with the MDP in the online setting.  At the beginning of each episode $k$, the agent fixes a policy $\pi^k$ for the entire episode, and is given an initial (arbitrary) state $X_1^k \in \S$. In each step $h \in [H]$, the agent receives the state $X_h^k$, picks an action $A_h^k = \pi^k_h(X_h^k)$, receives reward $R_h^k \sim R_h(X_h^k,A_h^k)$, and transitions to a random state $X_{h+1}^k \sim T_h\left(\cdot \mid X_h^k, A_h^k\right)$ sampled from the transition distribution.  
This continues until the final transition to state $X_{H+1}^k$, at which point the agent chooses policy $\pi^{k+1}$ for the next episode after incorporating observed data (including the rewards and transitions), and the process is repeated.  Their goal is to maximize the total expected reward $\sum_{k=1}^K V_1^{\pi^k}(X_1^k)$. 
We benchmark the agent on their {regret}: the additive loss over all episodes the agent experiences using their policy instead of the optimal one.  As the policies cannot be anticipatory, we introduce $\F_k = \sigma((X_h^{k'}, A_h^{k'}, R_h^{k'})_{h \in [H], k' < k,}, X_1^k)$ to denote the information available to the decision maker at the start of episode $k$.
\begin{definition}
The $\regret$ for an algorithm that deploys a sequence of $\F_k$-measurable policies $\{\pi^k\}_{k \in [K]}$ given a sequence of initial states $\{X_1^k\}_{k \in [K]}$ is defined as:
%\vspace*{-6pt}
\begin{align}
\label{equation:regret}
\regret(K) = \textstyle\sum_{k=1}^K \left( V_1^\star(X_1^k) - V_1^{\pi^k}(X_1^k) \right).
\end{align}
\end{definition}
Our goal will be to develop algorithms which have regret $\regret(K)$ growing sublinearly in $K$, and low per-step storage and computational requirements.

\subsection{Metric Space and Lipschitz Assumptions}
\label{sec:assumptions}

In contrast to parametric algorithms, our algorithms require flexible assumptions on the underlying process.  At a high-level, we require the algorithm to have access to a metric on the state action space under which the underlying $Q_h^\star$ function (or rewards and dynamics) are Lipschitz continuous.  This is well motivated in problems in continuous domains (where the metric can be taken to be any $\ell_p$ metric on the Euclidean space).  For large discrete spaces, it requires an embedding of the space in a metric space that encodes meaningful relationships between the discrete states and actions~\citep{10.1145/1394608.1382172}.  Trivially any problem can be embedded in a metric space where the metric is taken to be the difference of $Q_h^\star$ values, but this requires knowing the optimal values for determining the adaptive partition.  Recent work has investigated the options of selecting a metric in terms of its induced topological structure on the space~\citep{lan2021metrics}.

We assume the state space $\S$ and the action space $\A$ are each separable compact metric spaces with metrics $\D_\S$ and $\D_\A$. We assume that the transition kernels $\{T_h(\cdot \mid x,a)\}_{x,a \in \S \times \A}$ are Borel measures with respect to the topology induced by $\D_\S$ on $\S$.  These metrics imposes a metric structure $\D$ on $\S \times \A$ via the product metric, or any sub-additive metric such that $\D((x,a), (x',a')) \leq \D_\S(x,x') + \D_\A(a,a').$     

The covering dimension of a compact metric space $\X$ is defined as $d_\X = \min \{ d > 0 : N_r(\X) \leq cr^{-d} \,\, \forall r > 0 \}$ with $N_r(\X)$ as the $r$-packing number of the set $\X$.  This metric structure on $\S \times \A$ ensures that the covering dimension of $\S \times \A$ is at most $d = d_\S + d_\A$, where $d_\S$ and $d_\A$ are the covering dimensions of $\S$ and $\A$ respectively.  For notational brevity we will omit which metric the packing numbers are computed with respect to, as it should be clear from the context.

We assume w.l.o.g. that $\S\times\A$ has diameter $1$, and we denote the diameter of $Y \subset \S$ as $\D(Y) = \sup_{a\in\A,(x,y)\in Y^2}\D((x,a),(y,a))$ and overload notation and use $\diam{B}=\max\{\D((x,a),(y,b)) \mid (x,a),(y,b)\in B\}$ to be the diameter of a region $B \subset \S \times \A$.  For more information on metrics and covering dimension, see \cite{slivkins_2014,Kleinberg:2019:BEM:3338848.3299873, Sinclair_2019,royden1988real} for a summary.

To motivate the discretization approach, we also assume Lipschitz structure on the system.  This can come in two forms, which we call \emph{model-free Lipschitz} and \emph{model-based Lipschitz} (required for \AdaQL and \AdaMB respectively).  We start with \emph{model-free Lipschitz}, which assumes the underlying $Q_h^\star$ and $V_h^\star$ functions are Lipschitz continuous:
\begin{assumption}[Model-Free Lipschitz]
	\label{assumption:Lipschitz_mf}
	$Q_h^\star$ and $V_h^\star$ are Lipschitz continuous with respect to $\D$ and $\D_\S$, i.e. for every $x,x',a,a',h \in \S^2 \times \A^2 \times [H]$,
	\begin{align*}
		|Q_h^\star(x,a) - Q_h^\star(x', a')| & \leq L_V \D((x,a), (x',a')) \\
		|V_h^\star(x) - V_h^\star(x')| & \leq L_V \D_\S(x, x').
	\end{align*}
\end{assumption}

The next assumption, \emph{model-based Lipschitz}, puts Lipschitz assumptions on the underlying rewards and dynamics of the system.
\begin{assumption}[Model-Based Lipschitz]
\label{assumption:Lipschitz_mb}
    The average reward function $r_h(x,a)$ is Lipschitz continuous with respect to $\D$, and the transition kernels $T_h(\cdot \mid x,a)$ are Lipschitz continuous in the $1$-Wasserstein metric $d_W$ with respect to $\D$,i.e. for every $x,x',a,a',h \in \S^2 \times \A^2 \times [H]$,
    \begin{align*}
        |r_h(x,a) - r_h(x', a')| & \leq L_r \D((x,a), (x',a')) \\
        d_W(T_h(\cdot \mid x,a), T_h(\cdot \mid x',a')) & \leq L_T \D((x,a),(x',a')).
    \end{align*}
    %We further assume that $Q_h^\star$ and $V_h^\star$ are also $L_V$-Lipschitz continuous for some constant $L_V$.  
\end{assumption}

It's important to note that \cref{assumption:Lipschitz_mb} implies \cref{assumption:Lipschitz_mf}.  %See \cite{lan2021metrics} for further discussion relating the Lipschitz models.

\begin{lemma}
\label{lem:relation_lipschitz}
Suppose that \cref{assumption:Lipschitz_mb} holds.  Then \cref{assumption:Lipschitz_mf} holds with $L_V = \sum_{h=0}^{H} L_r L_T^{h}$.
\end{lemma}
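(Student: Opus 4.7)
The natural approach is backward induction on $h$, running the recursion from $h = H+1$ down to $h = 1$ and tracking the Lipschitz constant of $V_h^\star$ at each step. Define $L_{V,h}$ to be the smallest constant such that $V_h^\star$ is $L_{V,h}$-Lipschitz with respect to $\D_\S$, and $L_{Q,h}$ the analogous constant for $Q_h^\star$ with respect to $\D$. The plan is to establish the recurrence $L_{Q,h} \le L_r + L_T L_{V,h+1}$ and $L_{V,h} \le L_{Q,h}$, with boundary condition $L_{V,H+1} = 0$, and then unroll.

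The base case is immediate since $V_{H+1}^\star \equiv 0$. For the inductive step, fix $h$ and pairs $(x,a),(x',a') \in \S\times\A$. By the Bellman equation,
\begin{align*}
Q_h^\star(x,a) - Q_h^\star(x',a') = \bigl(r_h(x,a) - r_h(x',a')\bigr) + \bigl(\E_{y\sim T_h(\cdot\mid x,a)}[V_{h+1}^\star(y)] - \E_{y\sim T_h(\cdot\mid x',a')}[V_{h+1}^\star(y)]\bigr).
\end{align*}
The first difference is controlled directly by \cref{assumption:Lipschitz_mb}, giving $L_r \D((x,a),(x',a'))$. For the second, I would invoke the Kantorovich--Rubinstein dual representation of the $1$-Wasserstein distance: since $V_{h+1}^\star/L_{V,h+1}$ is $1$-Lipschitz by the inductive hypothesis, the difference of expectations is at most $L_{V,h+1}\, d_W(T_h(\cdot\mid x,a),T_h(\cdot\mid x',a')) \le L_{V,h+1} L_T \D((x,a),(x',a'))$. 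Combining yields $L_{Q,h} \le L_r + L_T L_{V,h+1}$.

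To pass from $Q_h^\star$ back to $V_h^\star$, use $V_h^\star(x) = \max_{a} Q_h^\star(x,a)$. For any $x,x' \in \S$, if $a_x^\star$ attains the max at $x$, then $V_h^\star(x) - V_h^\star(x') \le Q_h^\star(x,a_x^\star) - Q_h^\star(x',a_x^\star) \le L_{Q,h}\D((x,a_x^\star),(x',a_x^\star))$, and by the sub-additivity of the product metric this last quantity equals $L_{Q,h}\D_\S(x,x')$ (the action coordinate coincides). A symmetric argument gives the reverse bound, so $L_{V,h} \le L_{Q,h}$. Unrolling the recurrence $L_{V,h} \le L_r + L_T L_{V,h+1}$ with $L_{V,H+1}=0$ yields $L_{V,h} \le \sum_{j=0}^{H-h} L_r L_T^{j}$, which is uniformly upper bounded by $\sum_{h=0}^{H} L_r L_T^{h}$ as claimed.

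The only mildly subtle point is the Wasserstein/Kantorovich--Rubinstein step, which requires the base space for $T_h$ to be a Polish metric space (guaranteed by the separability and compactness assumption on $\S$) so that the dual characterization applies; everything else is a routine two-line argument. The max-to-Lipschitz step also quietly uses sub-additivity of $\D$ on $\S\times\A$, which was assumed explicitly in \cref{sec:assumptions}.
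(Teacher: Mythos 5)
Your proposal is correct and follows essentially the same route as the paper: backward induction via the Bellman equation, bounding the reward term by $L_r$, the transition term via Kantorovich--Rubinstein duality applied to the rescaled (1-Lipschitz) value function together with the Wasserstein-Lipschitz assumption, and passing from $Q_h^\star$ to $V_h^\star$ through the max and the sub-additive product metric (the paper's Lemma~\ref{lemma:Lipschitz}). The only cosmetic difference is that you track the constants as an explicit recurrence $L_{V,h} \le L_r + L_T L_{V,h+1}$ rather than carrying the partial sum $\sum_{i=0}^{H-h} L_r L_T^i$ through the induction hypothesis.
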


The next assumption is similar to previous literature for algorithms in general metric spaces \citep{Kleinberg:2019:BEM:3338848.3299873,slivkins_2014,Sinclair_2019}.  This assumes access to the similarity metrics.  Learning the metric (or picking the metric) is important in practice, but beyond the scope of this paper \citep{wanigasekara2019nonparametric,lan2021metrics}.

\begin{assumption}
The agent has oracle access to the similarity metrics via several queries that are used by the algorithm.
\end{assumption}

In particular, \AdaMB and \AdaQL require access to several covering and packing oracles that are used throughout the algorithm.  For more details on the assumptions required and implementing the algorithm in practice, see \cref{app:full_algo}.

\subsection{Zooming Dimension}
\label{sec:zooming_dim}

Our theoretical guarantees scale with respect to an instance dependent \emph{zooming dimension} of $\S \times \A$ instead of the ambient dimension.  This serves as an analog to the zooming dimension originally appearing in instance dependent bounds in the bandit literature~\citep{slivkins_2014} extended to dynamic settings, and a continuous analog to instance dependent guarantees developed for RL in the tabular setting~\citep{simchowitz2019}.  Analyzing the zooming dimension for reinforcement learning problems is much more technical than in the simpler bandit setting due to having to account for the dynamics of the problem.  We start by introducing the concept of a $\gap$, a quantity measuring the suboptimality of a given state action pair $(x, a) \in \S \times \A$.
\begin{definition}\label{def:gap}
For any $(x,a) \in \S \times \A$ and $h \in [H]$, the \textbf{stage dependent sub optimality gap} is $\gap_h(x,a) = V_h^\star(x) - Q_h^\star(x,a).$
\end{definition}
One can interpret $\gap_h(x,a)$ as a measure of \emph{regret} the algorithm experiences upon taking action $a$ in state $x$ in step $h$ instead of the optimal action.  This definition simplifies to the same definition of $\gap$ developed in the contextual bandit literature when the transition distribution of the problem is independent of the given state and action.

In bandits, existing results have shown that adaptive discretization algorithms only discretize a subset of the entire state and action set, defined as the set of points whose gap is small.  While we will later see in \cref{sec:proof_sketch} that the same does not extend to reinforcement learning, we are still able to bound the regret of the algorithm based on the size of a set of near optimal points.

\begin{definition}
We define the \textbf{near-optimal set} of $\S \times \A$ for a given value $r$ as $$Z_h^r = \{(x,a) \in \S \times \A \mid \gap_h(x,a) \leq C_L(H+1) r \}$$
where $\conslip$ is an absolute constant depending on the Lipschitz constants of the problem.
\end{definition}

Clearly we have that $Z_h^r \subset \S \times \A$.  However, for many problem instances, $Z_h^r$ could be a (much) lower dimensional manifold.  Finally, we are define the step $h$ zooming dimension as
\begin{definition}
\label{def:zooming}
The \textbf{step $h$ zooming dimension} with constant $c_h$ is $z_h$ such that $$z_h = \inf\{ d > 0 : N_r(Z_h^r) \leq c_h r^{-d} \,\, \forall r > 0\}.$$

We also denote $z_{max} = \max_{h \in [H]} z_h$ to be the worst-case zooming dimension across all of the steps.
\end{definition}

To give some intuition behind the zooming dimension first notice that while the covering dimension is focused on covering the entire metric space, the zooming dimension focuses instead on covering a near-optimal subset of it.  This serves as a way to quantify the benignness of a problem instance.  While it is trivially no larger than the zooming dimension, in many settings it can be significantly smaller.  % As some examples:

\begin{lemma}\label{lem:zooming_dim_value}
The following examples show improved scaling of the zooming dimension over the ambient dimension:
\begin{itemize}
    \item {Linear $Q_h^\star$:} Suppose that $Q_h^\star(x,a) = \theta^\top (x,a)$ for some vector $\theta \in \mathbb{R}^{d_\S + d_\A}$ with $\S \subset \mathbb{R}^{d_\S}$ and $\A \subset \mathbb{R}^{d_\A}$ under any $\ell_p$ norm.  Then $z_h \leq d_\S + d_\A - \norm{\theta_{\A}}_0$.
    \item {Low-Dimensional Optimality:} Suppose that there exists a set $Y \subset \A$ which contains all optimal or near-optimal actions for every state.  Then $z_h \leq d_\S + d_Y$.  
    \item {Strongly Concave:} Suppose that the metric space is $\S = [0,1]^{d_\S}$ and $\A = [0,1]^{d_\A}$ under any $\ell_p$ metric.  If $Q_h^\star(x,a)$ is $C^2$ smooth, and for all $x \in \S$ we have that $Q_h^\star(x, \cdot)$ has a unique maxima and is strongly concave in a neighborhood around the maxima, then $z_h \leq d_\S + \frac{d_\A}{2}.$
\end{itemize}
\end{lemma}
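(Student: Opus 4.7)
The plan in each of the three parts is to directly bound the $r$-packing number of the near-optimal set
\[ Z_h^r = \{(x,a) \in \S \times \A : V_h^\star(x) - Q_h^\star(x,a) \le \conslip(H+1)r\} \]
by a constant times $r^{-d}$ for the target value of $d$, from which $z_h \le d$ follows immediately from \cref{def:zooming}. Values of $r$ above some fixed threshold can be absorbed into the constant $c_h$, so I restrict attention to small $r$. The unifying idea is that in each case $Z_h^r$ factorizes (possibly after an $O(r)$-inflation) as the product of $\S$ with a low-dimensional subset of $\A$ determined by the structure of $Q_h^\star$, after which a product-metric covering argument finishes the job.

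For the \emph{linear} case, separability of $Q_h^\star$ yields a state-independent maximizer $a^\star \in \argmax_{b \in \A} \theta_\A^\top b$, so $\gap_h(x,a) = \theta_\A^\top(a^\star - a)$ depends only on $a$ and $Z_h^r = \S \times B_r$ with $B_r = \{a : \theta_\A^\top(a^\star - a) \le \conslip(H+1)r\}$. I split the action coordinates by the support of $\theta_\A$: on the $d_\A - \|\theta_\A\|_0$ coordinates where $(\theta_\A)_i = 0$, $B_r$ is unconstrained, while on each of the remaining $\|\theta_\A\|_0$ coordinates the constraint forces $|a_i - a_i^\star| \le \conslip(H+1)r/|(\theta_\A)_i| = O(r)$. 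Thus $B_r$ embeds in a product of $[0,1]^{d_\A - \|\theta_\A\|_0}$ with a box of side $O(r)$, giving $N_r(B_r) \lesssim r^{-(d_\A - \|\theta_\A\|_0)}$ and hence $N_r(Z_h^r) \lesssim r^{-(d_\S + d_\A - \|\theta_\A\|_0)}$.

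For \emph{low-dimensional optimality}, Lipschitz continuity of $Q_h^\star$ (\cref{assumption:Lipschitz_mf}) ensures any $O(r)$-suboptimal action lies within distance $O(r)$ of $Y$, so $Z_h^r$ is contained in $\S$ times an $O(r)$-neighborhood of $Y$; that neighborhood has $r$-packing number at most a constant times $r^{-d_Y}$ (any $r$-packing of it lifts to an $O(r)$-packing of $Y$), so $N_r(Z_h^r) \lesssim r^{-(d_\S + d_Y)}$. For the \emph{strongly concave} case, local strong concavity and $C^2$ smoothness at the unique maximizer $a^\star(x)$, combined with compactness of $\S$, give a uniform quadratic lower bound $\gap_h(x,a) \gtrsim \|a - a^\star(x)\|^2$ on some uniformly-sized neighborhood of $a^\star(x)$, with gap bounded below by a positive constant outside that neighborhood. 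For small $r$ this forces $(x,a) \in Z_h^r \Rightarrow \|a - a^\star(x)\| \le O(\sqrt r)$. I then cover $\S$ by $O(r^{-d_\S})$ balls of radius $r$; within each such ball around $x_0$, Lipschitz continuity of $V_h^\star$ and $Q_h^\star$ transfers near-optimality at $x$ to near-optimality at the center $x_0$ (losing only $O(r)$), which by strong concavity at $x_0$ confines the relevant $a$ to the ball $B(a^\star(x_0), O(\sqrt r))$; that ball is $r$-covered by $O(r^{-d_\A/2})$ points, giving $N_r(Z_h^r) \lesssim r^{-(d_\S + d_\A/2)}$.

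The hard part will be the strongly concave case: I need to pass from the pointwise $C^2$ and strong-concavity hypotheses to \emph{uniform} constants (both the strong-concavity modulus and the neighborhood radius around $a^\star(x)$), which I plan to do by combining compactness of $\S$ with continuity in $x$ of the Hessian of $Q_h^\star$, and I also need to handle the possibility that $a^\star(x)$ lies on the boundary of $[0,1]^{d_\A}$, where strong concavity still provides the quadratic lower bound but only along feasible directions and must be combined with first-order conditions. The linear and low-dimensional cases are then essentially bookkeeping once their structural consequences---state-independence of the gap and containment of near-optimal actions, respectively---are unwound.
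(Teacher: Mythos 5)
Your overall strategy coincides with the paper's: in each bullet you bound $N_r(Z_h^r)$ by exhibiting a product cover, using the state-independent maximizer in the linear case, containment of near-optimal actions in a low-dimensional set in the second case, and an $O(\sqrt{r})$ action-ball per state in the strongly concave case. The linear and strongly concave arguments are sound and give exactly the paper's counting ($O(r^{-d_\S})$ state points times $O(r^{-(d_\A-\norm{\theta_\A}_0)})$, respectively $O(r^{-d_\A/2})$, action points). In the strongly concave case your transfer-to-the-center step (moving near-optimality from $x$ to the covering point $x_0$ via Lipschitzness of $V_h^\star$ and $Q_h^\star$, then invoking strong concavity at $x_0$) replaces the paper's appeal to Lipschitz continuity of the argmax map, and your insistence on making the strong-concavity modulus and neighborhood radius uniform over $\S$ via compactness, and on treating boundary maximizers, addresses points the paper's proof leaves implicit.

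The one step that does not hold as stated is in the low-dimensional optimality bullet: Lipschitz continuity of $Q_h^\star$ does \emph{not} imply that every $O(r)$-suboptimal action lies within $O(r)$ of $Y$; Lipschitzness gives the forward direction (actions near $Y$ are near-optimal), not the converse, and a nearly flat $Q_h^\star(x,\cdot)$ has actions far from the optimizer with arbitrarily small gap. Fortunately the step is unnecessary: the hypothesis, as the paper uses it, says $Y$ itself contains all near-optimal actions, i.e.\ there is an $r_{thresh}$ such that $(x,a)\in Z_h^r$ with $r \le r_{thresh}$ forces $a \in Y$, so $Z_h^r \subseteq \S \times Y$ directly and $N_r(Z_h^r) \lesssim r^{-(d_\S + d_Y)}$ with the constant absorbing $r_{thresh}$; your neighborhood-packing bookkeeping can be dropped. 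One shared caveat: your coordinate-wise confinement in the linear case, like the paper's inequality $\theta_\A^\top(a^\star - a) \ge \norm{\theta_\A}_{min}\norm{a^\star - a}$, implicitly uses that each term $(\theta_\A)_i(a_i^\star - a_i)$ is nonnegative (true, e.g., for box-shaped $\A$), an assumption both you and the paper leave unstated.
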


Analyzing the zooming dimension in reinforcement learning is more complicated than in bandit settings as you have to show properties of the $Q_h^\star$ function which is coupled by the dynamics of the system.  In the experiments in \cref{sec:experiments} we highlight problem instances with improved bounds on the zooming dimension.

Note that all of the examples presented in \cref{lem:zooming_dim_value} have $z_h \geq d_\S$.  This is as the zooming dimension does not take into account the distribution over states which are visited by the optimal policy.  As such, scaling with respect to $d_\S$ is inevitable since the set $\{(x, \pi_h^\star(x)) : x \in \S\}$ is contained in $Z_h^r$ for any $r > 0$.  This results in the following lower bound on the zooming dimension:
\begin{lemma}
\label{lem:zoom_d_s}
For any $h$ we have that $z_h \geq d_\S - 1$.
\end{lemma}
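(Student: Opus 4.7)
The plan is to exploit the fact, already hinted at in the paragraph preceding the lemma, that the graph of any deterministic optimal policy $G_h := \{(x, \pi_h^\star(x)) : x \in \S\}$ is a subset of $Z_h^r$ for every $r > 0$. Indeed, by the Bellman optimality relation $V_h^\star(x) = \max_{a \in \A} Q_h^\star(x,a)$, we have $\gap_h(x, \pi_h^\star(x)) = V_h^\star(x) - Q_h^\star(x, \pi_h^\star(x)) = 0$, so each such pair trivially satisfies the defining inequality $\gap_h(x,a) \leq C_L (H+1) r$ of $Z_h^r$. Consequently $N_r(Z_h^r) \geq N_r(G_h)$ for every $r$.

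The next step is to lower bound $N_r(G_h)$ by $N_r(\S)$ via a projection argument. Under the product-metric structure used in the paper, the projection $(x,a) \mapsto x$ is $1$-Lipschitz, that is $\D((x,a),(y,b)) \geq \D_\S(x,y)$. Hence given any $r$-packing $\{x_1, \dots, x_N\}$ of $\S$ in $\D_\S$, the lifted set $\{(x_i, \pi_h^\star(x_i))\}_{i=1}^N \subset G_h$ is automatically $r$-separated in $\D$, proving $N_r(G_h) \geq N_r(\S)$. Chaining yields the central inequality
\begin{equation*}
N_r(Z_h^r) \;\geq\; N_r(G_h) \;\geq\; N_r(\S) \qquad \text{for all } r > 0.
\end{equation*}

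The conclusion follows directly from the definition of the covering dimension. Since $d_\S$ is the infimum of $d$ for which there exists a constant $c$ with $N_r(\S) \leq c\, r^{-d}$ holding uniformly in $r$, for every $d < d_\S$ and every $c > 0$ there exists some $r > 0$ with $N_r(\S) > c\, r^{-d}$. Substituting into the chain above shows that no $d < d_\S$ can lie in the set $\{d > 0 : \exists c_h,\; N_r(Z_h^r) \leq c_h r^{-d}\ \forall r\}$ defining $z_h$, giving $z_h \geq d_\S$ and in particular the stated bound $z_h \geq d_\S - 1$.

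The main obstacle is the projection step: it requires $\D$ to dominate $\D_\S$ coordinate-wise, which holds for any $\ell_p$ product metric but is not formally guaranteed by the sub-additive-metric assumption $\D \leq \D_\S + \D_\A$ alone. In the proof I would either restrict attention to the product metric (the default case in the paper) or note that the mild additional property $\D \geq \D_\S$ is satisfied by every natural metric considered. A secondary technical point is the existence of a measurable deterministic optimal policy $\pi_h^\star$, which the paper assumes holds under standard regularity of $\S \times \A$.
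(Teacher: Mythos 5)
Your proposal is correct and follows essentially the same route as the paper's own proof: the graph $\{(x,\pi_h^\star(x)) : x \in \S\}$ lies inside $Z_h^r$ for every $r>0$, its packing number dominates $N_r(\S)$, and the definition of the covering dimension then forces the lower bound on $z_h$. If anything your execution is slightly cleaner at the two delicate points — you lift a packing of $\S$ to the graph using the domination $\D \geq \D_\S$ (the paper instead ``drops the action coordinate'' from a packing of $\S^{opt}$, which implicitly needs the analogous metric comparison), and your final step argues directly from the infimum definition to get $z_h \geq d_\S$, of which the stated $z_h \geq d_\S - 1$ (whose extra slack is what lets the paper absorb any mismatch of constants) is a weaker consequence.
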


Even in the simpler contextual multi-armed bandit model the zooming dimension necessarily scales with the dimension of the context space, regardless of the support or mass over the context space the context distribution places.  While analytically we cannot show gains in the state space dimension, we see empirically in \cref{sec:experiments} that the algorithms only \emph{cover the state space in regions the optimal policy visits}, but it is unclear how to include this intuition formally in the definition.  Revisiting new notions of ``instance specific'' complexity is an interesting direction for future work in both tabular and continuous RL.
\begin{algorithm*}[t]

	\begin{algorithmic}[1]
		\Procedure{Adaptive Discretization for Online RL}{$\S, \A, \D, H, K, \pfail$}
			\State Initialize partitions $\P_h^0 = \S\times\A$ for $h\in[H]$, estimates $\Qhat{h}{0}(\cdot) = \Vhat{h}{k}(\cdot) = H-h+1$
			\For{each episode $k \gets 1, \ldots K$}
				\State Receive starting state $X_1^k$
				\For{each step $h \gets 1, \ldots, H$}
					\State Observe $X_h^k$ and determine $\relevant_h^k(X_h^k) = \{B\in \P_h^{k-1} \mid X_h^k\in B \}$
					\State Greedy \textsc{Selection Rule}: 
					%$$B_h^k = \argmax_{B \in \text{RELEVANT}_h^k(X_h^k)} \Qhat{h}{k}(B)$$
					pick $B_h^k = \argmax_{B \in \text{RELEVANT}_h^k(X_h^k)} \Qhat{h}{k-1}(B)$
                    \State Play action $A_h^k = \tilde{a}(B_h^k)$ associated with ball $B_h^k$; receive $R_h^k$ and transition to  $X_{h+1}^k$
					%\State Receive reward $R_h^k$ and transition to new state $X_{h+1}^k$
				\State \textsc{Update Estimates}$(X_h^k, A_h^k, X_{h+1}^k, R_h^k, B_h^k)$ via \AdaMB or \AdaQL
					\If{$\conf_h^k(B_h^k) \leq \diam{B_h^k}$}
					  \textproc{Refine Partition}$(B_h^k)$
					\EndIf 
				\EndFor
			\EndFor
		\EndProcedure
	\Procedure{Refine Partition}{$B$, $h$, $k$}
		    \State Construct $\P(B) = \{B_1, \ldots, B_m\}$ as the children of $B$ in the hierarchical partition
		    \State Update  $\P_h^k =\P_h^{k-1} \cup \P(B) \setminus B$ 
		    \State For each $B_i$, initialize estimates from parent ball
		    %(i.e., inhering from the parent ball).
		 \EndProcedure
	\Procedure{Update Estimates (\AdaMB)}{$X_h^k, A_h^k, X_{h+1}^k, R_h^k, B_h^k$}
		\For{each $h \gets 1, \ldots H$ and $B \in \P_h^k$}
		 : Update $\Qhat{h}{k}(B)$ and $\Vhat{h}{k}(\cdot)$ via~\cref{eq:q_update} and~\cref{eq:def-V}
		\EndFor
	\EndProcedure
	
	\Procedure{Update Estimates (\AdaQL)}{$X_h^k, A_h^k, X_{h+1}^k, R_h^k, B_h^k$}
		\State Update $\Qhat{h}{k}(B)$ via \cref{eqn:update}
	\EndProcedure
	\end{algorithmic}
	\caption{Adaptive Discretization for Online Reinforcement Learning (\AdaMB, \AdaQL)}
	\label{alg:brief}
\end{algorithm*}

\section{Algorithm}
\label{sec:algorithm}

Reinforcement learning algorithms come in two primary forms: policy-based learning or value-based learning. Policy-based learning focuses on directly iterating on the policy used in episode $k$ $\pi^k$ through optimizing over a set of candidate policies~\citep{pmlr-v130-bhandari21a}. Our algorithms use value-based learning, which focuses on constructing estimates  $\Qhat{h}{k}$ for $Q_h^\star$.  The algorithms then play the policy $\pi_h^k$ which is greedy with respect to the estimates, i.e. $$\pi_h^k(X_h^k) = \argmax_{a \in \A} \Qhat{h}{k-1}(X_h^k,a).$$

The motivation behind this approach is that the optimal policy plays $\pi_h^\star(X_h^k) = \argmax_{a \in \A} Q_h^\star(X_h^k,a)$.  The hope is that when $\Qhat{h}{}$ is uniformly close to $Q_h^\star$, then the value of the policy used will be similar to that of the optimal policy.  What distinguishes between different value-based algorithms is the method used to construct the estimates for $Q_h^\star$. We provide both model-based and model-free variations of our algorithm.  Note that for both algorithms we set $\Vhat{h}{k}(x) = \max_{a \in \A} \Qhat{h}{k}(x,a)$.
\begin{enumerate}
    \item \textit{Model-Based} (\AdaMB): Estimates the MDP parameters directly ($r_h$ and $T_h$) and plugs in the estimates to the Bellman Equations~\cref{eqn:bellman_equation} to set $$\Qhat{h}{k}(x, a) \approx \rbar{h}{k}(x, a) + \E_{Y \sim \Tbar{h}{k}(x, a)}[\Vhat{h+1}{k}(Y)]$$
    where $\rbar{}{}$ and $\Tbar{}{}$ denote explicit empirical estimates for the reward and transition kernel.  
    \item \textit{Model-Free} (\AdaQL): Foregoes estimating the MDP parameters directly, and instead does one-step updates in the Bellman Equations~\cref{eqn:bellman_equation} to obtain $\Qhat{h}{k}(x,a) \approx (1 - \alpha_t) \Qhat{h}{k-1}(x,a) + \alpha_t(R_h^k + \Vhat{h+1}{k}(X_{h+1}^{k}))$.  By decomposing the recursive relationship, you get  $$\Qhat{h}{k}(x, a) \approx \sum_{i=1}^t \alpha_t^i(R_h^{k_i} + \Vhat{h+1}{k_i}(X_{h+1}^{k_i}))$$ where $t$ is the number of times $(x,a)$ has been visited and $k_1, \ldots, k_t$ denote the episodes it was visited before, $\alpha_t$ is the learning rate, and $\alpha_t^i = \alpha_i \prod_{j = i+1}^t (1 - \alpha_j)$.  Note that this instead stores \emph{implicit} estimates of the average reward (weighted by the learning rate) and the transition kernel.  The key difference is that the estimate of the next step is not updated based on the current episode $k$ as $\Vhat{h+1}{k}(\cdot)$, but instead is updated based on the episode it was visited in the past by $\Vhat{h+1}{k_i}(\cdot)$. As a result the learning rates are chosen to impose \emph{recency bias} for the estimates~\citep{jin_2018}.  If $\Vhat{h+1}{k}(\cdot)$ was used the algorithm would need to store all of the data and recompute these quantities, leading to regret improvements only in logarithmic terms.  We will see later that this approach leads to substantial time and space complexity improvements.
\end{enumerate}

\noindent \textbf{Algorithm Description}: We now present our Model-Based and Model-Free Reinforcement Learning algorithms with Adaptive Partitioning, which we refer to as \AdaMB and \AdaQL respectively.  Our algorithms proceed in the following four steps:
\begin{enumerate}
    \item \textit{(Adaptive Partition)}: The algorithms maintain an adaptive hierarchical partition of $\S \times \A$ for each step $h$ which are used to represent the (potentially continuous) state and action space.  The algorithms also maintain estimates $\Qhat{h}{}$ and $\Vhat{h}{}$ for $Q_h^\star$ and $V_h^\star$ over each region which are constructed based on collected data.
    \item \textit{(Selection Rule)}: Upon visiting a state $X_h^k$ in step $h$ episode $k$, the algorithms pick a region containing $X_h^k$ which maximizes the estimated future reward $\Qhat{h}{k-1}$.  They then play any action contained in that region.
    \item \textit{(Update Estimates)}: After collecting data $(X_h^k, A_h^k, R_h^k, X_{h+1}^k)$ with the observed reward and transitions, the algorithms update the estimates $\Qhat{h}{k}$ and $\Vhat{h}{k}$ which are used for the next episode.
    \item \textit{(Splitting Rule)}: After observing data and updating the estimates, the algorithm determines whether or not to sub-partition the current region.  This is a key algorithmic step differentiating adaptive discretization algorithms from their uniform discretization counterparts.  Our splitting rule arises by splitting a region once the confidence in its estimate is smaller than its diameter, forming a bias-variance trade off.
\end{enumerate}

For a discussion on implementation details, see \cref{app:full_algo}.  We now describe the four steps of the algorithms in more detail.

\begin{figure*}[!t]
\centering
\subfigure[Illustrating the state-action partitioning scheme]{\label{fig:partition_diagram}
  \resizebox{.62\columnwidth}{!}{\input{parts/partitionexampledetailed}}
}
\subfigure[Partitioning in practice]{\label{fig:partition_practice}\includegraphics[width=.34\columnwidth]{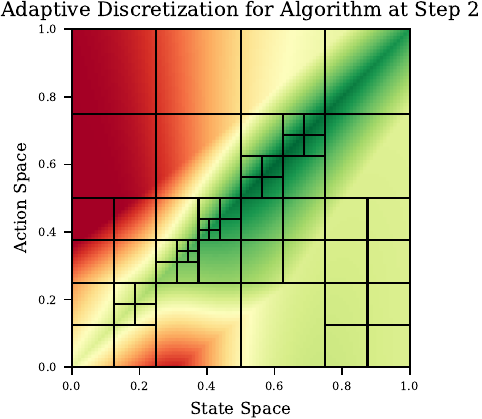}}
\caption{\em Partitioning scheme for $\S\times\A=[0,1]^2$: In \cref{fig:partition_diagram}, we illustrate our scheme. Partition $\Pkh[k-1]$ is depicted with corresponding tree (showing active balls in green, inactive parents in red). The algorithm plays ball $B_{h-1}^k$ in step $h-1$, leading to new state $X_h^k$. Since $\lev{B_{h-1}^k}=2$, in \AdaMB we store transition estimates $\Tbar{h-1}{k}(\cdot \mid B_{h-1}^k)$ for all subsets of $\S$ of diameter $2^{-2}$ denoted as $\S_{2}$ (depicted via dotted lines). The set of relevant balls $\relevant_h^k(X_h^k) = \{B_4,B_{21},B_{23}\}$ are highlighted in blue.  $\S(\P_h^{k-1})$ here would be $\{[0,\frac{1}{2}],[\frac{1}{2}, \frac{3}{4}],[\frac{3}{4}, 1]\}$.\\
In \cref{fig:partition_practice}, we show the partition $\P_{2}^K$ from one of our synthetic experiments. The colors denote the true $Q_2^\star$ values, with green corresponding to higher values. Note that the partition is more refined in areas which have higher $Q_2^\star$.}
\label{fig:partition}
\end{figure*}

\medskip

\subsection{Adaptive Partition}

We first start by introducing the adaptive hierarchical partitions which are used by the algorithm to efficiently discretize the state and action space for learning and computation~\citep{slivkins_2014,Kleinberg:2019:BEM:3338848.3299873,munos2014bandits}.  For each step $h \in [H]$, \AdaMB and \AdaQL maintains a partition of the space $\S \times \A$ into a collection of ``balls'', which is refined over episodes $k \in [K]$.  We use the term ball loosely here, as in general the regions form a \emph{non}-disjoint partition of $\S \times \A$.  However, in our diagrams we will use the $\ell_\infty$ metric on $\mathbb{R}^{2}$ under which the metric-``balls'' indeed form a discrete partition of the space.  In particular, we assume that the algorithms are given access to a hierarchical partition of $\S \times \A$ of the following form:
\begin{definition}
\label{def:partition}
A \textbf{hierarchical partition} of $\S \times \A$ is a collection of disjoint regions $\P_{\ell}$ for levels $\ell = 0, \ldots, K$ such that
\begin{itemize}
    \item Each region $B \in \P_\ell$ is of the form $\S(B) \times \A(B)$ where $\S(B) \subset \S$ and $\A(B) \subset \A$
    \item $\P_0 = \{ \S \times \A \}$
    \item For every $\ell$ we have that $\S \times \A = \cup_{B \in \P_\ell} B$
    \item For every $\ell$ and $B \in \P_\ell$ we have $\diam{B} \leq 2^{- \ell}$
    \item For any two regions $B_1, B_2 \in \P_\ell$ their centers are at a distance at least $2^{-\ell}$ from each other
    \item For each $\ell$ and $B \in \P_\ell$ there exists a unique $A \in \P_{\ell - 1}$ (referred to as the parent of $B$) such that $B \subsetneq A$
\end{itemize}
\end{definition}
While one can construct a hierarchical partition for any compact metric space, a canonical example to keep in mind is  $\S=[0,1]^{d_\S},\A=[0,1]^{d_\A}$ with the infinity norm $\D((x,a),(x',a')) = ||(x,a)-(x',a')||_{\infty}$. Here one can simply take $\P_\ell$ to be the dyadic partition of $\S \times \A$ into regions of diameter $2^{-\ell}$.  Moreover, each hierarchical partition admits a tree, whereby the root corresponds to the node representing $\S \times \A$ and there is a node for each region $B \in \P_\ell$ with an edge to their unique parent which is the region $A \in \P_{\ell - 1}$ such that $B \subsetneq A$.

Our algorithm maintains an adaptive partition $\P_h^k$ over the hierarchical partition for each step $h$ and $k$ which can be thought of as a sub-tree of the original hierarchical partition.  Originally we set $\P_h^0 = \{\S \times \A\}$ for every $h$.  Over time, the partition is refined by adding new nodes to the sub-tree originating from the hierarchical partition.  The leaf nodes represent the \emph{active} balls, and inactive \emph{parent} balls of $B\in\Pkh$ corresponding to $\{B'\in\Pkh[k] \mid B'\supset B\}$; moreover, $\lev{B}$ is the depth of $B$ in the tree (with the root at level $0$). See Figure~\ref{fig:partition} for an example partition and tree generated by the algorithm.  We let $\bcenter(B) = (\tilde{x}(B), \tilde{a}(B))$ be the center of $B$.  

\AdaMB requires the induced state partition of the adaptive partition when maintaining estimates of the value function, and a hierarchical partition over the state space for maintaining estimates of the transition kernel.  We use the following notation
\begin{align}
\S(\P_h^k) := \bigcup_{B \in \P_h^k ~\text{s.t.}~ \nexists B' \in \P_h^k, \S(B') \subset \S(B)} \S(B) \quad \text{ and } \quad \dyad{\ell} = \{\S(B) \mid B \in \P_\ell\}\label{eq:induced_state_partition_def}
\end{align}
to denote the partition over the state spaced induce by the current state-action partition $\P_h^k$ and the hierarchical partition over the state space at level $\ell$.  For example, in \cref{fig:partition_diagram} we have a representation of $\S(\P_h^{k-1})$ and $\S_2$.

\subsection{Selection Rule}

Upon visiting state $X_h^k$ in step $h$ of episode $k$, the algorithms find all of the relevant balls defined via $\relevant_h^k(X_h^k) = \{\text{ active } B\in\Pkh[k-1] \mid (X_h^k,a)\in B \text{ for some } a \in \A\}$.  The algorithm then selects the \textbf{selected ball}: $$B_h^k = \argmax_{B \in \relevant_h^k(X_h^k)} \Qhat{h}{k-1}(B).$$
Once the selected ball $B_h^k$ has been chosen, the algorithm plays either action $\tilde{a}(B_h^k)$, or any distribution over actions $a$ such that $(X_h^k, a) \in B_h^k$.

\subsection{Maintaining Estimates}

At a high level, for each active ball $B \in \P_h^k$, our algorithms maintain the following statistics:
\begin{itemize}
    \item $n_h^k(B)$: the number of times the ball $B$ or its ancestors in the tree have been \textit{selected} up to and including episode $k$
    \item $\Qhat{h}{k}(B)$: an estimate for $Q_h^\star(x,a)$ for points $(x,a) \in B$.
\end{itemize}
In addition, \AdaMB also maintains estimates of the rewards and transitions via:
\begin{itemize}
    \item $\rbar{h}{k}(B)$: the empirical reward earned from playing actions in $B$ and its ancestors
    \item $\Tbar{h}{k}(\cdot \mid B)$: the empirical fractions of transitions from playing actions in $B$ and its ancestors to sets in a $2^{-\lev{B}}$-coarse partition of $\S$ formed by taking the projection of the hierarchical partition to the state space (which we denoted as $\dyad{\lev{B}}$).
\end{itemize}

\noindent \textbf{Update Counts}: After playing action $A_h^k$ in state $X_h^k$ at step $h$, the algorithm transitions to a new state $X_{h+1}^k$ and observes a reward $R_h^k$. Each algorithm then updates the number of samples collected from the selected ball $B_h^k$ via $n_h^k(B_h^k) = n_h^{k-1}(B_h^k) + 1$.  In addition, \AdaMB updates the estimates for the average reward and transition as follows:
\begin{itemize}
\item Update average reward: $\rbar{h}{k}(B_h^k) = \frac{n_h^{k-1}(B_h^k)\rbar{h}{k-1}(B_h^k) + R_h^k}{n_h^{k}(B_h^k)}.$
\item Update $\Tbar{h}{k}(\cdot \mid B_h^k)$ as follows: For each set $A$ in a $2^{-\lev{B_h^k}}$-coarse partition of $\S$ denoted by $\dyad{\lev{B_h^k}}$, we set
\[\Tbar{h}{k}(A\mid B_h^k) = \frac{n_h^{k-1}(B_h^k)\Tbar{h}{k-1}(A\mid B_h^k)+\mathds{1}_{\{X_{h+1}^k\in A\}}}{ n_h^{k}(B_h^k)}.\]
This is maintaining an empirical estimate of the transition kernel for a ball $B_h^k$ at a level of granularity proportional to its diameter.  We use this to ensure that the error of the estimates for a ball $B$ is proportional to its diameter, but this also serves as an efficient compression of the data collected so that the algorithm isn't required to store all collected samples.  Other model-based algorithms forego this step, suffering from added computational and storage burden by maintaining the full empirical transition kernel~\citep{domingues2020regret,shah2018q}.
\end{itemize}

With this in place, we now describe how \AdaMB and \AdaQL use these statistics in order to generate estimates for $\Qhat{h}{k}(B)$ over the partition.

\subsubsection{Compute Estimates - \AdaMB}

We start by defining confidence terms according to:

\begin{align*}
    & \rbonus{h}{k}(B) = \sqrt{\frac{2\log(2HK^2/\pfail)}{n_h^k(B)}} \quad
    \tbonus{h}{k}(B) = \begin{cases}
    L_V \left(4 \sqrt{\frac{\log(2HK^2 / \pfail)}{n_h^k(B)}} + c \left(n_h^k(B)\right)^{-1/d_\S}\right) &\text{ if } d_\S > 2 \\
    L_V \left(4 \sqrt{\frac{\log(2HK^2/\pfail)}{n_h^k(B)}} + c \frac{\log(K)}{\sqrt{n_h^k(B)}} \right) &\text{ if } d_\S \leq 2 \end{cases} \\
    & \bias(B) =  4 L_r \diam{B} + L_V (5 L_T + 4) \diam{B}.
\end{align*}
where $c$ is an absolute constant and the difference in definitions of $\tbonus{h}{k}(\cdot)$ comes from the dimension dependence in Wasserstein concentration.  The first term corresponds to the uncertainty in the rewards, the second to the uncertainty in the dynamics of the environment, and the third the biases in the ball due to aggregation.  With these in place we compute the estimate $\Qhat{h}{k}(B)$ according to
\begin{align}
\label{eq:q_update}
    \Qhat{h}{k}(B) & := \begin{cases}
    & \rbar{h}{k}(B) + \rbonus{h}{k}(B) + \bias(B) \hfill \text{ if } h = H \\
    & \rbar{h}{k}(B) + \rbonus{h}{k}(B) + \E_{A \sim \Tbar{h}{k}(\cdot \mid B)}\big[\Vhat{h+1}{k}(A)\big]
		+ \tbonus{h}{k}(B) + \bias(B)\hfill \text{ if } h < H\end{cases}
\end{align}
The value function estimates are computed in a two-stage process.  We first define $\Vtilde{h}{k}(A)$ over the balls $A \in \S(\P_h^k)$ according to
\begin{align}
\label{eq:v_tilde_update}
    \Vtilde{h}{k}(A) := \min\{\Vtilde{h}{k-1}(A), \max_{B \in \P_h^k: \S(B) \supseteq A} \Qhat{h}{k}(B) \}.
\end{align}
We define $\Vhat{h}{k}(x)$ to extrapolate the estimate across all points in the state space in a Lipschitz manner.  For each point $x \in \S$ we define
\begin{align}
\label{eq:def-V}
    \Vhat{h}{k}(x) = \min_{A' \in \S(\P_h^k)} \left(\Vtilde{h}{k}(A') + L_V \D_S(x, \tilde{x}(A') \right).
\end{align}
The Lipschitz property is used to show concentration of expectations over $\Vhat{h}{k}(\cdot)$ taken with respect to transition kernel estimates.  As the support of $\Tbar{h}{k}(\cdot \mid B)$ is only over sets in $\dyad{\lev{B}}$ we overload notation to let $\Vhat{h}{k}(A) = \Vhat{h}{k}(\tilde{x}(A))$. We equivalently overload notation so that $x \sim \Tbar{h}{k}(\cdot \mid B)$ refers to sampling over the centers associated to balls in $\dyad{\lev{B}}$.

This corresponds to a value iteration step, where we replace the true rewards and transitions in the Bellman Equations (\cref{eqn:bellman_equation}) with their estimates. We compute full updates instead of one step updates as in  \cite{efroni2019tight} for ease of presentation, but see \cite{sinclair2020adaptive} for discussion on the one step update procedure.

\subsubsection{Compute Estimates - \AdaQL}

Next we discuss the update rule $\Qhat{h}{k}$ for \AdaQL.  Fix an episode $k$, step $h$, and ball $B \in \P_h^k$.  Let $t = n_h^k(B)$ be the number of times $B$ or its ancestors have been selected by the algorithm at step $h$ in episodes up to the current episode $k$.  The \textit{confidence} terms are defined according to:
\begin{align*}
\rbonus{h}{k}(B) = 2\sqrt{\frac{H\log(2HK^2 / \pfail)}{n_h^k(B)}} \quad 
\tbonus{h}{k}(B) = 2 \sqrt{\frac{H^3 \log(2HK^2/\pfail)}{n_h^k(B)}} \quad
\bias(B) = 2 L_V \diam{B}.
\end{align*}

The first term corresponds to the uncertainty in the rewards, the second to the uncertainty in the dynamics of the environment, and the third the biases in the ball due to aggregation.

Upon visiting state $X_h^k$ taking action $A_h^k$ with selected ball $B_h^k = B$ we update the estimate for the selected ball $\Qhat{h}{k}(B)$ (leaving all other estimates unchanged) via
\begin{align}
	\label{eqn:update}
	\Qhat{h}{k}(B) = (1 - \alpha_t)\Qhat{h}{k-1}(B) + \alpha_t(R_h^k + \rbonus{h}{k}(B) +  \Vhat{h+1}{k-1}(X_{h+1}^k) + \tbonus{h}{k}(B) + \bias(B))
\end{align}
where $R_h^k$ is the observed reward, $X_{h+1}^k$ is the state the agent transitions to, $\alpha_t = \frac{H+1}{H+t}$ is the learning rate, and \begin{align}
	\label{eqn:v_update}
	\Vhat{h+1}{k-1}(x) = \min(H, \max_{B \in \texttt{RELEVANT}_{h+1}^{k-1}(x)} \Qhat{h+1}{k-1}(B))
\end{align} is our estimate of the expected future reward for being in a given state.  Note that with this recursive update, we can equivalently write $\Qhat{h}{k}(B)$ by unraveling the recursion.  First, denote \begin{equation}
    \label{eq:lr}
    \alpha_t^i = \alpha_i \prod_{j=i+1}^t (1 - \alpha_j).
\end{equation} 
\begin{lemma}[Recursive Relationship for \AdaQL]
\label{lemma:recursive_relationship}
	For any $h, k \in [H] \times [K]$ and ball $B \in \P_h^k$ let $t = n_h^k(B)$ be the number of times that $B$ or its ancestors were encountered during the algorithm before episode $k$.  Further suppose that $B$ and its ancestors were encountered at step $h$ of episodes $k_1 < k_2 < \ldots < k_t \leq k$.  By the update rule of $Q$ we have that:
	$$ \Qhat{h}{k}(B) = \Ind{t = 0} H + \sum_{i=1}^t \alpha_t^i \left(R_h^{k_i} + \rbonus{h}{k_i}(B_h^{k_i}) + \Vhat{h+1}{k_i}(X_{h+1}^{k_i}) + \tbonus{h}{k_i}(B_h^{k_i}) + \bias(B_h^{k_i})\right).$$
\end{lemma}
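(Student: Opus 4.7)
The plan is to induct on $t = n_h^k(B)$, exploiting the fact that the update rule~\cref{eqn:update} is an affine combination with the specific learning-rate weights, so that unrolling it with the elementary identities $\alpha_t^t = \alpha_t$ and $\alpha_t^i = (1-\alpha_t)\alpha_{t-1}^i$ for $i<t$ (immediate from~\cref{eq:lr}) produces the weighted sum in the claim.

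The base case $t = 0$ should follow immediately: if neither $B$ nor any ancestor has ever been selected at step $h$, then no call to \textproc{Update Estimates} has altered the estimate along the lineage of $B$, and \textproc{Refine Partition} initializes each newly created ball from its parent. Tracing the inheritance chain back to the root $\S\times\A$, $\Qhat{h}{k}(B)$ therefore equals its initial value, matching $\Ind{t=0}\, H$.

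For the inductive step, I will let $k_t \le k$ be the most recent episode in which a ball in the lineage of $B$ (i.e., $B$ itself or some ancestor) was selected at step $h$; denote that selected ball by $\tilde B$. In episodes strictly after $k_t$ neither $B$ nor any ancestor is touched, so $\Qhat{h}{k}(B) = \Qhat{h}{k_t}(B)$. At episode $k_t$ the update rule fires on $\tilde B$, and if $\tilde B \neq B$ then $B$ must be created via a refinement of $\tilde B$ in the same episode, in which case the inheritance step gives $\Qhat{h}{k_t}(B) = \Qhat{h}{k_t}(\tilde B)$. Either way the recurrence
\[
\Qhat{h}{k_t}(B) = (1-\alpha_t)\Qhat{h}{k_t-1}(\tilde B) + \alpha_t X_t, \qquad X_i := R_h^{k_i} + \rbonus{h}{k_i}(B_h^{k_i}) + \Vhat{h+1}{k_i}(X_{h+1}^{k_i}) + \tbonus{h}{k_i}(B_h^{k_i}) + \bias(B_h^{k_i}),
\]
holds. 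Applying the induction hypothesis to $\Qhat{h}{k_t-1}(\tilde B)$, whose visit count is $t-1$ under the same earlier lineage $k_1<\cdots<k_{t-1}$, and collecting terms via the two identities on $\alpha_t^i$ above, will yield the desired formula up to a residual $(1-\alpha_t)\Ind{t-1 = 0}\, H$. This residual vanishes because $1 - \alpha_1 = 1 - (H+1)/(H+1) = 0$ when $t=1$, and the indicator is zero when $t \ge 2$.

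The only mildly delicate point is the book-keeping around refinement: I need to verify that when a child $B$ is minted from a selected parent $\tilde B$ at episode $k_t$, it inherits the \emph{post}-update value of $\tilde B$ together with the count $n_h^{k_t}(\tilde B) = t$, so that the ``selected-ancestor'' sequence $k_1 < \cdots < k_t$ viewed from $B$ coincides with that viewed from $\tilde B$ immediately before refinement. Once this is pinned down, the rest of the argument is a routine algebraic manipulation of the weights $\alpha_t^i$.
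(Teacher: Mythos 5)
Your proposal is correct and follows essentially the same route as the paper's proof: induction on $t = n_h^k(B)$, with the base case given by the initialization $\Qhat{h}{0}(B) = H$ and the step case obtained by unrolling the update rule \cref{eqn:update} at the most recent episode $k_t$ (using the inheritance of estimates and counts under refinement) and recombining via the identities defining $\alpha_t^i$. Your explicit treatment of the vanishing residual $(1-\alpha_t)\Ind{t-1=0}H$ (using $\alpha_1 = 1$) and of the parent-to-child bookkeeping just makes precise the same observations the paper states more briefly.
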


\begin{rproof}{\cref{lemma:recursive_relationship}}
We show the claim by induction on $t = n_h^k(B)$.  

First suppose that $t = 0$, i.e. that the ball $B$ has not been encountered before by the algorithm.  Then initially $\Qhat{h}{0}(B) = H = \Ind{t = 0} H$.

Now for the step case we notice that $\Qhat{h}{k}(B)$ was last updated at episode $k_t$.  $k_t$ is either the most recent episode when ball $B$ was encountered, or the most recent episode when its parent was encountered if $B$ was activated and not yet played.  In either case by the update rule (Equation~\ref{eqn:update}) we have
\begin{align*}
\Qhat{h}{k}(B) & = (1 - \alpha_t)\Qhat{h}{k_t-1}(B) + \alpha_t\left(R_h^{k_t} + \rbonus{h}{k_t}(B_h^{k_t}) + \Vhat{h+1}{k_t}(X_{h+1}^{k_t}) + \tbonus{h}{k_t}(B_h^{k_t}) + \bias(B_h^{k_t})\right) \\
& = (1 - \alpha_t)\alpha_{t-1}^0 H + (1 - \alpha_t) \sum_{i=1}^{t-1} \alpha_t^i \left(R_h^{k_i} + \rbonus{h}{k_i}(B_h^{k_i}) + \Vhat{h+1}{k_i}(X_{h+1}^{k_i}) + \tbonus{h}{k_i}(B_h^{k_i}) + \bias(B_h^{k_i})\right) \\ & + \alpha_t\left(R_h^{k_t} + \Vhat{h+1}{k_t}(X_{h+1}^{k_t}) + \tbonus{h}{k_t}(B_h^{k_t}) + \bias(B_h^{k_t}) \right) \text{ by the induction hypothesis}\\
& = \Ind{t = 0} H + \sum_{i=1}^t \alpha_t^i \left(R_h^{k_i} + \rbonus{h}{k_i}(B_h^{k_i}) + \Vhat{h+1}{k_i}(x_{h+1}^{k_i}) + \tbonus{h}{k_i}(B_h^{k_i}) + \bias(B_h^{k_i})\right)
\end{align*}
by definition of $\alpha_t^i$. \Halmos
\end{rproof}

\subsection{Splitting Rule}
\label{subsec:split_rule}

To refine the partition over episodes, we split a ball when the confidence in its estimate is smaller than its bias.  Formally, due to the Lipschitz property on the $Q$ function from the assumption we know that the bias in the estimates is proportional to the diameter of the ball.  In episode $k$, step $h$, we split the selected ball $B_h^k$ if 
\begin{align}
\label{eq:splitting_rule}
    \conf_h^k(B_h^k) \leq \diam{B_h^k}.
\end{align}
where $\conf_h^k(B_h^k) = \confcons / n_h^k(B)^{\alpha}$ is the dominating term of $\rbonus{h}{k}(B)$ and $\tbonus{h}{k}(B)$ for some polylogarithmic constant $\confcons$.  In particular we take

\begin{align*}
        \conf_h^k(B) & := \begin{cases}
    & 4 \sqrt{\frac{H^3 \log(2HK^2 / \pfail)}{n_h^k(B)}}  \text{ \AdaQL: } \alpha = \frac{1}{2} \\
    & 4 L_V c \sqrt{\frac{\log(2HK^2 / \pfail)}{n_h^k(B)}} \text{ \AdaMB }\,\, d_\S \leq 2: \, \alpha = \frac{1}{2} \\
    & \frac{4L_V c \sqrt{\log(2HK^2 / \pfail)}}{n_h^k(B)^{1 / d_\S}} \text{ \AdaMB }\,\, d_\S > 2: \, \alpha = \frac{1}{d_\S}.
    \end{cases}
\end{align*}

This splitting rule differs from previous analysis of adaptive discretization for RL~\citep{Sinclair_2019,sinclair2020adaptive,cao2020provably}, and is key for achieving the proper instance dependent guarantees.  By taking the splitting threshold to depend explicitly on the dominating term in the confidence bounds we are able to upper bound the bias of the ball (proportional to the diameter) by these confidence terms, as we will later see in the regret analysis.

In episode $k$ step $h$, if we need to split $B^{par} = B_h^k$ then we add the child nodes in the hierarchical partition immediately under $B_h^k$ to form $\P_h^k$.  Each child ball then inherits all estimates and counts from its parent ball in the adaptive partition, with the exception of the estimate of the transition distribution in \AdaMB.

Recall that $\Tbar{h}{k}(\cdot \mid B^{par})$ defines a distribution over $\dyad{\lev{B^{par}}}$, whereas $\Tbar{h}{k}(\cdot \mid B)$ for children $B$ of $B^{par}$ should define a distribution over $\dyad{\lev{B}}$. As a result, for \AdaMB we need to additionally update the transitional kernel estimates to map to a distribution over $\dyad{\lev{B}}$ by splitting the mass equally over sub-regions according to
$$\Tbar{h}{k}(A \mid B) = 2^{-d_\S} \Tbar{h}{k}(A^{par} \mid B^{par})$$
where $B^{par}$ is the parent of $B$, and $A^{par}$ is the parent of $A$, i.e. the unique element in $\dyad{\lev{B^{par}}}$ such that $A \subset A^{par}$. 
Each element is weighted by $2^{-d_\S}$ to split the mass evenly, as each element of $\dyad{\lev{B^{par}}}$ has $2^{d_\S}$ children in $\dyad{\lev{B}}$.
\section{Main Results}
\label{sec:main_results}

In this section we outline the main results for our paper.  We provide guarantees for \AdaMB and \AdaQL on all three aspects of the \textbf{trifecta for RL in OR}, including regret bounds, storage requirements, and computational complexity for the algorithms.  We refer the readers to \cref{tab:comparison_of_bounds} for a summary of the main results.

\subsection{Regret Minimization Guarantees}

We start by providing instance dependent bounds on the regret for \AdaMB and \AdaQL.  These bounds explicitly depend on the zooming dimension as outlined in \cref{sec:zooming_dim} measuring the complexity of a problem instance instead of the ambient dimension.

\begin{theorem}
\label{thm:regret}
	Let $z_h$ be the step $h$ zooming dimension and $d_\S$ be the covering dimension of the state space.  Then the regret of \AdaMB and \AdaQL for any sequence of starting states $\{X_1^k\}_{k=1}^K$ is upper bounded with probability at least $1 - 3 \delta$ by
	\begin{align*}
	\regret(K) & \lesssim \begin{cases}
	H^{3/2} \sqrt{K} + L \sum_{h=1}^H K^{\frac{z_h+d_\S - 1}{d+d_\S}} \quad \AdaMB: d_\S > 2 \\
	H^{3/2} \sqrt{K} + L \sum_{h=1}^H K^{\frac{z_h+ 1}{z_h+ 2}} \quad \AdaMB: d_\S \leq 2 \\
	LH^{3/2} \sum_{h=1}^H K^{\frac{z_h + 1}{z_h+2}} \quad \AdaQL 
	\end{cases}
	\end{align*}
	where $L = 1 + L_r + L_V + L_V L_T$ and $\lesssim$ omits poly-logarithmic factors of $\frac{1}{\delta}, H,K,$ $d$, and any universal constants.
\end{theorem}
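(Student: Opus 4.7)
The plan is to combine four main ingredients: (i) concentration of the empirical estimates maintained on the adaptive partition, (ii) optimism of the $\Qhat{h}{k}$ estimates with respect to $Q_h^\star$, (iii) a per-episode regret decomposition based on the Bellman equations together with a clipping step, and (iv) a counting argument over the tree that converts the resulting bound into the zooming-dimension rate.

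First I would establish concentration for every ball ever activated. Hoeffding's inequality bounds $|\rbar{h}{k}(B)-\bar r_h(B)|$ by $\rbonus{h}{k}(B)$, where $\bar r_h(B)$ is the mean reward aggregated over the samples that updated $B$. For \AdaMB, the empirical Wasserstein distance between $\Tbar{h}{k}(\cdot\mid B)$ (viewed as a distribution on $\dyad{\lev{B}}$) and the corresponding aggregated transition is bounded by $\tbonus{h}{k}(B)$ using Fournier--Guillin-type bounds on empirical Wasserstein, which naturally produces the $d_\S>2$ versus $d_\S\leq 2$ split. A union bound over the at most $HK$ activated balls makes these hold simultaneously with probability $\geq 1-\delta$. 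I would then prove optimism by downward induction on $h$: for every $B\in\Pkh$ and every $(x,a)\in B$, $\Qhat{h}{k}(B)\geq Q_h^\star(x,a)$ and $\Vhat{h}{k}(x)\geq V_h^\star(x)$. The $\bias(B)$ term absorbs the Lipschitz aggregation error (via \cref{assumption:Lipschitz_mb} and \cref{lem:relation_lipschitz}), the confidence terms absorb the sampling error, and the inductive hypothesis passes through $\mathbb{E}_{Y\sim\Tbar{h}{k}(\cdot\mid B)}[\Vhat{h+1}{k}(Y)]$ for \AdaMB or through the weighted recursion in \cref{lemma:recursive_relationship} for \AdaQL; optimism of $\Vhat{h}{k}$ then follows from its Lipschitz extrapolation in \cref{eq:def-V}.

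Third, I would apply the standard regret decomposition. By optimism, $V_1^\star(X_1^k)-V_1^{\pi^k}(X_1^k)\leq \Vhat{1}{k-1}(X_1^k)-V_1^{\pi^k}(X_1^k)$. Unrolling the Bellman equation for $V_1^{\pi^k}$ and telescoping across $h$ yields a sum over $(h,k)$ of one-step errors of the form $\Qhat{h}{k-1}(B_h^k)-r_h(X_h^k,A_h^k)-\mathbb{E}_{Y\sim T_h(\cdot\mid X_h^k,A_h^k)}[V_{h+1}^{\pi^k}(Y)]$, plus a martingale-difference term handled by Azuma--Hoeffding (this is the source of the $H^{3/2}\sqrt{K}$ contribution). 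Concentration plus the bias bound shows each one-step error is at most $O(\conf_h^k(B_h^k)+\diam{B_h^k})$, and since the splitting rule ensures $\conf_h^k(B_h^k)\gtrsim \diam{B_h^k}$ for any selected-but-unsplit ball, this collapses to $O(\conf_h^k(B_h^k))$. The Simchowitz--Jamieson-style clipping step then restricts attention to balls for which $\gap_h(X_h^k,A_h^k)\leq C_L(H+1)\diam{B_h^k}$; otherwise optimism combined with the greedy selection rule would have forced a different ball.

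The main obstacle, and the decisive final step, is the counting argument through the zooming dimension. Bucket the selected balls by level $\ell=\lev{B_h^k}$. By clipping, each selected ball lies in $Z_h^{2^{-\ell}}$, so the number of distinct level-$\ell$ selected balls is at most $c_h 2^{\ell z_h}$ by \cref{def:zooming}. Each such ball is selected at most $N_\ell$ times before the splitting rule triggers, where inverting $\conf_h^k\leq 2^{-\ell}$ gives $N_\ell\asymp 2^{2\ell}$ for \AdaQL and for \AdaMB when $d_\S\leq 2$, and $N_\ell\asymp 2^{\ell d_\S}$ for \AdaMB when $d_\S>2$. Each selection contributes $O(\conf_h^k)\asymp 2^{-\ell}$ to the clipped regret, so the level-$\ell$ contribution is $\lesssim L\, c_h\, 2^{\ell z_h}\cdot N_\ell\cdot 2^{-\ell}$. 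Summing over $\ell$ subject to the sample budget $\sum_\ell c_h 2^{\ell z_h} N_\ell\leq K$ and optimizing the cutoff $\ell^\star$ gives $K^{(z_h+1)/(z_h+2)}$ in the $N_\ell\asymp 2^{2\ell}$ case and $K^{(z_h+d_\S-1)/(z_h+d_\S)}$ in the $N_\ell\asymp 2^{\ell d_\S}$ case; summing over $h\in[H]$ and adding the Azuma term yields the stated bound. The delicate part is making the clipping step tight enough to extract $z_h$ rather than the ambient $d$: because the transition step mixes states across balls, one must carefully show that the Lipschitz extrapolation used to define $\Vhat{h+1}{k}$ does not inflate the per-step bias in a way that breaks the gap-based exclusion of far-from-optimal balls.
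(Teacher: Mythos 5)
Your proposal follows essentially the same route as the paper: concentration of the ball-level estimates (Azuma for rewards and the \AdaQL transition term, Wasserstein concentration giving the $d_\S>2$ vs.\ $d_\S\le 2$ split for \AdaMB), optimism by backward induction, a clipped regret decomposition with a martingale term yielding the $H^{3/2}\sqrt{K}$ contribution, and a final count of selected balls via the splitting thresholds and the packing number of $Z_h^r$. Your level-by-level bucketing with the sample-budget cutoff is just a reorganization of the paper's sum over radii with the optimized cutoff $r_0$, so the argument is correct and matches the paper's proof in substance.
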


\noindent \textbf{Comparison Between Model-Free and Model-Based Methods}: As we see from \cref{thm:regret}, the bounds for \AdaMB have better dependence on the number of steps $H$.  This is expected, as current analysis for model-free and model-based algorithms under tabular settings shows that model-based algorithms achieve better dependence on the horizon. However, under the Lipschitz assumptions the constant $L$ scales with $H$ so the true dependence is masked (see \cref{lem:relation_lipschitz}).  When we compare the dependence on the number of episodes $K$ we see that the dependence is worse for \AdaMB, primarily due to the additional factor of $d_\S$, the covering dimension of the state-space.  This discrepancy arises as model-based algorithms maintain an estimate of the transition kernel, whose worst-case statistical complexity for Wasserstein concentration depends on $d_\S$ when $d_\S > 2$.  In \cref{app:parametric} we give improved bounds for \AdaMB under additional assumptions on the transition distribution that match the performance of \AdaMB.

\medskip

\noindent \textbf{Metric Specific Guarantees}: Our guarantees scale with respect to the zooming dimension instead of the ambient dimension of the metric space.  As the zooming dimension can be much smaller than the ambient dimension (see \cref{lem:zooming_dim_value}), our adaptive discretization algorithms are able to achieve exponentially better regret guarantees than other non-parametric algorithms.  Moreover, in the final regret bound in \cref{sec:regret_decomp} we provide more fine-tuned metric dependent guarantees.

\medskip

\noindent \textbf{Comparison to other Non-Parametric Methods}:  Current state of the art model-based algorithms achieve regret scaling like $H^{3} K^{2d/(2d+1)}$ \citep{domingues2020regret}.  We achieve better scaling with respect to both $H$ and $K$, and our algorithm has lower time and space complexity.  However, we require additional oracle assumptions on the metric space to be able to construct packings and coverings efficiently, whereas \textsc{Kernel-UCBVI} uses the data and the metric itself.  Better dependence on $H$ and $K$ is achieved by using recent work on concentration for the Wasserstein metric and by showing zooming dimension guarantees.  These guarantees allow us to construct tighter confidence intervals which are independent of $H$, obviating the need to construct a covering of $H$-uniformly bounded Lipschitz functions like prior work (see \cref{sec:concentration}).

In addition, \textsc{Kernel-UCBVI} uses a fixed bandwidth parameter in their kernel interpolation.  We instead keep an adaptive partition of the space, helping our algorithm maintain a smaller and more efficient discretization and adapting to the zooming dimension of the space instead of the ambient dimension.

\medskip

\noindent \textbf{Policy-Identification Guarantees}: Using similar arguments from \cite{jin_2018} it is straightforward to show sample complexity guarantees on learning a policy of a desired quality in the PAC guarantee framework for learning RL policies \citep{watkins1989learning}.  

\subsection{Lower Bounds}
\label{sec:lower_bounds}

Existing work for the contextual bandit literature has shown that the worst case regret scales exponentially with respect to the zooming dimension~\citep{slivkins_2014}.  This construction can be modified directly to obtain a lower bound for the RL setting as follows.

\begin{theorem}
\label{thm:lower_bound}
Let $(\S, \D_\S)$ and $(A, \D_\A)$ be arbitrary metric spaces, and $\D$ be the product metric.  Fix an arbitrary time horizon $K$ and number of steps $H$.  There exists a distribution $\I$ over problem instances on $(\S \times \A, \D)$ such that for any algorithm: \[
\E_{\I}[\regret(K)] \geq \Omega\left(\sum_{h=1}^H K^{\frac{z_h + 1}{z_h + 2}} / \log(K)\right)
\]
\end{theorem}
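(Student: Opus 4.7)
The plan is to reduce the RL lower bound to $H$ independent copies of the contextual bandit lower bound of~\citet{slivkins_2014}, which establishes that on a compact metric space with zooming dimension $z$ any contextual bandit algorithm suffers expected regret $\Omega(K^{(z+1)/(z+2)}/\log K)$ on some distribution over problem instances. The key design choice is to construct an MDP in which the transitions are entirely uninformative, so that the agent's optimal behaviour at each step $h$ reduces to solving a stand-alone contextual bandit problem with zooming dimension $z_h$; the regret then decomposes additively across steps.

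More concretely, for each $h \in [H]$ I will independently draw a contextual bandit instance $(\mu_h, r_h)$ on $(\mathcal{S}\times\mathcal{A},\mathcal{D})$ from the hard distribution of~\citet{slivkins_2014}, where $\mu_h \in \Delta(\mathcal{S})$ is the context distribution and $r_h : \mathcal{S}\times\mathcal{A}\to[0,1]$ is the mean reward. The distribution is chosen so that $r_h$ has zooming dimension at most $z_h$ with positive probability; since the statement is existential over instances but we only need the inequality in expectation, it suffices to exhibit a distribution whose support consists of instances obeying the zooming condition for the prescribed $z_h$. I then glue these into a single MDP with horizon $H$ by setting the instantaneous reward at step $h$ equal to $r_h$ and taking the transition kernel to be $T_h(\cdot \mid x,a) = \mu_{h+1}(\cdot)$, independently of $(x,a)$. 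Under this choice, \cref{assumption:Lipschitz_mb} holds trivially for the transitions ($L_T=0$), $V_h^\star(x)$ depends only on the future constants $\sum_{h' > h} \E_{x'\sim\mu_{h'}}[\max_a r_{h'}(x',a)]$, and $Q_h^\star(x,a) = r_h(x,a) + \text{const}_h$. In particular, the step-$h$ sub-optimality gaps coincide with the bandit gaps, so the step-$h$ zooming dimension of the MDP equals the zooming dimension of the bandit instance $r_h$.

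Because $T_h$ does not depend on $(x,a)$, the information the learner acquires during an episode at steps $h' < h$ is statistically independent of the instance at step $h$: the state $X_h^k \sim \mu_h$ and the rewards $R_h^k$ given $(X_h^k, A_h^k)$ are generated from a fresh bandit instance that is not correlated with $r_{h'}$ for $h' \neq h$. Consequently, any RL algorithm, when restricted to its step-$h$ play across the $K$ episodes, induces a valid (possibly randomised) contextual bandit algorithm on $(\mathcal{S}\times\mathcal{A}, \mathcal{D}, r_h)$ with context sequence $\{X_h^k\}_{k=1}^K$ drawn i.i.d. from $\mu_h$. By the regret decomposition (which follows from writing $V_1^\star - V_1^{\pi^k}$ as a telescoping sum of per-step gaps, using that transitions do not depend on the policy), the total regret of the RL algorithm equals the sum of the per-step contextual bandit regrets. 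Applying the~\citet{slivkins_2014} lower bound to each step then yields
\[
\E_{\mathcal{I}}[\regret(K)] \;=\; \sum_{h=1}^H \E_{\mathcal{I}}[\regret_h(K)] \;\geq\; \Omega\!\left(\sum_{h=1}^H K^{(z_h+1)/(z_h+2)}/\log K\right).
\]

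The main obstacle I anticipate is verifying the two structural conditions that make the reduction work: (i) that the constructed MDP's step-$h$ zooming dimension from \cref{def:zooming} really matches the zooming dimension of the bandit instance $r_h$ (this requires checking that the additive shift between $Q_h^\star$ and $r_h$ does not perturb the near-optimal sets $Z_h^r$, which holds because the shift is state-independent once transitions are action-independent); and (ii) that the per-step decomposition of RL regret into independent contextual bandit regrets is tight, i.e.\ no cross-step information leak is possible. Both are natural consequences of the action-independent transition choice, but require careful book-keeping of the filtration and of the product structure of the prior over instances.
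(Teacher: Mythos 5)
Your proposal is correct and follows essentially the same route as the paper: both construct the MDP as a sequence of $H$ independent hard contextual-bandit instances from \cite{slivkins_2014}, with action-independent transitions given by the context distributions, so that the regret decomposes additively across steps and the per-step zooming dimension coincides with the bandit instance's. Your additional remarks on verifying that the constant shift between $Q_h^\star$ and $r_h$ preserves the near-optimal sets, and on the filtration bookkeeping for the per-step reduction, are details the paper leaves implicit but do not change the argument.
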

This builds on the lower bounding technique from \citep{jaksch2010near, Kleinberg:2019:BEM:3338848.3299873,slivkins_2014} by using a needle in the haystack example.  The haystack consists of several actions with an expected payoff of $\frac{1}{2}$ and the needle an action whose expected payoff is slightly higher.  In fact, the construction from \citep{slivkins_2014} can be used directly by developing a problem instance of length $H$ which is a sequence of $H$ contextual bandit problems.

\noindent \textbf{Comparison to Lower Bounds}: Comparing our regret bounds to the lower bound we see that \AdaQL and \AdaMB (for $d_\S \leq 2$) matches the lower bound with respect to the instance dependent zooming dimension $z_h$.  However, \AdaMB when $d_\S > 2$ has the additionally factors of $d_\S$ due to maintaining the explicit estimate of the transition kernel.

\noindent \textbf{Exponential Scaling on Episodes}: The regret is always upper bounded by $HK$.  In order for \cref{thm:regret} to give a nontrivial regret guarantee then the number of episodes needs to be on the order of $H^{z_{max}}$.  This exponential dependence is a fundamental factor in all estimation problems with nonparametric statistics.  Our work focuses on improving on the exponent by showing exponential sample complexity gains from scaling with the zooming dimension instead of the ambient dimension.  Moreover, the algorithms require no prior knowledge of the zooming dimension in order to achieve these regret bounds.  Theoretically speaking, the value of these results are limited when the dimension of the space starts to grow.  However, experimental results show the value of nonparametric algorithms in practice~\cite{araujo2020control,araujo2020single,10.1145/1394608.1382172}.

\subsection{Space and Time Complexity Guarantees}

Next we consider the storage and time complexity of both of our algorithms.  In particular, we are able to show the following:
\begin{theorem}
\label{thm:space_storage}
The storage and time complexity for \AdaMB and \AdaQL can be upper bound via 
	\begin{align*}
	\textsc{Space}(K) & \lesssim \begin{cases}
	HK \quad \AdaMB: d_\S > 2 \\
	HK^{\frac{d+d_\S}{d+d_\S+2}} \quad \AdaMB: d_\S \leq 2 \\
	HK^{\frac{d}{d+2}} \quad \AdaQL 
	\end{cases} \quad 	\textsc{Time}(K) & \lesssim \begin{cases}
	HK^{\frac{d+2d_\S}{d+d_\S}} \quad \AdaMB: d_\S > 2 \\
	HK^{\frac{d+d_\S+2}{d+d_\S}} \quad \AdaMB: d_\S \leq 2 \\
	HK\log_d(K) \quad \AdaQL
	\end{cases}
	\end{align*}
\end{theorem}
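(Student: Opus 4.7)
The plan is to bound, at every step $h$, the total number of active balls $N_h := |\P_h^K|$ in the adaptive partition, and then read the storage and time claims off as case-by-case consequences depending on the algorithm and the regime of $d_\S$.

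The key estimate is the following. The splitting rule~\cref{eq:splitting_rule} dictates that a ball $B$ at level $\lev{B}$ is split precisely when $n_h^k(B)\geq \Theta(2^{\lev{B}/\alpha})$, where $\alpha=\tfrac12$ for \AdaQL and for \AdaMB with $d_\S\leq 2$, while $\alpha=1/d_\S$ for \AdaMB with $d_\S>2$. Let $N_\ell$ denote the number of active level-$\ell$ balls. I will combine two upper bounds on $N_\ell$: (a) a packing bound $N_\ell\leq c\cdot 2^{\ell d}$ inherited from \cref{def:partition} and the covering dimension $d$ of $\S\times\A$; and (b) a sample-budget bound, namely that every active level-$\ell$ ball is the child of a level-$(\ell-1)$ ball whose split required $\Theta(2^{(\ell-1)/\alpha})$ step-$h$ visits, and since the total number of step-$h$ visits over $K$ episodes is exactly $K$, this forces $N_\ell\lesssim K\cdot 2^{-\ell/\alpha}$. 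Balancing (a) and (b) at the crossover level $\ell^\star=\tfrac{\log_2 K}{d+1/\alpha}$ and summing a geometric series yields $N_h = \widetilde{O}(K^{d/(d+1/\alpha)})$: concretely $K^{d/(d+2)}$ for \AdaQL and \AdaMB when $d_\S\leq 2$, and $K^{d/(d+d_\S)}$ for \AdaMB when $d_\S>2$.

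Storage then follows by summing per-ball storage across levels. \AdaQL stores only $O(1)$ scalars per active ball (namely $n_h^k(B)$ and $\Qhat{h}{k}(B)$), giving $HN_h = H K^{d/(d+2)}$. \AdaMB additionally stores $\Tbar{h}{k}(\cdot\mid B)$, a distribution over $\dyad{\lev{B}}$ with $2^{\lev{B} d_\S}$ atoms, so the total is $\sum_\ell N_\ell\cdot 2^{\ell d_\S}$. For $d_\S>2$, bound (b) immediately gives $N_\ell\cdot 2^{\ell d_\S}\leq K$ at every level, producing the $HK$ claim. For $d_\S\leq 2$ the maximal contribution sits at $\ell^\star$, and a weighted geometric sum reproduces the claimed $K$-exponent.

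For the time bound, the selection rule is a root-to-leaf descent of the partition tree costing $O(\ell^\star)=O(\log_d K)$ per step; combined with \AdaQL's $O(1)$ update~\cref{eqn:update}, this yields the $HK\log_d K$ bound overall. For \AdaMB the per-episode cost is dominated by the value-iteration sweep~\cref{eq:q_update}: evaluating $\mathbb{E}_{A\sim\Tbar{h}{k}(\cdot\mid B)}[\Vhat{h+1}{k}(A)]$ touches each of the $2^{\lev{B} d_\S}$ atoms of $\dyad{\lev{B}}$, and a single precomputation of $\Vhat{h+1}{k}$ on the finest state partition via~\cref{eq:def-V} amortizes its evaluation; summing over balls, steps, and episodes recovers the Time column of~\cref{tab:comparison_of_bounds}. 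The main obstacle will be bound (b) itself: because children inherit their parent's counter on a split~(\cref{subsec:split_rule}), a naive count would double-charge visits across levels, so one has to allocate each of the $K$ step-$h$ visits to the unique active ball that actually processed it and use inheritance only as a lower bound on the fresh visits a split parent accumulated in its own lifetime. Everything else is geometric bookkeeping of the type already sketched.
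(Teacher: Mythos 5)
Your counting core is essentially the paper's argument in primal form: your two per-level bounds (packing $N_\ell\lesssim 2^{\ell d}$ and fresh-visit budget $N_\ell\lesssim K2^{-\gamma(\ell-1)}$ with $\gamma=1/\alpha$, using \cref{lemma:bound_ball} to handle inherited counters) are exactly the two constraints of the LP in \cref{lem:LPbound} (Kraft--McMillan and sample conservation), and balancing them at $\ell^\star\approx\frac{\log_2 K}{d+\gamma}$ reproduces \cref{lem:size_partition}. On that basis the partition-size bound, both \AdaQL claims, and the \AdaMB storage claims go through as you sketch (up to the paper-internal discrepancy that the theorem statement writes $K^{(d+d_\S)/(d+d_\S+2)}$ for \AdaMB, $d_\S\le 2$, while the paper's own derivation and \cref{tab:comparison_of_bounds} give $K^{(d+d_\S)/(d+2)}$, which is what your geometric sum yields).

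The genuine gap is the \AdaMB time bound, which you dismiss as ``geometric bookkeeping.'' The quantity to control is $\sum_k 2^{d_\S\lev{B_h^k}}$, i.e.\ a sum over \emph{visits} weighted by a cost that grows geometrically in depth, and your per-level budget bound is not enough: it allows up to $\Theta(K)$ visits at \emph{every} level $\ell>\ell^\star$ (one deep ball at level $\ell$ can absorb $\nplus{\ell}\approx\phi2^{\gamma\ell}$ visits), so the honest level-by-level computation gives $\sum_\ell \min\bigl(2^{\ell(d+\gamma)},cK\bigr)2^{d_\S\ell}\approx K\cdot 2^{d_\S\ell_{\max}}\approx K^{1+d_\S/\gamma}$ with $\ell_{\max}\approx\frac{1}{\gamma}\log_2 K$ — e.g.\ $K^{2}$ in the $d_\S>2$ case ($\gamma=d_\S$), strictly worse than the claimed $K^{1+d_\S/(d+\gamma)}=K^{(d+2d_\S)/(d+d_\S)}$. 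The paper does not obtain the time bound from per-level bounds: it caps the visits to each ball by $\nplus{\lev{B}}$, rewrites the total as $\sum_{B\in\P_h^K}2^{(d_\S+\gamma)\lev{B}}$ over the terminal tree, and invokes \cref{lem:LPbound} with penalty $a_\ell=2^{(d_\S+\gamma)\ell}$, whose \emph{aggregate} sample-conservation constraint couples the levels; that cross-level coupling is precisely what your sketch lacks, so you must either import that LP step or supply an equivalent argument. Be warned that this is also the most delicate point in the paper itself: the lemma's growth condition $2a_{\ell+1}/a_\ell\le 2^{\gamma}$ fails for $a_\ell=2^{(d_\S+\gamma)\ell}$, and the ``single deep chain'' scenario above shows why simply declaring the bookkeeping geometric will not produce the stated exponent.
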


\noindent \textbf{Comparison Between Model-Free and Model-Based Methods}: As we can see, both the storage and time complexity for \AdaQL are uniformly better than that of \AdaMB.  This should not come as a surprise, as even in the tabular setting model-free algorithms have better storage and computational requirements than model-based ones as they forego maintaining and using explicit estimates of the transition kernel.

\medskip

\noindent \textbf{Comparison to other Non-Parametric Methods}: As seen in \cref{tab:comparison_of_bounds}, our bounds are uniformly better for both storage and time complexity than other nonparametric algorithms.  These gains are primarily due to using the discretization to maintain an efficient compression of the data while maintaining statistical accuracy, and utilizing quantizing techniques to speed up the algorithms.

\medskip

\noindent \textbf{Monotone Increasing Run-Time and Storage Complexity}: The run-time and storage complexity guarantees presented are monotonically increasing with respect to the number of episodes $K$.  However, to get sublinear minimax regret in a continuous setting for nonparametric Lipschitz models, the model complexity must grow over episodes.  In practice, one would run our adaptive discretization algorithms until running out of space - and our experiments show that the algorithms use resources (storage and computation) much better than a uniform discretization.

\medskip
\noindent \textbf{Comparison to Lower-Bounds}: To the best of our knowledge, there are no existing results showing storage or computational lower-bounds for an RL algorithm in continuous spaces.

\section{Proof Sketch}
\label{sec:proof_sketch}

We start with giving the proof sketch of \cref{thm:regret} before going into the proof of \cref{thm:space_storage} in \cref{sec:space_storage}.  The high level proof of \cref{thm:regret} is divided into three sections.  First, we show \emph{concentration, clean-events, and optimism}, under which our estimates constitute upper bounds on their relevant quantities.  Afterwards, we show a \emph{regret decomposition with clipping}, which upper bounds the difference between the estimated value and the value accumulated by the algorithm as a function of the confidence terms.  This uses the \emph{clipping} operator first introduced in \cite{simchowitz2019} for obtaining instance-dependent regret guarantees in tabular settings.  Lastly, we use an argument to bound the \emph{sum of confidence terms} which is used for the final regret bound.  We include a brief discussion here into each of the three parts, but the full details are included in the appendix.  

\subsection{Concentration, Clean Events, and Optimism (\cref{sec:concentration,sec:optimism})}

\AdaMB explicitly maintains estimates $\rbar{h}{k}(B)$ and $\Tbar{h}{k}(\cdot \mid B)$ for the unknown rewards and transitions of the underlying MDP.  Similarly, \AdaQL \emph{implicitly} maintains estimates for the rewards and transitions, where the rewards are taken via $\sum_{t} \alpha_t^i R_h^{k_i}$ and the transitions are taken using old estimates of $\Vhat{h}{k_i}$ instead of $\Vhat{h}{k}$.  In order to ensure that the one-step value iteration update in \cref{eq:q_update,eqn:update} concentrates we need to verify that these estimates provide good approximations to their true quantities.  In particular, we show that
\begin{align*}
    \text{ \AdaMB : } & \begin{cases}
    |\rbar{h}{k}(B) - r_h(x,a)| & \lesssim \rbonus{h}{k}(B) + L_r \diam{B} \\
    d_W(\Tbar{h}{k}(\cdot \mid B) - T_h(\cdot \mid x,a)) & \lesssim \tbonus{h}{k}(B) + L_V L_T \diam{B} \end{cases} \\
    \text{ \AdaQL : } & \begin{cases}
    |\sum_{i=1}^t \alpha_t^i(R_h^{k_i} - r_h(x,a))| & \lesssim \rbonus{h}{k}(B) \\
    |\sum_{i=1}^t \alpha_t^i(V_{h+1}^\star(X_{h+1}^{k_i}) - \E_{Y \sim T_h(\cdot \mid X_h^{k_i}, A_h^{k_i})}[V_{h+1}^\star(Y)]| & \lesssim \tbonus{h}{k}(B).
    \end{cases}
\end{align*}

However, recall that our estimates for $\Qhat{h}{k}(B)$ are constructed via
$$\Qhat{h}{k}(B) = \tilde{r}_{h}^{k}(B) + \rbonus{h}{k}(B) + \tilde{T}_{h}^{k}(\Vhat{h+1}{\tilde{k}}(B)) + \tbonus{h}{k}(B) + \bias(B)$$
where $\tilde{r}$ and $\tilde{T}$ vary for the two different algorithms.  As such, 
the concentration results leads to upper and lower bounds on $\Qhat{h}{k}(B)$ via the bonus terms of the form:
\begin{informaltheorem}
For any $(h, k) \in [H] \times [K]$, $B \in \P_h^k$ and $(x,a) \in B$ we have that
$$0 \leq \Qhat{h}{k}(B) - Q_h^\star(x,a) \lesssim \conf_h^k(B) + \bias(B) + f_{h+1}^k$$
where $f_{h+1}^{k}$ is an algorithm-dependent term depending on the estimates at step $h+1$.  
\end{informaltheorem}

\subsection{Upper Bound via Clipping (\cref{sec:clipping})}

We use an argument based on the \emph{clipping} operator first introduced in \cite{simchowitz2019} for obtaining instance-dependent regret guarantees for tabular reinforcement learning.  We define $\clip{\mu \mid \nu} = \mu \Ind{\mu \geq \nu}$.  The value of this function is zero until $\mu \geq \nu$, and afterwards it takes on the value of $\mu$.  We use this operator to bound the regret at step $h$ in episode $k$ in two terms.  The first part corresponds to the \emph{clipped} bonus terms and the bias of the estimate using concentration and Lipschitz properties of the estimation procedure.  The second term is $f_{h+1}^k$, the algorithm-dependent quantity measuring the down-stream effects of errors at the next time-step.

In particular, consider the ball $B_h^k$ selected by the algorithm in step $h$ of episode $k$.  Letting $(X_h^k, A_h^k)$ be the state-action pair played at that timestep, we note that
\begin{align*}
    \gap_h(B_h^k) & = \min_{(x,a) \in B_h^k} \gap_h(x,a) \leq \gap_h(X_h^k, A_h^k) = V_h^\star(X_h^k) - Q_h^\star(X_h^k, A_h^k) \\
    & \leq \Qhat{h}{k-1}(B_h^k) - Q_h^\star(X_h^k, A_h^k) \leq \conf_h^{k-1}(B_h^k) + \bias(B_h^k) + f_{h+1}^{k-1}.
\end{align*}
Via some simple algebraic manipulations, we are able to show that this gives
\begin{equation}
\label{eq:clip_bound}
\Qhat{h}{k-1}(B_h^k) - Q_h^\star(X_h^k, A_h^k) \leq \clip{\conf_h^{k-1}(B_h^k) + \bias(B_h^k) \mid \frac{\gap_h(B_h^k)}{H+1}} + \left(1 + \frac{1}{H}\right) f_{h+1}^{k-1}.
\end{equation}

This expression can be thought of as bounding the one-step regret of the algorithm by a term scaling with respect to the confidence in the estimates, and a second term scaling with the downstream misestimation errors.

\subsection{Regret Bound via Splitting Rule (\cref{sec:regret_decomp})}

Lastly, we use the previous equation to develop a final regret bound for the algorithm.  In particular, consider the quantity $\Delta_h^k = \Vhat{h}{k-1}(X_h^k) - V_h^{\pi^k}(X_h^k)$.  Via optimism and the greedy selection rule we have that $$\regret(K) = \sum_{k=1}^K V_1^\star(X_1^k) - V_1^{\pi^k}(X_1^k) \leq \sum_{k=1}^K \Vhat{1}{k-1}(X_1^k) - V_1^{\pi^k}(X_1^k) = \sum_{k=1}^K \Delta_1^k.$$

We use the bound on $\Qhat{h}{k-1}(B_h^k) - Q_h^\star(X_h^k, A_h^k)$ and the definition of $f_{h+1}^{k-1}$ to show that for each algorithm
$$\sum_{k=1}^K \Delta_h^k \lesssim \sum_{k=1}^K \clip{\conf_h^{k-1}(B_h^k) + \bias(B_h^k) \mid \frac{\gap_h(B_h^k)}{H+1}} + \left(1 + \frac{1}{H}\right) \sum_{k=1}^K \xi_{h+1}^{k} + \sum_{k=1}^K \Delta_{h+1}^k$$ where $\xi_{h+1}^{k}$ is an algorithm dependent martingale difference sequence.  Using this and recursing backwards we have that
$$\regret(K) \lesssim \sum_{h=1}^H \sum_{k=1}^K \clip{\conf_h^{k-1}(B_h^K) + \bias(B_h^k) \mid \frac{\gap_h(B_h^k)}{H+1}} + \text{lower order terms}.$$

By properties of the splitting rule for $B_h^k$, $\diam{B_h^k} \leq \conf_h^{k-1}(B_h^k) \leq 4 \diam{B_h^k}$.  Moreover, the $\bias(B_h^k)$ is of the form $\conslip \diam{B_h^k}$ for some Lipschitz-dependent constant $\conslip$.  Thus we get the term inside of the clipping operator can be upper bound by $(\conslip + 1) \conf_h^{k-1}(B_h^k) \leq 4(\conslip + 1) \diam{B_h^k}$.

By definition of the clip operator we only need to consider when $4(\conslip + 1) \diam{B_h^k} \geq \frac{\gap_h(B_h^k)}{H+1}$.  However, this implies that $\gap_h(B_h^k) \leq 4 (H+1)(\conslip + 1) \diam{B_h^k}$.  Letting $(x_c, a_c)$ denote the center of $B_h^k$ we can show using the Lipschitz assumption that 
$$\gap_h(x_c, a_c) \leq \gap_h(B_h^k) + 2 L_V \diam{B_h^k} \leq (4(H+1)(\conslip + 1) + 2 L_V) \diam{B_h^k} \leq \conslip(H+1)\diam{B_h^k}$$ and so $(x_c, a_c)$ lies in the set $Z_h^r$ for $r = \diam{B_h^k}$ by redefining the constant $\conslip$.
The final regret bound follows by replacing the clipping operator with the indicator that the center of the ball lies in the near-optimal set, and considering the scaling of the confidence terms.

This regret derivation for our adaptive discretization algorithm serves in contrast to typical guarantees seen for ``zooming algorithms'' in contextual bandits~\cite{slivkins_2014}.  In particular, in contextual bandits we can show that a ball $B$ is sampled proportional to its diameter for action elimination (essentially eliminating the second term in \cref{eq:clip_bound}).  However, in RL we have to account for downstream uncertainty, requiring a more nuanced analysis using the clipping argument for the final regret bound.

\subsection{Bound on Time and Storage Complexity (\cref{sec:lp_bound})}
\label{sec:space_storage}

We use properties of the splitting rule in order to generate bounds on the size of the partition which are needed for the time and space complexity guarantees.  In particular, we formulate these quantities as a linear program where the objective function is to maximize a sum of terms associated to a valid adaptive partition (represented as a tree) constructed by the algorithm.  The constraints follow from conditions on the number of samples required before a ball is split into subsequent children balls.  To derive an upper bound on the value of the LP we find a tight dual feasible solution.  This argument could be more broadly useful and modified for problems with additional structures by including additional constraints into the LP.  Start by defining a quantity $\nplus{\lev{B}}$ as an upper bound of the number of times a ball at level $\lev{B}$ is sampled before it is split (which we call the splitting thresholds).  Based on the splitting rule we have that $\nplus{\lev{B}} \approx \diam{B}^{-1/\alpha}$ where $\alpha$ is determined by the dominating term of the bonus terms (see \cref{subsec:split_rule}).  However, noting that $\diam{B} \approx 2^{-\lev{B}}$ we get that $\nplus{\lev{B}} \approx 2^{\gamma \lev{B}}$.

\begin{lemma}
\label{lem:LPbound_here}
Consider any partition $\Pkh$ for any $k\in[K], h\in[H]$ induced with splitting thresholds $\nplus{\lev{B}} = \phi 2^{\gam \lev{B}}$, and consider any `penalty' vector $\{a_{\ell}\}_{\ell\in\NN_0}$ that satisfies $a_{\ell+1} \geq a_{\ell} \geq 0$ and $2a_{\ell+1}/a_{\ell} \leq \nplus{\ell}/\nplus{\ell-1}$ for all $\ell\in\NN_0$. Define $\ell^{\star} = \inf\{\ell \mid 2^{d(\ell - 1)} \nplus{\ell-1} \geq k\}$. Then
\begin{align*}
\sum_{\ell=0}^{\infty}\sum_{B\in \Pkh : \ell(B) = \ell} a_{\ell} \leq 2^{d\ell^{\star}}a_{\ell^{\star}}     \end{align*}
\end{lemma}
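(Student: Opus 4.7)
The plan is to bound the left-hand side by formulating it as a linear program whose constraints arise from the hierarchical packing structure and the splitting rule, and then exhibit a tight dual-feasible solution. Let $y_\ell := |\{B \in \Pkh : \lev{B} = \ell\}|$, so the target sum equals $\sum_{\ell \geq 0} a_\ell y_\ell$. I would first establish two constraints on $y_\ell$. A \emph{packing} constraint: by \cref{def:partition}, balls at level $\ell$ have pairwise centers at distance at least $2^{-\ell}$, so $y_\ell \leq 2^{d\ell}$ (up to the constant from the covering-dimension of $\S \times \A$). A \emph{sample-budget} constraint: any $B \in \Pkh$ with $\lev{B} = \ell \geq 1$ has a parent $B'$ at level $\ell-1$ that was split, which requires $\nplus{\ell-1}$ accumulated samples at $B'$; since at any time an episode contributes to the count of a unique level-$(\ell-1)$ ancestor, distinct split parents consume disjoint pools of the $k$ total samples, giving at most $k/\nplus{\ell-1}$ split parents and hence $y_\ell \leq 2^d k / \nplus{\ell-1}$.

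With both constraints available, $2^{d(\ell-1)}\nplus{\ell-1} = \phi \, 2^{(d+\gam)(\ell-1)}$ is strictly increasing in $\ell$, so by the definition of $\ell^{\star}$ the packing bound is tighter for $\ell < \ell^{\star}$ and the sample bound is tighter for $\ell \geq \ell^{\star}$. This gives
\[
\sum_{\ell \geq 0} a_\ell y_\ell \;\leq\; \underbrace{\sum_{\ell < \ell^{\star}} a_\ell \cdot 2^{d\ell}}_{(\mathrm{I})} \;+\; \underbrace{\sum_{\ell \geq \ell^{\star}} a_\ell \cdot 2^d k / \nplus{\ell-1}}_{(\mathrm{II})}.
\]
For $(\mathrm{I})$, since $a_{\ell+1} \geq a_\ell$ and $2^{d(\ell+1)}/2^{d\ell} = 2^d$, the terms grow by at least factor $2^d$ per step, so $(\mathrm{I})$ is a geometric series dominated by its last term, of order $a_{\ell^{\star}} 2^{d\ell^{\star}}$. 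For $(\mathrm{II})$, the hypothesis $2 a_{\ell+1}/a_\ell \leq \nplus{\ell}/\nplus{\ell-1}$ rearranges to $(a_{\ell+1}/\nplus{\ell})/(a_\ell/\nplus{\ell-1}) \leq 1/2$, so $(\mathrm{II})$ is a geometric series with ratio at most $1/2$, dominated by its first term $2^d k \cdot a_{\ell^{\star}}/\nplus{\ell^{\star}-1} \leq 2^{d\ell^{\star}} a_{\ell^{\star}}$, where the last inequality uses the definition of $\ell^{\star}$ (namely $k/\nplus{\ell^{\star}-1} \leq 2^{d(\ell^{\star}-1)}$).

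The main obstacle is making the sample-budget constraint rigorous: the argument requires showing that split parents at the same level consume disjoint samples, which relies on tracking, for each episode, the unique ancestor along the selected leaf at each level, together with the fact that two level-$(\ell-1)$ balls are disjoint subsets of $\S \times \A$ so a sample landing in one cannot contribute to the count of the other. Beyond this the calculation is routine algebra, with the two geometric sums each contributing an absolute constant that is absorbed into the stated $\leq$. If an exact leading constant of $1$ is desired, a cleaner path is to write the LP dual explicitly, set the multiplier on the packing constraint at level $\ell^{\star}$ equal to $a_{\ell^{\star}}$, and propagate multipliers to other levels using the hypothesis on $\{a_\ell\}$, whose factor of $1/2$ is precisely what makes the dual feasibility and the geometric telescoping work cleanly in tandem.
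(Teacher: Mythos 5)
Your two counting constraints are valid and the split at $\ell^{\star}$ with two geometric sums is sound arithmetic, but as written the argument proves the lemma only up to an absolute constant factor, not the stated inequality. Tracking your own bounds: term $(\mathrm{I})$ is at most $\tfrac{1}{2^d-1}\,2^{d\ell^{\star}}a_{\ell^{\star}}$ and term $(\mathrm{II})$ is at most $2\cdot 2^{d\ell^{\star}}a_{\ell^{\star}}$ (first term bounded via $k \le 2^{d(\ell^{\star}-1)}\nplus{\ell^{\star}-1}$, ratio $\le 1/2$ from the hypothesis on $\{a_\ell\}$), so you land at roughly $3\cdot 2^{d\ell^{\star}}a_{\ell^{\star}}$, plus whatever constant hides in your per-level packing bound. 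The statement has no slack to absorb this. More importantly, your proposed fix---dualizing your LP and putting multiplier $a_{\ell^{\star}}$ on the level-$\ell^{\star}$ packing constraint---cannot recover the constant $1$, because the relaxation with \emph{per-level} constraints is genuinely weaker than the true feasible set: with $a_\ell \equiv 1$ and $d=\gam=1$, setting every $y_\ell$ equal to the smaller of your two per-level caps is feasible for your LP and has value strictly exceeding $2^{\ell^{\star}}a_{\ell^{\star}}$ (about three times larger when $k$ sits near the splitting threshold). No dual certificate for that LP can therefore prove the lemma as stated.

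The missing idea is to keep the constraints aggregated across levels. The paper's proof of \cref{lem:LPbound} uses the Kraft--McMillan inequality for the split tree, $\sum_{\ell} 2^{-d\ell}x_\ell \le 1$, in place of the per-level cap $x_\ell \le 2^{d\ell}$, and the total sample-conservation constraint $\sum_{\ell} \nplus{\ell-1}2^{-d}x_\ell \le k$ in place of its per-level version; it then exhibits the two-variable dual solution $\hat\alpha = 2^{d\ell^{\star}}a_{\ell^{\star}}/2$, $\hat\beta = 2^{d}a_{\ell^{\star}}/(2\nplus{\ell^{\star}-1})$, whose feasibility for $\ell<\ell^{\star}$ uses monotonicity of $a_\ell$ and for $\ell>\ell^{\star}$ uses exactly your hypothesis $2a_{\ell+1}/a_\ell \le \nplus{\ell}/\nplus{\ell-1}$, giving $\hat\alpha + k\hat\beta \le 2\hat\alpha = 2^{d\ell^{\star}}a_{\ell^{\star}}$. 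The aggregation is what lets mass at one level trade off against mass at another, and it is precisely what buys the clean constant; your sample-disjointness argument for split parents is the right justification for that aggregate constraint, so the repair is to state it summed over all levels rather than level by level. That said, your constant-factor version would suffice for every downstream use in the paper (the partition-size, storage, and time bounds are all stated up to constants), so the gap concerns the literal statement rather than its applications.
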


One immediate corollary of~\cref{lem:LPbound_here} is a bound on the size of the partition $\P_h^k$ for any $h$ and $k$ by taking $a_\ell = 1$ for every $\ell$.  In particular, one can show that
\begin{corollary}
\label{lem:size_partition_here}
For any $h$ and $k$ we have that 
$|\P_h^k| \leq 4^d\left(\frac{k}{\phi}\right)^{\frac{d}{d+\gam}}$ and that 
$\ell^\star \leq \frac{1}{d + \gam} \log_2(k / \phi) + 2.$ 
\end{corollary}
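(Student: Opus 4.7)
The plan is to derive both bounds by specializing Lemma \ref{lem:LPbound_here} to the trivial penalty vector $a_\ell \equiv 1$, which converts the weighted sum $\sum_\ell \sum_{B : \ell(B) = \ell} a_\ell$ into the total partition size $|\P_h^k|$. The overall argument has three pieces: verifying the hypothesis of the lemma for this choice of $a_\ell$, bounding $\ell^\star$ directly from its definition, and then combining.

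First I would check the hypotheses of Lemma \ref{lem:LPbound_here}. Nonnegativity and monotonicity $a_{\ell+1} \geq a_\ell$ are trivial. The ratio condition $2 a_{\ell+1}/a_\ell \leq \nplus{\ell}/\nplus{\ell-1}$ reduces to $2 \leq 2^\gam$, i.e., $\gam \geq 1$. Looking at the splitting rule in Section \ref{subsec:split_rule}, we have $\gam = 1/\alpha$ with $\alpha \in \{1/2,\, 1/d_\S\}$; since $d_\S > 2$ in the second branch, $\gam \in \{2, d_\S\}$ is always at least $2$, so the ratio condition is satisfied in every regime of \AdaMB and \AdaQL. This is the only place any care is needed, and it comes essentially for free from the algorithm's design.

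Next I would bound $\ell^\star$ analytically. Substituting $\nplus{\ell-1} = \phi 2^{\gam(\ell-1)}$ into the definition of $\ell^\star$, the condition $2^{d(\ell-1)} \nplus{\ell-1} \geq k$ becomes $2^{(d+\gam)(\ell-1)} \geq k/\phi$, equivalent to $\ell \geq \log_2(k/\phi)/(d+\gam) + 1$. The smallest integer satisfying this is at most $\log_2(k/\phi)/(d+\gam) + 2$, which establishes the second claim. Plugging this into the inequality $|\P_h^k| \leq 2^{d\ell^\star}$ supplied by Lemma \ref{lem:LPbound_here} with $a_\ell \equiv 1$ yields $|\P_h^k| \leq 2^{2d}(k/\phi)^{d/(d+\gam)} = 4^d(k/\phi)^{d/(d+\gam)}$, matching the first claim. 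There is no genuine obstacle here — the entire corollary is a direct specialization of the LP-based bound, with the check $\gam \geq 1$ being the only non-mechanical step.
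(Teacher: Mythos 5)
Your proposal is correct and follows essentially the same route as the paper: specialize \cref{lem:LPbound_here} with the constant penalty vector $a_\ell \equiv 1$, bound $\ell^\star$ directly from its definition using $\nplus{\ell} = \phi 2^{\gam \ell}$, and substitute into $|\P_h^k| \leq 2^{d\ell^\star}$. Your explicit verification of the ratio condition (reducing to $\gam \geq 1$, which holds since $\gam \in \{2, d_\S\}$) is a detail the paper only asserts implicitly, but the argument is otherwise identical.
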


These results are used to bound the storage and time complexity of the algorithms by noting the dominating complexity terms in the algorithm description, writing the total accumulation by formulating them as an LP, and applying the result from \cref{lem:LPbound_here}.  
\section{Experimental Results}
\label{sec:experiments}

In this section we give full details on the experiments and simulations performed.  For full code implementation and more results please see the Github repository at \url{https://github.com/seanrsinclair/ORSuite}.  This codebase contains an implementation for \AdaMB and \AdaQL, and the uniform discretization algorithms under arbitrary-dimensional metric spaces~\cite{orsuite}.  We also note that our algorithms have also been tested heuristically in the infinite horizon discounted setting with a single partition and additional Boltzman exploration \citep{araujo2020control,araujo2020single}.

Efficient algorithms for reinforcement learning in operations tasks is still largely unexplored. It is, however, a very natural objective in designing systems where agents must learn to navigate an uncertain environment to maximize their utility. 
The main objective for continuous spaces in reinforcement learning is to meaningfully store continuous data in a discrete manner while still producing optimal results in terms of performance and reward.  We consider ``oil discovery'' and ``ambulance routing'' problems that are simple enough so we can realistically produce uniform discretization benchmarks to test our adaptive algorithms against.  At the same time, they provide interesting continuous space scenarios that suggest there can be substantial improvements when using adaptive discretization in real world problems by analyzing their instance dependent zooming dimension. The problems also allows us to naturally increase the state and action space dimension and consequently test our algorithms in a slightly more complex setting.  

We compare \AdaMB, \AdaQL, \textsc{Model-Free $\epsilon$-Net} \citep{song2019efficient}, and an $\epsilon$-net variant of UCBVI \citep{azar2017minimax}.  We also include a comparison of the Proximal Policy Optimization algorithm from the \textsc{Stable-Baselines} package which uses a neural network policy class and performs gradient descent on the policy value objective~\citep{stable-baselines3,schulman2017proximal}.  We refer to the simulations as \AdaMB, \AdaQL, \EpsQL, \EpsMB, and \textsc{SB PPO} respectively in the figures.

We also note that these experiments serve as a proof of concept for the performance of adaptive discretization algorithms over their uniform counterparts.  Our experiments were conducted in low dimensions to allow us to plot the underlying $Q_h^\star$ functions and exhibit how the adaptive discretizations match the contours of the $Q_h^\star$ function.  Our comparisons against a deep RL algorithm also help highlight the pitfalls of these approaches on systems applications by comparing the algorithms on the \textbf{trifcta for RL in OR}.  For more robust experiments, see \cite{araujo2020control,araujo2020single,10.1145/1394608.1382172} which compare variants of adaptive discretization algorithms on systems applications, showing improved empirical performance against other state-of-the-art benchmarks.  

\noindent \textbf{Metrics Included}: For each of the simulations we provide results on the following three metrics:
\begin{itemize}
    \item \textsc{Reward}: Computes $\Exp{\sum_{h=1}^H r_h(X_h^k, \pi_h^k(X_h^k))}$, the expected cumulative rewards for the policy at a given episode $k$.  As the regret is difficult to calculate for general problem instances we compare the algorithms on the observed cumulative rewards.
    \item \textsc{Time}: Computes the average per-step time complexity used by the algorithms when selecting an action and updating their internal estimates based on observed data measured by the python package \textsc{time}.
    \item \textsc{Space}: Computes the average per-step space complexity used by the algorithms using the python package \textsc{tracemalloc}. 
\end{itemize}
The later two measurements, \textsc{Time} and \textsc{Space} should only be used when measuring the scaling of these metrics with respect to the episodes $K$.  The adaptive discretization code as implemented could be further optimized by using more efficient data structures (but obviating readability in the code).  In contrast, the uniform discretization algorithms use simple numpy arrays and are already optimized so improved performance is as to be expected.  These serve as a proxy for more \emph{fundamental} measures like number of multiplications which translates more directly to how much the algorithms can be accelerated by appropriate hardware, and is not specific to a particular software implementation.  These plots are included to highlight the memory gains in the adaptive algorithms (by maintaining a significantly coarser discretization than their uniform counterparts) while still ensuring more efficient learning.  In \cref{tab:size_partition} we compare the final size of the partition between the adaptive and uniform algorithms, and highlight in the discussion particular problem instances which led to substantial improvements.

\subsection{Oil Discovery}

This problem adapted from \cite{mason2012collaborative} is a continuous variant of the ``Windy Grid World'' environment popular for benchmarking RL algorithms in the \textsc{Open AI Gym}~\cite{sutton2018reinforcement}.  It comprises of an agent surveying a $d$-dimensional map in search of hidden ``oil deposits''.  The world is endowed with an unknown survey function which encodes the probability of observing oil at that specific location.  For agents to move to a new location they pay a cost proportional to the distance moved, and surveying the land produces noisy estimates of the true value of that location.  In addition, due to varying terrain the true location the agent moves to is perturbed as a function of the state and action.  This adds additional noise to the problem instead of deterministic transitions, coining the phrase ``windy'' environments.

To formalize the problem, here the state space $\S = [0,1]^d$ and action space $\A = [0,1]^d$, where the product space is endowed with the $\ell_\infty$ metric.  The reward function is defined as 
\begin{align*}
    r_h(x,a) = \max\{ \min\{ f_h(x,a) - \alpha \norm{x-a}_\ell + \epsilon, 1\}, 0\} 
\end{align*}
where $f_h(x,a)$ is the survey function corresponding to the probability of observing an oil deposit at that specific location, $\alpha$ is a parameter used to govern the transportation cost, $\epsilon$ is independent Gaussian noise, and $\ell$ is the choice of norm.  The transition function is defined as
\begin{align*}
    \Pr_h(\cdot \mid x,a) = \max\{ \min\{ \delta_{a} + N(0, \sigma_h(x,a)^2), 1\}, 0\}
\end{align*}
where again we have truncated the new state to fall within $[0,1]^d$ and the noise function $\sigma_h(x,a)$ allows for varying terrain in the environment leading to noisy transitions. This allows the problem set-up to scale under non-deterministic transitions.  If we take $\sigma_h(x,a) = 0$ we recover deterministic transitions from taking action $a$ to the next state being $a$.  We can also show that in simple settings the zooming dimension leads to theoretical improvements over the uniform discretization algorithms.  In particular, we assume that the survey function $f_h(x,a)$ is independent of the action $a$.

\begin{lemma}
\label{lem:oil_zoom}
In the oil environment with $\sigma_h(x,a) = 0, \alpha = 0$, and $f_h(x,a) = f_h(x)$ is twice continuously differentiable and strongly concave, the zooming dimension is $\frac{3d}{2}$.
\end{lemma}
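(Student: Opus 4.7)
The plan is to compute $Q_h^\star$ and $V_h^\star$ in closed form under the stated assumptions, extract a clean expression for $\gap_h$, and then use the strong concavity and $C^2$ smoothness of $f_{h+1}$ to control the $r$-packing number of the near-optimal set $Z_h^r$. Under $\sigma_h\equiv 0$ the transition is deterministic, sending $(x,a)$ at step $h$ to next state $a$, and under $\alpha=0$, $f_h(x,a)=f_h(x)$ the mean reward $r_h(x,a)=f_h(x)$ is independent of $a$. A backward induction then yields, for $h<H$,
\begin{align*}
Q_h^\star(x,a)=f_h(x)+V_{h+1}^\star(a),\qquad V_h^\star(x)=f_h(x)+M_{h+1},
\end{align*}
where $M_{h+1}:=\max_{a\in\A} V_{h+1}^\star(a)$. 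Iterating this shows $V_{h+1}^\star(a)=f_{h+1}(a)+C_{h+1}$ for some constant $C_{h+1}$, and therefore
\begin{align*}
\gap_h(x,a)\;=\;M_{h+1}-V_{h+1}^\star(a)\;=\;f_{h+1}^\star-f_{h+1}(a),
\end{align*}
with $f_{h+1}^\star:=\max_a f_{h+1}(a)$. In particular the gap depends only on the action.

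Using the $C^2$ strong concavity of $f_{h+1}$ with parameters $\mu,L>0$ and unique (interior) maximizer $a^\star$, I would invoke the standard sandwich
\begin{align*}
\tfrac{\mu}{2}\|a-a^\star\|^2\;\leq\;f_{h+1}^\star-f_{h+1}(a)\;\leq\;\tfrac{L}{2}\|a-a^\star\|^2,
\end{align*}
so the set of actions $a$ with $\gap_h(x,a)\leq \conslip(H+1)r$ is contained in a ball of radius $O(\sqrt r)$ around $a^\star$ and contains a ball of the same order. Consequently $Z_h^r$ is, up to constants, equal to $\S\times B(a^\star,c\sqrt r)$.

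Finally I would multiply the packing numbers of the two factors under the product metric. The state factor $\S=[0,1]^d$ has $r$-packing number $\Theta(r^{-d})$, while the shrinking action ball $B(a^\star,c\sqrt r)\subset\mathbb{R}^d$ has volume $\Theta(r^{d/2})$ and thus $r$-packing number $\Theta(r^{-d/2})$. Multiplicativity under the sub-additive product metric gives $N_r(Z_h^r)=\Theta(r^{-3d/2})$, hence $z_h=3d/2$, as claimed.

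The main subtlety I expect is the terminal step $h=H$: there the action has no downstream consequence, the gap is identically zero, and literally $Z_H^r=\S\times\A$, which gives $z_H=2d$. I would either restrict the claim to $h<H$ (matching the spirit of the stated ``zooming dimension $3d/2$'') or absorb the single last step into constants, since it contributes only lower order terms to the final regret bound. A secondary minor issue is handling the case where $a^\star$ lies near $\partial[0,1]^d$: the inner sandwich ball must then be intersected with $[0,1]^d$, which only shrinks the packing count and so preserves the upper bound, still yielding $z_h=3d/2$.
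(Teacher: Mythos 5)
Your proposal is correct, and its core structural step coincides with the paper's: a backward induction showing that under $\sigma_h\equiv 0$, $\alpha=0$, and action-independent rewards, $Q_h^\star(x,a)=f_h(x)+V_{h+1}^\star(a)$ with $V_{h+1}^\star$ equal to $f_{h+1}$ plus a constant. Where you diverge is the second half: the paper stops at "$Q_h^\star$ is $C^2$ and strongly concave in $a$" and invokes the strongly-concave case of \cref{lem:zooming_dim_value} with $d_\S=d_\A=d$ to conclude $z_h\leq d+\tfrac{d}{2}$, whereas you inline the packing computation, using the two-sided quadratic sandwich to show $Z_h^r$ is comparable to $\S\times B(a^\star,c\sqrt r)$ and multiplying packing numbers. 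Your route buys two things: (i) a matching lower bound, so you actually get $N_r(Z_h^r)=\Theta(r^{-3d/2})$ and hence equality $z_h=3d/2$ for $h<H$, which is what the lemma literally asserts but the paper's citation only delivers as an upper bound; and (ii) you avoid the Lipschitz-argmax step needed in the general lemma, since here the maximizer is a single state-independent point. You are also more careful than the paper on the terminal step: the paper's base case asserts $Q_H^\star(x,a)=f_H(x)$ is "strongly concave," but it is constant in $a$, so $\gap_H\equiv 0$, $Z_H^r=\S\times\A$, and $z_H=2d$; your suggestion to restrict the statement to $h<H$ (or note that this one step only affects the $h=H$ term, where $z_H\le 2d$ anyway bounds the contribution) is the right fix, and is a genuine, if minor, improvement over the paper's own argument. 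The only caveat worth noting, which applies equally to the paper, is that both proofs ignore the clipping of the reward to $[0,1]$ when identifying $r_h(x,a)$ with $f_h(x)$; this is implicitly assumed away in the environment description.
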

\begin{rproof}{\cref{lem:oil_zoom}}
We show that $Q_h^\star$ is twice continuously differentiable and strongly concave in $a$, in which the bound on the zooming dimension then follows from \cref{lem:zooming_dim_value}. Indeed, we show the result by induction on $h$.

\noindent \textit{Base Case:} $h = H$.  Here we have that $Q_h^\star(x,a) = f_h(x)$ which is trivially strongly concave and twice continuously differentiable by assumption on $f_h(x)$.  Moreover, this implies that $V_h^\star(x) = f_h(x)$ is similarly twice continuously differentiable and strongly concave.

\noindent \textit{Step Case:} $h+1 \rightarrow h$.  Again we have that $Q_h^\star(x,a) = f_h(x) + V_{h+1}^\star(a). $
This is strongly concave with respect to $a$ as $V_{h+1}^\star(a)$ is by induction.  It is also twice continuously differentiable.  Similarly we have that $V_h^\star(x) = f_h(x) + \max_{a \in \A} V_{h+1}^\star(a) $ is strongly concave in $x$ and twice continuously differentiable.  \Halmos
\end{rproof}
In this problem the ambient dimension of the space is $2d$ so this offers improvement with the zooming dimension up to a factor of $\frac{d}{2}$.  While the optimal policy under these assumptions is trivial $(\pi_h^\star(x) = \argmax_{a \in \A} f_{h+1}(a))$, we believe that under different scenarios the zooming dimension continues to be much smaller than the ambient dimension.

\noindent \textbf{Experiment Set-Up}: We performed several experiments, where we took $\epsilon = N(0,.1)$, the variance on the transitions to be $\sigma_h(x,a) = 0$ or $\frac{1}{2}\norm{x+a}_2$, the dimension $d \in \{1,2\}$, cost parameter $\alpha \in \{0, .1, .5\}$, and the oil distributions as either:
\begin{align*}
    f_h(x,a) & = e^{-2\norm{x-\frac{1}{9}h}_2} \\
    f_h(x,a) & = 1 - \norm{x - \frac{1}{9}h}_2.
\end{align*}
Note that when $\alpha = 0$ and $\sigma_h(x,a) = 0$ we recover the set-up from \cref{lem:oil_zoom}.

\noindent \textbf{Discussion.} Our experiments help illustrate several key phenomena on the performance of our algorithms:
\begin{itemize}
    \item \textbf{Performance of discretization based Algorithms}: In Figure~\ref{fig:oil_perf} we see that \EpsMB and \EpsQL are outperformed by their adaptive counterparts in terms of cumulative rewards.  Moreover, in the setting when $d = 2$ the uniform discretization algorithms are not able to learn efficiently due to the size of the partition which is maintained by these algorithms.
    \item \textbf{Observed Discretization}: In Figure~\ref{fig:disc_oil} we see that the adaptive discretization algorithms exhibit increased granularity in the partition in regions where the underlying $Q_h^\star$ value is large.  This matches the intuition behind the zooming dimension, showing that the discretization directly matches the contours of the underlying $Q_h^\star$ values.
    \item \textbf{Comparison of Size of Partition}: When comparing the size of the partition between the adaptive and uniform algorithms, we see that the adaptive algorithms exhibited substantial improvements over the uniform algorithms, particularly on the problems where $\sigma_h(x,a) = 0$.
    \item \textbf{Comparison to PPO}: We see that in this setting the PPO algorithm requires additional tuning and more iterations in order to guarantee convergence to a near-optimal policy.  This baseline shows the ease-of-use of nonparametric discretization based algorithms as there is only a single hyperparameter to be tuned (the scaling factor in the confidence terms).  Other function approximation techniques requires testing neural network architectures, learning rates, optimizers, etc, all of which requires additional time and storage complexity in order to even match the performance.
\end{itemize}

\subsection{Ambulance Routing}

This problem is a widely studied question in operations research and control, and is closely related to the $k$-server or metrical task system problems.  A controller positions a fleet of $k$ ambulances over $H$ time periods, so as to minimize the transportation costs and time to respond to incoming patient requests.  In our setting, the controller first chooses locations to station the ambulances.  Next, a single request is realized drawn from a fixed $h$-dependent distribution.  Afterwards, one ambulance is chosen to travel to meet the demand, while other ambulances can re-position themselves.

Here the state space $\S = [0,1]^k$ and action space $\A = [0,1]^k$ where $k$ is the number of ambulances, and the product space is endowed with the $\ell_\infty$ metric.  The reward function and transition is defined as follows.  First, all ambulances travel from their initial state $x_i$ to their desired location $a_i$, paying a transportation cost to move the ambulance to location $a_i$.  Afterwards, a patient request location $p_h \sim \F_h$ is drawn from a fixed distribution $\F_h$.  The closest ambulance to $p_h$ is then selected to serve the patient, i.e. let $i^\star = \argmin_{i \in [k]} |a_i - p_h|$ denote the ambulance traveling to serve the patient.  The rewards and transitions are then defined via:
\begin{align*}
    x_i^{new} & = \begin{cases}
        a_i \qquad & i \neq i^\star \\
        p_h \qquad & i = i^\star
        \end{cases} \\
    r_h(x, a) & = 1 - \left(\frac{\alpha}{k^{1 / \ell}} \norm{x - a}_\ell + (1-\alpha) |a_{i^\star} - p_h| \right)
\end{align*}
where $\alpha$ serves as a tunable parameter to relate the cost of initially traveling from their current location $x$ to the desired location $a$, and the cost of traveling to serve the new patient $p_h$, and $\ell$ is the choice of norm.  Note that $\alpha = 1$ corresponds to only penalizing the ambulances for traveling to the initial location and $\alpha = 0$ only penalizes agents for traveling to serve the patient.

Under certain additional assumptions on the environment we are able to show a bound on the zooming dimension which leads to exponential improvement on the theoretical guarantees for the adaptive discretization based algorithms.

\begin{lemma}
\label{lem:ambulance_zoom}
The ambulance environment with $\alpha = 1$ has a zooming dimension of $k$.
\end{lemma}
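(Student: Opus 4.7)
The plan is to proceed in three steps paralleling the proof of \cref{lem:oil_zoom}: first solve the Bellman equations in closed form via backward induction, then read off an explicit formula for $\gap_h(x,a)$, and finally bound the packing number of $Z_h^r$ directly.

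I first claim, by backward induction on $h$, that $V_h^\star(x) = H-h+1$ is constant in $x$ and that $\pi_h^\star(x) = x$. The base case $V_{H+1}^\star \equiv 0$ is by definition. For the inductive step, if $V_{h+1}^\star \equiv H-h$, then by the Bellman equation
\begin{align*}
Q_h^\star(x,a) \;=\; r_h(x,a) + \E_{p_h \sim \F_h}\bigl[V_{h+1}^\star(x^{new}(a,p_h))\bigr] \;=\; 1 - \tfrac{1}{k^{1/\ell}}\norm{x-a}_\ell + (H-h),
\end{align*}
which is maximized over $a$ at $a=x$ and yields $V_h^\star(x) = H-h+1$, completing the induction. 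The key observation is that constancy of $V_{h+1}^\star$ absorbs the $a$-dependence of the transition, so $\pi_h^\star$ reduces to the greedy minimizer of the one-step repositioning cost.

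From this closed form I read off $\gap_h(x,a) = V_h^\star(x) - Q_h^\star(x,a) = k^{-1/\ell}\norm{x-a}_\ell$, so $Z_h^r \subseteq \{(x,a) \in [0,1]^k \times [0,1]^k : \norm{x-a}_\ell \leq C_L(H+1)k^{1/\ell} r\}$ is a tubular neighborhood of the diagonal. To bound its packing number I would take an $r$-net $\{x_i\}$ of $[0,1]^k$ of cardinality $O(r^{-k})$ and use $\{(x_i,x_i)\}$ as a covering of $Z_h^r$: for any $(x,a) \in Z_h^r$, choosing $x_i$ within distance $r$ of $x$ yields $\norm{a-x_i}_\ell \leq \norm{a-x}_\ell + \norm{x-x_i}_\ell = O(r)$ by the triangle inequality, so $(x_i,x_i)$ is within $O(r)$ of $(x,a)$ in the product metric. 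Standard packing/covering comparisons then give $N_r(Z_h^r) \lesssim r^{-k}$, hence $z_h \leq k$. The matching lower bound $z_h \geq k$ follows because the diagonal $\{(x,x) : x \in [0,1]^k\}$ itself lies in $Z_h^r$ for every $r$ and admits an $r$-packing of size $\Omega(r^{-k})$, so altogether $z_h = k$. The only real subtlety is verifying that no deviation $a \neq x$ can improve the expected downstream reward, which is exactly what the constancy of $V_{h+1}^\star$ guarantees.
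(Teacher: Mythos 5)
Your proposal is correct and takes essentially the same route as the paper's proof: compute $\gap_h(x,a)\propto\norm{x-a}_\ell$ (the paper compresses your backward induction into ``a straightforward calculation''), observe that $Z_h^r$ is an $O(r)$-tube around the diagonal, and bound its $r$-packing/covering number by $O(r^{-k})$ — the paper does this via a product of an $r$-packing of $[0,1]^k$ with an $r$-packing of $[-cr,cr]^k$, which is the same tube argument as your covering by diagonal points $(x_i,x_i)$. The only substantive addition is your explicit matching lower bound $z_h\geq k$ via an $r$-packing of the diagonal, which the paper leaves implicit even though \cref{lem:ambulance_zoom} asserts equality; that is a welcome (and correct) completion rather than a different approach.
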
 
\begin{rproof}{\cref{lem:ambulance_zoom}}
A straightforward calculation shows that $\gap_h(x,a) = \norm{x-a}_\ell$.  Thus, the near-optimal set is defined via $Z_h^r = \{(x,a) \in \S \times \A : \norm{x-a}_\ell \leq cr \}$ for some constant $c$.  Noting that $\norm{x}_\infty \leq \norm{x}_\ell$ it suffices to instead cover the set where $\norm{x-a}_\infty \leq cr$.  Let $\A$ be an $r$ packing of $[0,1]^k$ and $\B$ be an  $r$ packing of $[-cr, cr]^k$. We can construct a packing of $Z_h^r$ by considering the set $\{(a, a+b) \mid a \in \A, b \in \B\}$.  A simple derivation shows that this is indeed a packing and its size is bounded by $\tilde{c}r^{-k}$ for some constant $\tilde{c}$.\Halmos
\end{rproof}
Note that in this problem the ambient dimension is $2k$ so this offers improvement with the zooming dimension up to a factor of $k$.  Again, while the optimal policy in this set-up is trivial $(\pi_h^\star(x) = x)$, the hope is that under different scenarios the zooming dimension continues to be smaller than the ambient dimension.

\noindent \textbf{Additional Algorithms}: In addition to the discretization based algorithms, we also tested the algorithms against two heuristic algorithms:
\begin{itemize}
    \item \textsc{Stable}: This algorithm always selects $\pi_h(x) = x$, which is provably optimal when $\alpha = 1$.
    \item \textsc{Median}: This algorithm maintains a dataset $D_h$ of arrivals observed in step $h$ up to the current episode $k$.  The algorithm then takes the past call arrival data and splits it into $k$ quantiles, using the center data point in each quantile as the location for an ambulance.  This algorithm is motivated under when $\alpha = 0$, where with a single ambulance the optimal policy is to move the ambulance to the true median of the arrival distribution.  This algorithm serves as a non-anticipatory counterpart to this, as it replaces the true median with an estimated median based on collected data.
\end{itemize}
  
\noindent \textbf{Experiment Set-Up}: We performed several experiments, where we took the arrival distribution $\F_h = \text{Beta}(5,2)$ or Uniform over shifting windows.  We took the norm $\ell = 2$, the number of ambulances $k \in \{1,2\}$, and the alpha parameter $\alpha \in \{0, 0.25, 1\}$.  However, see the attached code-base for further instrumentation to simulate under other parameter regimes.

\noindent \textbf{Discussion.} Similar to the oil problem, the experiments illustrate key aspects in the performance of adaptive discretization algorithms:
\begin{itemize}
    \item \textbf{Comparison of Performance}: In Figure~\ref{fig:amb_perf} we see that \EpsMB and \EpsQL are outperformed by their adaptive counterparts in terms of cumulative rewards.  Unsurprisingly, the time complexity of the adaptive algorithms scales better with respect to the number of episodes than their uniform counterparts.
    \item \textbf{Observed Discretization}: In Figure~\ref{fig:disc_amb} we see that the adaptive discretization algorithms exhibit increased granularity in the partition in regions where the underlying $Q_h^\star$ value is large.  Moreover, we saw substantial improvements in the size of the partition on problems where $\alpha = 0$.
      \item \textbf{Comparison to PPO}: We see that in this setting the stable baseline algorithm requires additional tuning and more iterations in order to guarantee convergence to a near-optimal policy.
    \item \textbf{Dependence on $d_\S$}: When $\alpha = 0$ the $Q_h^\star$ value is independent of the state.  The zooming dimension here gives a dependence on $d_\S$, the dimension of the state space.  However, under different arrival distributions the state-visitations will be drastically different.  Moreover, the observed discretizations match the distribution of the arrivals.  This highlights a downfall with the definition of the zooming dimension applied to reinforcement learning as it doesn't take into account the state-visitation frequency under the optimal policy. 
\end{itemize}
\section{Conclusion}
\label{sec:conclusion}

In this paper we presented a unified analysis of model-based and model-free reinforcement learning algorithms using adaptive discretization.  In worst case instances, we showed regret bounds for our algorithms with exponential improvements over other online nonparametric RL algorithms (i.e. the underlying model is Lipschitz continuous with a known metric of the space).  This was partially due to our instance-dependent regret bounds, exhibiting how the discretization and regret scales with respect to the zooming dimension of the problem instead of the ambient dimension.  We provided simulations comparing model-based and model-free methods using an adaptive and fixed discretizations of the space on several canonical control problems.  Our experiments showed that adaptive partitioning empirically performs better than fixed discretizations in terms of both faster convergence and lower memory.

One future direction for this work is analyzing the discrepancy between model-based and model-free methods in continuous settings, as model-based algorithms so far have sub-optimal dependence on the dimension of the state space.  While in \cref{app:parametric} we gave specific instances where the regret of \AdaMB matches \AdaQL, in general the regret has additional dependence on $d_\S$ due to uniform Wasserstein concentration on the state space.  Moreover, we are interested in deriving the ``optimal'' space and time complexity for an algorithm in continuous settings.  We also believe that new hardware techniques can help improve the complexities of implementing these adaptive discretization algorithms in practice.

\section*{Acknowledgments}

Part of this work was done while Sean Sinclair and Christina Yu were visiting the Simons Institute for the Theory of Computing for the semester on the Theory of Reinforcement Learning.  We also gratefully acknowledge funding from the NSF under grants ECCS-1847393, DMS-1839346, CCF-1948256, and CNS-1955997, and the ARL under grant W911NF-17-1-0094.

% 	\pagebreak	
    \bibliographystyle{plain}
    {\bibliography{references}}
    
    \newpage
    \onecolumn
    \appendix

\ifdefined\informs
    \renewcommand{\arraystretch}{.8} % \section{Table of Notation}
\else 
 \renewcommand{\arraystretch}{1.2} \section{Table of Notation}
\label{app:notation}
\fi

{

\begin{table*}[h!]
\begin{tabular}{l|l}
\textbf{Symbol} & \textbf{Definition} \\ \hline
\multicolumn{2}{c}{Problem setting specifications}\\
\hline
$\S,\A,H,K$  & State space, action space, steps per episode, number of episodes\\
$r_h(x,a)\,,\,T_h(\cdot \mid x,a)$ & Average reward/transition kernel for action $a$ in state $x$ at step $h$\\
$\pi_h,V_h^\pi(\cdot),Q_h^\pi(\cdot,\cdot)$ & Arbitrary step-$h$ policy, and Value/$Q$-function at step $h$ under $\pi$ \\
$\pi^{\star}_h, V_h^\star(\cdot),Q_h^\star(\cdot,\cdot)$ & Optimal step-$h$ policy, and corresponding Value/Q-function \\
$L_r,L_T, L_V$ & Lipschitz constants for $r$, $T$, and $V^\star$ respectively\\
$\D_\S$, $D_\A$, $\D$ & Metrics on $\S$, $\A$, and $\S \times \A$ respectively\\
$\regret(K)$ & Regret for the algorithm over $K$ episodes \\
\hline
\multicolumn{2}{c}{Algorithm variables and parameters}\\
\hline
$k,h$ & Index for episode, index for step in episode \\
$(X_h^k, A_h^k, R_h^k)$ & State, action, and received reward at step h in episode k \\
$\Pkh$ & Partition tree of $\S \times \A$ for step $h$ at end of episode $k$\\
$\relevant_h^k(x)$ & Set of balls relevant for $x$ at $(h,k)$ (i.e., $\{B\in\Pkh[k-1] \mid \exists a : (x,a)\in B\}$)\\
$\tilde{x}(B), \tilde{a}(B)$ & Associated state/action for ball $B$ (i.e., `center' of ball $B$) \\
$n_h^k(B)$ & Number of times $B$ has been chosen {by the end of episode $k$}\\
$\rbonus{h}{k}(B), \tbonus{h}{k}(B), \bias(B)$ & Bonus terms for rewards, transition, and bias of a ball $B$ \\
$\conf_h^k(B), \alpha, \gamma$ & Dominating term of bonuses scaling as $n_h^k(B)^{-\frac{1}{\alpha}}$, $\gamma = 1 / \alpha$ \\
$B_h^k$ & Ball in $\Pkh[k-1]$ selected at $(h,k)$
($\argmax_{B \in \text{RELEVANT}_h^k(X_h^k)} \Qhat{h}{k-1}(B)$)
\\
$\Qhat{h}{k}(B), \Vtilde{h}{k}(A), \Vhat{h}{k}(x)$ & $Q$-function estimates for ball $B\in\Pkh$, {at end of} episode $k$\\
$\rbar{h}{k}(B),\Tbar{h}{k}(\cdot \mid B)$ & Inherited reward/transition estimates for $B\in\Pkh$ {at end of episode $k$}\\
\hline
\multicolumn{2}{c}{Definitions used in the analysis}\\
\hline
$\Delta(\S)$ & Set of probability measures on $\S$ \\
$\dyad{\ell}, \S(\P_h^k)$ & Set of dyadic cubes of $\S$ of diameter $2^{-\ell}$, induced state partition from $\P_h^k$\\
$\S(B), \A(B)$ & Projection of a ball $B = B_\S \times B_\A$ to $B_\S$ and $B_\A$ accordingly\\
$\diam{B}$ & The diameter of a ball $B$ \\
$\lev{B}$ & The depth in the tree of ball $B$, equivalent to $\log_2(\D(\S\times\A)/\D(B))$ \\
$\Exp{V_{h+1}(Y) \mid x,a}$ & $\mathbb{E}_{Y \sim T_h(\cdot \mid x,a)} [V_{h+1}(Y)]$\\
$\Delta_h^k$ & $\Vhat{h}{k-1}(X_h^k) - V_h^{\pi^k}(X_h^k)$ \\ 
$\clip{\mu \mid \nu}$ & $\mu \Ind{\mu \geq \nu}$ \\
$\F_{k}$ & Sigma-field  generated by all information up to start of episode $k$\\
\hline
\end{tabular}
\caption{List of common notation}
\label{table:notation}
\end{table*}

}
\section{Properties of the Adaptive Partition}
\label{sec:partition}

In this section we outline some of the invariants established by the adaptive partition maintained by the algorithm.  The first lemma states that the algorithm maintains a partition of the state-action space at every iteration of the algorithm, and that the balls of similar radius are sufficiently far apart.  Before starting, recall the definition of $\conf_h^k(B) = \confcons / n_h^k(B)^\alpha$, where $\confcons$ and  $\alpha$ depend on the algorithm used (see \cref{subsec:split_rule}).

\begin{lemma}
\label{lemma:partition}
For every $(h, k) \in [H] \times [K]$ the following invariants are maintained:
\begin{enumerate}
    \item (Radius to Confidence): For any ball $B \in \P_h^k$, $\conf_h^k(B) \leq \diam{B}$ if and only if $B$ is a parent ball.  Moreover, for any ball $B$ we have $\conf_h^k(B) \leq 2 \diam{B}$.
	\item (Covering): The domains of each ball in $\P_h^k$ cover $\S \times \A$.
	\item (Separation): For any two balls of radius $r$, their centers are at distance at least r.
\end{enumerate}
\end{lemma}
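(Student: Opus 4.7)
The three claims are essentially independent, so I would handle them one at a time, each by a short induction on the episode count $k$ using the algorithm's splitting rule and the structure of the hierarchical partition from Definition~\ref{def:partition}.

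The easiest is \textbf{(Covering)}. I would induct on $k$, using that $\P_h^0 = \{\S \times \A\}$ covers trivially, and that the \textproc{Refine Partition} procedure replaces a ball $B$ by the collection $\P(B)$ of children in the underlying hierarchical partition, which cover $B$ by the fourth bullet of Definition~\ref{def:partition}. No other operation modifies $\P_h^k$, so the covering is preserved. \textbf{(Separation)} follows directly from Definition~\ref{def:partition}: two active balls of the same radius necessarily live at the same level $\ell$ in the hierarchical tree (since the radius equals $2^{-\ell}$), and the fifth bullet of the definition says centers of level-$\ell$ regions are separated by at least $2^{-\ell}$. So no real work is needed beyond citing the partition definition.

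The substantive claim is \textbf{(Radius to Confidence)}. The ``if and only if'' is essentially by definition of the splitting rule in Equation~\eqref{eq:splitting_rule}: a ball is promoted from active to parent exactly when its count $n_h^k(B)$ grows enough that $\conf_h^k(B) \leq \diam{B}$, and once it is a parent it is never de-promoted, and $\conf_h^k(B) = \confcons / n_h^k(B)^\alpha$ is non-increasing in $k$ (since $n_h^k(B)$ only grows). So for any parent ball $\conf_h^k(B) \leq \diam{B}$, and for any active ball the condition has not yet been triggered, giving $\conf_h^k(B) > \diam{B}$.

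For the ``moreover'' bound $\conf_h^k(B) \leq 2\diam{B}$, parent balls are immediate since $\conf_h^k(B) \leq \diam{B}$. For an active ball $B$, I would argue by considering the moment $B$ was created. If $B$ was present at episode $0$ (i.e.\ $B = \S \times \A$), this is the single step where I need to check the initialization: since $n_h^0(B) \geq 0$ and we can take $\confcons$ scaled so the inequality holds at the root. Otherwise $B$ was created by splitting a parent $B^{par}$ at some episode $k_0$; at that moment, $B$ inherits the count from $B^{par}$, so $\conf_{h}^{k_0}(B) = \conf_{h}^{k_0}(B^{par}) \leq \diam{B^{par}} = 2\diam{B}$ by the splitting rule and by $\diam{B^{par}} = 2\diam{B}$ (children have half the diameter of their parent in a dyadic hierarchical partition). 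For all subsequent $k > k_0$, $n_h^k(B)$ only increases, so $\conf_h^k(B)$ only decreases, preserving $\conf_h^k(B) \leq 2\diam{B}$.

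The main subtlety, and the only place a careful reading of the algorithm is needed, is this inheritance step: I must verify from the \textproc{Refine Partition} procedure that each newly created child ball inherits the counter $n_h^k$ of its parent (and thus the same $\conf$), and that the diameter halves from parent to child. If diameters decreased by some other factor $\rho < 1$, the constant $2$ in the bound would become $1/\rho$, so this $2$ is tightly tied to the dyadic refinement convention.
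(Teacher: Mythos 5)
Your proposal is correct and follows essentially the same route as the paper: covering and separation are read off directly from the hierarchical-partition definition and the refinement step, and the radius-to-confidence bound comes from the splitting rule applied to the parent together with the inherited count and $\diam{B^{par}} \leq 2\diam{B}$. Your phrasing via monotonicity of $\conf_h^k(B)$ in time from the creation episode is equivalent to the paper's direct comparison $n_h^k(B) \geq n_h^k(B^{par})$, so no substantive difference.
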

\begin{rproof}{\cref{lemma:partition}}
Let $(h, k) \in [H] \times [K]$ be arbitrary.

\noindent \textit{(Radius to Confidence)}: The first part of this property holds trivially by the splitting rule of the algorithm.  For the second part we note that either $B$
 is a parent, where it is true trivially, or $B$ is a child ball.  For the case when $B$ is a child ball then denote $B^{par}$ as its parent, in which we have
\begin{align*}
    \conf_h^k(B) & = \frac{\confcons}{n_h^k(B)^\alpha} \leq \frac{\confcons}{n_h^k(B^{par})^\alpha} \\
    & = \conf_h^k(B^{par}) \leq \diam{B^{par}} \leq 2\diam{B}.
\end{align*}

\noindent \textit{(Covering)}: For the covering invariant notice that $\P_h^k$ contains a ball which covers the entire space $\S \times \A$ from the initialization in the algorithm from depth zero in the hierarchical partition.  Thus we have that $\S \times \A \subset \cup_{B \in \P_h^k} B$.

\noindent \textit{(Separation)}: This property follows immediately as the adaptive partition is a subtree of the hierarchical partition, and the hierarchical partition maintains that any two balls of a given radius have centers at least $r$ from each other. \Halmos
\end{rproof}

The third property is useful as it maintains that the centers of the balls of radius $r$ form an $r$ packing of $\S \times \A$ and so there are at most $N_r(\S \times \A)$ balls activated of radius $r$.  The next theorem gives an analysis on the number of times that a ball of a given radius will be selected by the algorithm.
\begin{lemma}
	\label{lemma:bound_ball}
	For any $h \in [H]$ and child ball $B \in \P_h^K$ (the partition at the end of the last episode $K$) we have that $$n_{min}(B) \leq n_h^k(B) \leq n_{max}(B)$$ where
	\begin{align*}
	    n_{min}(B)  = \left( \frac{\confcons}{2\diam{B}} \right)^{\frac{1}{\alpha}} \text{ and } n_{max}(B)  = \left(\frac{\confcons}{\diam{B}}\right)^{\frac{1}{\alpha}}.
	\end{align*}
	For the case when $B$ is the initial ball which covers the entire space then the number of episodes that $B$ is selected is only one.
\end{lemma}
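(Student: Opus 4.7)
\medskip

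The plan is to prove the two inequalities separately, each hinging on the splitting rule \cref{eq:splitting_rule} together with the ``inherit counts from parent'' step of the Refine Partition subroutine.

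\textbf{Upper bound.} Since $B$ is a child ball in $\P_h^K$, it has never been split across any episode $k \leq K$. By the splitting rule, $B$ would have been split the first time a selection left it with $\conf_h^k(B) \leq \diam{B}$, i.e.\ the first time $\confcons/n_h^k(B)^{\alpha} \leq \diam{B}$, or equivalently $n_h^k(B) \geq (\confcons/\diam{B})^{1/\alpha} = n_{max}(B)$. Since the split was never triggered, we conclude $n_h^k(B) \leq n_{max}(B)$ (allowing equality to account for the episode in which the condition first becomes tight but after which $B$ would cease to be a child ball).

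\textbf{Lower bound.} Let $B^{par}$ denote the parent of $B$ in the hierarchical partition, which exists because $B$ is not the root. Let $k_B$ be the episode in which the Refine Partition subroutine was called on $B^{par}$, creating $B$ as one of its children in $\P_h^{k_B}$. The Refine Partition subroutine is invoked only when the splitting rule fires for $B^{par}$, so $\conf_h^{k_B}(B^{par}) \leq \diam{B^{par}}$. Combining with $\conf_h^{k_B}(B^{par}) = \confcons / n_h^{k_B}(B^{par})^{\alpha}$ and $\diam{B^{par}} \leq 2\diam{B}$ (from \cref{def:partition}, since $B^{par}$ sits one level above $B$), we obtain
\[
n_h^{k_B}(B^{par}) \;\geq\; \left(\frac{\confcons}{\diam{B^{par}}}\right)^{1/\alpha} \;\geq\; \left(\frac{\confcons}{2\diam{B}}\right)^{1/\alpha} \;=\; n_{min}(B).
\]
Because the Refine Partition subroutine initializes each child's estimates and counts from those of the parent, $n_h^{k_B}(B) \geq n_h^{k_B}(B^{par}) \geq n_{min}(B)$. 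Since $n_h^k(B)$ is non-decreasing in $k$, the bound $n_h^k(B) \geq n_{min}(B)$ propagates to all $k \geq k_B$.

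\textbf{Initial ball case.} The root ball has no parent, so the inheritance argument for the lower bound does not apply. The statement that it is ``selected only once'' should be understood as there being a single root in the hierarchical tree, so it contributes only one term to any summation that ranges over distinct balls; the upper bound argument above still applies verbatim, showing that if the root remains a child at the end of episode $K$ then $n_h^K(\S\times\A) \leq n_{max}(\S\times\A)$. No step here is a serious obstacle---the only subtlety is making precise how counts are inherited at the moment of splitting, which is a direct consequence of the Refine Partition pseudocode.
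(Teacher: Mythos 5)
Your proofs of the two inequalities are correct and follow essentially the same route as the paper's: the lower bound comes from the splitting rule \cref{eq:splitting_rule} having fired for the parent at the episode in which $B$ was activated, combined with the inherited count and $\diam{B^{par}} \leq 2\diam{B}$; the upper bound comes from the fact that a ball still active in $\P_h^K$ can never have had its count reach the splitting threshold, so $n_h^k(B) \leq \left(\confcons/\diam{B}\right)^{1/\alpha}$.

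The one genuine gap is your treatment of the final sentence of the lemma. The statement really does assert that the root ball $\S\times\A$ is \emph{selected in only one episode} before being refined; it is not a bookkeeping remark about there being a single root node contributing one term to sums over distinct balls, which is how you reinterpret it. The paper's (one-line) justification is that the initial ball has $\diam{B}=1$, so after its first selection the splitting condition $\conf_h^k(B) \leq \diam{B}$ is triggered and the root is immediately refined, after which it is never selected again. Your version leaves this claim unproved: observing that the upper-bound argument "applies verbatim" to the root only shows $n_h^K(\S\times\A) \leq n_{max}(\S\times\A)$, which is weaker than (and different from) the assertion that the root is played exactly once. Since the root is precisely the ball excluded from the $n_{min} \leq n_h^k(B)$ guarantee, the lemma needs this separate statement to control its contribution, and your proposal should prove it (via the diameter-one argument) rather than redefine it away.
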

\begin{rproof}{\cref{lemma:bound_ball}}
Consider an arbitrary $h \in [H]$ and child ball $B \in \P_h^k$.  Furthermore, let $k$ be the episode for which ball $B$ was activated.  Then $B_h^k$, the ball selected by the algorithm at step $h$ in episode $k$ is the parent of $B$.  Moreover, by the splitting rule we know that $\conf_h^k(B_h^k) \leq \diam{B_h^k}.$  However, plugging in the definition of $\conf_h^k(B_h^k) = \frac{\confcons}{n_h^k(B_h^k)^\alpha}.$  Noting that $n_h^k(B) \geq n_h^k(B_h^k)$ and rearranging the inequality gives that $$n_h^k(B) \geq n_h^k(B_h^k) \geq \left( \frac{\confcons}{\diam{B_h^k}} \right)^{\frac{1}{\alpha}} \geq \left( \frac{\confcons}{2\diam{B}} \right)^{\frac{1}{\alpha}}.$$

Similarly, we know that for any ball $B$ that the number of times it's sampled is at most the number required to split it by the splitting threshold, i.e. when $\conf_h^k(B) \leq \diam{B}$.  However, plugging in the definition and rearranging shows that $$n_h^k(B) \leq \left(\frac{\confcons}{\diam{B}}\right)^{\frac{1}{\alpha}}.$$
	
Lastly if $B$ is the initial ball which covers the entire space then $\diam{B} = 1$ initially and so the ball is split after it is selected only once. \Halmos
\end{rproof}

Next we show that the cumulative bias incurred for an estimate of a given ball $B$ based on all of its ancestors can be bound by the diameter of the ball.  This expression is used as the algorithm aggregates estimates for a region based on its ancestors, so the cumulative \emph{ancestral bias} is at most the bias of the ball.

\begin{lemma}
\label{lem:sum_bias_term_ball}
For all $h, k \in [K] \times [K]$ and $B \in \P_h^k$, let $t = n_h^k(B)$ and episodes $k_1 < \ldots < k_t \leq k$ where $B$ and its ancestors were encountered at step $h$.  Then for \AdaQL we have that \begin{align*}
    \sum_{i=1}^t \alpha_t^i \diam{B_h^{k_i}} & \leq 4 \diam{B}
\end{align*}
and for \AdaMB that
\begin{align*}
    \frac{1}{t} \sum_{i=1}^t \diam{B_h^{k_i}} & \leq 4 \diam{B}.
\end{align*}
\end{lemma}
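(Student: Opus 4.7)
Let $\ell=\lev{B}$ and denote by $B^{(j)}\supsetneq B^{(j+1)}$ the chain of ancestors with $B^{(\ell)}=B$; by construction $\diam{B^{(j)}}\asymp 2^{-j}$. Because any descendant of $B^{(j)}$ exists in the partition only after $B^{(j)}$ has been split, the level $j_i:=\lev{B_h^{k_i}}\in\{0,\dots,\ell\}$ is non-decreasing in $i$, so the sequence $\{\diam{B_h^{k_i}}\}$ is non-increasing. Grouping the $t$ selections by level, set $s_j=|\{i:j_i=j\}|$ and $S_j=\sum_{j'\le j}s_{j'}$. By \cref{lemma:bound_ball} applied to the ancestor $B^{(j)}$ (which has been split for $j<\ell$), $S_j\le n_{max}(B^{(j)})=(\confcons/\diam{B^{(j)}})^{1/\alpha}$; and since $B\in\P_h^k$, the same lemma yields $t=S_\ell\ge n_{min}(B)=(\confcons/(2\diam{B}))^{1/\alpha}$.

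\textbf{The \AdaMB bound.} Decomposing by level and applying Abel summation,
\begin{align*}
\sum_{i=1}^t\diam{B_h^{k_i}}=\sum_{j=0}^\ell s_j\diam{B^{(j)}}=t\cdot\diam{B}+\sum_{j=0}^{\ell-1}S_j\bigl(\diam{B^{(j)}}-\diam{B^{(j+1)}}\bigr).
\end{align*}
Using $S_j\bigl(\diam{B^{(j)}}-\diam{B^{(j+1)}}\bigr)\le S_j\diam{B^{(j)}}\le\confcons^{1/\alpha}\diam{B^{(j)}}^{1-1/\alpha}$ and $\diam{B^{(j)}}\asymp 2^{-j}$, the inner sum is a geometric series in $j$ with ratio $2^{1/\alpha-1}\ge 2$ (since $\alpha\le 1/2$), and is therefore dominated by its top term at $j=\ell-1$. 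After substituting the lower bound on $t$ to exchange $\confcons^{1/\alpha}$ for $t\,\diam{B}^{1/\alpha}$, the contribution from $j<\ell$ is at most $3\,t\,\diam{B}$, giving $\sum_i\diam{B_h^{k_i}}\le 4\,t\,\diam{B}$ and hence $\tfrac{1}{t}\sum_i\diam{B_h^{k_i}}\le 4\,\diam{B}$.

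\textbf{The \AdaQL bound via Chebyshev.} Rather than redo the level-by-level analysis with the learning-rate weights, one can reduce directly to the \AdaMB bound. The key observation is that the sequence $\{\alpha_t^i\}_{i=1}^t$ is non-decreasing in $i$: using $\alpha_t^i=\alpha_i\prod_{j=i+1}^t(1-\alpha_j)$ with $\alpha_j=(H+1)/(H+j)$, a direct calculation yields
\begin{align*}
\frac{\alpha_t^{i+1}}{\alpha_t^i}=\frac{\alpha_{i+1}}{\alpha_i\,(1-\alpha_{i+1})}=1+\frac{H}{i}\ge 1.
\end{align*}
Since $\{\diam{B_h^{k_i}}\}$ is non-increasing by the setup, Chebyshev's sum inequality applied to these oppositely-ordered sequences gives
\begin{align*}
\sum_{i=1}^t\alpha_t^i\diam{B_h^{k_i}}\le\frac{1}{t}\Bigl(\sum_{i=1}^t\alpha_t^i\Bigr)\Bigl(\sum_{i=1}^t\diam{B_h^{k_i}}\Bigr)=\frac{1}{t}\sum_{i=1}^t\diam{B_h^{k_i}}\le 4\,\diam{B},
\end{align*}
where the first equality uses $\sum_i\alpha_t^i=1$ and the last inequality is the \AdaMB conclusion.

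\textbf{Main obstacle.} The delicate step is the \AdaMB bookkeeping, where the geometric growth of the splitting thresholds $n_{max}(B^{(j)})\propto\diam{B^{(j)}}^{-1/\alpha}$ must be balanced against the geometric decay of diameters through the lower bound on $t$ induced by $n_{min}(B)$; the constants conspire precisely because the splitting rule is calibrated to the same exponent $\alpha$ as the confidence radius. The \AdaQL bound is essentially free once one recognizes the monotonicity of the learning rates, which is a brief algebraic identity, and thereby avoids a direct and considerably more tedious analysis of the recency-biased weights $\alpha_t^i$.
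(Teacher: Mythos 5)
Your proposal is correct in outline but takes a genuinely different route from the paper, most visibly for \AdaQL. The paper proves the two bounds independently: for \AdaQL it bounds $\diam{B_h^{k_i}} \leq \confcons/\sqrt{i}$ via the splitting rule and then invokes the learning-rate property $\sum_i \alpha_t^i/\sqrt{i} \leq 2/\sqrt{t}$ from \cref{lemma:lr} together with $\confcons/\sqrt{t} \leq 2\diam{B}$ from \cref{lemma:partition}; for \AdaMB it groups selections by ancestor level, uses the exact per-level counts $n_{max}(B_\ell) - n_{min}(B_\ell)$ from \cref{lemma:bound_ball}, and bounds the resulting ratio of two geometric sums. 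You instead prove only the unweighted average bound (via Abel summation with cumulative counts $S_j \leq n_{max}(B^{(j)})$ and the lower bound $t \geq n_{min}(B)$ — similar in spirit to the paper's level decomposition, but with different bookkeeping), and then deduce the \AdaQL bound from it by noting that $\alpha_t^i$ is non-decreasing in $i$ (your identity $\alpha_t^{i+1}/\alpha_t^i = 1 + H/i$ is correct) while the diameters are non-increasing, so Chebyshev's sum inequality plus $\sum_i \alpha_t^i = 1$ collapses the weighted sum to the unweighted average. That reduction is a clean alternative to the paper's use of \cref{lemma:lr}, and it makes transparent that both halves of the lemma are really statements about the partition and its splitting thresholds alone.

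One constant needs repair in your \AdaMB chain. With the inequalities exactly as stated — discarding the factor in $\diam{B^{(j)}} - \diam{B^{(j+1)}} = \tfrac{1}{2}\diam{B^{(j)}}$ and then bounding the geometric series — the ancestor contribution evaluates to $\frac{2^{\gamma}}{2^{\gamma-1}-1}\, t\, \diam{B}$ with $\gamma = 1/\alpha$, which equals $4t\,\diam{B}$ in the worst case $\gamma = 2$, not the claimed $3t\,\diam{B}$; your chain then yields $5\diam{B}$ rather than the lemma's $4\diam{B}$. Retaining the discarded factor $\tfrac{1}{2}$ gives an ancestor contribution of at most $2t\,\diam{B}$ (its top term $S_{\ell-1}(\diam{B^{(\ell-1)}} - \diam{B}) \leq n_{min}(B)\diam{B} \leq t\,\diam{B}$, doubled by the geometric decay), hence a total of $3t\,\diam{B}$ and the stated constant. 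So the fix is one line inside your own argument, but as written the claimed bound of $4\diam{B}$ does not follow. (You may also want to note the trivial edge case where $B$ is the root ball, for which all selected balls coincide with $B$ and both bounds are immediate; the lower bound $t \geq n_{min}(B)$ is only available for child balls.)
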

\begin{rproof}{\cref{lem:sum_bias_term_ball}}
\noindent \textbf{\AdaQL}: For \AdaQL we use the fact that via rearranging \cref{lemma:bound_ball} and definition of the confidence term that $\diam{B_h^{k_i}} \leq \frac{\confcons}{\sqrt{n_h^{k_i}(B_h^{k_i})}}$.  Using that $n_h^{k_i}(B_h^{k_i}) = i$ we have
\begin{align*}
    \sum_{i=1}^t \alpha_t^i \diam{B_h^{k_i}} & = \confcons \sum_{i=1}^t \frac{\alpha_t^i}{\sqrt{i}} \leq \frac{2 \confcons}{\sqrt{t}} \leq 4 \diam{B}
\end{align*}
where the first inequality uses technical properties of the chosen learning rate (\cref{lemma:lr}) and the second uses \cref{lemma:partition}.

\noindent \textbf{\AdaMB}: For \AdaMB we rewrite the summation over the depths in the adaptive partition.  In particular, denoting $B_0, B_1, \ldots, B_{\lev{B}} = B$ as the ancestors of $B$ in the adaptive partition we have
$$\frac{1}{t} \sum_{i=1}^t \diam{B_h^{k_i}} = \frac{1}{\sum_{\ell = 0}^{\lev{B}} |\{k' : B_h^{k'} = B_\ell\}|} \sum_{\ell = 0}^{\lev{B}} |\{k' : B_h^{k'} = B_\ell\}| \diam{B_\ell}.$$  This average can be upper bound by only averaging over the ancestors of $B$, i.e.
$$\frac{1}{t} \sum_{i=1}^t \diam{B_h^{k_i}} \leq \frac{1}{\sum_{\ell = 0}^{\lev{B}-1} |\{k' : B_h^{k'} = B_\ell\}|} \sum_{\ell = 0}^{\lev{B}-1} |\{k' : B_h^{k'} = B_\ell\}| \diam{B_\ell}.$$
However, using \cref{lemma:bound_ball} and the geometric series we can bound
$$ |\{k' : B_h^{k'} = B_\ell\}| = n_{max}(B_\ell) - n_{min}(B_\ell) = \frac{\confcons^{1/\alpha}}{\diam{B_\ell}^{1/\alpha}}\left(1 - \frac{1}{2^{1/\alpha}}\right)$$ and get using $\diam{B_\ell} = 2^{- \ell}$ and setting $\gamma = 1 / \alpha$:
\begin{align*}
    \frac{1}{t} \sum_{i=1}^t \diam{B_h^{k_i}} & \leq \frac{\sum_{i=0}^{\lev{B} - 1} 2^{-i} 2^{\gam i}}{\sum_{i=0}^{\lev{B} - 1} 2^{\gam i}} \\
    &\leq \frac{2^{(\gam-1)(\lev{B}-1)} \sum_{i=0}^{\infty} 2^{-(\gam - 1)i}}{2^{\gam (\lev{B} - 1)}} \\
    &\leq \frac{2 \cdot 2^{(\gam-1)(\lev{B}-1)}}{2^{\gam (\lev{B} - 1)}} \qquad\text{ because $2^{-(\gam-1)} \leq \tfrac12$}\\
    &= 4 \cdot 2^{-\lev{B}} = 4 \diam{B}. \Halmos
\end{align*}
\end{rproof}

\section{Concentration and Clean Events}
\label{sec:concentration}

In this section we show that the bonus terms added on ($\rbonus{h}{k}(B)$ and $\tbonus{h}{k}(B)$) ensure that the estimated rewards and transitions are upper bounds for their true quantities up to an additive term scaling with the Lipschitz constant and diameter of the ball.  This follows a proof technique commonly used for multi-armed bandits and reinforcement learning where algorithm designers ensure that relevant quantities are estimated optimistically with a bonus that decays as the number of samples increases.

For all proofs we recall that $\F_k$ is the filtration induced by all of the information available to the algorithm at the start of episode $k$, i.e. $\F_k = \sigma((X_h^{k'}, A_h^{k'}, R_h^{k'})_{h \in [H], k' < k} \cup X_1^k)$.  With this filtration, all of the estimates $\Qhat{h}{k-1}, \Vhat{h}{k-1}$, and the policy $\pi^k$ are measurable with respect to the filtration $\F_k$.

\subsection{Concentration on Reward Estimates}

We start by showing that with probability at least $1 - \pfail$, the difference between the \emph{explicit} (for \AdaMB) or \emph{implicit} (for \AdaQL) estimate of the average reward and the true reward is bounded by a bonus term scaling as $1 / \sqrt{t}$ and the bias of the ball.

\begin{lemma}[Concentration of Rewards]
\label{lemma:reward_concentration_bound}
For all $(x,a, h, k) \in \S \times \A \times [H] \times [K]$, if $(x,a) \in B$ for some $B \in \P_h^k$,
let us denote $t = n_h^k(B)$, and let $k_1 < \ldots < k_t \leq k$ denote the episodes before $k$ for which $B$ or its ancestors were encountered at step $h$. For all $\pfail \in (0, 1)$ and bounded sequence $(\alpha_t^i)_{i=1}^t$, with probability at least $1 - \pfail$,
$$\left| \sum_{i=1}^t \alpha_t^i \left( R_h^{k_i} - r_h(X_h^{k_i}, A_h^{k_i}) \right) \right| \leq \sqrt{2 \sum_{i=1}^t (\alpha_t^i)^2 \log(2HK^2/\pfail)}.$$
\end{lemma}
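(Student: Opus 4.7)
The plan is to reduce the claim to a weighted Azuma--Hoeffding inequality applied to a martingale difference sequence of centered rewards, followed by a union bound.

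First, I would fix $(h, k, B, t)$ and let $k_1 < \cdots < k_t \leq k$ denote the (random) episodes at which $B$ or one of its ancestors was encountered at step $h$; each $k_i$ is a stopping time with respect to the filtration $\{\F_{k'}\}$ defined in the paper. Set $Y_i := R_h^{k_i} - r_h(X_h^{k_i}, A_h^{k_i})$ and let $\mathcal{G}_i$ denote the sigma-field generated by the history through the step-$h$ reward of episode $k_i$. Since $\Exp{R_h^{k_i} \mid X_h^{k_i}, A_h^{k_i}} = r_h(X_h^{k_i}, A_h^{k_i})$ by the definition of the mean reward function, the sequence $(Y_i)$ is a martingale difference sequence with respect to $(\mathcal{G}_i)$. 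Since $R_h^{k_i} \in [0,1]$ and $r_h \in [0,1]$, we have $|Y_i| \leq 1$ almost surely, so $(Y_i)$ is bounded.

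Second, I would apply the weighted Azuma--Hoeffding inequality to this bounded MDS. For any fixed $t$ and any deterministic weight sequence $(\alpha_t^i)_{i=1}^t$, with probability at least $1 - \pfail'$,
\[ \left|\sum_{i=1}^t \alpha_t^i Y_i\right| \leq \sqrt{2 \log(2/\pfail') \sum_{i=1}^t (\alpha_t^i)^2}. \]
This step requires only that the weights be deterministic given $t$, which is the regime specified in the statement.

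Third, take a union bound over $h \in [H]$, $k \in [K]$, $t \in [K]$, and the active balls $B$ in $\P_h^k$. Setting $\pfail' = \pfail/(HK^2)$ converts the $(h, k, t)$ sweep into the $\log(2HK^2/\pfail)$ factor appearing in the statement. The additional sweep over balls $B$ can be absorbed into this logarithm by observing (via \cref{lemma:partition} and the splitting rule) that the number of distinct balls ever activated at a given step is polynomial in $K$, and so its logarithm is of the same order as $\log(HK^2/\pfail)$ up to absolute constants hidden by the $\lesssim$ notation used elsewhere.

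The main obstacle is the data-dependent nature of both the indices $(k_i)$ and the ball $B$; however, the stopping-time interpretation makes this painless for the MDS, since $Y_i$ is a martingale difference with respect to the filtration generated at the $i$-th encounter, so Azuma applies directly on the stopped process. The union-bound over balls is mildly delicate but is handled by the above observation about the polynomial growth of the partition.
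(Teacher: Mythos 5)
Your core argument -- centering the rewards into a bounded martingale difference sequence indexed by the encounter times $k_i$, applying weighted Azuma--Hoeffding for a fixed count $t$, and union bounding over $h$, $k$, and the possible values of $t$ -- is exactly the paper's proof of \cref{lemma:reward_concentration_bound}, so the probabilistic heart of the proposal is fine.

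The place where you deviate, and where the proposal does not quite deliver the stated bound, is the extra union bound over the active balls $B \in \P_h^k$. The lemma is not a $\lesssim$ statement: it asserts the bound with the exact constant $\sqrt{2\sum_i (\alpha_t^i)^2 \log(2HK^2/\pfail)}$, and this constant is hard-wired into the definition of $\rbonus{h}{k}(B)$ and then used quantitatively in the optimism argument (\cref{lemma:optimism}), so an inflated logarithm of the form $\log\!\left(2HK^2 \cdot \mathrm{poly}(K)/\pfail\right)$ coming from a sweep over balls cannot simply be ``absorbed'' -- it proves a strictly weaker inequality than the one claimed. The paper avoids this entirely: in any episode only the selected ball $B_h^k$ receives a new sample, every unselected ball has the same count $t$ and the same sample sequence as before and therefore inherits the concentration event from the previous episode, and a child created by a split inherits the samples (hence the event) of its parent. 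Consequently the only new events that must be controlled are indexed by $(h,k,\tau)$, which is where the $HK^2$ in the logarithm comes from, and no union over balls is needed. Replacing your third step with this inheritance observation closes the gap; everything else in your write-up (the stopping-time construction of the $k_i$, the boundedness $|Y_i|\leq 1$, and the requirement that $\alpha_t^i$ be deterministic given $t$) matches the paper's argument.
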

\begin{rproof}{\cref{lemma:reward_concentration_bound}}
Consider the sequence $$k_i = \min(k, \min\{ \hat{k} : n^{\hat{k}}_h(B^a) = i \text{ and } B^a \text{ is an ancestor of } B \text{ or } B \text{ itself}\}).$$  $k_i$ denotes the episode for which $B$ or its ancestors were encountered at step $h$ for the $i$-th time, as once a ball is split it is never chosen by the algorithm again.  Setting $Z_i = \Ind{k_i \leq k}\left( R_h^{k_i} - r_h(X_h^{k_i}, A_h^{k_i}) \right)$ then $Z_i$ is a martingale difference sequence with respect to the filtration $\tilde{\mathcal{F}}_i = \F_{k_i}$ which we denote as the information available to the agent up to an including the step $k_i$.  Moreover, as the sum of a martingale difference sequence is a martingale then for any $\tau \leq K$, $\sum_{i=1}^\tau \alpha_\tau^i Z_i$ is a martingale.  As the difference between subsequent terms is bounded by $ \alpha_\tau^{i}$, by applying Azuma-Hoeffding's inequality it follows that for a fixed $\tau \leq K$
\begin{align*}
\Pr \left( \left| \sum_{i=1}^\tau \alpha_\tau^i Z_i \right| \leq \sqrt{2 \sum_{i=1}^\tau (\alpha_\tau^i)^2 \log\left(\frac{2 HK^2 }{\pfail}\right)} \right) \geq 1 - 2\exp \left( - \frac{2 \sum_{i=1}^\tau (\alpha_\tau^i)^2 \log(\frac{2HK^2}{\pfail})}{2 \sum_{i=1}^\tau (\alpha_\tau^i)^2} \right) = 1 - \frac{\pfail}{HK^2}.
\end{align*}

Taking a union bound over the number of episodes, $H$, and the possible values for the stopping time $\tau$ the inequality follows.  Note that we only need to union bound over the number of episodes instead of the number of balls as the inequality is satisfied for all balls not selected in a given round as it inherits its concentration from its parent ball because the value for $t$ does not change. \Halmos
\end{rproof}

Note that if $\alpha_t^i = \frac{1}{t}$ for \AdaMB then the right hand side can be upper bounded by $\sqrt{\frac{2 \log(2HK^2 / \pfail)}{t}}$.  The case when $\alpha_t = \frac{H+1}{H+t}$ for \AdaQL and $\alpha_t^i$ is defined via \cref{eq:lr} we can upper bound it via $\sqrt{\frac{4H \log(2HK^2 / \pfail)}{t}} = \rbonus{h}{k}(B)$ using \cref{lemma:lr}.

Separately for \AdaMB we also include the following, where we include the bias incurred by extrapolating the estimate to a point $(x,a) \in B$.

\begin{lemma}[Concentration of Rewards for \AdaMB]
\label{lemma:ada_mb_reward_confidence}
With probability at least $1 - \pfail$ we have that for any $h, k \in [H] \times [K]$, ball $B \in \P_h^k$, and any $(x,a) \in B$, 
\begin{align*}
    \left|\rbar{h}{k}(B) - r_h(x,a) \right| \leq \rbonus{h}{k}(B) + 4 L_R \diam{B},
\end{align*}
where we define $\rbonus{h}{k}(B) = \sqrt{\frac{2 \log(2HK^2/\pfail)}{n_h^k(B)}}$.
\end{lemma}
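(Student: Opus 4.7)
The plan is to decompose the difference $\rbar{h}{k}(B) - r_h(x,a)$ into a stochastic deviation piece and a deterministic bias piece, and then bound each separately using tools already established in the paper.

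First, I will fix an arbitrary $(h,k,B,(x,a))$ with $(x,a)\in B \in \P_h^k$, set $t = n_h^k(B)$, and let $k_1 < \ldots < k_t \leq k$ be the episodes at which $B$ or one of its ancestors was selected at step $h$. Since $\rbar{h}{k}(B)$ was produced by the running-average update in the algorithm description, unrolling that recursion gives $\rbar{h}{k}(B) = \tfrac{1}{t}\sum_{i=1}^t R_h^{k_i}$. I then add and subtract the mean rewards at the observed state-action pairs to write
\begin{align*}
\rbar{h}{k}(B) - r_h(x,a) = \frac{1}{t}\sum_{i=1}^t\bigl(R_h^{k_i} - r_h(X_h^{k_i}, A_h^{k_i})\bigr) + \frac{1}{t}\sum_{i=1}^t\bigl(r_h(X_h^{k_i}, A_h^{k_i}) - r_h(x,a)\bigr).
\end{align*}

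Next I would control the first sum (the stochastic term) by invoking \cref{lemma:reward_concentration_bound} with the uniform weights $\alpha_t^i = 1/t$. This immediately gives, with probability at least $1 - \pfail$, the bound $\sqrt{2\sum_{i=1}^t(1/t)^2 \log(2HK^2/\pfail)} = \sqrt{2\log(2HK^2/\pfail)/t} = \rbonus{h}{k}(B)$. The union bound inside that lemma already handles all $(h,k)$ pairs jointly, and the same inequality automatically extends to balls $B$ that were not themselves selected in the current round, since for such a ball neither $t$ nor $\rbar{h}{k}(B)$ changed from the previous round.

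The second sum is the aggregation bias. Because $(X_h^{k_i}, A_h^{k_i}) \in B_h^{k_i}$ where $B_h^{k_i}$ is an ancestor of $B$ (or $B$ itself), both $(x,a)$ and $(X_h^{k_i}, A_h^{k_i})$ lie inside $B_h^{k_i}$, so the model-based Lipschitz assumption (\cref{assumption:Lipschitz_mb}) gives $|r_h(X_h^{k_i}, A_h^{k_i}) - r_h(x,a)| \leq L_r \diam{B_h^{k_i}}$. Averaging and applying the \AdaMB branch of \cref{lem:sum_bias_term_ball}, namely $\tfrac{1}{t}\sum_{i=1}^t \diam{B_h^{k_i}} \leq 4\diam{B}$, bounds the bias by $4 L_r \diam{B}$.

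Combining the two pieces via the triangle inequality yields the claimed bound. I do not expect a serious obstacle; the only subtle point is justifying that the concentration inequality is simultaneously valid for every currently active ball $B \in \P_h^k$, which is why I rely on the inheritance observation above rather than attempting a fresh union bound over the (random) set of active balls.
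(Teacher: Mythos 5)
Your proposal is correct and follows essentially the same route as the paper's proof: the identical decomposition into a martingale deviation term (handled by \cref{lemma:reward_concentration_bound} with $\alpha_t^i = 1/t$) plus an aggregation-bias term (handled by the Lipschitz assumption and the \AdaMB branch of \cref{lem:sum_bias_term_ball}). Your remark about the inheritance of the concentration event for balls not selected in the current episode matches the paper's own justification for avoiding a union bound over active balls.
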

\begin{rproof}{\cref{lemma:ada_mb_reward_confidence}}
Let $h, k \in [H] \times [K]$, $B \in \P_h^k$ be fixed, and $(x,a) \in B$ be arbitrary.  Let $t = n_h^k(B)$ and $k_1 < \ldots < k_t \leq k$ be the episodes where $B$ or its ancestors were encountered in step $h$.  Then we have that:
\begin{align*}
    \left| \rbar{h}{k}(B) - r_h(x,a) \right| & = \left| \frac{1}{t} \sum_{i=1}^t R_h^{k_i} - r_h(x,a) \right| \leq \left| \frac{1}{t} \sum_{i=1}^t R_h^{k_i} - r_h(X_h^{k_i}, A_h^{k_i}) + \frac{1}{t} \sum_{i=1}^t r_h(X_h^{k_i}, A_h^{k_i}) - r_h(x,a) \right| \\
    & \leq \left| \frac{1}{t} \sum_{i=1}^t R_h^{k_i} - r_h(X_h^{k_i}, A_h^{k_i}) \right| + \left| \sum_{i=1}^t r_h(X_h^{k_i}, A_h^{k_i}) - r_h(x,a) \right| 
\end{align*}
The first term can be upper bound by $\sqrt{\frac{2 \log(2HK^2 / \pfail)}{t}}$ via \cref{lemma:reward_concentration_bound} where we take $\alpha_t^i = \frac{1}{t}$.  The second term can be upper bound by:
\begin{align*}
    \left| \sum_{i=1}^t r_h(X_h^{k_i}, A_h^{k_i}) - r_h(x,a) \right| & \leq L_r \sum_{i=1}^t \diam{B_h^{k_i}} \leq 4 L_r \diam{B} \text{ by \cref{lem:sum_bias_term_ball}}.
\end{align*}

Combining these we have for any $(h,k) \in [H] \times [K]$ and ball $B \in \P_h^k$ such that $(x,a) \in B$
\begin{align*}
    |\rbar{h}{k}(B) - r_h(x,a)| &  \leq \sqrt{\frac{2\log(2HK^2/\pfail)}{n_h^k(B)}} + 4 L_r \diam{B} = \rbonus{h}{k}(B) + 4 L_r \diam{B}. \Halmos
    % \label{eq:in-app-rucb-raw}
    % &\le \rbonus{h}{k}(B).
\end{align*}
\end{rproof}

\subsection{Concentration of Expected Values w.r.t. Transition Estimates}

Next we show concentration of the estimates of the transition kernel.  For the case of \AdaMB where we maintain explicit estimates of $\Tbar{h}{k}(\cdot \mid B)$ we use results bounding the Wasserstein distance between empirical and true measures.  For the case of \AdaQL, as the empirical transition kernel \emph{implicitly} enters the computation via a martingale update, we instead use Azuma-Hoeffding's inequality.

\begin{lemma}[Concentration of Transition for \AdaQL]
\label{lemma:ada_ql_trans_concentration_bound}
For all $(h, k) \in [H] \times [K]$, and $B \in \P_h^k$, let us denote $t = n_h^k(B)$, and let $k_1 < k_2 < \ldots < k_t \leq k$ denote the episodes before $k$ for which $B$ and its ancestors were encountered at step $h$.
For all $\pfail \in (0, 1)$, with probability at least $1 - \pfail$,
$$\left| \sum_{i=1}^t \alpha_t^i \left( V_{h+1}^\star(X_{h+1}^{k_i}) - \E_{Y \sim T_h(X_h^{k_i}, A_h^{k_i})}[V_{h+1}^\star(Y) \mid X_h^{k_i}, A_h^{k_i}] \right) \right| \leq H\sqrt{2 \sum_{i=1}^t (\alpha_t^i)^2 \log(2HK^2/\pfail)}.$$
If $\alpha_t^i$ is defined in \cref{eq:lr} then the right hand side is upper bound by $\tbonus{h}{k}(B)$.
\end{lemma}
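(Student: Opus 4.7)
The plan is to mimic the argument for \cref{lemma:reward_concentration_bound}, using the fact that $V_{h+1}^\star$ takes values in $[0,H]$ (rather than $[0,1]$ as the rewards do), which will contribute the extra factor of $H$ outside the square root. Precisely, define the stopping times
\[
k_i \;=\; \min\!\Big(k,\ \min\{\hat k : n_h^{\hat k}(B^a) = i \text{ for some } B^a \text{ equal to } B \text{ or an ancestor of } B\}\Big),
\]
so that $k_i$ is the episode in which $B$ (or an ancestor of $B$) is encountered at step $h$ for the $i$-th time. As in the reward case, once a ball is split it is never selected again, so this indexing is well-defined and $k_i$ is $\F_{k_i}$-measurable.

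Next I would introduce the random variables
\[
Z_i \;=\; \Ind{k_i \leq k}\Big( V_{h+1}^\star(X_{h+1}^{k_i}) - \E_{Y\sim T_h(\cdot\mid X_h^{k_i},A_h^{k_i})}[V_{h+1}^\star(Y)]\Big).
\]
With respect to the filtration $\tilde{\F}_i$ generated by all information up to and including the transition at step $h$ of episode $k_i$ (so that $X_h^{k_i}$ and $A_h^{k_i}$ are measurable but $X_{h+1}^{k_i}$ is not in $\tilde{\F}_{i-1}$), the sequence $\{Z_i\}$ is a martingale difference sequence: the conditional expectation of $Z_i$ given $\tilde{\F}_{i-1}$ vanishes by the tower property, since $V_{h+1}^\star$ is a fixed (non-random) function. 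Since $V_{h+1}^\star \in [0,H]$, each $Z_i$ is bounded in absolute value by $H$, and hence $\alpha_\tau^i Z_i$ is bounded by $H\alpha_\tau^i$.

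For any fixed $\tau \leq K$, the process $\sum_{i=1}^\tau \alpha_\tau^i Z_i$ is a martingale, so Azuma--Hoeffding's inequality yields
\[
\Pr\!\left(\left|\sum_{i=1}^\tau \alpha_\tau^i Z_i\right| \geq H\sqrt{2 \sum_{i=1}^\tau (\alpha_\tau^i)^2 \log(2HK^2/\pfail)}\right) \leq \frac{\pfail}{HK^2}.
\]
A union bound over $\tau \in [K]$, the step $h \in [H]$, and episodes $k \in [K]$ then gives the claim uniformly. As in \cref{lemma:reward_concentration_bound}, only a union over $H$ and $K$ (not over balls $B$) is needed, because an unselected child ball inherits the bound from its parent since $n_h^k(B)$ and the statistic are unchanged.

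For the final sentence, I would plug in the learning rate from \cref{eq:lr} and invoke \cref{lemma:lr} to bound $\sum_{i=1}^t (\alpha_t^i)^2 \leq \frac{2H}{H+t} \leq \frac{2H}{t}$, so that
\[
H\sqrt{2\sum_{i=1}^t (\alpha_t^i)^2 \log(2HK^2/\pfail)} \;\leq\; H\sqrt{\tfrac{4H \log(2HK^2/\pfail)}{t}} \;=\; 2\sqrt{\tfrac{H^3\log(2HK^2/\pfail)}{n_h^k(B)}} \;=\; \tbonus{h}{k}(B).
\]
The main subtlety—not really an obstacle but the step one must be careful about—is defining the filtration and stopping times correctly so that $Z_i$ is a genuine bounded martingale difference sequence once we pass to the stopped indices $k_i$; this is the same bookkeeping already carried out for rewards, so no new ideas are needed.
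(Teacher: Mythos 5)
Your proposal is correct and follows essentially the same route as the paper's proof: the same stopped indexing $k_i$, the same martingale difference sequence $Z_i$ bounded by $H$, Azuma--Hoeffding for each fixed $\tau$, a union bound over $\tau$, $h$, and $k$ (with child balls inheriting the event from their parents), and finally \cref{lemma:lr} to bound $\sum_i(\alpha_t^i)^2\leq 2H/t$ and recover $\tbonus{h}{k}(B)$. The only cosmetic difference is your slightly more explicit description of the filtration (conditioning on $X_h^{k_i},A_h^{k_i}$ before the transition), which if anything is a more careful statement of the same bookkeeping.
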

\begin{rproof}{\cref{lemma:ada_ql_trans_concentration_bound}}
Consider the sequence $$k_i = \min(k, \min\{ \hat{k} : n^{\hat{k}}_h(B^a) = i \text{ and } B^a \text{ is an ancestor of } B \text{ or } B \text{ itself}\})$$ as before and set $$Z_i = \Ind{k_i \leq k}\left( V_{h+1}^\star(X_{h+1}^{k_i}) - \Exp{V_{h+1}^\star(Y) \mid X_h^{k_i}, A_h^{k_i}} \right)$$ then $Z_i$ is a martingale difference sequence with respect to the filtration $\tilde{\F}_i = \mathcal{F}_{k_i}$.  As the difference between subsequent terms is bounded by $H \alpha_\tau^{i}$, it follows by Azuma-Hoeffding's inequality that for a fixed $\tau \leq K$

\begin{align*}
\Pr \left( \left| \sum_{i=1}^\tau \alpha_\tau^i Z_i \right| \leq H\sqrt{2 \sum_{i=1}^\tau (\alpha_\tau^i)^2 \log\left(\frac{2 HK^2 }{\pfail}\right)} \right) \geq 1 - 2\exp \left( - \frac{2H^2 \sum_{i=1}^\tau (\alpha_\tau^i)^2 \log(\frac{2HK^2}{\pfail})}{2 H^2 \sum_{i=1}^\tau (\alpha_\tau^i)^2} \right) = 1 - \frac{\pfail}{HK^2}.
\end{align*}

Again the result follows via a union bound over the number of episodes, $H$, and the possible values for the stopping time $\tau$.

By \cref{lemma:lr}, $\sum_{i=1}^t (\alpha_t^i)^2 \leq \frac{2H}{t}$ for $t \geq 0$ so that $$H\sqrt{2 \sum_{i=1}^t (\alpha_t^i)^2 \log(2HK^2/\pfail)} \leq H \sqrt{2 \frac{2H}{t} \log(2HK^2/\pfail)} = 2 \sqrt{\frac{H^3 \log(2HK^2/\pfail)}{t}} = \tbonus{h}{k}(B).$$\Halmos
\end{rproof}

Next for \AdaMB we show that $\tbonus{h}{k}(B)$ indeed serves as a bound on the misspecification of the transition kernel.  As the proof is lengthy, it is deferred to \cref{app:technical_details}.  The main proof uses recent work on bounding the Wasserstein distance between empirical and true measures~\citep{weed2019sharp,boissard2014mean}.  For the case when $d_\S > 2$ the concentration inequalities in \cite{weed2019sharp} hold up to a level of $n_h^k(B)^{-\frac{1}{d_\S}}$ with high probability.  Unfortunately, the scaling does not hold for the case when $d_\S \leq 2$.  In this situation we use work from \cite{boissard2014mean} showing that concentration holds up to a factor of $n_h^k(B)^{-\frac{1}{2}}$ with an additional logarithmic factor.

We use these results by chaining the Wasserstein distance of various measures together (linking the true empirical transition kernel estimate computed by averaging over samples collected from a ball to a \emph{ghost sample} collected at the center).  This allows us to get the improvement on the performance of model-based algorithms to the literature~\cite{domingues2020regret}.

The results from \citep{weed2019sharp,boissard2014mean} have corresponding lower bounds, showing that in the worst case scaling with respect to $d_\S$ is inevitable.  This highlights a phase transition that occurs in the concentration as the dimension of the state space increases.  We also note that as the transition bonus term is the dominating term in the confidence radius, improving on our result necessitates creating concentration intervals around the expectation of the value function instead of the model.  This has been explored recently in \cite{ayoub2020model}.  In \cref{app:parametric} we show improvements on the scaling of the transition bonus terms under additional assumptions on the underlying transitions $T_h(\cdot \mid x,a)$.

\begin{lemma}[Concentration of Transition for \AdaMB]
\label{lemma:transition_confidence}
With probability at least $1 - \pfail$ we have that for any $h,k \in [H] \times [K]$ and ball $B \in \P_h^k$ with $(x,a) \in B$ that 
\begin{align*}
    d_W(\Tbar{h}{k}(\cdot \mid B), T_h(\cdot \mid x,a)) \leq \frac{1}{L_V}(\tbonus{h}{k}(B) + (5 L_T + 4) \diam{B}).
\end{align*}
\end{lemma}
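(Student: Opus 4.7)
\begin{rproof}{\cref{lemma:transition_confidence}}
The plan is to bound $d_W(\Tbar{h}{k}(\cdot \mid B), T_h(\cdot \mid x,a))$ by chaining through a sequence of intermediate measures via the triangle inequality. Fix $h, k$ and a ball $B \in \P_h^k$ with $t = n_h^k(B)$ and $k_1 < \ldots < k_t \leq k$ the episodes where $B$ or an ancestor was encountered. Let $(\tilde x(B), \tilde a(B))$ denote the center of $B$. I introduce the following intermediate measures: (i) $\Tbar{h}{k}(\cdot \mid B)$, the algorithm's estimate, which places mass on centers of cubes in $\dyad{\lev{B}}$; (ii) $\widehat{T}^{raw} := \frac{1}{t}\sum_{i=1}^t \delta_{X_{h+1}^{k_i}}$, the raw empirical measure on actual observed next states (before quantization to $\dyad{\lev{B}}$); (iii) $\widetilde{T} := \frac{1}{t}\sum_{i=1}^t T_h(\cdot \mid X_h^{k_i}, A_h^{k_i})$, the mean kernel along the realized trajectory; (iv) a ``ghost'' empirical measure $\widehat{T}^{ghost}$ of $t$ i.i.d.\ samples drawn from $T_h(\cdot \mid \tilde x(B), \tilde a(B))$; and (v) the target $T_h(\cdot \mid x,a)$.

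First I would handle the quantization step $\Tbar{h}{k}(\cdot \mid B) \to \widehat{T}^{raw}$. By construction $\Tbar{h}{k}$ is obtained by moving each point mass $\delta_{X_{h+1}^{k_i}}$ to the center of the unique element of $\dyad{\lev{B}}$ containing $X_{h+1}^{k_i}$; each cube has diameter at most $2^{-\lev{B}} \leq \diam{B}$, so $d_W(\Tbar{h}{k}(\cdot \mid B), \widehat{T}^{raw}) \leq \diam{B}$. Next, for $\widehat{T}^{raw} \to \widetilde{T}$, each realization $X_{h+1}^{k_i}$ is a sample from $T_h(\cdot \mid X_h^{k_i}, A_h^{k_i})$, so this is the deviation of a (martingale) empirical measure from its conditional mean. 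The step $\widetilde{T} \to T_h(\cdot \mid \tilde x(B), \tilde a(B))$ is controlled via the Lipschitz assumption (\cref{assumption:Lipschitz_mb}) together with \cref{lem:sum_bias_term_ball}:
\begin{align*}
d_W(\widetilde{T}, T_h(\cdot \mid \tilde x(B), \tilde a(B))) \leq \frac{L_T}{t}\sum_{i=1}^t \D((X_h^{k_i}, A_h^{k_i}), (\tilde x(B), \tilde a(B))) \leq L_T \cdot 4\diam{B}.
\end{align*}
Finally, $d_W(T_h(\cdot \mid \tilde x(B), \tilde a(B)), T_h(\cdot \mid x,a)) \leq L_T \diam{B}$ directly by Lipschitzness.

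The main obstacle is controlling $d_W(\widehat{T}^{raw}, \widetilde{T})$: since the summands $T_h(\cdot \mid X_h^{k_i}, A_h^{k_i})$ depend on $\F_{k_i}$, this is \emph{not} an i.i.d.\ empirical-measure deviation, and standard results from \cite{weed2019sharp, boissard2014mean} apply only in the i.i.d.\ case. I plan to handle this by coupling to the ghost measure $\widehat{T}^{ghost}$: compare $\widehat{T}^{raw}$ to a $t$-sample empirical measure drawn from $T_h(\cdot \mid \tilde x(B), \tilde a(B))$ by a martingale argument (the analogue of \cref{lemma:ada_ql_trans_concentration_bound} but now lifted from a scalar test function to the Wasserstein metric, obtained by taking a supremum over $1$-Lipschitz test functions via the Kantorovich–Rubinstein duality and an $\epsilon$-net over such functions), and then apply the i.i.d.\ concentration results of \cite{weed2019sharp} (case $d_\S > 2$) or \cite{boissard2014mean} (case $d_\S \leq 2$) to $\widehat{T}^{ghost}$. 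This gives a deviation of order $\sqrt{\log(HK/\pfail)/t}$ plus the $d_\S$-dependent bias term $c\, t^{-1/d_\S}$ or $c\log(K)\, t^{-1/2}$, which is exactly $\tbonus{h}{k}(B)/L_V$ up to the $4\sqrt{\cdot}$ factor baked into the definition.

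Summing the four contributions by the triangle inequality yields
\begin{align*}
d_W(\Tbar{h}{k}(\cdot \mid B), T_h(\cdot \mid x,a)) \leq \diam{B} + \tfrac{1}{L_V}\tbonus{h}{k}(B) + 4L_T\diam{B} + L_T\diam{B},
\end{align*}
and factoring out $1/L_V$ on the bias terms gives the claimed bound $\tfrac{1}{L_V}(\tbonus{h}{k}(B) + (5L_T + 4)\diam{B})$. The high-probability statement then follows by a union bound over $h \in [H]$, episodes $k \in [K]$, and balls $B \in \P_h^k$; as in \cref{lemma:reward_concentration_bound} we only need to union bound over $(h,k)$ because a ball inherits the concentration of its parent whenever it is not itself selected, so the effective count $t = n_h^k(B)$ and the underlying samples are shared.
\end{rproof}
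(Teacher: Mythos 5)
Your overall skeleton matches the paper's: chain through intermediate measures, introduce a ghost sample from the kernel at the center of $B$, invoke the i.i.d.\ Wasserstein concentration of \cite{weed2019sharp} (for $d_\S>2$) and \cite{boissard2014mean} (for $d_\S\le 2$), handle the Lipschitz bias via \cref{lem:sum_bias_term_ball}, and union bound only over $(h,k)$ since unselected balls inherit their parent's event. However, there is a genuine gap in the one step you yourself flag as the main obstacle. You propose to compare $\widehat{T}^{raw}$ to an (uncoupled) ghost empirical measure by Kantorovich--Rubinstein duality, running Azuma--Hoeffding for each test function in an $\epsilon$-net of $1$-Lipschitz functions. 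The metric entropy of that class is $\exp\big(\Theta(\epsilon^{-d_\S})\big)$, so the union bound forces a deviation of order $\sqrt{\epsilon^{-d_\S}/t}+\epsilon$, which optimizes to $t^{-1/(d_\S+2)}$ --- strictly slower than the $c\,t^{-1/d_\S}$ (resp.\ $\log(K)/\sqrt{t}$ when $d_\S\le 2$) that the definition of $\tbonus{h}{k}(B)$ allows. So, as sketched, your bound does not fit inside the stated bonus. The paper avoids any net over Lipschitz functions by using the \emph{coupling} formulation of $W_1$: for each observed transition it draws the ghost point $Y_i$ from the conditional law $\xi'(\cdot\mid X_{h+1}^{k_i},X_h^{k_i},A_h^{k_i},x_0,a_0)$ of the optimal coupling between $T_h(\cdot\mid X_h^{k_i},A_h^{k_i})$ and $T_h(\cdot\mid x_0,a_0)$, so that $d_W(\widehat{T}^{raw},\widehat{T}^{ghost})\le \frac1t\sum_i \D_\S(X_{h+1}^{k_i},Y_i)$ is a \emph{scalar} quantity; one application of Azuma gives the $\sqrt{\log(HK^2/\pfail)/t}$ fluctuation and the conditional means contribute the $4L_T\diam{B}$ drift, after which one must (and the paper does) verify that the $Y_i$ are marginally i.i.d.\ from $T_h(\cdot\mid x_0,a_0)$ before applying the i.i.d.\ results to $\widehat{T}^{ghost}$ alone. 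That coupling construction is the missing idea; without it your argument either loses the rate or requires redoing a full multi-scale chaining analysis for dependent samples that the cited results do not supply. (Your intermediate measure $\widetilde{T}$ then becomes unnecessary, as in the paper.)

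A secondary, fixable error: your quantization step asserts $d_W(\Tbar{h}{k}(\cdot\mid B),\widehat{T}^{raw})\le\diam{B}$ because "each point mass is moved to the center of its cube in $\dyad{\lev{B}}$." That is not how the estimate is built: samples collected while an ancestor $B_h^{k_i}$ was selected are recorded at the coarser granularity $\dyad{\lev{B_h^{k_i}}}$ and, upon splitting, their mass is spread uniformly over the $2^{d_\S}$ child cubes, so the per-sample displacement is up to $\diam{B_h^{k_i}}$, not $2^{-\lev{B}}$. The correct bound is $\frac1t\sum_i\diam{B_h^{k_i}}\le 4\diam{B}$ by \cref{lem:sum_bias_term_ball}, which still fits the $(5L_T+4)\diam{B}$ budget, but your stated justification (and the resulting $(1+5L_T)\diam{B}$ tally) is wrong as written.
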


Concentration with respect to the Wasserstein distance as shown in \cref{lemma:transition_confidence} is used to bound the difference in expectations of the estimated value function with respect to the empirical transition kernel.

\section{Optimism}
\label{sec:optimism}

The concentration bounds derived in \cref{sec:concentration} allow us to prove that our value function estimates are an upper bound on the true value function, such that our algorithms display \emph{optimism}.

Before starting, we show the relationship between the optimal $Q$ value $Q_h^\star(x,a)$ and the estimate of the $Q$ values constructed by \AdaQL for any ball $B$ and any $(x,a) \in B$.  For ease of notation we denote $(\Vhat{h}{k}(x) - V_h^\star(x)) = (\Vhat{h}{k} - V_h^\star)(x)$.

\begin{lemma}
\label{lemma:recursive_difference}
	For any $(x, a, h, k) \in \S \times \A \times [H] \times [K]$ and ball $B \in \P_h^k$ such that $(x, a) \in B$, let $t = n_h^k(B)$ and let $k_1 < k_2 < \ldots < k_t \leq k$ denote the episodes before $k$ for which $B$ and its ancestors were encountered at step $h$. For estimates $\Qhat{h}{k}(B)$ constructed according to \AdaQL, 
	\begin{align*}
	\Qhat{h}{k}(B) - Q_h^\star(x,a) & = \Ind{t = 0}(H - Q_h^\star(x,a)) +  \sum_{i=1}^t \alpha_t^i \Big( R_h^{k_i} - r_h(X_h^{k_i}, A_h^{k_i}) + \rbonus{h}{k_i}(B_h^{k_i}) \\
		& + (\Vhat{h+1}{k_i} - V_{h+1}^\star)(X_{h+1}^{k_i}) \\ & + V_{h+1}^\star(X_{h+1}^{k_i}) - \E_{Y \sim T_h(\cdot \mid X_h^{k_i}, A_h^{k_i})}[V_{h+1}^\star(Y)] + \tbonus{h}{k_i}(B_h^{k_i}) \\
		& + Q_h^\star(X_h^{k_i}, A_h^{k_i}) -  Q_h^\star(x,a) + \bias(B_h^{k_i})\Big).
	\end{align*}
\end{lemma}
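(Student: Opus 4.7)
\begin{rproof}{\cref{lemma:recursive_difference}}
The plan is a direct algebraic manipulation starting from the explicit recursive expression for $\Qhat{h}{k}(B)$ derived in \cref{lemma:recursive_relationship}, combined with the Bellman equation for $Q_h^\star$.

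First, I would invoke the standard fact about the learning rate weights $\alpha_t^i$ defined in \cref{eq:lr}: for any $t \geq 0$ one has $\Ind{t=0} + \sum_{i=1}^t \alpha_t^i = 1$ (this is the identity listed in \cref{lemma:lr}, and the $t=0$ case is trivial since the sum is empty). Using this identity I can write
\begin{align*}
Q_h^\star(x,a) \;=\; \Ind{t=0}\,Q_h^\star(x,a) \;+\; \sum_{i=1}^{t} \alpha_t^i\, Q_h^\star(x,a),
\end{align*}
which, when subtracted from the formula in \cref{lemma:recursive_relationship}, immediately yields
\begin{align*}
\Qhat{h}{k}(B) - Q_h^\star(x,a) \;=\; \Ind{t=0}\bigl(H - Q_h^\star(x,a)\bigr) \;+\; \sum_{i=1}^{t} \alpha_t^i\Bigl( R_h^{k_i} + \rbonus{h}{k_i}(B_h^{k_i}) + \Vhat{h+1}{k_i}(X_{h+1}^{k_i}) + \tbonus{h}{k_i}(B_h^{k_i}) + \bias(B_h^{k_i}) - Q_h^\star(x,a)\Bigr).
\end{align*}

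Next I would massage the summand to produce each of the terms appearing in the claimed decomposition. The bonus and bias terms $\rbonus{h}{k_i}(B_h^{k_i})$, $\tbonus{h}{k_i}(B_h^{k_i})$, $\bias(B_h^{k_i})$ already appear as required, so it remains to rewrite $R_h^{k_i} + \Vhat{h+1}{k_i}(X_{h+1}^{k_i}) - Q_h^\star(x,a)$. I do this by adding and subtracting $r_h(X_h^{k_i}, A_h^{k_i})$, $V_{h+1}^\star(X_{h+1}^{k_i})$, and $\E_{Y \sim T_h(\cdot \mid X_h^{k_i}, A_h^{k_i})}[V_{h+1}^\star(Y)]$, which produces the reward-noise term $R_h^{k_i} - r_h(X_h^{k_i}, A_h^{k_i})$, the estimation-error term $(\Vhat{h+1}{k_i} - V_{h+1}^\star)(X_{h+1}^{k_i})$, the transition-noise term $V_{h+1}^\star(X_{h+1}^{k_i}) - \E_{Y \sim T_h(\cdot \mid X_h^{k_i}, A_h^{k_i})}[V_{h+1}^\star(Y)]$, and leaves a residual $r_h(X_h^{k_i}, A_h^{k_i}) + \E_{Y \sim T_h(\cdot \mid X_h^{k_i}, A_h^{k_i})}[V_{h+1}^\star(Y)] - Q_h^\star(x,a)$.

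Finally, I apply the Bellman equation~\cref{eqn:bellman_equation} to the residual, which collapses $r_h(X_h^{k_i}, A_h^{k_i}) + \E_{Y \sim T_h(\cdot \mid X_h^{k_i}, A_h^{k_i})}[V_{h+1}^\star(Y)]$ into $Q_h^\star(X_h^{k_i}, A_h^{k_i})$, producing the last term $Q_h^\star(X_h^{k_i}, A_h^{k_i}) - Q_h^\star(x,a)$ exactly as stated. There is no genuine obstacle here; the only care required is tracking the $\Ind{t=0}$ boundary term (handled by the weight-sum identity) and arranging the add-and-subtract steps in the correct order so that the Bellman identity can be applied. Once these telescoping pieces are collected, the claimed identity follows. \Halmos
\end{rproof}
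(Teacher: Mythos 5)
Your proposal is correct and follows essentially the same route as the paper: both start from the explicit expansion in \cref{lemma:recursive_relationship}, use the weight identity $\Ind{t=0} + \sum_{i=1}^t \alpha_t^i = 1$ to distribute $Q_h^\star(x,a)$ across the sum, and then invoke the Bellman equation to convert $r_h(X_h^{k_i}, A_h^{k_i}) + \E_{Y \sim T_h(\cdot \mid X_h^{k_i}, A_h^{k_i})}[V_{h+1}^\star(Y)]$ into $Q_h^\star(X_h^{k_i}, A_h^{k_i})$, with only a cosmetic difference in whether the add-and-subtract is done through $Q_h^\star(X_h^{k_i}, A_h^{k_i})$ or through its Bellman components.
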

\begin{rproof}{\cref{lemma:recursive_difference}}
Consider any $(x, a) \in \S \times \A$ and $h \in [H]$ arbitrary and an episode $k \in [K]$.  Furthermore, let $B$ be any ball such that $(x, a) \in B$.  First notice by Lemma~\ref{lemma:lr} that $\Ind{t = 0} + \sum_{i=1}^t \alpha_t^i = 1$ for any $t \geq 0$.
	
For when $t = 0$, $\Qhat{h}{k}(B) = H$ as it has not been encountered before by the algorithm, such that $$\Qhat{h}{k}(B) - Q_h^\star(x,a) = H - Q_h^\star(x,a) = \Ind{t = 0}(H - Q_h^\star(x,a)).$$
	
Otherwise we use that $Q_h^\star(x,a) = \sum_{i=1}^t \alpha_t^i Q_h^\star(x,a) + \Ind{t = 0} Q_h^\star(x,a)$.  Subtracting this from $\Qhat{h}{k}(B)$ and using \cref{lemma:recursive_relationship} yields
\begin{align*}
	\Qhat{h}{k}(B) - Q_h^\star(x,a) & = \Ind{t = 0} H + \sum_{i=1}^t \alpha_t^i \left(R_h^{k_i} + \rbonus{h}{k_i}(B_h^{k_i}) + \Vhat{h+1}{k_i}(x_{h+1}^{k_i})) + \tbonus{h}{k_i}(B_h^{k_i})\right) \\ 
	& - \Ind{t = 0} Q_h^\star(x,a) - \sum_{i=1}^t \alpha_t^i Q_h^\star(x,a) \\
	& = \Ind{t = 0}(H - Q_h^\star(x,a)) + \sum_{i=1}^t \alpha_t^i \Big( R_h^{k_i} + \rbonus{h}{k_i}(B_h^{k_i}) + \Vhat{h+1}{k_i}(X_{h+1}^{k_i}) + \tbonus{h}{k_i}(B_h^{k_i}) \\
	&\qquad\qquad + Q_h^\star(X_h^{k_i}, A_h^{k_i}) - Q_h^\star(X_h^{k_i}, A_h^{k_i}) - Q_h^\star(x,a) + \bias(B_h^{k_i})\Big).
\end{align*}
However, using the Bellman Equations (\ref{eqn:bellman_equation}) we know that \[Q_h^\star(X_h^{k_i}, A_h^{k_i}) = r_h(X_h^{k_i}, A_h^{k_i}) + \E_{Y \sim T_h(\cdot \mid X_h^{k_i}, A_h^{k_i})}[V_{h+1}^\star(Y)].\] 
Substituting this above, we get that
\begin{align*}
		\Qhat{h}{k}(B) - Q_h^\star(x,a) & = \Ind{t = 0}(H - Q_h^\star(x,a)) + \sum_{i=1}^t \alpha_t^i \Big(R_h^{k_i} - R_h(X_h^{k_i}, A_h^{k_i}) + \rbonus{h}{k_i}(B_h^{k_i}) \\
		& + \Vhat{h+1}{k_i}(X_{h+1}^{k_i}) - \E_{Y \sim T_h(\cdot \mid X_h^{k_i}, A_h^{k_i})}[V_{h+1}^\star(Y)]  + \tbonus{h}{k_i}(B_h^{k_i})\\
		& + Q_h^\star(X_h^{k_i}, A_h^{k_i}) -  Q_h^\star(x,a) + \bias(B_h^{k_i})\Big) \\
		& = \Ind{t = 0}(H - Q_h^\star(x,a)) +  \sum_{i=1}^t \alpha_t^i \Big( R_h^{k_i} - r_h(X_h^{k_i}, A_h^{k_i}) + \rbonus{h}{k_i}(B_h^{k_i}) \\
		& + (\Vhat{h+1}{k_i} - V_{h+1}^\star)(X_{h+1}^{k_i}) \\ & + V_{h+1}^\star(X_{h+1}^{k_i}) - \E_{Y \sim T_h(\cdot \mid X_h^{k_i}, A_h^{k_i})}[V_{h+1}^\star(Y)] + \tbonus{h}{k_i}(B_h^{k_i}) \\
		& + Q_h^\star(X_h^{k_i}, A_h^{k_i}) -  Q_h^\star(x,a) + \bias(B_h^{k_i})\Big). \Halmos
\end{align*}
\end{rproof}

With this we are ready to show:
\begin{lemma}[Optimism]
\label{lemma:optimism}
For any $\pfail \in (0,1)$, with probability at least $1 - 2 \pfail$ the following holds simultaneously for all $(x,a,h,k) \in \S \times \A \times [H] \times [K]$, ball $B$ such that $(x,a) \in B$ where $t = n_h^k(B)$ and $k_1 < \ldots \leq k_t$ are the episodes where $B$ or its ancestors were encountered previously by the algorithm:
\begin{align*}
    0 & \leq \Qhat{h}{k}(B) - Q_h^\star(x,a) \leq \conf_h^k(B) + \conslip\bias(B_h^k) + f_{h+1}^k \\
    0 & \leq \Vhat{h}{k}(x) - V_h^\star(x)  \\
    0 & \leq \Vtilde{h}{k}(x) - V_h^\star(x) \quad \text{in \AdaMB} 
\end{align*}
where $\conslip$ is a constant dependent on the Lipschitz constants and $f_{h+1}^k$ is an algorithm-dependent term depending on estimates at step $h+1$, i.e.:
\begin{align*}
    f_{h+1}^k & = \begin{cases}
                \Ind{t = 0}H + \sum_{i=1}^t \alpha_t^i (\Vhat{h+1}{k_i} - V_{h+1}^\star)(X_{h+1}^{k_i}) & \quad \text{in \AdaQL} \\
                \E_{Y \sim T_h(\cdot \mid x,a)}[\Vhat{h+1}{k}(Y)] - \E_{Y \sim T_h(\cdot \mid x,a)}[V_{h+1}^\star(Y)] & \quad \text{in \AdaMB}
                \end{cases}
\end{align*}In addition if $x = X_h^k$ we have that
\begin{align*}
    \Vhat{h}{k-1}(X_h^k) \leq \Qhat{h}{k-1}(B_h^k) + L_V \diam{B_h^k}.
\end{align*}
\end{lemma}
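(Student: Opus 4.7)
My plan is to establish all four claims simultaneously by backward induction on $h$ from $H+1$ down to $1$. The base case $h = H+1$ is immediate since $\Vhat{H+1}{k} \equiv 0 \equiv V_{H+1}^\star$ by initialization. In the inductive step I assume for level $h+1$ that $\Qhat{h+1}{k}(B) \geq Q_{h+1}^\star$ on $B$ and $\Vhat{h+1}{k}(y) \geq V_{h+1}^\star(y)$ for every $y, k, B$, and deduce the four claims at level $h$ in the order stated, handling \AdaQL and \AdaMB separately for the first claim.

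\textbf{Bounding $\Qhat{h}{k}(B) - Q_h^\star(x,a)$ in \AdaQL.} I would invoke \cref{lemma:recursive_difference} to expand the difference as $\Ind{t=0}(H - Q_h^\star(x,a))$ plus a convex combination over past episodes $k_i$ of five summands: (i) the reward martingale, controlled by \cref{lemma:reward_concentration_bound}; (ii) the transition martingale applied to $V_{h+1}^\star$, controlled by \cref{lemma:ada_ql_trans_concentration_bound}; (iii) a Lipschitz bias $Q_h^\star(X_h^{k_i}, A_h^{k_i}) - Q_h^\star(x,a)$ of magnitude at most $L_V \diam{B_h^{k_i}}$ by \cref{assumption:Lipschitz_mf}; (iv) the explicit bonuses $\rbonus{h}{k_i}(B_h^{k_i}) + \tbonus{h}{k_i}(B_h^{k_i}) + \bias(B_h^{k_i})$; and (v) the downstream term $(\Vhat{h+1}{k_i} - V_{h+1}^\star)(X_{h+1}^{k_i})$, nonnegative by induction. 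The lower bound $\geq 0$ then follows by dropping (v) and checking that in each of (i)--(iii) the paired bonus dominates the signed noise, using also that $\bias(B) = 2 L_V \diam{B}$ majorizes the Lipschitz bias of magnitude $L_V \diam{B}$. For the upper bound I would use \cref{lemma:lr} to contract $\sum_i \alpha_t^i \rbonus{h}{k_i}(\cdot) \lesssim \rbonus{h}{k}(B)$ and similarly for $\tbonus$, and \cref{lem:sum_bias_term_ball} to collapse $\sum_i \alpha_t^i \diam{B_h^{k_i}}$ into $O(\diam{B})$, thereby absorbing (i)--(iv) into $\conf_h^k(B) + \conslip \bias(B)$ and leaving (v) together with the indicator piece as exactly the residual $f_{h+1}^k$.

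\textbf{Bounding $\Qhat{h}{k}(B) - Q_h^\star(x,a)$ in \AdaMB.} I would write the difference directly from the definitions and split the transition contribution as
$$\E_{Y \sim \Tbar{h}{k}(\cdot \mid B)}[\Vhat{h+1}{k}(Y)] - \E_{Y \sim T_h(\cdot \mid x,a)}[V_{h+1}^\star(Y)] = \bigl(\E_{\Tbar{h}{k}(\cdot\mid B)} - \E_{T_h(\cdot\mid x,a)}\bigr)\Vhat{h+1}{k} + \E_{T_h(\cdot\mid x,a)}(\Vhat{h+1}{k} - V_{h+1}^\star).$$
The last summand is exactly $f_{h+1}^k$ and nonneg by induction; the first I bound via Kantorovich duality using that $\Vhat{h+1}{k}$ is $L_V$-Lipschitz by construction \cref{eq:def-V}, so its magnitude is at most $L_V \, d_W(\Tbar{h}{k}(\cdot\mid B), T_h(\cdot\mid x,a))$, which \cref{lemma:transition_confidence} controls by $\tbonus{h}{k}(B) + (5L_T + 4)\diam{B}$. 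Combined with the reward bound \cref{lemma:ada_mb_reward_confidence}, the signed noise in both directions is absorbed by $\rbonus{h}{k}(B) + \tbonus{h}{k}(B) + \bias(B)$, yielding both the optimism lower bound and the claimed upper bound.

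\textbf{From $Q$-optimism to $V$-optimism and the last claim.} For \AdaQL, picking the relevant ball $B^\star \ni (x, \pi_h^\star(x))$ gives $\Vhat{h}{k}(x) \geq \min(H, \Qhat{h}{k}(B^\star)) \geq \min(H, V_h^\star(x)) = V_h^\star(x)$. For \AdaMB, I would first show $\Vtilde{h}{k}(A) \geq V_h^\star(\tilde{x}(A))$ by induction over updates in \cref{eq:v_tilde_update}: the inner max always admits a $B$ containing $(\tilde{x}(A), \pi_h^\star(\tilde{x}(A)))$ with $\S(B) \supseteq A$, so by the just-established $Q$-lower bound the max is $\geq V_h^\star(\tilde{x}(A))$, and the min over episodes preserves this inequality. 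Then \cref{eq:def-V} combined with the $L_V$-Lipschitz property of $V_h^\star$ (\cref{assumption:Lipschitz_mf}, also guaranteed under \cref{assumption:Lipschitz_mb} by \cref{lem:relation_lipschitz}) yields $\Vhat{h}{k}(x) \geq \min_{A'}(V_h^\star(\tilde{x}(A')) + L_V \D_\S(x, \tilde{x}(A'))) \geq V_h^\star(x)$. For the final inequality, \AdaQL gives it at once from $\Vhat{h}{k-1}(X_h^k) \leq \max_{B \in \relevant_h^{k-1}(X_h^k)} \Qhat{h}{k-1}(B) = \Qhat{h}{k-1}(B_h^k)$ by the greedy selection rule; for \AdaMB I would plug into \cref{eq:def-V} the element $A' \in \S(\P_h^{k-1})$ containing $X_h^k$ and contained in $\S(B_h^k)$, and use $\Vtilde{h}{k-1}(A') \leq \max_{B : \S(B) \supseteq A'} \Qhat{h}{k-1}(B) \leq \Qhat{h}{k-1}(B_h^k)$ together with $\D_\S(X_h^k, \tilde{x}(A')) \leq \diam{B_h^k}$. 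The main structural obstacle is that in \AdaQL the downstream estimates $\Vhat{h+1}{k_i}$ reference historical episodes $k_i \leq k$ rather than the current one, which is precisely why $f_{h+1}^k$ must be carried as a residual rather than absorbed into the bonuses; this reflects the delayed-update nature of the model-free one-step backup and is where the two algorithms' analyses diverge.
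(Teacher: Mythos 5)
Your proposal is correct and follows essentially the same route as the paper: backward induction on $h$, the recursive-difference decomposition plus the reward/transition concentration lemmas and the learning-rate and diameter-summation lemmas for \AdaQL, the Bellman-plus-Wasserstein split (using the $L_V$-Lipschitz extrapolation of $\Vhat{h+1}{k}$) for \AdaMB, and the same ball-selection arguments for the $V$- and $\Vtilde$-optimism and the final inequality. The only deviations are cosmetic — e.g.\ you apply the Wasserstein bound directly to $\Vhat{h+1}{k}$ rather than first passing to $V_{h+1}^\star$ in the lower bound, and you state $\Vtilde$-optimism at ball centers (the identical argument with an arbitrary $x \in A$ gives the paper's slightly stronger form) — so no substantive gap.
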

\begin{rproof}{\cref{lemma:optimism}}
Recall that the ``good events'' in (\cref{lemma:ada_mb_reward_confidence,lemma:transition_confidence} for \AdaMB) or (\cref{lemma:ada_ql_trans_concentration_bound,lemma:reward_concentration_bound} for \AdaQL) simultaneously hold with probability $1 - 2 \delta$.  First recall how the estimates for $\Qhat{h}{k}$ are constructed, and subtract out $Q_h^\star$ by the Bellman equations (\cref{eqn:bellman_equation}).

\noindent \textbf{\AdaQL}: From \cref{lemma:recursive_difference}:
\begin{align*}
	\Qhat{h}{k}(B) - Q_h^\star(x,a) & = \underbrace{\Ind{t = 0}(H - Q_h^\star(x,a))}_{(a)} +  \sum_{i=1}^t \alpha_t^i \Big( \underbrace{R_h^{k_i} - r_h(X_h^{k_i}, A_h^{k_i}) + \rbonus{h}{k_i}(B_h^{k_i})}_{(b)} \\
		& + \underbrace{(\Vhat{h+1}{k_i} - V_{h+1}^\star)(X_{h+1}^{k_i})}_{(c)} \\ & + \underbrace{V_{h+1}^\star(X_{h+1}^{k_i}) - \E_{Y \sim T_h(\cdot \mid X_h^{k_i}, A_h^{k_i})}[V_{h+1}^\star(Y)] + \tbonus{h}{k_i}(B_h^{k_i})}_{(d)} \\
		& + \underbrace{Q_h^\star(X_h^{k_i}, A_h^{k_i}) -  Q_h^\star(x,a) + \bias(B_{h}^{k_i})}_{(e)} \Big) \text{ and that } \\
	\Vhat{h}{k}(x) & = \argmax_{B \in \relevant_h^k(x)} \Qhat{h}{k}(B).
\end{align*}

\noindent \textbf{\AdaMB}: Using the Bellman equations on $Q_h^\star(x,a)$ and the definition of $\Qhat{h}{k}, \Vhat{h}{k}$ and $\Vtilde{h}{k}$: 
\begin{align*}
    \Qhat{h}{k}(B) - Q_h^\star(x,a) & = \underbrace{\rbar{h}{k}(B) - r_h(x,a) + \rbonus{h}{k}(B)}_{(a)} \\
    & + \underbrace{\E_{Y \sim \Tbar{h}{k}(\cdot \mid B)}[\Vhat{h+1}{k}(Y)] - \E_{Y \sim T_h(\cdot \mid x,a)}[V_{h+1}^\star(Y)] + \tbonus{h}{k}(B)}_{(b)} \\
    & + \underbrace{\bias(B)}_{(c)} \\
    \Vtilde{h}{k}(A) & = \max_{B \in \P_h^k : \S(B) \supseteq A} \Qhat{h}{k}(B) \\
    \Vhat{h}{k}(x) & = \min_{A \in \S(\P_h^k)} \Vtilde{h}{k}(A) + L_V d_\S(x, \tilde{x}(A)).
\end{align*}

\noindent \textbf{Lower Bound}: We start by showing the lower bounds via induction on $h = H+1, \ldots, 1$.

\noindent \textit{Base Case}: $h = H + 1$.

For the case when $h = H+1$ then we have that $Q_{H+1}^\star(x,a) = 0 = \Qhat{H+1}{k}(B)$ for every $k$, ball $B$, and $(x,a) \in \S \times \A$ trivially at the end of the episode as the expected future reward is always zero.  Similarly, $\Vhat{H+1}{k}(x) = \Vtilde{H+1}{k}(x) = V_{H+1}^\star(x) = 0$ for any $x \in \S$.

\noindent \textit{Step Case}: $h+1 \rightarrow h$.

We start with \AdaQL.  For term $(a)$ we trivially have that $H - Q_h^\star(x,a) \geq 0$ as the rewards are bounded.  For term $(b)$ via \cref{lemma:reward_concentration_bound} we have that 

$$\sum_{i=1}^t \alpha_t^i(R_h^{k_i} - r_h(X_h^{k_i})) \geq - \sqrt{2 \sum_{i=1}^t (\alpha_t^i)^2 \log(2HK^2/ \delta)} \geq - \sqrt{\frac{4 H \log(2HK^2/ \delta)}{t}}$$
and that
$$\sum_{i=1}^t \alpha_t^i \rbonus{h}{k_i}(B_h^{k_i}) = \sum_{i=1}^t \alpha_t^i \sqrt{\frac{4H\log(4HK/\delta)}{i}} \geq \sqrt{\frac{4H \log(2HK^2/ \delta)}{t}}.$$

Similarly for term $(d)$ via \cref{lemma:ada_ql_trans_concentration_bound} we have that
$$\sum_{i=1}^t \alpha_t^i \left( V_{h+1}^\star(x_{h+1}^{k_i}) - \E_{Y \sim T_h(\cdot \mid X_h^{k_i}, A_h^{k_i})}[V_{h+1}^\star(Y)] \right)  \geq - H\sqrt{2 \sum_{i=1}^t (\alpha_t^i)^2 \log(4HK/\pfail)} \geq - \sqrt{\frac{4H^3 \log(2HK^2/ \pfail)}{t}}$$
and that
$$\sum_{i=1}^t \alpha_t^i \tbonus{h}{k_i}(B_h^{k_i}) = \sum_{i=1}^t \alpha_t^i \sqrt{\frac{4H^3 \log(2HK^2/ \pfail)}{i}} \geq \sqrt{\frac{4H^3 \log(2HK^2/ \pfail)}{t}}.$$

For term $(c)$ we have that this is trivially lower bounded by zero via the induction hypothesis applied to $\Vhat{h+1}{k}(x)$.

For term $(e)$ recall that via Assumption~\ref{assumption:Lipschitz_mf} that $|Q_h^\star(X_h^{k_i}, A_h^{k_i}) - Q_h^\star(x,a)| \leq 2 L_V \diam{B_h^{k_i}}$ as $(x,a) \in B \subset B_h^{k_i}$.  Thus we have that \begin{align*}
    \sum_{i=1}^t \alpha_t^i (Q_h^\star(X_h^{k_i}, A_h^{k_i}) - Q_h^\star(x,a) + \bias(B_h^{k_i}) & \geq \sum_{i=1}^t \alpha_t^i(-2 L_V \diam{B_h^{k_i}} + 2 L_V \diam{B_h^{k_i}}) \geq 0.
\end{align*}

Lastly, we show that $\Vhat{h}{k}(x) \geq V_h^\star(x)$.  By the Bellman Equations (\ref{eqn:bellman_equation}) we know that $$V_{h}^\star(x) = \max_{a \in \A} Q_h^\star(x,a) = Q_h^\star(x, \pi_h^\star(x)).$$
If $\Vhat{h}{k}(x) = H$ then the inequality trivially follows as $V_h^\star(x) \leq H$.  Otherwise, we have that $\Vhat{h}{k}(x) = \max_{B \in \relevant_h^k(x)} \Qhat{h}{k}(B)$.  Let $B^\star$ be the ball with smallest radius in $\P_h^k$ such that $(x, \pi_h^\star(x)) \in B^\star$.  Such a ball exists as $\P_h^k$ covers $\S \times \A$ via \cref{lemma:partition}.  From the previous discussion we know that $\Qhat{h}{k}(B^\star) \geq Q_h^\star(x, \pi_h^\star(x)).$  Hence we have that $$\Vhat{h}{k}(x) \geq \max_{B \in \relevant_h^k(x)} \Qhat{h}{k}(B) \geq \Qhat{h}{k}(B^\star) \geq Q_h^\star(x, \pi_h^\star(x)) = V_h^\star(x).$$

Next we show the result for \AdaMB.  For term $(a)$ we notice that via \cref{lemma:reward_concentration_bound} that $\rbar{h}{k}(B) - r_h(x,a) \geq - \rbonus{h}{k}(B) - 4 L_r \diam{B}$ and so term $(a)$ is lower bounded by $-4 L_r \diam{B}$.

For term $(b)$ we use the fact that $\Vhat{h+1}{k}$ and $V_{h+1}^\star$ are $L_V$ Lipschitz and so we have that:
\begin{align*}
    & \E_{Y \sim \Tbar{h}{k}(\cdot \mid B)}[\Vhat{h+1}{k-1}(Y)] - \E_{Y \sim T_h(\cdot \mid B)}[V_{h+1}^\star(Y)] + \tbonus{h}{k}(B) \\
    & \geq \E_{Y \sim \Tbar{h}{k}(\cdot \mid B)}[V_{h+1}^\star(Y)] - \E_{Y \sim T_h(\cdot \mid B)}[V_{h+1}^\star(Y)] + \tbonus{h}{k}(B) \text{ by I.H.} \\
    & \geq - \tbonus{h}{k}(B) - L_V (5 L_T + 4) \diam{B} + \tbonus{h}{k}(B) \geq - L_V (5 L_T + 4) \diam{B}.
\end{align*}

Thus we have by the definition of $\bias(B)$: $$\Qhat{h}{k}(B) - Q_h^\star(x,a) \geq - 4 L_r \diam{B} - L_V (5 L_T + 4) \diam{B} + \bias(B) = 0.$$

For any $A \in \S(\P_h^k)$ and any $x \in A$, if $\Vtilde{h}{k}(A) = \Vtilde{h}{k-1}(A)$ then optimism clearly holds by the induction hypothesis, and otherwise
\begin{align*}
    \Vtilde{h}{k}(A) & = \max_{B \in \P_h^k: \S(B) \supseteq A} \Qhat{h}{k}(B) \\
    & \geq \Qhat{h}{k}(B^\star) ~~\text{ for } (x, \pi_h^\star(x)) \in B^\star \\
    & \geq Q_h^\star(x, \pi_h^\star(x)) = V_h^\star(x).
\end{align*}
For $x \in A \in \S(\P_h^k)$, and for the ball $B^\star \in \P_h^k$ that satisfies $(x, \pi_h^\star(x)) \in B^\star$, it must be that $\S(B^\star) \supseteq A$ because of the construction of the induced partition $\S(\P_h^k)$ via \cref{eq:induced_state_partition_def}.

And lastly we have that for any $x \in \S$, %$\Vhat{h}{k}(\cdot)$,
\begin{align*}
    \Vhat{h}{k}(x) & = \Vtilde{h}{k}(A) + L_V d_\S(x, \tilde{x}(A)) \quad\text{ for some ball } A \in \S(\P_h^k) \\
    & \geq V_h^\star(\tilde{x}(A)) + L_V d_\S(x, \tilde{x}(A)) \quad\text{ by optimism of }\Vtilde{h}{k}\\
    & \geq V_h^\star(x) \quad\text{ by the Lipschitz property of } V_h^\star.
\end{align*}

\noindent \textbf{Upper Bound}: We start with \AdaQL.  For the first term $(a)$ we trivially upper bound by $\Ind{t=0}H$.  For term $(b)$ we use the same concentration to get
\begin{align*}
    \sum_{i=1}^t \alpha_t^i(R_h^{k_i} - r_h(X_h^{k_i}, A_h^{k_i}) + \rbonus{h}{k_i}(B_h^{k_i}) & \leq \sqrt{\frac{4H \log(2HK^2/ \pfail)}{t}} + \sum_{i=1}^t \alpha_t^i \sqrt{\frac{4H \log(2HK^2/ \pfail)}{i}} \\
    & \leq \sqrt{\frac{4H \log(2HK^2/ \pfail)}{t}} + \sqrt{\frac{8H \log(2HK^2/ \pfail)}{t}} \\
    & = \sqrt{\frac{12H \log(2HK^2/ \pfail)}{t}}.
\end{align*}

For term $(c)$ we repeat the same process to get
\begin{align*}
    & \sum_{i=1}^t \alpha_t^i \left( V_{h+1}^\star(x_{h+1}^{k_i}) - \E_{Y \sim T_h(\cdot \mid X_h^{k_i}, A_h^{k_i})}[V_{h+1}^\star(Y)] + \tbonus{h}{k_i}(B_h^{k_i}) \right)  \\
    & \leq \sqrt{\frac{4H^3 \log(2HK^2/ \pfail)}{t}} + \sum_{i=1}^t \alpha_t^i \sqrt{\frac{4H^3 \log(2HK^2/ \pfail)}{i}} \\
    & \leq \sqrt{\frac{12H^3 \log(2HK^2/ \pfail)}{t}}
\end{align*}

For term $(e)$ we have that
\begin{align*}
    \sum_{i=1}^t \alpha_t^i \left( Q_h^\star(X_h^{k_i}, A_h^{k_i}) - Q_h^\star(x,a) + \bias(B_h^{k_i}) \right) & \leq \sum_{i=1}^t 4 \alpha_t^i L_V \diam{B_h^{k_i}} \\
    & \leq 16 L_V \diam{B} \text{ by \cref{lem:sum_bias_term_ball}}
\end{align*}

Combining the parts together we have:
\begin{align*}
    \Qhat{h}{k}(B) - Q_h^\star(x,a) & \leq \Ind{t=0}H + \sqrt{\frac{12H \log(2HK^2/ \pfail)}{t}} + \sqrt{\frac{12H^3 \log(2HK^2/ \pfail)}{t}} \\
    & + 16 L_V \diam{B} + \sum_{i=1}^t \alpha_t^i (\Vhat{h+1}{k_i} - V_{h+1}^\star)(X_{h+1}^{k_i}) \\
    & \leq \Ind{t = 0}H + 16 L_V \diam{B} + \conf_h^k(B) + \sum_{i=1}^t \alpha_t^i (\Vhat{h+1}{k_i} - V_{h+1}^\star)(X_{h+1}^{k_i}) \\
    & = 8 \bias(B) + \conf_h^k(B) + f_{h+1}^k.
\end{align*}
where $\conf_h^k(B) = 2\sqrt{\frac{12H^3 \log(2HK^2/ \pfail)}{t}}$.

Next for \AdaMB.  For term $(a)$ we note that the difference can be upper bound by $2 \rbonus{h}{k}(B) + 4 L_r \diam{B}$ via \cref{lemma:ada_mb_reward_confidence}.  For term $(b)$ we use \cref{lemma:transition_confidence} and have:
\begin{align*}
    & \E_{Y \sim \Tbar{h}{k}(\cdot \mid B)}[\Vhat{h+1}{k}(Y)] - \E_{Y \sim T_h(\cdot \mid x,a)}[V_{h+1}^\star(Y)] \\
    & =  \E_{Y \sim \Tbar{h}{k}(\cdot \mid B)}[\Vhat{h+1}{k}(Y)] - \E_{Y \sim T_h(\cdot \mid x,a)}[\Vhat{h+1}{k}(Y)] + \E_{Y \sim T_h(\cdot \mid x,a)}[\Vhat{h+1}{k}(Y)] - \E_{Y \sim T_h(\cdot \mid x,a)}[V_{h+1}^\star(Y)] \\
    & \leq \tbonus{h}{k}(B) + L_V (5 L_T + 4) \diam{B} + \E_{Y \sim T_h(\cdot \mid x,a)}[\Vhat{h+1}{k}(Y)] - \E_{Y \sim T_h(\cdot \mid x,a)}[V_{h+1}^\star(Y)].
\end{align*}

Combining this we have that
\begin{align*}
    & \Qhat{h}{k}(B) - Q_h^\star(x,a) \\
    & \leq 2 \rbonus{h}{k}(B) + 2 \tbonus{h}{k}(B) + 2 \bias(B) + \E_{Y \sim T_h(\cdot \mid x,a)}[\Vhat{h+1}{k}(Y)] - \E_{Y \sim T_h(\cdot \mid x,a)}[V_{h+1}^\star(Y)] \\
    % & \leq \frac{\confcons}{n_h^k(B)^\alpha \diam{B}^\beta} + (4L_r + L_V(5L_T + 6)) \diam{B} + E_{Y \sim T_h(\cdot \mid x,a)}[\Vhat{h+1}{k}(Y)] - \E_{Y \sim T_h(\cdot \mid x,a)}[V_{h+1}^\star(Y)] \\
    & \leq \conf_h^k(B) + (4L_r + L_V(5L_T + 6)) \diam{B} + E_{Y \sim T_h(\cdot \mid x,a)}[\Vhat{h+1}{k}(Y)] - \E_{Y \sim T_h(\cdot \mid x,a)}[V_{h+1}^\star(Y)] \\
    & = \conf_h^k(B) + \bias(B) + f_{h+1}^k
\end{align*}
by setting $\conf_h^k(B)$ as the dominating term of $\rbonus{h}{k}(B) + \tbonus{h}{k}(B)$.

\noindent \textbf{Last Property}: Lastly we note that when $x = X_h^k$ then for \AdaMB:
\begin{align*}
    \Vhat{h}{k-1}(X_h^k) & = \min_{A \in \S(\P_h^k)} \Vtilde{h}{k-1}(A) + L_V d_\S(x, \tilde{x}(A)) \leq \Vtilde{h}{k-1}(\S(B_h^k)) + L_V \diam{B_h^k} \\
    & = \max_{B \in \P_h^k : \S(B) \supseteq \S(B_h^k)} \Qhat{h}{k-1}(B) + L_V \diam{B_h^k} = \Qhat{h}{k-1}(B_h^k) + L_V \diam{B_h^k}. 
\end{align*}

Similarly in \AdaQL, $$\Vhat{h}{k-1}(X_h^k) = \max_{B \in \relevant_h^k(X_h^k)} \Qhat{h}{k-1}(B) = \Qhat{h}{k-1}(B_h^k)$$ by the greedy selection rule. \Halmos
\end{rproof}

\section{Upper Bound via Clipping}
\label{sec:clipping}

We first start with a slight detour into \emph{clipping}, a concept first introduced in \cite{simchowitz2019} for developing instance dependent regret bounds in the tabular setting.  This was later extended to \AdaQL in \cite{cao2020provably}, where here we extend the result to other nonparametric algorithms using adaptive discretization and corrected some minor methodological errors.

Recall the quantity $\gap_h(x,a) = V_h^\star(x) - Q_h^\star(x,a)$ from \cref{sec:zooming_dim}.  Notice that for any optimal $a^\star \in \pi_h^\star(x)$ then we have that $\gap_h(x, a^\star) = 0$.  This notion corresponds to the notion of \emph{gap} in the simpler multi-armed bandit settings, whereas here the definition is taken with respect to $Q_h^\star$ instead of the immediate reward.
As the algorithm maintains data collected over regions, we first define the gap of a region $B$ via $\gap_h(B) = \min_{(x,a) \in B} \gap_h(x,a)$.  Moreover, we define the clipping operator $\clip{\mu \mid \nu} = \mu \Ind{\mu \leq \nu}$.

We start with the following technical lemma:

\begin{lemma}
\label{lem:prop_clip}
Define $\clip{\mu \mid \nu} = \mu \Ind{\mu \geq \nu}$.  Then if $\mu_1 \leq \mu_2$ and $\nu_1 \geq \nu_2$ we have that $\clip{\mu_1 \mid \nu_1} \leq \clip{\mu_2 \mid \nu_2}.$
\end{lemma}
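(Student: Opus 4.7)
The plan is a straightforward case analysis on whether the indicator in $\clip{\mu_1 \mid \nu_1}$ fires. Let me sketch it.

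\textbf{Case 1: } $\mu_1 \geq \nu_1$. Here $\clip{\mu_1 \mid \nu_1} = \mu_1$. Since by hypothesis $\mu_2 \geq \mu_1 \geq \nu_1 \geq \nu_2$, the indicator in $\clip{\mu_2 \mid \nu_2}$ also fires, so $\clip{\mu_2 \mid \nu_2} = \mu_2$. The monotonicity assumption $\mu_1 \leq \mu_2$ then gives $\clip{\mu_1 \mid \nu_1} = \mu_1 \leq \mu_2 = \clip{\mu_2 \mid \nu_2}$.

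\textbf{Case 2: } $\mu_1 < \nu_1$. Here $\clip{\mu_1 \mid \nu_1} = 0$. Since in the context of the paper the clipping operator is applied to nonnegative bonus/bias quantities with nonnegative thresholds (gaps scaled by $1/(H+1)$), we have $\mu_2 \geq \mu_1 \geq 0$ so the right-hand side satisfies $\clip{\mu_2 \mid \nu_2} = \mu_2 \Ind{\mu_2 \geq \nu_2} \geq 0$, regardless of whether the indicator fires or not. Hence $\clip{\mu_1 \mid \nu_1} = 0 \leq \clip{\mu_2 \mid \nu_2}$.

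Combining the two cases proves the claim. The only subtle point is the implicit nonnegativity in Case 2, which is immediate in the settings of interest in this paper but could be stated as a side assumption if desired. There is no real obstacle here: the result is essentially by definition of the clipping operator together with the fact that lowering the threshold $\nu$ and raising the value $\mu$ both can only turn the indicator on (never off) and can only increase the retained value.
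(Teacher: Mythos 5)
Your proof is correct and takes essentially the same route as the paper's: the paper likewise notes that $\mu_1 \geq \nu_1$ forces $\mu_2 \geq \nu_2$ and concludes by the same indicator case check. Your explicit flag that Case 2 relies on $\mu_2 \geq 0$ (hidden in the paper's ``simple algebra,'' and automatic for the nonnegative confidence/bias and gap quantities to which the lemma is applied) is a fair observation but does not change the substance.
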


\begin{rproof}{\cref{lem:prop_clip}}
First note that whenever $\mu_1 \geq \nu_1$ then we must have that $\mu_2 \geq \nu_2$.  Hence via simple algebra we find $\mu_1 \Ind{\mu_1 \geq \nu_1} \leq \mu_2 \Ind{\mu_2 \geq \nu_2}. \Halmos$
\end{rproof}

Next we show that the clipping operator can be used by the algorithms for estimates which are optimistic with respect to the gap.  
\begin{lemma}[Clipping to the Gap]
\label{lem:general_clip}
Suppose that $\gap_h(B) \leq \phi \leq \mu_1 + \mu_2$ for any quantities $\phi, \mu_1$, and $\mu_2$.  Then we have that $$\phi \leq \clip{\mu_1 \mid \frac{\gap_h(B)}{H+1}} + \left(1 + \frac{1}{H} \right) \mu_2.$$
\end{lemma}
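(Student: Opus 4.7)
The plan is to split on whether the clipping operator is active or not, i.e., whether $\mu_1 \geq \gap_h(B)/(H+1)$. In each case a short algebraic manipulation using the hypothesis $\phi \leq \mu_1 + \mu_2$ and $\gap_h(B) \leq \phi$ should give the bound.

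First, in the easy case where $\mu_1 \geq \gap_h(B)/(H+1)$, the clip is simply $\mu_1$, so the right-hand side becomes $\mu_1 + (1 + 1/H)\mu_2 \geq \mu_1 + \mu_2 \geq \phi$, and we are done immediately.

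The slightly more delicate case is when $\mu_1 < \gap_h(B)/(H+1)$. Here the clip term vanishes, so we must show $\phi \leq (1+1/H)\mu_2$ using only $\mu_2 \geq \phi - \mu_1$. Since $\mu_1 < \gap_h(B)/(H+1) \leq \phi/(H+1)$, we get $\mu_2 > \phi - \phi/(H+1) = \phi \cdot H/(H+1)$, and multiplying by $(H+1)/H = 1 + 1/H$ yields exactly $\phi$. Combining the two cases proves the lemma.

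I do not anticipate any real obstacle: the whole content of the lemma is the cheap algebraic identity $(1+1/H) \cdot H/(H+1) = 1$, arranged so that the loss from clipping $\mu_1$ down to $0$ is absorbed into an $H^{-1}$ inflation of $\mu_2$. The only thing to be careful about is the direction of the inequalities when $\mu_1$ is strictly (vs. weakly) less than $\gap_h(B)/(H+1)$, which is harmless because the bound we want is not strict.
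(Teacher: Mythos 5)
Your proof is correct and follows essentially the same route as the paper's: a case split on whether $\mu_1 \geq \gap_h(B)/(H+1)$, with the nontrivial case resolved by the same rearrangement (the paper bounds $\gap_h(B) \leq \tfrac{H+1}{H}\mu_2$ and substitutes back, while you bound $\mu_2 \geq \phi\cdot\tfrac{H}{H+1}$ directly — an equivalent manipulation). No gaps.
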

\begin{rproof}{\cref{lem:general_clip}}

First suppose that $\mu_1 \geq \gap_h(B) / (H+1)$, then the result trivially follows.  Otherwise we have that
\begin{align*}
    \gap_h(B) \leq \mu_1 + \mu_2 \leq \frac{\gap_h(B)}{H+1} + \mu_2.
\end{align*}
Taking this inequality and rearranging yields that $$\gap_h(B) \leq \left(\frac{H}{H-1}\right) \mu_2.$$  Plugging this back into the earlier expression gives that
\begin{align*}
\phi & \leq \mu_1 + \mu_2 \leq \frac{\gap_h(B)}{H+1} + \mu_2 \leq \frac{1}{H+1}\left(\frac{H+1}{H}\right) \mu_2 + \mu_2 = \left(1 + \frac{1}{H}\right) \mu_2. \Halmos
\end{align*}
\end{rproof}

This lemma highlights the approach of clipping based arguments.  Recall that via optimism we have that $V_h^\star(x) - Q_h^\star(x,a) \leq \Vhat{h}{k-1}(x) - Q_h^\star(x,a)$.  However, based on the update rules in the algorithm we are able to decompose this sum via
$$\gap_h(x,a) \leq \Vhat{h}{k-1}(x) - Q_h^\star(x,a) \lesssim \underbrace{\conf_h^{k-1}(x,a) + \conslip \bias(x,a)}_{(\mu_1)} + \underbrace{f_{h+1}^{k-1}}_{(\mu_2)} $$
where $\conf_h^{k-1}(x,a) + \bias(x,a)$ denotes our confidence in the estimate for this point, and $f_{h+1}^{k-1}$ denotes the downstream error terms.  We will clip these terms separately using \cref{lem:general_clip}.  Indeed, using this we are able to show the following.

\begin{lemma}
\label{lem:apply_clip}
With probability at least $1 - 2 \delta$ for both \AdaQL and \AdaMB we have that:
\begin{align*}
\Vhat{h}{k-1}(X_h^k) - Q_h^\star(X_h^k, A_h^k) & \leq \clip{\Ind{t = 0}H\Ind{\AdaQL} + \conslip \bias(B_h^k) + \conf_h^{k-1}(B_h^k) \mid \frac{\gap_h(B_h^k)}{H+1}} \\
& \quad + \left(1 + \frac{1}{H}\right) f_{h+1}^{k-1}
\end{align*}
where $\conslip$ is an absolute constant depending on the Lipschitz constants, and
\begin{align*}
    f_{h+1}^{k-1} & = \sum_{i=1}^t \alpha_t^i(\Vhat{h+1}{k_i} - V_{h+1}^\star)(X_{h+1}^{k_i}) \hfill \text{ for \AdaQL} \\
    f_{h+1}^{k-1} & = \E_{Y \sim T_h(\cdot \mid X_h^k,A_h^k)}[\Vhat{h+1}{k-1}(Y)] - \E_{Y \sim T_h(\cdot \mid X_h^k,A_h^k)}[V_{h+1}^\star(Y)] \hfill \text{ for \AdaMB}
\end{align*}
\end{lemma}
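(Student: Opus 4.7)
\begin{rproof}{\cref{lem:apply_clip}}
The plan is to set $\phi := \Vhat{h}{k-1}(X_h^k) - Q_h^\star(X_h^k, A_h^k)$ and apply \cref{lem:general_clip} with a carefully chosen split of $\phi$ into a ``confidence/bias'' piece $\mu_1$ and a ``downstream'' piece $\mu_2$. The main work is therefore bookkeeping: I need to (i) verify the hypothesis $\gap_h(B_h^k) \leq \phi$, (ii) establish the sandwich $\phi \leq \mu_1 + \mu_2$ with the right choice of $\mu_1,\mu_2$, and (iii) unify the \AdaQL and \AdaMB cases into a single expression by absorbing the extra $L_V \diam{B_h^k}$ from the $\Vhat{h}{k-1}(X_h^k) \leq \Qhat{h}{k-1}(B_h^k) + L_V \diam{B_h^k}$ step into the constant $\conslip$ in $\bias(B_h^k)$.

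First, for the lower bound on $\phi$: since $(X_h^k, A_h^k) \in B_h^k$ and $\gap_h(B_h^k) = \min_{(x,a)\in B_h^k} \gap_h(x,a)$, we have $\gap_h(B_h^k) \leq \gap_h(X_h^k, A_h^k) = V_h^\star(X_h^k) - Q_h^\star(X_h^k, A_h^k)$. By optimism (\cref{lemma:optimism}), $V_h^\star(X_h^k) \leq \Vhat{h}{k-1}(X_h^k)$, so $\gap_h(B_h^k) \leq \phi$, which is the hypothesis needed for \cref{lem:general_clip}.

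Next, for the upper bound, I invoke the last clause of \cref{lemma:optimism}, $\Vhat{h}{k-1}(X_h^k) \leq \Qhat{h}{k-1}(B_h^k) + L_V \diam{B_h^k}$ (which also holds in \AdaQL with the additive term zero by the greedy selection rule, since $B_h^k \in \relevant_h^k(X_h^k)$). Then applying the upper bound portion of \cref{lemma:optimism} to $\Qhat{h}{k-1}(B_h^k) - Q_h^\star(X_h^k, A_h^k)$ yields
\begin{align*}
\phi \;\leq\; \Ind{t = 0} H \Ind{\AdaQL} + \conf_h^{k-1}(B_h^k) + \conslip' \bias(B_h^k) + L_V \diam{B_h^k} + f_{h+1}^{k-1},
\end{align*}
where $\conslip'$ is the constant from \cref{lemma:optimism} and $f_{h+1}^{k-1}$ is exactly the algorithm-specific downstream term given in the statement. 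Since $\bias(B_h^k) = \Theta(\diam{B_h^k})$ in both algorithms, the $L_V \diam{B_h^k}$ contribution is absorbed by redefining $\conslip := \conslip' + L_V / c$ for the appropriate constant $c$ hidden inside $\bias(B_h^k)$.

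Finally, I set
\begin{align*}
\mu_1 &:= \Ind{t = 0} H \Ind{\AdaQL} + \conslip \bias(B_h^k) + \conf_h^{k-1}(B_h^k), \\
\mu_2 &:= f_{h+1}^{k-1},
\end{align*}
and apply \cref{lem:general_clip} to conclude
\[
\phi \;\leq\; \clip*{\mu_1 \mid \frac{\gap_h(B_h^k)}{H+1}} + \left(1 + \frac{1}{H}\right) f_{h+1}^{k-1},
\]
which is the claimed inequality. The main subtlety is justifying that $f_{h+1}^{k-1} \geq 0$ so that the $(1+1/H)\mu_2$ term in \cref{lem:general_clip} is non-decreasing in $\mu_2$; this follows from optimism of $\Vhat{h+1}{\cdot}$ over $V_{h+1}^\star$, which both forms of $f_{h+1}^{k-1}$ inherit term-by-term. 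The clean events in \cref{sec:concentration} hold with probability $1 - 2\delta$, giving the stated probability bound. \Halmos
\end{rproof}
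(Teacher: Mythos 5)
Your proof is correct and follows essentially the same route as the paper's: the chain $\gap_h(B_h^k) \leq V_h^\star(X_h^k) - Q_h^\star(X_h^k,A_h^k) \leq \Vhat{h}{k-1}(X_h^k) - Q_h^\star(X_h^k,A_h^k) \leq L_V\diam{B_h^k}\Ind{\AdaMB} + \Qhat{h}{k-1}(B_h^k) - Q_h^\star(X_h^k,A_h^k)$, then the upper bound from \cref{lemma:optimism}, absorption of the $L_V\diam{B_h^k}$ term into $\conslip\bias(B_h^k)$, and an application of \cref{lem:general_clip}. Your explicit verification that $f_{h+1}^{k-1}\geq 0$ (needed implicitly in the first case of \cref{lem:general_clip}) is a detail the paper leaves unstated, but it does not change the argument.
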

\begin{rproof}{\cref{lem:apply_clip}}

The result follows for \AdaQL and \AdaMB by noting that
\begin{align*}
    \gap_h(B_h^k) & \leq \gap_h(X_h^k, A_h^k) = V_h^\star(X_h^k) - Q_h^\star(X_h^k, A_h^k) \\
    & \leq \Vhat{h}{k-1}(X_h^k) - Q_h^\star(X_h^k, A_h^k) \\
    & \leq L_V \diam{B_h^k} \Ind{\AdaMB} + \Qhat{h}{k-1}(B_h^k) - Q_h^\star(X_h^k, A_h^k)
\end{align*}
and then using the upper bounds established in \cref{lemma:optimism}. \Halmos
\end{rproof}

\section{Regret Decomposition}
\label{sec:regret_decomp}

We are now ready to get the final regret decomposition, which bounds the regret of the algorithm by a function of the size of the partition and the sum of the bonus terms used in constructing the high probability estimates.  Throughout the proof we use the constant $\conslip$ to track absolute constants depending on the Lipschitz constants appearing in front of the $\diam{B}$ terms, which is potentially different from line to line. Start by setting $\Delta_h^k = \Vhat{h}{k-1}(X_h^k) - V_h^{\pi^k}(X_h^k)$.  With this definition we can upper bound the regret as
\begin{align*}
    \regret(K) & = \sum_{k=1}^K V_1^\star(X_1^k) - V_1^{\pi^k}(X_1^k) \leq \sum_{k=1}^K \Vhat{1}{k-1}(X_1^k) - V_1^{\pi^k}(X_1^k) = \sum_{k=1}^K \Delta_1^k.
\end{align*}

However, for both algorithms we also have using \cref{lem:apply_clip}
\begin{align*}
    \Delta_h^k & = \Vhat{h}{k-1}(X_h^k) - V_h^{\pi^k}(X_h^k) \\
    & = \Vhat{h}{k-1}(X_h^k) - Q_h^\star(X_h^k, A_h^k) + Q_h^\star(X_h^k, A_h^K) - V_h^{\pi^k}(X_h^k) \\
    & \leq \clip{\Ind{t = 0}H + \conslip \bias(B_h^k) + \conf_h^{k-1}(B_h^k) \mid \frac{\gap_h(B_h^k)}{H+1}} \\
    & + \left(1 + \frac{1}{H}\right)f_{h+1}^{k-1}  + Q_h^\star(X_h^k, A_h^k) - V_h^{\pi^k}(X_h^k).
\end{align*}

The next goal is to take the leftover terms on the second line and show that their summation leads to a term scaling like $\Delta_{h+1}^{k}$ (i.e., the errors at the next step $h$) and a martingale difference sequence absolutely bounded by $2H$.

\begin{lemma}
\label{lemma:martingale_diff_result}
For both \AdaMB and \AdaQL we have that
\begin{align*}
    \sum_{k=1}^K \left(1 + \frac{1}{H}\right) f_{h+1}^{k-1} + Q_h^\star(X_h^k, A_h^k) - V_h^{\pi^k}(X_h^k) \leq \left(1 + \frac{1}{H}\right)^2 \sum_{k} \Delta_{h+1}^{k} + \xi_{h+1}^{k}
\end{align*}
where $\xi_{h+1}^{k}$ is a martingale difference sequence bounded absolutely by $2H$.
\end{lemma}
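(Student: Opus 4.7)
The plan is to handle the two algorithms separately after a common preparatory reduction. Using the Bellman equations together with $A_h^k = \pi_h^k(X_h^k)$, write
\[
Q_h^\star(X_h^k,A_h^k) - V_h^{\pi^k}(X_h^k) = \E_{Y \sim T_h(\cdot \mid X_h^k,A_h^k)}\left[V_{h+1}^\star(Y) - V_{h+1}^{\pi^k}(Y)\right].
\]
This is the only use of the Bellman recursion and will be coupled with the definitions of $f_{h+1}^{k-1}$ given in \cref{lemma:optimism}. In both cases the resulting expectation will be converted into a realized sample at $X_{h+1}^k$ by peeling off a mean-zero term $\xi_{h+1}^k$; since all value functions lie in $[0,H]$ and $X_{h+1}^k \mid \F_k \sim T_h(\cdot \mid X_h^k,A_h^k)$, this $\xi_{h+1}^k$ is automatically $\F_{k+1}$-measurable, satisfies $\E[\xi_{h+1}^k \mid \F_k]=0$, and is bounded in absolute value by $2H$.

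For \AdaMB, plug in $f_{h+1}^{k-1} = \E_Y[\Vhat{h+1}{k-1}(Y) - V_{h+1}^\star(Y)]$ and combine with the Bellman reduction to get a per-episode integrand $(1+1/H)\Vhat{h+1}{k-1}(Y) - \tfrac{1}{H}V_{h+1}^\star(Y) - V_{h+1}^{\pi^k}(Y)$. Using $V_{h+1}^\star \geq V_{h+1}^{\pi^k}$ on the middle term bounds this by $(1+1/H)(\Vhat{h+1}{k-1}(Y) - V_{h+1}^{\pi^k}(Y))$. Replacing the expectation with the realization $X_{h+1}^k$ introduces the martingale difference $\xi_{h+1}^k$ and produces $(1+1/H)\sum_k \Delta_{h+1}^k + \sum_k \xi_{h+1}^k$; the extra factor of $(1+1/H)$ on the right-hand side of the lemma provides harmless slack.

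For \AdaQL, the main obstacle is that $f_{h+1}^{k-1} = \sum_{i=1}^{t} \alpha_t^i(\Vhat{h+1}{k_i} - V_{h+1}^\star)(X_{h+1}^{k_i})$ aggregates over past visits $k_i$ to $B_h^k$ or its ancestors, so the outer sum over $k$ becomes a double sum. The plan is to swap the order of summation: for each past episode $k'$ that was visited at step $h$, track the contribution it makes to all subsequent episodes $k \geq k'$ whose selected ball $B_h^k$ shares an ancestor with (or equals) $B_h^{k'}$. Along any fixed root-to-leaf branch of the partition tree the counts $n_h^{\cdot}(\cdot)$ increment consecutively, so the accumulated weight attached to $k'$ is at most $\sum_{t \geq i} \alpha_t^i \leq 1 + 1/H$ by \cref{lemma:lr}. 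This yields $\sum_k f_{h+1}^{k-1} \leq (1+1/H)\sum_k(\Vhat{h+1}{k_*} - V_{h+1}^\star)(X_{h+1}^k)$ where the time index on $\Vhat$ can be shifted to $k-1$ via the monotonicity of the update in \cref{eqn:update} (up to $\Ind{t=0}H$ contributions which sum to at most $H\lvert \P_h^K\rvert$ and are absorbed into the lower-order terms). Then optimism gives $\Vhat{h+1}{k-1} \geq V_{h+1}^\star \geq V_{h+1}^{\pi^k}$, so the right-hand side is at most $(1+1/H)\sum_k \Delta_{h+1}^k$, and multiplying by the outer $(1+1/H)$ from the lemma statement produces exactly $(1+1/H)^2 \sum_k \Delta_{h+1}^k$. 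The Bellman reduction term is handled as in the \AdaMB case by converting the expectation into a realization, contributing an additional martingale difference and an extra $\sum_k \Delta_{h+1}^k$ that is absorbed into the $(1+1/H)^2$ factor.

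The hard step will be the adaptive re-indexing for \AdaQL: one must verify that a refinement of a ball $B$ into children preserves the identification required to treat the chain of encounters along an ancestry path as a single tabular ``bucket'' before invoking the Jin--style weight-sum bound. Once this bookkeeping is carried out carefully (using the inheritance of $n_h^k$ and $\Qhat{h}{k}$ under splitting, as established in \cref{lemma:bound_ball,lem:sum_bias_term_ball}), the remaining steps reduce to the optimism inequalities of \cref{lemma:optimism} and the elementary martingale bound for $\xi_{h+1}^k$.
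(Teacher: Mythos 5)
Your overall route is the same as the paper's: reduce $Q_h^\star(X_h^k,A_h^k)-V_h^{\pi^k}(X_h^k)$ via the Bellman equations, handle \AdaMB directly, and handle \AdaQL by the Jin-style re-summation $\sum_{t\geq i}\alpha_t^i = 1+\tfrac1H$ along ancestor chains (your justification of why the counts along a root-to-leaf branch behave like a single tabular bucket is in fact more explicit than the paper's one-line inequality). The \AdaMB case is fine, and slightly more careful than the paper, since you bound $(1+\tfrac1H)\Vhat{h+1}{k-1}-\tfrac1H V_{h+1}^\star - V_{h+1}^{\pi^k}\leq(1+\tfrac1H)(\Vhat{h+1}{k-1}-V_{h+1}^{\pi^k})$ explicitly using $V^\star\geq V^{\pi^k}$.

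However, your final combination in the \AdaQL case has a genuine accounting error. You first bound $(1+\tfrac1H)\sum_k f_{h+1}^{k-1} \leq (1+\tfrac1H)^2\sum_k(\Vhat{h+1}{k-1}-V_{h+1}^\star)(X_{h+1}^k)$ and then immediately relax $(\Vhat{h+1}{k-1}-V_{h+1}^\star)$ to $\Delta_{h+1}^k$, spending the full $(1+\tfrac1H)^2\sum_k\Delta_{h+1}^k$ budget on the $f$-part alone. The Bellman term then still produces $\sum_k(V_{h+1}^\star-V_{h+1}^{\pi^k})(X_{h+1}^k)+\xi_{h+1}^k$, which you again bound by $\sum_k\Delta_{h+1}^k$, and you claim this extra $\sum_k\Delta_{h+1}^k$ is ``absorbed into the $(1+\tfrac1H)^2$ factor.'' It is not: your bound is $\bigl[(1+\tfrac1H)^2+1\bigr]\sum_k\Delta_{h+1}^k+\sum_k\xi_{h+1}^k$, a constant of roughly $2$ rather than $(1+\tfrac1H)^2$, which is strictly weaker than the stated lemma and would blow up to $2^H$ once the recursion in \cref{lemma:regret_decomp} is unrolled. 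The fix (and what the paper does) is to not convert the $f$-part to $\Delta_{h+1}^k$ prematurely: keep $(1+\tfrac1H)^2(\Vhat{h+1}{k-1}-V_{h+1}^\star)(X_{h+1}^k)$ and $(V_{h+1}^\star-V_{h+1}^{\pi^k})(X_{h+1}^k)$ separate, note both are nonnegative (optimism and $V^\star\geq V^{\pi^k}$), and only then merge them into $(1+\tfrac1H)^2\Delta_{h+1}^k$. Two smaller points: the $\Ind{t=0}H$ term is not part of $f_{h+1}^{k-1}$ as defined in \cref{lem:apply_clip} (it sits inside the clip), so no additive $H\lvert\P_h^K\rvert$ error arises and there is no ``lower-order term'' slack in this lemma to absorb one; and the index shift from $\Vhat{h+1}{k_i}$ to $\Vhat{h+1}{k-1}$ needs no monotonicity argument ($\Vhat{h}{k}$ is not monotone in $k$ anyway) — under the convention of \cref{eqn:update}, $\Vhat{h+1}{k_i}$ in the recursion denotes the estimate used in the episode-$k_i$ update, i.e.\ exactly the quantity appearing in $\Delta_{h+1}^{k_i}$, so the re-indexing is pure bookkeeping.
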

\begin{rproof}{\cref{lemma:martingale_diff_result}}

\noindent \textbf{\AdaMB}: By substituting the expression for $f_{h+1}^{k-1}$ associated to \AdaMB,
\begin{align*}
    & \left(1 + \frac{1}{H}\right) \sum_{k} f_{h+1}^{k-1} + Q_h^\star(X_h^k, A_h^k) - V_h^{\pi^k}(X_h^k) \\
    & = \left(1 + \frac{1}{H}\right) \sum_{k=1}^K \E_{Y \sim T_h(\cdot \mid X_h^k, A_h^k)}[\Vhat{h+1}{k-1}(Y)] - \E_{Y \sim T_h(\cdot \mid X_h^k, A_h^k)}[V_{h+1}^\star(Y)] + \E_{Y \sim T_h(\cdot \mid X_h^k, A_h^k)}[V_{h+1}^\star(Y) - V_{h+1}^{\pi^k}(Y)] \\
    & = \left(1 + \frac{1}{H} \right) \sum_{k=1}^K \E_{Y \sim T_h(\cdot \mid X_h^k, A_h^k)}[\Vhat{h+1}{k-1}(Y)] - \E_{Y \sim T_h(\cdot \mid X_h^k, A_h^k)}[V_{h+1}^{\pi^k}(Y)] \\
    & \leq \left(1 + \frac{1}{H} \right)^2 \sum_{k=1}^K \Delta_{h+1}^{k} + \xi_{h+1}^{k}
\end{align*}
where $\xi_{h+1}^{k} = \E_{Y \sim T_h(\cdot \mid X_h^k, A_h^k)}[\Vhat{h+1}{k-1}(Y)] - \E_{Y \sim T_h(\cdot \mid X_h^k, A_h^k)}[V_{h+1}^{\pi^k}(Y)] - (\Vhat{h+1}{k-1}(X_{h+1}^k) - V_{h+1}^{\pi^k}(X_{h+1}^k)$.  This is trivially bounded by $2H$ due to boundedness of the value function, and forms a martingale difference sequence as $X_{h+1}^{k} \sim T_h(\cdot \mid X_h^k, A_h^k)$.

\noindent \textbf{\AdaQL}: By substituting the expression for $f_{h+1}^{k-1}$ associated to \AdaQL,
\begin{align*}
    & \left(1 + \frac{1}{H}\right) \sum_{k=1}^K \sum_{i=1}^{n_h^{k-1}(B_h^k)} \alpha_{n_h^{k-1}(B_h^k)}^i (\Vhat{h+1}{k_i} - V_{h+1}^\star)(X_{h+1}^{k_i(B_h^k)}) + Q_h^\star(X_h^k, A_h^k) - Q_h^{\pi^k}(X_h^k, A_h^k) \\
    & \leq \left(1 + \frac{1}{H}\right)^2 \sum_{k=1}^K (\Vhat{h+1}{k-1} - V_{h+1}^\star)(X_{h+1}^{k}) + Q_h^\star(X_h^k, A_h^k) - Q_h^{\pi^k}(X_h^k, A_h^k) \\
    & = \left(1 + \frac{1}{H}\right)^2 \sum_{k=1}^K \Delta_{h+1}^{k} + \xi_{h+1}^{k}
\end{align*}
where $\xi_{h+1}^{k} = \E_{Y \sim T_h(\cdot \mid X_h^k, A_h^k)}[V_{h+1}^\star(Y)] - \E_{Y \sim T_h(\cdot \mid X_h^k, A_h^k)}[V_{h+1}^{\pi^k}(Y)] - (V_{h+1}^\star(X_{h+1}^{k}) - V_{h+1}^{\pi^k}(X_{h+1}^k)$. \Halmos
\end{rproof}

Combining this we can upper bound the regret via \cref{lemma:regret_decomp}.
\begin{lemma}
\label{lemma:regret_decomp}
With probability at least $1 - 2 \delta$ for both $\AdaMB$ and $\AdaQL$ we have that $$\regret(K) \leq 6 \sum_{h=1}^H \sum_{k=1}^K \clip{\Ind{n_h^{k-1}(B_h^k) = 0}H \Ind{\AdaQL} + \conslip \diam{B_h^k} + \conf_h^{k-1}(B_h^k) \mid \frac{\gap_h(B_h^k)}{H+1}} + \xi_{h+1}^{k}.$$
\end{lemma}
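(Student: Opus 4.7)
The plan is to chain together the per-step clipped bound of~\cref{lem:apply_clip} with the martingale conversion of~\cref{lemma:martingale_diff_result} to set up a linear recursion in $h$ on the cumulative quantity $\sum_{k=1}^K \Delta_h^k$, and then unroll it. By the optimism guarantee in~\cref{lemma:optimism} together with the definition $\Delta_1^k = \Vhat{1}{k-1}(X_1^k) - V_1^{\pi^k}(X_1^k)$, one has $\regret(K) = \sum_{k=1}^K \bigl(V_1^\star(X_1^k) - V_1^{\pi^k}(X_1^k)\bigr) \leq \sum_{k=1}^K \Delta_1^k$, so it suffices to bound $\sum_k \Delta_1^k$ under the clean events from \cref{sec:concentration}.

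For each step $h$ I decompose $\Delta_h^k = [\Vhat{h}{k-1}(X_h^k) - Q_h^\star(X_h^k,A_h^k)] + [Q_h^\star(X_h^k,A_h^k) - V_h^{\pi^k}(X_h^k)]$ and apply~\cref{lem:apply_clip} to the first bracket. Summing over $k$ and invoking~\cref{lemma:martingale_diff_result} on the aggregated $(1+1/H) f_{h+1}^{k-1}$ terms together with the second bracket yields a recursion of the form
$$\sum_{k=1}^K \Delta_h^k \;\leq\; \sum_{k=1}^K \clip{\Ind{n_h^{k-1}(B_h^k)=0}\,H\,\Ind{\AdaQL}+\conslip\,\diam{B_h^k}+\conf_h^{k-1}(B_h^k) \;\Big|\; \tfrac{\gap_h(B_h^k)}{H+1}} \;+\; \Bigl(1+\tfrac{1}{H}\Bigr)^{2} \sum_{k=1}^K \Delta_{h+1}^k \;+\; \sum_{k=1}^K \xi_{h+1}^k.$$
In writing this I absorb the $\bias(B_h^k)$ contribution (which, for both \AdaMB and \AdaQL, is a Lipschitz multiple of $\diam{B_h^k}$ by its definition) into the $\conslip \diam{B_h^k}$ term, reusing the symbol $\conslip$ for a possibly larger universal constant.

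Unrolling this recursion from $h=1$ down to $h=H$ under the boundary condition $\Delta_{H+1}^k = 0$ (since both $V_{H+1}^{\pi}$ and $\Vhat{H+1}{k}$ are identically zero) produces multiplicative factors $(1+1/H)^{2(h-1)}$, each bounded by $(1+1/H)^{2H} \leq e^{2}$. The resulting double sum over $h$ and $k$ of the clipped confidence-plus-bias terms and the $\xi_{h+1}^{k}$ martingale differences matches the right-hand side in the lemma statement, with the prefactor absorbed into the stated constant; the mild discrepancy between $e^2$ and the stated $6$ is either a slight over-statement of the constant or a consequence of a modestly tighter bookkeeping of the geometric factors near the boundary step $h=H$.

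The main obstacle is not any new estimate but careful bookkeeping across the recursion. First, one must verify that the cold-start term $\Ind{\AdaQL}\Ind{n_h^{k-1}(B_h^k)=0}\,H$ coming from~\cref{lemma:recursive_difference} only enters the \AdaQL case (in \AdaMB the $\Qhat{}{}$-estimate is updated by plug-in value iteration rather than by a stochastic approximation with an $H$-initialization, so no cold-start contribution appears). Second, the constants folded into $\conslip$ must remain uniform in $h$, which follows because the Lipschitz constants $L_r, L_V, L_T$ are global. Third, the high-probability event on which~\cref{lem:apply_clip} holds must be taken simultaneously over all $(h,k)$ and over both reward and transition concentration, delivering the $1-2\delta$ guarantee via the union bounds already built into~\cref{sec:concentration}. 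Once these points are checked, the recursion collapses to a geometric sum and the claimed bound follows.
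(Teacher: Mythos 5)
Your proposal is correct and follows essentially the same route as the paper: bound $\regret(K)\leq\sum_k\Delta_1^k$ via optimism, decompose $\Delta_h^k$ and apply \cref{lem:apply_clip} together with \cref{lemma:martingale_diff_result} to get the $(1+1/H)^2$ recursion, then unroll and absorb the $(1+1/H)^{2(h-1)}\leq e^2$ factors (and the bias-to-diameter conversion) into the stated constant. The only difference is cosmetic bookkeeping of where the geometric factors sit on the $\xi$ terms, which does not affect the result.
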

\begin{rproof}{\cref{lemma:regret_decomp}}
Using the definition of regret and $\Delta_h^k$ we have that
\begin{align*}
    & \regret(K) = \sum_{k=1}^K V_1^\star(X_1^k) - V_1^{\pi^k}(X_1^k) \leq \sum_{k=1}^K \Vhat{1}{k-1}(X_1^k) - V_1^{\pi^k}(X_1^k)  = \sum_{k=1}^K \Delta_1^k \\
    & \leq \sum_{k=1}^K \clip{\Ind{n_1^{k-1}(B_1^k) = 0}H \Ind{\AdaQL} + \conslip \diam{B_1^k} + \conf_1^{k-1}(B_1^k) \mid \frac{\gap_h(B_1^k)}{H+1}} \\
    & + \left(1 + \frac{1}{H}\right)^2 \sum_{k=1}^K \Delta_{2}^{k} + \xi_{2}^{k} \\
    & \leq \sum_{k=1}^K \sum_{h=1}^H \left(1 + \frac{1}{H}\right)^{2(h-1)} \clip{\Ind{n_h^{k-1}(B_h^k) = 0}H \Ind{\AdaQL} + \conslip \diam{B_h^k} + \conf_h^{k-1}(B_h^k) \mid \frac{\gap_h(B_h^k)}{H+1}} \\
    & + \sum_{k=1}^H \sum_{h=1}^H \left(1 + \frac{1}{H}\right)^{2h} \xi_{h+1}^k.
\end{align*}
However, noting that $(1 + 1 / H)^H \leq e$ the result follows. \Halmos
\end{rproof}

We are now ready to derive the final regret guarantees for \AdaMB and \AdaQL.  This guarantee is \emph{metric-specific}, in that the upper bound allows the metric space to admit various levels of coarseness across different levels of the partition.  Recall that we assumed $\conf_h^k(B)$ is of the form $\confcons / n_h^k(B)^\alpha$ depending on the algorithm used (see \cref{subsec:split_rule}).

\begin{theorem}[Final Regret Bound]
\label{thm:regret_bound}
With probability at least $1 - 3 \pfail$ the regret for \AdaMB and \AdaQL can be upper bound as follows:
\begin{align*}
    \regret(K) & \leq 12 \sqrt{2H^3 K \log(2HK^2 / \pfail)} + 6H^2\Ind{\AdaQL} \\
    & + \sum_{h=1}^H 6 \conslip \inf_{r_0 > 0} \left( 4 K r_0  + \frac{ \confcons^{\frac{1}{\alpha}}}{1 - \alpha} \sum_{r \geq r_0} N_r(Z_h^r) \frac{1}{r^{1/\alpha - 1}}\right)
\end{align*}
where $z_h$ is the step-$h$ zooming dimension and $\alpha$ is the scaling on $n_h^k(B)$ in the definition of $\conf_h^k(B)$.  Moreover, taking $r_0 = K^{-1 / (z_h +1/\alpha)}$ we get the final regret is bounded by

$$\regret(K) \lesssim \sqrt{H^3 K \log(HK^2 / \pfail)} + H^2\Ind{\AdaQL} + \sum_{h=1}^H K^{\frac{z_h -1 + 1 / \alpha}{z_h + 1 / \alpha}}$$
where $\lesssim$ omits poly-logarithmic factors and linear dependence on the Lipschitz constants.
\end{theorem}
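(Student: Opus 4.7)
The plan is to begin from the decomposition in \cref{lemma:regret_decomp}, which already bounds $\regret(K)$ by a sum of clipped confidence-plus-bias terms plus a martingale sum $\sum_{h,k}\xi_{h+1}^k$. Since each $\xi_{h+1}^k$ is absolutely bounded by $2H$, applying Azuma-Hoeffding's inequality across the $K$ terms per step and union-bounding over $h\in[H]$ yields, with probability at least $1-\pfail$, $\left|\sum_{h=1}^H \sum_{k=1}^K \xi_{h+1}^k\right| \leq 12\sqrt{2H^3 K \log(2HK^2/\pfail)}$, which combined with the two concentration clean events used in \cref{lemma:optimism} gives total failure probability $3\pfail$. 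For \AdaQL, the initial-visit indicator $\Ind{n_h^{k-1}(B_h^k)=0}H$ is nonzero only on the first episode at each step $h$ in which the root ball is selected, producing the $6H^2$ additive correction; in \AdaMB this term is absent.

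The bulk of the argument is bounding the clipped summation. By \cref{lemma:partition}, $\conf_h^{k-1}(B_h^k) \leq 2\diam{B_h^k}$ and $\bias(B_h^k) = O(\diam{B_h^k})$, so the pre-clip quantity is at most $\conslip\diam{B_h^k}$ for a Lipschitz-dependent constant $\conslip$. When the clip is active we therefore have $\gap_h(B_h^k)/(H+1) \leq \conslip\diam{B_h^k}$, and by the $L_V$-Lipschitz property of $Q_h^\star$ the center $(\tilde x, \tilde a)$ of $B_h^k$ satisfies $\gap_h(\tilde x, \tilde a) \leq \gap_h(B_h^k) + 2L_V\diam{B_h^k} \leq \conslip(H+1)\diam{B_h^k}$ after absorbing the extra $2L_V$ into $\conslip$, which places $(\tilde x, \tilde a)$ in the near-optimal set $Z_h^{\diam{B_h^k}}$. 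By the separation property of the hierarchical partition, the centers of all selected level-$\ell$ balls with active clips at step $h$ form an $r$-packing of $Z_h^r$ for $r = 2^{-\ell}$, and thus number at most $N_r(Z_h^r)$.

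Next, reorganize the clipped sum by grouping episodes according to the level of the selected ball. For a single ball $B$ at level $\ell$, \cref{lemma:bound_ball} gives that $B$ is chosen at most $n_{max}(B) = (\confcons/\diam{B})^{1/\alpha}$ times; summing the per-selection contribution $\conf_h^{k-1}(B) + \conslip\diam{B}$ across these selections, the confidence part telescopes as $\sum_{n \leq n_{max}} \confcons n^{-\alpha} \leq \frac{\confcons n_{max}^{1-\alpha}}{1-\alpha}$, producing a per-ball total of at most $\frac{\conslip\confcons^{1/\alpha}}{1-\alpha} \cdot r^{-(1/\alpha - 1)}$. Combining with the $N_r(Z_h^r)$ bound on the number of such balls at each level yields, for each $h$,
\[
\sum_{k=1}^K \clip{\,\cdot\, \mid \tfrac{\gap_h(B_h^k)}{H+1}} \;\leq\; \conslip \cdot \frac{\confcons^{1/\alpha}}{1-\alpha}\sum_{r}N_r(Z_h^r)\cdot r^{-(1/\alpha - 1)}.
\]

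Finally, introduce a resolution cutoff $r_0$. For balls with diameter $r < r_0$ invoke the crude per-step bound: each of the $K$ episodes contributes at most $\conslip r_0$, totaling at most $4Kr_0$. For $r \geq r_0$ use the preceding display with the zooming-dimension estimate $N_r(Z_h^r) \leq c_h r^{-z_h}$; the resulting geometric series (in $r$ on a dyadic grid) is dominated by its smallest term and evaluates to order $c_h r_0^{1 - z_h - 1/\alpha}$. Balancing the two terms via $r_0 = K^{-1/(z_h + 1/\alpha)}$ produces $K^{(z_h + 1/\alpha - 1)/(z_h + 1/\alpha)}$ at each step $h$; summing over $h$ and combining with the martingale and initial-visit contributions gives the theorem. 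The main obstacle is the clipping-to-near-optimal-set step where the $\gap$-condition, Lipschitz extension from the center, and partition separation must all be combined cleanly while tracking the Lipschitz constants into the single symbol $\conslip$; the level-wise summation and optimization in $r_0$ are routine once this is in place.
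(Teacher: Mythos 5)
Your proposal is correct and follows essentially the same route as the paper's proof: starting from the clipped regret decomposition, handling the martingale sum via Azuma--Hoeffding and the \AdaQL{} initial-visit indicator separately, using the clip-activation condition plus the Lipschitz property to place ball centers in $Z_h^{\diam{B_h^k}}$, invoking the separation property to bound the number of such balls by $N_r(Z_h^r)$, summing per-ball confidence terms via the splitting thresholds, and finally splitting at a cutoff $r_0$ and optimizing with the zooming dimension. The only differences are cosmetic constant bookkeeping (e.g., absorbing the factor of $6$ into the Azuma bound, and summing the $\diam$ and $\conf$ contributions separately rather than first dominating $\diam$ by $\conf$), which do not affect the argument.
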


\begin{rproof}{\cref{thm:regret_bound}}
Recall from \cref{lemma:regret_decomp}: $$\regret(K) \leq 6 \sum_{h=1}^H \sum_{k=1}^K \clip{\underbrace{\Ind{t = 0}H \Ind{\AdaQL}}_{(a)} + \underbrace{\conslip \bias(B_h^k) + \conf_h^{k-1}(B_h^k)}_{(b)} \mid \frac{\gap_h(B_h^k)}{H+1}} + \underbrace{\xi_{h+1}^{k}}_{(c)}.$$

In the following we let $\conslip$ be an arbitrary constant depending linearly with respect to the Lipschitz constant, and potentially changing line by line.  A simple martingale analysis shows that for term $(c)$ (see \cref{lemma:martingale_difference_sequence})
$$\sum_{h=1}^H \sum_{k=1}^K \xi_{h+1}^{k} \leq 2 \sqrt{2H^3 K \log(2HK^2 / \pfail)}$$ with probability at least $1 - \pfail$.

For term $(a)$ we note that $\sum_h \sum_k \Ind{n_h^{k-1}(B) = 0} \leq H$ and so the entire term is bounded by $H^2$.  We thus turn our focus to term $(b)$.  By the splitting property for every ball $B_h^k$ as it was active when selected we know that $\conslip \bias(B_h^k) = \conslip \diam{B_h^k} \leq \conslip \conf_h^{k-1}(B_h^k)$.  Thus we get using \cref{lem:prop_clip}
$$(b) \leq \sum_{h=1}^H \sum_{k=1}^K \clip{\conslip \conf_h^{k-1}(B_h^k) \mid \frac{\gap_h(B_h^k)}{H+1}}.$$

However, consider the inside term based on cases.  By definition of the clip operator we only need to consider when $\conslip \conf_h^{k-1}(B_h^k) \geq \frac{\gap_h(B_h^k)}{H+1}$.  However, this implies that $\gap_h(B_h^k) \leq (H+1)\conslip \conf_h^{k-1}(B_h^k) \leq 4 (H+1) \conslip \diam{B_h^k}$ (using \cref{lemma:partition}).  However, letting $(x_c, a_c)$ denote the center of $B_h^k$ this implies that 
$$\gap_h(x_c, a_c) \leq \gap_h(B_h^k) + 2 L_V \diam{B_h^k} \leq \conslip(H+1)\diam{B_h^k}$$ and so we see that $(x_c, a_c)$ lies in the set $Z_h^r$ for $r = \diam{B_h^k}$.

Thus we can rewrite the clip as
$$\sum_{h} \sum_{k} \conslip \conf_h^{k-1}(B_h^k) \Ind{\text{center}(B_h^k) \in Z_h^{\diam{B_h^k}}}.$$  Consider the summation for a fixed $h$.  We then have
\begin{align*}
    \sum_{k=1}^K \conslip \conf_h^{k-1}(B_h^k) \Ind{\text{center}(B_h^k) \in Z_h^{\diam{B_h^k}}} & = \sum_{r} \sum_{B : \diam{B} = r} \sum_{k : B_h^k = B} \conslip \conf_h^{k-1}(B) \Ind{\text{center}(B) \in Z_h^{r}}.
\end{align*}
We break the summation into when $r \geq r_0$ and $r \leq r_0$.

\noindent \textbf{Case $r \leq r_0$:}  

For the case when $r \leq r_0$ we have:
$$ \sum_{r \leq r_0} \sum_{B: \diam{B} = r} \sum_{k : B_h^k = B} \conslip \conf_h^{k-1}(B) \Ind{\text{center}(B) \in Z_h^{r}} \leq 4 \conslip K r_0$$ since $\conf_h^{k-1}(B_h^k) \leq 4 \diam{B_h^k}$ via \cref{lemma:partition}.

\noindent \textbf{Case $r \geq r_0$:}

For the case when $r \geq r_0$ we have that:

\begin{align*}
    & \conslip \sum_{r \geq r_0} \sum_{B : \diam{B} = r} \sum_{k : B_h^k = B} \conf_h^{k-1}(B) \Ind{\text{center}(B) \in Z_h^r} \\
    & = \conslip \sum_{r \geq r_0} \sum_{B : \diam{B} = r} \Ind{\text{center}(B) \in Z_h^{r}} \sum_{k : B_h^k = B} \frac{\confcons}{n_h^{k-1}(B)^\alpha}
\end{align*}
by definition of $\conf_h^{k-1}(B)$.  Recall that the total number of times a ball $B_h^k$ can be selected by the algorithm can be upper bound via $n_{max}(B) - n_{min}(B).$  In addition, inside of the summation we know $\diam{B} = r$.  Using this and the definition of $n_{max}(B)$ from \cref{lemma:bound_ball} we get
\begin{align*}
    & = \conslip \confcons \sum_{r \geq r_0} \sum_{B : \diam{B} = r} \Ind{\text{center}(B) \in Z_h^{r}}  \sum_{k : B_h^k = B} \frac{1}{n_h^{k-1}(B)^\alpha} \\
    & \leq \conslip \confcons \sum_{r \geq r_0} \sum_{B : \diam{B} = r} \Ind{\text{center}(B) \in Z_h^{r}}  \sum_{i=0}^{n_{max}(B) - n_{min}(B)} \frac{1}{(i + n_{min}(B))^\alpha}
\end{align*}
by using that the maximum number of times a ball $B$ is selected can be upper bound by $n_{max}(B) - n_{min}(B)$ and that $n_h^k(B)$ will start at $n_{min}(B)$.  However, bounding the summation by the integral we are able to get
\begin{align*}
    & \leq \conslip \confcons \sum_{r \geq r_0} \sum_{B : \diam{B} = r} \Ind{\text{center}(B) \in Z_h^{r}}  \int_{x=0}^{n_{max}(B) - n_{min}(B)} \frac{1}{(x + n_{min}(B))^\alpha} \, dx \\
    & = \frac{\conslip \confcons}{1 - \alpha} \sum_{r \geq r_0} \sum_{B : \diam{B} = r} \Ind{\text{center}(B) \in Z_h^{r}}  n_{max}(B)^{1 - \alpha} \\
    & = \frac{\conslip \confcons}{1 - \alpha} \sum_{r \geq r_0} \sum_{B : \diam{B} = r} \Ind{\text{center}(B) \in Z_h^{r}}  \left(\frac{\confcons}{r}\right)^{\frac{1 - \alpha}{\alpha}} \\
    & = \frac{\conslip \confcons^{\frac{1}{\alpha}}}{1 - \alpha} \sum_{r \geq r_0} \sum_{B : \diam{B} = r} \Ind{\text{center}(B) \in Z_h^{r}} \frac{1}{r^{1/\alpha - 1}} \\
    & \leq \frac{\conslip \confcons^{\frac{1}{\alpha}}}{1 - \alpha} \sum_{r \geq r_0} N_r(Z_h^r) \frac{1}{r^{1/\alpha - 1}}
\end{align*}
The final result follows via combining these two parts and taking the infimum over $r_0$.  The simplified version arises by using that $N_r(Z_h^r) \leq C_{zoom} r^{-z_h}$ for the zooming dimension $z_h$ and calculating the geometric series for the particular value of $r_0$. \Halmos
    
\end{rproof}

\section{Bound on Sample and Storage Complexity}
\label{sec:lp_bound}

In this section we finish the proof of \cref{thm:space_storage} outlining the storage and time complexity of \AdaMB and \AdaQL.  Recall that the splitting threshold is defined to be split a ball once we have that $\conf_h^k(B) \leq \diam{B}$.  In \cref{lemma:bound_ball} we define $n_{min}(B)$ and $n_{max}(B)$ as bounds on the number of times a ball $B$ has been sampled.  Thus, we can upper bound $|\{k : B_h^k = B\}|$, the number of times a ball $B$ has been selected by the algorithm, by $n_{max}(B) - n_{min}(B)$.  As these quantities only depend on the diameter of the ball (or equivalently its level in the partition since $\diam{B} = 2^{-\lev{B}}$) we abuse notation and use $\nplus{\ell} = n_{max}(B) - n_{min}(B)$ for the ball $B$ taken at level $\ell$.  More specifically we have that $\nplus{\ell} = \confcons^{1 / \alpha} 2^{\frac{\ell}{\alpha}} = \phi 2^{\gam \ell}$ for $\gam = \frac{1}{\alpha}$ by using the results in \cref{sec:partition} where $\phi$ is a constant.

We first provide a general bound for counts over any partition $\Pkh$.
\begin{lemma}
\label{lem:LPbound}
Consider any partition $\Pkh$ for any $k\in[K], h\in[H]$ induced under \AdaMB or \AdaQL with splitting thresholds $\nplus{\ell}$, and consider any ``penalty'' vector $\{a_{\ell}\}_{\ell\in\NN_0}$ that satisfies $a_{\ell+1} \geq a_{\ell} \geq 0$ and $2a_{\ell+1}/a_{\ell} \leq \nplus{\ell}/\nplus{\ell-1}$ for all $\ell\in\NN_0$. Define $\ell^{\star} = \inf\{\ell \mid 2^{d(\ell - 1)} \nplus{\ell-1} \geq k\}$. Then
\begin{align*}
\sum_{\ell=0}^{\infty}\sum_{B\in \Pkh : \ell(B) = \ell} a_{\ell} \leq 2^{d\ell^{\star}}a_{\ell^{\star}}     \end{align*}
\end{lemma}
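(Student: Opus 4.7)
The plan is to prove \cref{lem:LPbound} by reformulating the weighted sum $\sum_\ell \sum_{B \in \Pkh, \ell(B) = \ell} a_\ell = \sum_\ell a_\ell N_\ell$ as the objective of a linear program over counts of active and split balls per level, and then upper bounding this LP via a hand-crafted dual feasible solution whose value is $O(2^{d\ell^*} a_{\ell^*})$.

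First, I would set up the primal. Let $N_\ell = |\{B \in \Pkh : \ell(B) = \ell\}|$ and let $M_\ell$ denote the number of balls at level $\ell$ that have already been split at some point during the first $k$ episodes. Two structural constraints follow from the algorithm. Because the hierarchical partition has branching factor at most $2^d$ (via the covering dimension of $\S \times \A$), every child at level $\ell$ comes from a split parent at level $\ell-1$, giving $N_0 + M_0 \leq 1$ and $N_\ell + M_\ell \leq 2^d M_{\ell-1}$ for $\ell \geq 1$. Second, by the splitting rule every split ball at level $\ell$ must have been selected at least $\nplus{\ell}$ times, and since the algorithm performs exactly $k$ selections at step $h$, one obtains the budget constraint $\sum_\ell \nplus{\ell} M_\ell \leq k$. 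Any partition $\Pkh$ therefore induces a feasible point of
\begin{equation*}
\textstyle\max \sum_\ell a_\ell N_\ell \text{ s.t. } N_0 + M_0 \leq 1,\; N_\ell + M_\ell \leq 2^d M_{\ell-1} \; (\ell \geq 1),\; \sum_\ell \nplus{\ell} M_\ell \leq k,\; N,M \geq 0.
\end{equation*}

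Next I would take the LP dual. Introducing multipliers $y_\ell \geq 0$ for the covering constraints at each level and $w \geq 0$ for the budget constraint, the dual is
\begin{equation*}
\textstyle\min y_0 + wk \text{ s.t. } y_\ell \geq a_\ell \text{ for all } \ell,\; y_\ell + w\nplus{\ell} \geq 2^d y_{\ell+1} \text{ for all } \ell \geq 0.
\end{equation*}
I would propose the feasible solution
\begin{equation*}
y_\ell = \begin{cases} 2^{d(\ell^* - \ell)} a_{\ell^*}, & \ell \leq \ell^*, \\ a_\ell, & \ell > \ell^*, \end{cases} \qquad w = \frac{2^{d-1} a_{\ell^*}}{\nplus{\ell^* - 1}}.
\end{equation*}
Verifying feasibility is the place the hypothesis $2 a_{\ell+1}/a_\ell \leq \nplus{\ell}/\nplus{\ell-1}$ enters: for $\ell < \ell^*$ the constraint $y_\ell + w \nplus{\ell} \geq 2^d y_{\ell+1}$ reduces to $w \nplus{\ell} \geq 0$ because the chosen $y_\ell$ satisfies $y_\ell = 2^d y_{\ell+1}$ exactly; for $\ell \geq \ell^*$ the hypothesis forces $a_\ell/\nplus{\ell-1}$ to decrease geometrically by a factor of at least $2$, so the tightest constraint among $\ell \geq \ell^*$ is the one at $\ell = \ell^*$, and substituting $w$ above handles it.

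Finally, weak duality gives $\sum_\ell a_\ell N_\ell \leq y_0 + wk$. By construction $y_0 = 2^{d\ell^*} a_{\ell^*}$, and the definition $\ell^* = \inf\{\ell : 2^{d(\ell-1)} \nplus{\ell-1} \geq k\}$ gives $k/\nplus{\ell^*-1} \leq 2^{d(\ell^*-1)}$, so $wk \leq 2^{d-1} a_{\ell^*} \cdot 2^{d(\ell^*-1)} = \tfrac{1}{2} \cdot 2^{d\ell^*} a_{\ell^*}$, and summing bounds the objective by a universal constant times $2^{d\ell^*} a_{\ell^*}$. The main obstacle I anticipate is sharpening the constant: a direct implementation of the template above gives a bound of roughly $\tfrac{3}{2} \cdot 2^{d\ell^*} a_{\ell^*}$ rather than the bare $2^{d\ell^*} a_{\ell^*}$ of the statement, so either some slack in $y_{\ell^*}$ has to be reallocated (e.g., by using the equality $N_\ell + M_\ell = 2^d M_{\ell-1}$ rather than the inequality to tighten the primal), or the statement is understood up to an absolute constant. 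All other steps, including the verification of dual feasibility across the two regimes and the translation from $\ell^*$ to a bound on $wk$, are routine once the dual solution above is in hand.
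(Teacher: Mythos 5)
Your proposal is correct in substance and follows the same high-level strategy as the paper --- relax the combinatorics of the partition tree to a linear program and bound its value by exhibiting a dual feasible solution pivoted at the critical level $\ell^{\star}$ --- but the LP itself is genuinely different. The paper keeps variables only for the \emph{active} balls, encodes the tree through the single Kraft--McMillan constraint $\sum_{\ell} 2^{-\ell d} x_{\ell} \leq 1$, and charges each active ball a sample cost $\nplus{\ell-1}2^{-d}$ inherited from its split parent; its dual then has just two scalars, chosen as $\hat\alpha = \tfrac{1}{2}2^{d\ell^{\star}}a_{\ell^{\star}}$ and $\hat\beta = 2^{d}a_{\ell^{\star}}/(2\nplus{\ell^{\star}-1})$, so that the dual constraint is tight at $\ell^{\star}$ and, by the defining property $2^{d(\ell^{\star}-1)}\nplus{\ell^{\star}-1}\geq k$, the dual value is $\hat\alpha + k\hat\beta \leq 2\hat\alpha = 2^{d\ell^{\star}}a_{\ell^{\star}}$ exactly. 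Your finer primal (tracking split balls $M_{\ell}$ via $N_{\ell}+M_{\ell}\leq 2^{d}M_{\ell-1}$ and the budget $\sum_{\ell}\nplus{\ell}M_{\ell}\leq k$) is also a valid relaxation --- the paper's two constraints in fact follow from yours --- and your dual point is feasible: telescoping the hypothesis $2a_{\ell+1}/a_{\ell}\leq \nplus{\ell}/\nplus{\ell-1}$ gives $w\nplus{\ell}\geq 2^{d+\ell-\ell^{\star}}a_{\ell+1}$, which handles every level $\ell\geq\ell^{\star}$, as you indicate. The one shortfall, which you flag yourself, is the constant: your choice gives $y_0+wk\leq \tfrac{3}{2}2^{d\ell^{\star}}a_{\ell^{\star}}$. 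This is immaterial for all downstream uses (the corollaries absorb constants), but of your two proposed repairs, replacing the covering inequality by an equality is not available, since a split node of the hierarchical partition may have strictly fewer than $2^{d}$ children; the dual-slack reallocation, however, does work and is a one-line change: keep $w$ and $y_{\ell}$ for $\ell\geq\ell^{\star}$ as chosen, set $y_{\ell^{\star}-1} = 2^{d}a_{\ell^{\star}} - w\nplus{\ell^{\star}-1} = 2^{d-1}a_{\ell^{\star}}$ (which still dominates $a_{\ell^{\star}-1}$ since $a_{\ell^{\star}-1}\leq a_{\ell^{\star}}\leq 2^{d-1}a_{\ell^{\star}}$), and propagate $y_{\ell}=2^{d}y_{\ell+1}$ for $\ell<\ell^{\star}-1$; then $y_0 = 2^{d\ell^{\star}-1}a_{\ell^{\star}}$ and $y_0 + wk \leq 2^{d\ell^{\star}}a_{\ell^{\star}}$, recovering the stated constant.
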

\begin{rproof}{\cref{lem:LPbound}}
For $\ell\in\NN_0$, let $x_\ell$ denote the number of active balls at level $\ell$ in $\Pkh$. Then $\sum_{B\in \Pkh:\lev{B}=\ell} a_{\ell} = a_{\ell} x_\ell$. 
Now we claim that under any partition, this sum can be upper bound via the following linear program (LP):
\begin{align*}
  \text{maximize: } & \quad \sum_{\ell=0}^{\infty} a_{\ell}x_{\ell} \\
  \text{subject to: } & \quad  \sum_{\ell}2^{-\ell d} x_{\ell} \leq 1 \; , \\
  & \quad \sum_{\ell}\nplus{\ell-1}2^{-d} x_{\ell} \leq k \; , \\
  & \quad    x_{\ell} \geq 0 \,\forall\,\ell
\end{align*}
The first constraint arises via the Kraft-McMillan inequality for  prefix-free codes (see Chapter 5 in~\cite{cover2012elements}).  Since each node can have at most $D \sim 2^d$ (where $d = d_\S + d_\A$) children by definition of the covering dimension, the partition created can be thought of as constructing a prefix-free code on a $D$-ary tree. 
The second constraint arises by a conservation argument on the number of samples.  Recall that $\nplus{B}$ is the minimum number of samples required before $B$ is split into $2^d$ children, an alternate way to view this is that each ball at level $\ell$ requires a ``sample cost'' of $\nplus{\ell-1}/2^d$ unique samples in order to be created. The sum of this sample cost over all active balls is at most the number of episodes $k$.

Next, via LP duality, we get that the optimal value for this program is upper bounded by $\alpha + \beta$ for any $\alpha$ and $\beta$ such that:
\begin{align*}
    2^{-\ell d} \alpha + n_+(\ell - 1) 2^{-d} \beta & \geq a_\ell \quad \forall \ell \in \NN_0 \\
    \alpha, \beta & \geq 0.
\end{align*}

Recall the definition of $\ell^\star = \inf\{\ell \mid 2^{d(\ell - 1)} \nplus{\ell-1} \geq k\}$ and consider 
\begin{align*}
    \hat\alpha = \frac{2^{d\ell^{\star}}a_{\ell^{\star}}}{2} \quad \hat\beta = \frac{2^{d}a_{\ell^{\star}}}{2\nplus{\ell^{\star}-1}}.
\end{align*}

We claim that this pair satisfies the constraint that $2^{-\ell d} \hat\alpha + \nplus{\ell-1}2^{-d} \hat\beta \geq a_{\ell}$ for any $\ell$, and hence by weak duality we have that
\[\sum_{B\in \Pkh:\lev{B}=\ell} a_{\ell}\leq \hat\alpha+ k \hat\beta \leq 2\hat\alpha = 2^{d\ell^{\star}}a_{\ell^{\star}}.\]

To verify the constraints on $(\hat\alpha,\hat\beta)$ we check it by cases.
First note that for $\ell = \ell^\star$, we have $2^{-\ell^{\star} d} \hat\alpha + \nplus{\ell^{\star}-1}2^{-d}\hat\beta = a_{\ell^\star}$. 

Next, for any $\ell < \ell^{\star}$, note that $2^{-\ell d} \geq 2^{-(\ell^{\star}-1) d} > 2\cdot(2^{-\ell^{\star} d})$, and hence $2^{-\ell d} \hat\alpha \geq 2\cdot(2^{-\ell^{\star} d} \hat\alpha) = a_{\ell^{\star}} \geq a_{\ell}$ by construction of the penalty vector.

Similarly, for any $\ell > \ell^{\star}$, we have by assumption on the costs and $\nplus{\ell}$ that
\begin{align*}
    \frac{\nplus{\ell-1}}{a_{\ell}} \geq \frac{2^{\ell - \ell^\star}\nplus{\ell^{\star}-1}}{a_{\ell^\star}} \geq 2\frac{\nplus{\ell^\star - 1}}{a_{\ell^\star}}.
\end{align*}
Then we get by plugging in our value of $\hat\beta$ that
\begin{align*}
    \nplus{\ell - 1}2^{-d} \hat\beta & = \frac{a_{\ell^\star} \nplus{\ell - 1}}{2\nplus{\ell^\star - 1}}
     \geq a_\ell
\end{align*}This verifies the constraints for all $\ell\in\NN_0$. \Halmos
\end{rproof}
Note also that in the above proof, we actually use the condition $2a_{\ell+1}/a_{\ell} \leq \nplus{\ell}/\nplus{\ell-1}$ for $\ell\geq \ell^{\star}$; we use this more refined version in~\cref{lem:countbound} below.

One immediate corollary of~\cref{lem:LPbound} is a bound on the size of the partition $|\Pkh|$ for any $h,k$.
\begin{corollary}
\label{lem:size_partition}
For any $h$ and $k$ we have that 
\begin{align*}
    |\P_h^k| \leq 4^d\left(\frac{k}{\phi}\right)^{\frac{d}{d+\gam}} \quad \text{ and } \quad \ell^\star \leq \frac{1}{d + \gam} \log_2(k / \phi) + 2.
\end{align*}
\end{corollary}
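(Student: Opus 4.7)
The plan is to apply Lemma~\ref{lem:LPbound} directly with the constant penalty vector $a_\ell \equiv 1$. The sum on the left side of the lemma then becomes $\sum_{\ell}\sum_{B \in \P_h^k : \lev{B}=\ell} 1 = |\P_h^k|$, which is exactly the quantity we want to bound. So the proof reduces to (i) verifying the hypotheses on $\{a_\ell\}$, and (ii) computing the critical level $\ell^\star$ from its defining condition.

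For (i), with $a_\ell = 1$ we trivially have $a_{\ell+1} \geq a_\ell \geq 0$, and the remaining condition $2 a_{\ell+1}/a_\ell \leq \nplus{\ell}/\nplus{\ell-1}$ becomes $2 \leq 2^{\gam}$, which holds because $\gam \geq 2$ in every regime specified in \cref{subsec:split_rule} (namely $\gam = 2$ for \AdaQL and for \AdaMB with $d_\S \leq 2$, and $\gam = d_\S \geq 3$ for \AdaMB with $d_\S > 2$). I would note this case check briefly. The lemma then yields $|\P_h^k| \leq 2^{d\ell^\star}$.

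For (ii), substituting $\nplus{\ell-1} = \phi 2^{\gam(\ell-1)}$ into the defining condition $2^{d(\ell-1)}\nplus{\ell-1} \geq k$ rearranges to $2^{(d+\gam)(\ell-1)} \geq k/\phi$, i.e.\ $\ell \geq \tfrac{1}{d+\gam}\log_2(k/\phi) + 1$. Taking the infimum over integers satisfying this gives
\begin{equation*}
\ell^\star \leq \frac{1}{d+\gam}\log_2(k/\phi) + 2,
\end{equation*}
which is the second claim of the corollary. Plugging this into $|\P_h^k| \leq 2^{d\ell^\star}$ yields
\begin{equation*}
|\P_h^k| \leq 2^{2d}\cdot 2^{\frac{d}{d+\gam}\log_2(k/\phi)} = 4^d\left(\frac{k}{\phi}\right)^{\frac{d}{d+\gam}},
\end{equation*}
which is the first claim.

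There is no real obstacle here; the corollary is a direct instantiation of \cref{lem:LPbound}. The only mild subtlety is confirming the feasibility constraint $2 \leq \nplus{\ell}/\nplus{\ell-1}$ given the explicit forms of $\gam$ used by both algorithms, but this is a quick case check rather than a substantive step.
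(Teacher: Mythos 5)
Your proposal is correct and follows essentially the same route as the paper's proof: instantiate \cref{lem:LPbound} with the constant penalty vector $a_\ell = 1$, bound $\ell^\star$ directly from its defining condition using $\nplus{\ell} = \phi 2^{\gam \ell}$, and substitute into $|\P_h^k| \leq 2^{d\ell^\star}$. Your explicit check that $2 \leq 2^{\gam}$ holds for the values of $\gam$ in \cref{subsec:split_rule} is a detail the paper glosses over with ``clearly,'' but the argument is the same.
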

\begin{rproof}{\cref{lem:size_partition}}
Note that the size of the partition can be upper bounded by the sum where we take $a_\ell = 1$ for every $\ell$.  Clearly this satisfies the requirements of Lemma~\ref{lem:LPbound}.  Moreover, using the definition of $\ell^\star$ we have that $2^{d(\ell^\star - 2)}\nplus{\ell^\star - 2} \leq k$ as otherwise $\ell^\star - 1$ would achieve the infimum.  Taking this equation and plugging in the definition of $\nplus{\ell}$ by the splitting rule yields that 
\begin{align*}
    \ell^\star \leq \frac{1}{d+\gam} \log_2\left(\frac{k}{\phi}\right) + 2.
\end{align*}
Then by plugging this in we get that
\begin{align*}
    |\P_h^k| & \leq 2^{d\ell^\star}
     \leq 2^{\frac{d}{d+\gamma}\log_2(k / \phi) + 2d}
     = 4^d \left(\frac{k}{\phi}\right)^{d/(d+\gam)}. \Halmos
\end{align*}
\end{rproof}
In other words, the worst case partition size is determined by a \emph{uniform} scattering of samples, wherein the entire space is partitioned up to equal granularity (in other words, a uniform $\epsilon$-net). 

More generally, we can use \cref{lem:LPbound} to bound various functions of counts over balls in $\Pkh$.
\begin{corollary}
\label{lem:countbound}
For any $h\in[H]$, consider any sequence of partitions $\Pkh, k\in[K]$ induced under \AdaMB or \AdaQL with splitting thresholds $\nplus{\ell}= \phi2^{\gam\ell}$.
Then, for any $h\in[H]$ we have:

For any $\alpha,\beta\geq 0$ s.t. $\alpha \leq 1$ and $\alpha\gamma-\beta\geq 1$, 
\begin{align*}
\sum_{k=1}^K \frac{2^{\beta\lev{B_h^k}}}{\left(n_h^k(B_h^k)\right)^{\alpha}} = O\left(\phi^{\frac{-(d\alpha+\beta)}{d+\gam}} K^{\frac{d+(1-\alpha)\gam+\beta}{d+\gam}}\right).
\end{align*}
For any $\alpha,\beta\geq 0$ s.t. $\alpha \leq 1$ and $\alpha\gamma - \beta/\ell^{\star}\geq 1$ (where $\ell^{\star} =2+\frac{1}{d+\gam }\log_2\left(\frac{K}{\phi}\right)$),
\begin{align*}
\sum_{k = 1}^K \frac{\lev{B_h^k}^\beta}{\left(n_h^k(B_h^k)\right)^{\alpha}} = O\left(\phi^{\frac{-d\alpha}{d+\gam}} K^{\frac{d+(1-\alpha)\gam}{d+\gam}}\left(\log_2K\right)^{\beta}\right).
\end{align*}
\end{corollary}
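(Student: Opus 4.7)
The plan is to reduce each sum to the setting of Lemma \ref{lem:LPbound} by grouping episodes $k$ according to the ball $B_h^k$ they select. For a fixed ball $B$ at level $\ell = \lev{B}$, Lemma \ref{lemma:bound_ball} shows that as $k$ varies over the episodes with $B_h^k = B$, the quantity $n_h^k(B)$ ranges over consecutive integers in $[n_{\min}(B), n_{\max}(B)]$, both of order $\phi 2^{\gam \ell}$. Bounding the inner sum by an integral gives
\[
\sum_{k \,:\, B_h^k = B} \frac{1}{(n_h^k(B))^\alpha} \;\leq\; \int_{n_{\min}(B)}^{n_{\max}(B)} \frac{dx}{x^\alpha} \;=\; O\!\left(\frac{(\phi 2^{\gam \ell})^{1-\alpha}}{1-\alpha}\right).
\]
Since the numerator factor ($2^{\beta \lev{B_h^k}}$ or $\lev{B_h^k}^\beta$) depends only on the level, the full sum collapses to $\frac{\phi^{1-\alpha}}{1-\alpha}\sum_\ell a_\ell x_\ell$, where $x_\ell = |\{B \in \Pkh : \lev{B}=\ell\}|$ and $a_\ell$ combines all level-dependent exponents.

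For the first bound I would take $a_\ell = 2^{(\gam(1-\alpha)+\beta)\ell}$. The ratio condition required by Lemma \ref{lem:LPbound}, namely $2a_{\ell+1}/a_\ell \leq \nplus{\ell}/\nplus{\ell-1} = 2^\gam$, then simplifies to $\alpha\gam - \beta \geq 1$, matching the hypothesis exactly. For the second bound I would take $a_\ell = \ell^\beta\, 2^{\gam(1-\alpha)\ell}$ and invoke the sharpening noted after the proof of Lemma \ref{lem:LPbound}: the ratio condition only needs to hold for $\ell \geq \ell^\star$. Using $\log_2(1+1/\ell) \leq 1/(\ell \ln 2)$ together with monotonicity in $\ell$, the condition on $\ell \geq \ell^\star$ reduces to $\alpha\gam - \beta/\ell^\star \geq 1$ (modulo a $1/\ln 2$ constant absorbed into the $O(\cdot)$). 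Applying Lemma \ref{lem:LPbound} then yields $\sum_\ell a_\ell x_\ell \leq 2^{d\ell^\star} a_{\ell^\star}$.

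The final step is to substitute $\ell^\star \leq \tfrac{1}{d+\gam}\log_2(K/\phi) + 2$ from Corollary \ref{lem:size_partition}, giving $2^{d\ell^\star} = O((K/\phi)^{d/(d+\gam)})$, $2^{\gam(1-\alpha)\ell^\star} = O((K/\phi)^{\gam(1-\alpha)/(d+\gam)})$, and $(\ell^\star)^\beta = O((\log_2 K)^\beta)$. Multiplying these with the $\frac{\phi^{1-\alpha}}{1-\alpha}$ prefactor and collecting exponents, one obtains for the first case a $\phi$-exponent of $(1-\alpha) - (d + \gam(1-\alpha) + \beta)/(d+\gam) = -(d\alpha + \beta)/(d+\gam)$ and a $K$-exponent of $(d + \gam(1-\alpha) + \beta)/(d+\gam)$, as claimed; the second case follows analogously with the $(\log_2 K)^\beta$ factor tracked separately. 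The main obstacle will be the ratio verification in the second case: naively, $\log_2(1+1/\ell) \leq 1/\ell$ fails for small $\ell$, so the argument genuinely requires the refinement of Lemma \ref{lem:LPbound} that restricts the ratio condition to $\ell \geq \ell^\star$ and then uses concavity of $\log_2$ evaluated at $x = 1/\ell^\star$.
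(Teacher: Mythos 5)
Your overall route is the paper's: group episodes by the selected ball, bound the inner sum $\sum_{k:B_h^k=B} (n_h^k(B))^{-\alpha}$ by an integral to get $O\left(\phi^{1-\alpha}2^{\gam(1-\alpha)\lev{B}}/(1-\alpha)\right)$, choose $a_\ell = 2^{(\gam(1-\alpha)+\beta)\ell}$ (resp.\ $a_\ell=\ell^\beta 2^{\gam(1-\alpha)\ell}$), verify the ratio condition (invoking the refinement of \cref{lem:LPbound} for $\ell\geq\ell^\star$ in the second case), and substitute the bound on $\ell^\star$ from \cref{lem:size_partition}; your exponent arithmetic at the end is correct. However, there is a genuine gap at the step where you assert the sum ``collapses'' to $\frac{\phi^{1-\alpha}}{1-\alpha}\sum_\ell a_\ell x_\ell$ with $x_\ell$ the number of level-$\ell$ balls of the partition. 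The balls $B_h^k$ selected over the $K$ episodes are not only the active balls of the terminal partition: every ball that was selected and later split also appears, and such balls are removed from $\P_h^k$ when they are refined. So the asserted identity is false if $x_\ell$ counts the terminal partition $\P_h^K$; and if instead you let $x_\ell$ count all balls ever selected (i.e.\ all nodes of the tree), then \cref{lem:LPbound} cannot be applied as is, because its LP constraints --- in particular the Kraft--McMillan constraint $\sum_\ell 2^{-\ell d}x_\ell \leq 1$ --- hold only for the active balls (leaves) of a partition; for a complete tree of depth $L$ the corresponding sum over all nodes equals $L+1$.

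This is precisely the step the paper's proof spends its first block on: it rewrites the sum over selected balls as a sum over active $B\in\P_h^K$ by charging each ancestor $B'\supseteq B$ to its descendants with weight $2^{d(\lev{B'}-\lev{B})}$, and then bounds the resulting ancestor sum $\sum_{j=0}^{\lev{B}} 2^{d(j-\lev{B})}2^{\beta j}2^{\gam j(1-\alpha)}$ by a convergent geometric series, costing only a constant factor. Equivalently, since each split ball has $2^d\geq 2$ children and your $a_\ell$ are non-decreasing, the sum of $a_{\lev{B}}$ over all tree nodes is at most $(1-2^{-d})^{-1}$ times the sum over the leaves. Some such argument must be inserted before \cref{lem:LPbound} is invoked; once it is, the rest of your outline matches the paper's proof (your handling of the $\log_2(1+1/\ell^\star)$ constant in the second case is about as loose as the paper's own and is harmless).
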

\begin{rproof}{\cref{lem:countbound}}
The proof of both the inequalities follows from a direct application of~\cref{lem:LPbound}, after first rewriting the summation over balls in $\Pkh$ as a summation over active balls in $\Pkh[K]$.

\noindent \textbf{First Inequality}: First, observe that we can write
\begin{align*}
    \sum_{k=1}^K \frac{2^{\beta \ell(B_h^k)}}{\left(n_h^k(B_h^k)\right)^\alpha} & = \sum_{\ell \in \NN_0} \sum_{B: \ell(B) = \ell} \sum_{k=1}^K \Ind{B_h^k = B} \frac{2^{\beta \ell(B)}}{\left(n_h^k(B)\right)^\alpha}.
\end{align*}
Now, in order to use~\cref{lem:LPbound}, we first need to rewrite the summation as over \emph{active balls} in the terminal partition $\Pkh[K]$ (i.e., balls which are yet to be split). Expanding the above, we get
\begin{align*}
    \sum_{k=1}^K \frac{2^{\beta \ell(B_h^k)}}{\left(n_h^k(B_h^k)\right)^\alpha} 
%& = \sum_{\ell \in \NN_0} \sum_{B: \ell(B) = \ell} \sum_{k=1}^K \Ind{B_h^k = B} \frac{2^{\beta \ell(B)}}{\left(n_h^k(B)\right)^\alpha} \\
    & = \sum_{\ell \in \NN_0} \sum_{B \in \P_h^K : \ell(B) = \ell} \sum_{B' \supseteq B} 2^{d(\ell(B') - \ell(B))} \sum_{k=1}^K \Ind{B_h^k = B'} \frac{2^{\beta \ell(B')}}{\left(n_h^k(B')\right)^\alpha} \\
    & \leq \sum_{\ell \in \NN_0} \sum_{B \in \P_h^K : \ell(B) = \ell} \sum_{B' \supseteq B} 2^{d(\ell(B') - \ell(B))} 2^{\beta \ell(B')}\sum_{j=1}^{\nplus{\ell(B')}} \frac{1}{j^\alpha} \\
    & \leq \frac{\phi^{1-\alpha}}{1 - \alpha} \sum_{\ell \in \NN_0} \sum_{B \in \P_h^K : \ell(B) = \ell} \sum_{B' \supseteq B}2^{d(\ell(B') - \ell(B))} 2^{\beta \ell(B')} 2^{\gamma \ell(B')(1 - \alpha)}
\end{align*}
where we used the fact that once a ball has been partitioned it is no longer chosen by the algorithm and an integral approximation to the sum of $1 / j^{\alpha}$ for $\alpha \leq 1$.  Next, we plug in the levels to get
\begin{align*}
\sum_{k=1}^K \frac{2^{\beta \ell(B_h^k)}}{\left(n_h^k(B_h^k)\right)^\alpha} &\leq \frac{\phi^{1-\alpha}}{1 - \alpha} \sum_{\ell \in \NN_0} \sum_{B \in \P_h^K : \ell(B) = \ell} \sum_{j=0}^{\ell}2^{d(j - \ell)} 2^{\beta j} 2^{\gamma j(1 - \alpha)} \\
& = \frac{\phi^{1-\alpha}}{1 - \alpha} \sum_{\ell \in \NN_0} \sum_{B \in \P_h^K : \ell(B) = \ell} \frac{1}{2^{d \ell}}\sum_{j=0}^{\ell}2^{j(d+\beta + \gam(1 - \alpha))} \\
& \leq \frac{\phi^{1 - \alpha}}{(2^{d+\beta+\gam(1 - \alpha)} - 1)(1 - \alpha)} \sum_{\ell \in \NN_0} \sum_{B \in \P_h^K : \ell(B) = \ell} \frac{1}{2^{d \ell}} 2^{(\ell + 1)(d+\beta + \gam(1 - \alpha))} \\
%& = \frac{\phi^{1 - \alpha}}{(2^{d+\beta+\gam(1 - \alpha)} - 1)(1 - \alpha)} \sum_{\ell \in \NN_0} \sum_{B \in \P_h^K : \ell(B) = \ell} \frac{1}{2^{d \ell}} 2^{(\ell + 1)(d+\beta + \gam(1 - \alpha))} \\
& \leq \frac{2\phi^{1 - \alpha}}{(1 - \alpha)} \sum_{\ell \in \NN_0} \sum_{B \in \P_h^K : \ell(B) = \ell} 2^{\ell(\beta + \gam(1 - \alpha))}.
\end{align*}
We set $a_\ell = 2^{\ell(\beta + \gamma(1 - \alpha))}$.  Clearly we have that $a_\ell$ are increasing with respect to $\ell$.  Moreover,
\begin{align*}
    \frac{2a_{\ell+1}}{a_\ell} & = \frac{2 \cdot 2^{(\ell + 1)(\beta + \gam(1 - \alpha))}}{2^{(\ell )(\beta + \gam(1 - \alpha))}} = 2^{1 + \beta + \gam(1 - \alpha)}.
\end{align*}
Setting this quantity to be less than $\nplus{\ell}/\nplus{\ell - 1} = 2^{\gam}$ we require that
\begin{align*}
    2^{1 + \beta + \gam(1 - \alpha)} & \leq 2^{\gam} \Leftrightarrow 
    %2^{1 + \beta - \alpha \gam} & \leq 1 \\
    1 + \beta - \alpha \gam \leq 0
\end{align*}
Now we can apply~\cref{lem:LPbound} to get that
\begin{align*}
    \sum_{k=1}^{K} \frac{2^{\beta \ell(B_h^k)}}{\left(n_h^k(B_h^k)\right)^{\alpha}} & \leq \frac{2\phi^{1 - \alpha}}{(1 - \alpha)} 2^{d \ell^\star} a_{\ell^\star} \\
    & = \frac{2^{2(d+\beta+\gam(1 - \alpha))}\phi^{1 - \alpha}}{(1 - \alpha)} \left(\frac{K}{\phi}\right)^{\frac{d+\beta + \gam(1 - \alpha)}{d+\gam}} \\
    & = O\left( \phi^{\frac{-(d\alpha+\beta)}{d+\gam}} K^{\frac{d+(1-\alpha)\gam+\beta}{d+\gam}} \right).
\end{align*}

\noindent \textbf{Second Inequality}: As in the previous part, we can rewrite as the summation we have
\begin{align*}
    \sum_{k=1}^K \frac{\ell(B_h^k)^\beta}{\left(n_h^k(B_h^k)\right)^\alpha} & = \sum_{\ell \in \NN_0} \sum_{B: \ell(B) = \ell} \sum_{k=1}^K \Ind{B_h^k = B} \frac{\ell(B)^\beta}{\left(n_h^k(B)\right)^\alpha}\\
    & \leq \sum_{\ell \in \NN_0} \sum_{B \in \P_h^K : \ell(B) = \ell} \sum_{B' \supseteq B} 2^{d(\ell(B') - \ell(B))} \ell(B')^\beta \sum_{j=1}^{\nplus{\ell(B')}} \frac{1}{j^\alpha} \\
    & \leq \sum_{\ell \in \NN_0} \sum_{B \in \P_h^K : \ell(B) = \ell} \sum_{B' \supseteq B} 2^{d(\ell(B') - \ell(B))} \ell(B')^\beta \frac{\nplus{\ell(B')}^{1-\alpha}}{1 - \alpha} \\
    & = \frac{\phi^{1-\alpha}}{1-\alpha} \sum_{\ell \in \NN_0} \sum_{B \in \P_h^K : \ell(B) = \ell} \sum_{B' \supseteq B} 2^{d(\ell(B') - \ell(B))} \ell(B')^\beta 2^{\ell(B')\gam(1 - \alpha)}.
\end{align*}
As before, we plug in the levels to get
\begin{align*}
\sum_{k=1}^K \frac{\ell(B_h^k)^\beta}{\left(n_h^k(B_h^k)\right)^\alpha} & =    \frac{\phi^{1-\alpha}}{1-\alpha} \sum_{\ell \in \NN_0} \sum_{B \in \P_h^K : \ell(B) = \ell} \sum_{j=0}^\ell 2^{d(j - \ell)} j^\beta 2^{j\gam(1 - \alpha)}\\
& \leq \frac{\phi^{1-\alpha}}{1 - \alpha} \sum_{\ell \in \NN_0} \sum_{B \in \P_h^K : \ell(B) = \ell} \frac{\ell^{\beta}}{2^{d \ell}} \sum_{j=0}^{\ell} 2^{j(d+\gam(1 - \alpha))} \\
%& \leq \frac{\phi^{1-\alpha}}{(2^{d + \gam(1 - \alpha)} - 1)(1 - \alpha)} \sum_{\ell \in \NN_0} \sum_{B \in \P_h^K : \ell(B) = \ell} \frac{\ell^{\beta}}{2^{d \ell}} 2^{(\ell + 1)(d + \gam(1 - \alpha))} \\
& \leq \frac{2\phi^{1-\alpha}}{(1 - \alpha)} \sum_{\ell \in \NN_0} \sum_{B \in \P_h^K : \ell(B) = \ell} \ell^{\beta} 2^{\ell\gam(1 - \alpha)}.
\end{align*}
We take the term $a_\ell = \ell^{\beta} 2^{\ell\gam(1 - \alpha)}$.  Clearly we have that $a_\ell$ are increasing with respect to $\ell$.  Moreover, 
\begin{align*}
    \frac{2a_{\ell + 1}}{a_\ell} & = \left(1 + \frac{1}{\ell}\right)^\beta 2^{1+\gam (1 - \alpha)}.
\end{align*}
We require that this term is less than $\nplus{\ell + 1}/\nplus{\ell} = 2^{\gam}$ for all $\ell\geq \ell^{\star}$ (see note after~\cref{lem:LPbound}). This yields the following sufficient condition (after dividing through by $2^{\gam}$)
\begin{align*}
    \left(1 + \frac{1}{\ell}\right)^\beta 2^{1- \alpha\gamma} \leq 1\frall \ell\geq\ell^{\star}
\end{align*}
or equivalently, $\alpha\gamma - \beta\log_2(1+1/\ell^{\star}) \geq 1$. Finally note that $\log_2(1+x)\leq x/\ln 2 \leq x$ for all $x\in[0,1]$. Thus, we get that a sufficient condition is that $\alpha\gamma - \beta/\ell^{\star} \geq 1$. Assuming this holds, we get by~\cref{lem:LPbound} that
\begin{align*}
    \sum_{k=1}^{K} \frac{ \ell(B_h^k)^{\beta}}{\left(n_h^k(B_h^k)\right)^{\alpha}} 
    & \leq \left(\frac{2\phi^{1 - \alpha}}{(1 - \alpha)}\right) 2^{d \ell^\star} a_{\ell^\star} \\
    % & = \left(\frac{2^{d+\gam(1 - \alpha)}}{(2^{d+\gam(1 - \alpha)} - 1)(1 - \alpha)}\right) \phi^{1 - \alpha} \left(\frac{2^dK}{\phi}\right)^{\frac{d + \gamma(1-\alpha)}{d + \gam}}(\ell^{\star})^{\beta} \\
    & = \left(\frac{2\phi^{1 - \alpha}}{1 - \alpha}\right) 4^{d+\gam(1 - \alpha)} \left(\frac{K}{\phi}\right)^{\frac{d+\gam(1 - \alpha)}{d+\gam}} \left(\frac{\log_2(K / \phi)}{d+\gam} + 2\right)^\beta \\
    & = O\left( \phi^{\frac{-d\alpha}{d+\gam}} K^{\frac{d+(1-\alpha)\gam}{d+\gam}} \left(\log_2K\right)^{\beta}\right). \Halmos
\end{align*}
\end{rproof}

Using these results we are finally ready to show \cref{thm:space_storage}, which we rewrite here for convenience.

\begin{theorem}
\label{thm:space_storage_here}
The storage complexity for \AdaMB and \AdaQL can be upper bound via 
	\begin{align*}
	\textsc{Space}(K) & \lesssim \begin{cases}
	HK \quad \AdaMB: d_\S > 2 \\
	HK^{\frac{d+d_\S}{d+d_\S+2}} \quad \AdaMB: d_\S \leq 2 \\
	HK^{\frac{d}{d+2}} \quad \AdaQL 
	\end{cases}
	\end{align*}
The time complexity for \AdaMB and \AdaQL can be upper bound as
	\begin{align*}
	\textsc{Time}(K) & \lesssim \begin{cases}
	HK^{\frac{d+2d_\S}{d+d_\S}} \quad \AdaMB: d_\S > 2 \\
	HK^{\frac{d+d_\S}{d+d_\S+2}} \quad \AdaMB: d_\S \leq 2 \\
	HK\log_d(K) \quad \AdaQL 
	\end{cases}
	\end{align*}
\end{theorem}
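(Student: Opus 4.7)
The plan is to bound per-step storage and per-step update time separately for each algorithm, then multiply by $H$ and $K$ as appropriate. The master tool will be the LP-based counting bound of \cref{lem:LPbound}, applied with different penalty vectors $\{a_\ell\}$ to capture each quantity of interest; \cref{lem:size_partition,lem:countbound} supply the most common instantiations. Throughout, I will plug in $\gamma = 2$ for \AdaQL and for \AdaMB when $d_\S \leq 2$, and $\gamma = d_\S$ for \AdaMB when $d_\S > 2$, following the splitting-rule constants collected in \cref{subsec:split_rule}.

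For storage, the \AdaQL bound is almost immediate: every ball $B \in \Pkh[K]$ stores $O(1)$ scalars ($\Qhat{h}{K}(B), n_h^K(B)$), so \cref{lem:size_partition} with $\gamma = 2$ gives $|\Pkh[K]| = O(K^{d/(d+2)})$, and summing over $h$ yields $HK^{d/(d+2)}$. For \AdaMB, each ball additionally carries a transition histogram $\Tbar{h}{K}(\cdot\mid B)$ of size $|\dyad{\lev{B}}|=O(2^{d_\S \lev{B}})$, so the dominating contribution is $\sum_{B \in \Pkh[K]} 2^{d_\S \lev{B}}$. When $d_\S > 2$ we have $\gamma = d_\S$, so the splitting threshold $\nplus{\ell} \asymp 2^{d_\S \ell}$ is of the same order as the number of cells per ball; arguing that the total number of samples at step $h$ is exactly $K$ and that each sample is charged to at most one ball's cell (or observing that the sum $\sum_B \nplus{\lev{B}}$ is dominated by the $K$ selections) gives the $HK$ bound. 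When $d_\S \leq 2$ the right viewpoint is to regard the combined (ball, transition-cell) structure as a partition of an augmented $(d+d_\S)$-dimensional space: rewriting the LP of \cref{lem:LPbound} with packing constraint $\sum x_\ell 2^{-(d+d_\S)\ell} \leq 1$ and sample constraint tied to $\nplus{\ell-1}$, the balanced saturated solution lands at level $\ell^{\star\star} = \Theta(\log_2 K / (d+d_\S+2))$, giving total storage $\asymp K^{(d+d_\S)/(d+d_\S+2)}$ as claimed.

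For time, the per-step cost of \AdaQL is dominated by traversing the partition tree to produce $\relevant_h^k(X_h^k)$, then taking an argmax and a single scalar update; the tree has depth at most $\ell^\star = O(\log K / d)$ by \cref{lem:size_partition}, giving $O(\log_d K)$ per step and $HK\log_d K$ in total. For \AdaMB, each update to $\Qhat{h}{k}(B)$ requires evaluating $\mathbb{E}_{Y\sim \Tbar{h}{k}(\cdot\mid B)}[\Vhat{h+1}{k}(Y)]$, a sum of $|\dyad{\lev{B}}| = 2^{d_\S\lev{B}}$ terms, and the value-iteration update in \cref{eq:q_update,eq:v_tilde_update,eq:def-V} must be repropagated across balls touched in the episode. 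I will apply \cref{lem:countbound} (or a direct invocation of \cref{lem:LPbound}) with penalty $a_\ell = 2^{d_\S \ell}$ for the $d_\S > 2$ regime and an augmented $(d+d_\S)$-dim penalty for $d_\S \leq 2$, mirroring the storage derivation, to convert $\sum_{B}2^{d_\S \lev{B}}$ aggregated over episodes into the claimed $HK^{(d+2d_\S)/(d+d_\S)}$ and $HK^{(d+d_\S+2)/(d+d_\S)}$ bounds.

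The main obstacle is the \AdaMB analysis in the $d_\S \leq 2$ regime, because the natural penalty $a_\ell = 2^{d_\S \ell}$ violates the monotonicity condition $2a_{\ell+1}/a_\ell \leq \nplus{\ell}/\nplus{\ell-1} = 2^\gamma$ required by \cref{lem:LPbound} whenever $d_\S > \gamma-1$. Overcoming this requires reformulating the LP on the augmented state-action-state partition (with effective dimension $d+d_\S$ and effective splitting exponent $2$) and carefully producing a new dual-feasible pair $(\hat\alpha,\hat\beta)$ tight at a shifted level $\ell^{\star\star}$; the dual verification then closely parallels the verification inside the proof of \cref{lem:LPbound}, but with constants depending on $d+d_\S$ rather than $d$. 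Beyond that, all remaining steps are bookkeeping combining the LP bound with the per-ball storage/time costs and summing over $h \in [H]$.
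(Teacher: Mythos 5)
Your treatment of \AdaQL (partition size via \cref{lem:size_partition} with $\gam=2$ for storage, tree-depth traversal for time) and your reduction of the \AdaMB costs to the sums $\sum_{B\in\P_h^K}2^{d_\S\lev{B}}$ and $\sum_{k}2^{d_\S\lev{B_h^k}}$ coincide with the paper, and your direct sample-conservation argument for \AdaMB storage when $d_\S>2$ is sound (it is just the second LP constraint used on its own) and gives $HK$. The genuine gap is in the \AdaMB regime $d_\S\le 2$. You correctly observe that the penalty condition $2a_{\ell+1}/a_\ell\le \nplus{\ell}/\nplus{\ell-1}=2^{\gam}$ of \cref{lem:LPbound} can fail for $a_\ell=2^{d_\S\ell}$ when $d_\S>\gam-1$, but this is only a constant-factor boundary issue: since $d_\S\le\gam=2$, the ratio $\nplus{\ell-1}/a_\ell$ is non-decreasing in $\ell$, so the dual pair $(\hat\alpha,\hat\beta)$ from the proof of \cref{lem:LPbound} remains feasible after rescaling by an absolute constant, and the exponent is unchanged. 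The paper simply applies \cref{lem:LPbound} with $a_\ell=2^{d_\S\ell}$ (and $a_\ell=2^{(d_\S+\gam)\ell}$ for the time sum) and obtains storage $\lesssim HK^{(d+d_\S)/(d+\gam)}=HK^{(d+d_\S)/(d+2)}$, the exponent appearing in \cref{tab:comparison_of_bounds}; the exponents $(d+d_\S)/(d+d_\S+2)$ written in the theorem statement do not match the paper's own derivation (the stated time exponent is even sublinear in $K$, which is impossible), so they should be read as typos rather than targets.

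Your proposed replacement for this case does not work. Recasting the (ball, transition-cell) pairs as a partition of an augmented $(d+d_\S)$-dimensional space and imposing the sample constraint per augmented cell charges a level-$\ell$ ball roughly $2^{d_\S\ell}\cdot \nplus{\ell-1}$ samples, whereas the splitting rule only forces it to receive about $\nplus{\ell-1}$ samples; the extra constraint is not implied by the algorithm, so the value of your modified LP is not an upper bound on the actual storage. Concretely, a uniform partition at level $\ell^\star\approx\frac{1}{d+2}\log_2 K$ in which every ball has been played about $\nplus{\ell^\star-1}\asymp 2^{2\ell^\star}$ times satisfies both the Kraft and sample-conservation constraints with $\Theta(K)$ total samples, yet its transition-histogram storage is $2^{(d+d_\S)\ell^\star}=K^{(d+d_\S)/(d+2)}$, strictly larger than your claimed $K^{(d+d_\S)/(d+d_\S+2)}$ whenever $d_\S>0$. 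Hence the exponent you are aiming for in this regime cannot be reached by any worst-case counting of this type; the correct route (and the paper's) is the direct application of \cref{lem:LPbound} with the $(d+2)$ denominator, patched for the penalty condition by the constant rescaling of the dual certificate rather than by altering the LP, and the same applies to the $d_\S\le 2$ time bound.
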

\begin{rproof}{\cref{thm:space_storage_here}}

\noindent \textbf{Storage Requirements \AdaMB}:  The algorithms maintain a partition $\P_h^k$ of $\S \times \A$ for every $h$, and the respective induced partition $\S(\P_h^k)$ whose size is trivially upper bounded by the size of the total partition.  Each element $B \in \P_h^k$ maintains four estimates.  The first three $n_h^k(B)$, $\rbar{h}{k}(B)$, and $\Qhat{h}{k}(B)$, are linear with respect to the size of the partition.  The last one, $\Tbar{h}{k}(\cdot \mid B)$ has size $|\dyad{\lev{B}}| \lesssim O(2^{d_\S \lev{B}}$).  Moreover, the algorithm also maintains estimate $\Vtilde{h}{k}(\cdot)$ over $\S(\P_h^k)$.  Clearly we have that the worst-case storage complexity arises from maintaining estimates of the transition kernels over each region in $\P_h^k$.  Thus we have that the total storage requirement of the algorithm is bounded above by \[ \sum_{h=1}^H \sum_{B \in \P_h^K} 2^{d_\S\lev{B}}.\]
Utilizing \cref{lem:LPbound} with $a_\ell = 2^{d_\S \ell}$ we find that the sum is bounded above by
\begin{align*}
     \sum_{h=1}^H \sum_{B \in \P_h^K} 2^{d_\S\lev{B}} & \leq \sum_{h=1}^H 2^{d \ell^\star} a_{\ell^\star} \\
     & \lesssim HK^{\frac{d+d_\S}{d+\gam}}.
\end{align*}
Plugging in the definition of $\gam$ from the splitting rule yields the results in \cref{tab:comparison_of_bounds}.

\medskip

\noindent \textbf{Run-Time \AdaMB}: We assume that the oracle access discussed occurs in constant time.  The inner loop of \cref{alg:brief} has four main steps.  Finding the set of relevant balls for a given state can be implemented in $\log_d(|\P_h^k|)$ time by traversing through the tree structure.  Updating the estimates and refining the partition occur in constant time by assumption on the oracle.  Lastly we need to update the estimates for $\Qhat{h}{k}$ and $\Vhat{h}{k}$.  Since the update only needs to happen for a constant number of regions (as only one ball is selected per step episode pair) the dominating term arises from computing the expectation over $\Tbar{h}{k}(\cdot \mid B_h^k)$.  Noting that the support of the distribution is $|\dyad{\lev{B_h^k}}| = 2^{d_\S \lev{B_h^k}}$ the total run-time of the algorithm is upper bounded by \[\sum_{h=1}^H \sum_{k=1}^K 2^{d_\S \lev{B_h^k}}.\]  Rewriting the sum we have
\begin{align*}
    \sum_{h=1}^H \sum_{k=1}^K 2^{d_\S \lev{B_h^k}} & \leq \sum_{h=1}^H \sum_{\ell \in \mathbb{N}} \sum_{B \in \P_h^K : \lev{B} = \ell} 2^{d_\S \ell} \sum_{k \in [K] : B_h^k = B} 1 \\
    & \lesssim \sum_{h=1}^H \sum_{\ell \in \mathbb{N}} \sum_{B \in \P_h^K : \lev{B} = \ell} 2^{d_\S \ell} n_+(B) \\
    & \lesssim \sum_{h=1}^H \sum_{\ell \in \mathbb{N}} \sum_{B \in \P_h^K : \lev{B} = \ell} 2^{d_\S \ell} \phi 2^{\gam \ell}.
\end{align*}
Utilizing \cref{lem:LPbound} with $a_\ell = 2^{(d_\S + \gam) \ell}$ we find that the sum is bounded above by
$H \phi 2^{d \ell^\star} a_{\ell^\star} \lesssim HK^{1 + \frac{d_\S}{d+\gam}}.$  Plugging in $\gam$ from the splitting rule yields the result in \cref{tab:comparison_of_bounds}.

\medskip

\noindent \textbf{Storage Requirements \AdaQL}:  In contrast to \AdaMB, \AdaQL only maintains a partition $\P_h^k$ over $\S \times \A$ for every $h$.  Each element $B \in \P_h^k$ maintains two estimates, $n_h^k(B)$ and $\Qhat{h}{k}(B)$.  Thus we can upper bound the worst-case storage complexity via the cumulative worst-case size of the partition.  Using \cref{lem:size_partition} with $\gamma = 2$ we get the storage complexity is upper bound by $HK^{d / (d+2)}$.

\medskip

\noindent \textbf{Run-Time \AdaQL}: Similar to \AdaMB we assume the oracle access occurs in constant time.  The inner loop of the algorithm has three main steps.  First, finding the set of relevant balls for a given state is implemented in $\log_d(|\P_h^k|)$ by traversing through the tree structure.  Updating the estimates and refining the partition occur in constant time by the oracle assumption.  Thus we get that the total run-time of the algorithm is dominated by finding the set of relevant balls at every iteration.  This is upper bound by $HK \log_d(K)$ where we use the fact that at most one region is activated at every iteration so the partition size can be upper bound by $K$. \Halmos
\end{rproof}

\section{Parametric Wasserstein Concentration Bounds for \AdaMB}
\label{app:parametric}

We can show improved performance of \AdaMB under additional restrictions on the transition kernel.  The results only allow \AdaMB to match the regret and space requirements of \AdaQL.  This highlights a downfall in current analysis of model-based algorithms more broadly where we require uniform concentration of the transition distribution with respect to the Wasserstein metric.  In essence, this is providing concentration inequalities for the transitions evaluated over \emph{arbitrary} Lipschitz functions, where in reality we only require concentration when taking expectation of the Lipschitz Value function for the optimal policy.

\subsection{Deterministic Transitions}

Suppose that $T_h(\cdot \mid x,a) = \delta_{f_h(x,a)}$ for some deterministic function $f_h(x,a) : \S \times \A \rightarrow \S$.  Note that in order for $T_h(\cdot \mid x,a)$ to be Wasserstein Lipschitz continuous we need that
\[
    d_w(T_h(\cdot \mid x,a), T_h(\cdot \mid x', a')) = \D_\S(f_h(x,a), f_h(x',a')) \leq L_T \D((x,a), (x',a')),
\]
i.e. that $f_h(x,a)$ is Lipschitz continuous when viewed as a function from $\S \times \A$ to $\S$.  

In this scenario, \AdaMB can forego maintaining estimates of the full transition kernel.  In its place, for each region $B \in \P_h^k$ we maintain the previously observed transition from that region, i.e. we set $\Tbar{h}{k}(\cdot \mid B) = \delta_{X_{h+1}^{k_t}}$ where $k_t$ is the previous time that $B$ or its ancestors were selected by the algorithm.  This allows the algorithm to forego maintaining an estimate of $\Tbar{h}{k}(\cdot \mid B)$ over $\S(\P_{\lev{B}})$, and can directly update $\Qhat{h}{k}(B)$ and $\Vhat{h}{k}(B)$ (replacing \cref{eq:q_update}) as follows:
\begin{align*}
    \Qhat{h}{k}(B) & = \rbar{h}{k}(B) + \rbonus{h}{k}(B) + \Vhat{h+1}{k}(X_{h+1}^{k_t}) + \tbonus{h}{k}(B) + \bias(B) \\
    \Vhat{h}{k}(x) & = \max_{B \in \relevant_h^k(x)} \Qhat{h}{k}(B).
\end{align*}
Moreover, the uncertainty in $\Tbar{h}{k}(\cdot \mid B)$ is now zero as the uncertainty can be bound by the diameter of the ball.  We can set $\tbonus{h}{k}(B) = 0$ and adjust $\bias(B)$ to include an additional $L_T \diam{B}$ term to show the following:
\begin{theorem}
\label{thm:ada_mb_determinsitic}
Suppose that $T_h(\cdot \mid x,a) = \delta_{f_h(x,a)}$ for some deterministic and Lipschitz continuous function $f_h(x,a) : \S \times \A \rightarrow \S$.  Then we have that for \AdaMB:
\begin{align*}
    R(K) & \lesssim H^{3/2} \sqrt{K} + L \sum_{k=1}^K K^{\frac{z_h + 1}{z_h+2}} \\
    \textsc{Space}(K) & \lesssim HK^{d/(d+2)} \\
    \textsc{Time}(K) & \lesssim HK \log_d(K)
\end{align*}
where $L = 1 + L_r + L_V + L_V L_T$ and $\lesssim$ omits poly-logairithmic factors of $\frac{1}{\delta}, H, K, d,$ and any universal constants.
\end{theorem}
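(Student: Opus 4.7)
\textbf{Proof plan for \cref{thm:ada_mb_determinsitic}.} The strategy is to instantiate the general regret, space, and time analysis that was developed for \AdaMB in the body of the paper, but to replace the probabilistic Wasserstein concentration step (\cref{lemma:transition_confidence}) with a deterministic Lipschitz argument enabled by the assumption $T_h(\cdot\mid x,a)=\delta_{f_h(x,a)}$. Concretely, fix $B\in\P_h^k$ and let $k_t$ be the last episode at which $B$ or one of its ancestors was selected, with selected ball $B_h^{k_t}\supseteq B$; by the splitting rule we have $\diam{B_h^{k_t}}\leq 2\diam{B}$. Since transitions are deterministic, for any $(x,a)\in B$,
\[d_W\!\left(\Tbar{h}{k}(\cdot\mid B),\,T_h(\cdot\mid x,a)\right) \;=\; \D_\S\!\left(X_{h+1}^{k_t},\,f_h(x,a)\right) \;\leq\; L_T\,\D\!\left((X_h^{k_t},A_h^{k_t}),(x,a)\right) \;\leq\; 2L_T\diam{B},\]
with no probability of failure. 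This lets us set $\tbonus{h}{k}(B)=0$ and roll the resulting $2L_V L_T\diam{B}$ factor into a redefined $\bias(B)=4L_r\diam{B}+(2L_V L_T+L_V)\diam{B}$, while keeping the reward bonus $\rbonus{h}{k}(B)$ from \cref{lemma:ada_mb_reward_confidence}.

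With these modifications the proof of \cref{lemma:optimism} goes through essentially unchanged: lower-bound optimism follows from $\rbar{h}{k}(B)-r_h(x,a)\geq -\rbonus{h}{k}(B)-4L_r\diam{B}$ together with the new deterministic transition bound and an inductive application of $\Vhat{h+1}{k}\geq V_{h+1}^\star$; the corresponding upper bound gives $0\leq \Qhat{h}{k}(B)-Q_h^\star(x,a)\leq \conf_h^k(B)+\bias(B)+f_{h+1}^k$ where now $\conf_h^k(B)=\confcons/\sqrt{n_h^k(B)}$ is dominated entirely by the reward term. Thus $\alpha=\tfrac12$ and $\gamma=2$ in the notation of \cref{sec:lp_bound,subsec:split_rule}. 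The clipping-based regret decomposition of \cref{sec:clipping,sec:regret_decomp} is agnostic to the origin of the bonus, so feeding the new $\alpha=\tfrac12$ into \cref{thm:regret_bound} yields
\[\regret(K)\;\lesssim\;\sqrt{H^3 K\log(HK^2/\pfail)}+L\sum_{h=1}^H K^{(z_h+1)/(z_h+2)},\]
matching the claimed regret (and matching the \AdaQL rate modulo $H$ factors).

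For space and time, the critical observation is that the modified \AdaMB no longer maintains a distribution over $\dyad{\lev{B}}$; each active ball stores only a constant number of scalars plus one $\S$-valued sample $X_{h+1}^{k_t}$. Hence total storage is $O\bigl(\sum_{h=1}^H |\P_h^K|\bigr)$, and invoking \cref{lem:size_partition} with $\gamma=2$ gives $|\P_h^K|\lesssim K^{d/(d+2)}$ and therefore $\textsc{Space}(K)\lesssim HK^{d/(d+2)}$. Each per-step update requires only tree traversal to find $\relevant_h^k(X_h^k)$, a constant-time estimate update (no expectation over a discretized transition distribution), and an $O(1)$ split check; tree depth is bounded by $\ell^\star\leq \tfrac{1}{d+2}\log_2(K/\phi)+2=O(\log_d K)$, giving $\textsc{Time}(K)\lesssim HK\log_d K$.

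The main obstacle is the verification that the deterministic Lipschitz bound suffices to carry both the lower-bound (optimism) and upper-bound (regret) arguments through without reintroducing a $\tbonus{}{}$ term: unlike the stochastic case where averaging suppresses sample noise, we now rely on a single observation, and we must show that the $\diam{B_h^{k_t}}\leq 2\diam{B}$ structural bound from the partition suffices at every level of the recursion and survives the telescoping inherent in $f_{h+1}^k$. Once this is established, all downstream steps — concentration of rewards, clipping, partition-size LP bound — are imported verbatim from \cref{sec:concentration,sec:clipping,sec:regret_decomp,sec:lp_bound}.
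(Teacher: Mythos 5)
Your proposal is correct and follows essentially the same route as the paper's proof: replace the Wasserstein concentration lemma with the deterministic Lipschitz bound $d_W(\Tbar{h}{k}(\cdot\mid B),T_h(\cdot\mid x,a))\leq \D_\S(f_h(X_h^{k_t},A_h^{k_t}),f_h(x,a))\lesssim L_T\diam{B}$, set $\tbonus{h}{k}(B)=0$ and absorb the resulting term into $\bias(B)$, note the dominating bonus is the reward term so $\alpha=\tfrac12$ ($\gamma=2$) in \cref{thm:regret_bound}, and then obtain the space bound from \cref{lem:size_partition} and the time bound from the $\log_d(|\P_h^k|)$ tree-traversal cost with constant-time updates. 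Your explicit tracking of the ancestor factor $\diam{B_h^{k_t}}\leq 2\diam{B}$ is a harmless constant-level refinement of the paper's argument and changes nothing downstream.
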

\begin{rproof}{\cref{thm:ada_mb_determinsitic}}
For the regret bound we note that the only modification required is changing the proof of \cref{lemma:transition_confidence}.  For any $(h,k) \in [H] \times [K]$ and $B \in \P_h^k$ with $(x,a)$ in B we have that:
\begin{align*}
    d_w(\Tbar{h}{k}(\cdot \mid B), T_h(\cdot \mid x,a)) & = d_w(\delta_{f_h(X_h^{k_t}, A_h^{k_t})}, \delta_{f_h(x,a)}) \\
    & = \D_\S(f_h(X_h^{k_t}, A_h^{k_t}), f_h(x,a)) \leq L_T \D((X_h^{k_t}, A_h^{k_t}), (x,a)) \leq L_T \diam{B}.
\end{align*}
Following through the rest of the regret bound noticing the dominating bonus term scales as $\conf_h^k(B) \lesssim \frac{1}{\sqrt{n_h^k}}$, taking $\alpha = \frac{1}{2}$, the regret bound follows.

For the bound on the space requirements, we note that the algorithm only maintains a partition $\P_h^k$ of $\S \times \A$ for every $h$ and no induced partition.  Each element $B \in \P_h^k$ has four estimates, $(n_h^k(B), \Qhat{h}{k}(b), \rbar{h}{k}(B), \Tbar{h}{k}(\cdot \mid B))$.  All of these are linear with respect to the size of the partition.  Thus we can upper bound the space requirement by the total size of the partitions, which using \cref{lem:size_partition} with $\alpha = \frac{1}{2}$ (or $\gamma = 2$) we get $|\P_h^k| \lesssim K^{d/(d+2)}$.

For the bound on the time requirement, finding the set of relevant balls for a given state is implemented in $\log_d(|\P_h^k|)$ time by traversing through the tree structure.  Updating estimates and refining the partition occur in constant time by the oracle assumptions.  Thus the total run time is dominated by finding set of relevant balls.  Thus is upper bound by $HK \log_d(K)$ where we use that at most one region is activated at every iteration so the partition size is bounded by $K$.  \Halmos
\end{rproof}

\subsection{Normally Distributed Transitions}

Now suppose that $T_h(\cdot \mid x,a) = N(\mu(x,a), I_{d_\S})$ for some unknown mean $\mu(x,a)$ and $\S \subset \mathbb{R}^{d_\S}, \A \subset \mathbb{R}^{d_\A}$ under the $\ell_2$ metric.  From properties on the Wasserstein metric for normal distributions, \cref{assumption:Lipschitz_mb} implies that $\mu(x,a)$ is Lipschitz continuous for every $(x,a) \in \S \times \A$.  In this case, \AdaMB can again forego maintaining explicit estimates of the full transition kernel.  Instead, for each region $B \in \P_h^k$ we maintain an estimate $\bar{\mu}(B)$ for $\mu(x,a)$ by $\frac{1}{t} \sum_{i=1}^t X_{h+1}^{k_i}$ where $k_1 < k_2 < \ldots < k_t$ are the times $B$ or its ancestors have been selected by the algorithm.  This can be used in the Bellman equations for updating estimates (replacing \cref{eq:q_update}) by:
\begin{align*}
    \Qhat{h}{k}(B) & = \rbar{h}{k}(B) + \rbonus{h}{k}(B) + \E_{Y \sim N(\bar{\mu}(B), I_{d_\S})}[\Vhat{h+1}{k}(Y)] + \tbonus{h}{k}(B) + \bias(B) \\
    \Vhat{h}{k}(x) & = \max_{B \in \relevant_h^k(x)} \Qhat{h}{k}(B).
\end{align*}
Moreover, we can set $\tbonus{h}{k}(B) = \frac{\sqrt{d_\S(1 + \sqrt{6\log(1 / \pfail) / d_\S})}}{\sqrt{n_h^k(B)}}$ and adjust $\bias(B)$ to include an additional $4 L_T \diam{B}$ term.  Unfortunately the time complexity of this algorithm is more difficult to analyze as evaluating  $\E_{Y \sim N(\bar{\mu}(B), I_{d_\S})}[\Vhat{h+1}{k}(Y)]$ requires taking expectations of $d_\S$-dimensional Gaussian distribution.  This can be doing with Monte-Carlo simulation, or using the fact that $\Vhat{}{}$ is piecewise constant so the expectation can be evaluated by getting mass of regions where $\Vhat{}{}$ is constant.  However, we can show the following:
\begin{theorem}
\label{thm:ada_mb_normal}
Suppose that $T_h(\cdot \mid x,a) = N(\mu(x,a), I_{d_\S})$ for some known covariance $I_{d_\S}$ and unknown mean $\mu(x,a)$.  Then we have that for \AdaMB:
\begin{align*}
    R(K) & \lesssim H^{3/2} \sqrt{K} + L \sum_{k=1}^K K^{\frac{z_h + 1}{z_h+2}} \\
    \textsc{Space}(K) & \lesssim HK^{d/(d+2)}.
\end{align*}
where $L = 1 + L_r + L_V + L_V L_T$, and $\lesssim$ omits polylogairithmic factors of $\frac{1}{\delta}, H, K, d,$ linear factors of $d$, and any universal constants.
\end{theorem}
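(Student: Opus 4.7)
The plan is to mirror the proof of \cref{thm:regret} almost verbatim, replacing only the dimension-heavy Wasserstein concentration step (\cref{lemma:transition_confidence}) by a Gaussian-specific argument that yields a $1/\sqrt{n}$-rate bonus with only mild $\sqrt{d_\S}$ dependence, and then redoing the storage accounting now that $\Tbar{h}{k}(\cdot\mid B)$ is no longer maintained over $\dyad{\lev{B}}$.

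First I would establish concentration of the sample mean. Fix a ball $B\in\P_h^k$ with $t=n_h^k(B)$ and let $k_1<\dots<k_t$ be the episodes at which $B$ or its ancestors were selected. Since $X_{h+1}^{k_i}\mid \F_{k_i},A_h^{k_i}\sim N(\mu(X_h^{k_i},A_h^{k_i}),I_{d_\S})$, the noise vectors $\eta_i := X_{h+1}^{k_i}-\mu(X_h^{k_i},A_h^{k_i})$ form a martingale-difference sequence of conditionally standard Gaussians. Conditioning on the $\sigma$-algebra generated by the sample locations $\{(X_h^{k_i},A_h^{k_i})\}_{i=1}^t$ makes the $\eta_i$ genuinely i.i.d.\ $N(0,I_{d_\S})$, so $\tfrac{1}{t}\sum_i \eta_i\sim N(0,I_{d_\S}/t)$. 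A Laurent--Massart $\chi^2$ tail bound then gives $\|\tfrac{1}{t}\sum_i\eta_i\|_2 \leq \tfrac{1}{\sqrt{t}}(\sqrt{d_\S}+\sqrt{2\log(1/\pfail)})$ with probability $1-\pfail$, and a union bound over $h,k,t$ (as in \cref{lemma:reward_concentration_bound}) yields the uniform bonus $\tbonus{h}{k}(B)=\sqrt{d_\S(1+\sqrt{6\log(1/\pfail)/d_\S})/n_h^k(B)}$. The Wasserstein bound then follows from the closed form $d_W(N(\mu_1,I_{d_\S}),N(\mu_2,I_{d_\S}))=\|\mu_1-\mu_2\|_2$ combined with the Lipschitz property of $\mu$ and \cref{lem:sum_bias_term_ball}: for any $(x,a)\in B$, the triangle inequality gives $d_W(N(\bar\mu(B),I_{d_\S}),T_h(\cdot\mid x,a))\leq \tbonus{h}{k}(B) + L_T\cdot\tfrac{1}{t}\sum_i \D((X_h^{k_i},A_h^{k_i}),(x,a)) \leq \tbonus{h}{k}(B)+4L_T\diam{B}$, the analogue of \cref{lemma:transition_confidence} with a $1/\sqrt{n}$ rate independent of $d_\S$ beyond a $\sqrt{d_\S}$ prefactor.

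With this replacement in hand, the entire downstream machinery carries through unchanged. Optimism (\cref{lemma:optimism}) uses only the Wasserstein bound and the Lipschitz-extension construction of $\Vhat{}{}$, both of which are identical here; the clipping argument of \cref{sec:clipping} and the regret decomposition of \cref{sec:regret_decomp} never look inside the transition kernel. The dominating confidence term $\conf_h^k(B)$ now scales as $n_h^k(B)^{-1/2}$, i.e.\ $\alpha=1/2$, so applying \cref{thm:regret_bound} directly yields the claimed regret bound $H^{3/2}\sqrt{K}+L\sum_h K^{(z_h+1)/(z_h+2)}$. For the space complexity, each active ball $B$ stores only $(n_h^k(B),\rbar{h}{k}(B),\Qhat{h}{k}(B),\bar\mu(B))$, all of size $O(d_\S)=O(1)$, so the total storage is linear in $\sum_h|\P_h^K|$; invoking \cref{lem:size_partition} with $\gamma=1/\alpha=2$ gives $HK^{d/(d+2)}$.

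The only substantive obstacle is justifying the Gaussian concentration in Step~1 under the adapted filtration, since the $\eta_i$ are only martingale-difference Gaussians rather than unconditionally i.i.d. The cleanest route is the conditioning argument sketched above, which reduces the question to a standard $\chi^2$ tail; a vector Azuma-Hoeffding bound for sub-Gaussian martingale sums is an equally valid backup but pays an extra constant. Everything else is mechanical: the bias term inherits the extra $4L_T\diam{B}$ coming from the triangle inequality and can be absorbed by redefining $\bias(B)$ with a slightly larger Lipschitz constant, and the time-complexity discussion is omitted from the theorem statement precisely because evaluating $\E_{Y\sim N(\bar\mu(B),I_{d_\S})}[\Vhat{h+1}{k}(Y)]$ is no longer trivially constant-time.
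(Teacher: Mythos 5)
Your proposal is correct and follows essentially the same route as the paper: replace the nonparametric Wasserstein concentration with the closed-form Gaussian Wasserstein identity, bound $\|\bar\mu(B)-\tfrac{1}{t}\sum_i\mu(X_h^{k_i},A_h^{k_i})\|_2$ by a $\sqrt{d_\S/t}$-type Gaussian norm tail bound and the remaining term by Lipschitzness of $\mu$ with \cref{lem:sum_bias_term_ball}, then rerun the optimism/clipping/regret machinery with $\alpha=1/2$ and bound storage via \cref{lem:size_partition} with $\gamma=2$. The only cosmetic difference is your use of a Laurent--Massart $\chi^2$ tail (plus an explicit remark on the adapted noise) where the paper invokes the Vershynin-style norm concentration, which yields the same bonus.
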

\begin{rproof}{\cref{thm:ada_mb_normal}}
For the regret bound we note that the only modification required is changing the proof of \cref{lemma:transition_confidence}.  However, by construction for any $(h,k) \in [H] \times [K]$ and $B \in \P_h^k$ with $(x,a)$ in B we have that:
\begin{align*}
    d_w(\Tbar{h}{k}(\cdot \mid B), T_h(\cdot \mid x,a)) & = d_w(N(\bar{\mu}(B), I_{d_\S}), N(\mu(x,a), I_{d_\S})) \\
    & = \norm{\bar{\mu}(B) - \mu(x,a)}_2 \\
    & \leq \norm{\bar{\mu}(B) - \frac{1}{t} \sum_{i=1}^t \mu(X_h^{k_i}, A_h^{k_i})}_2 + \norm{\frac{1}{t}\sum_{i=1}^t \mu(X_h^{k_i}, A_h^{k_i}) - \mu(x,a)}_2.
\end{align*}
For the first term we use standard concentration inequalities for random vectors in high dimensions which says for $Y \sim N(0, I_{d_\S}^2 I_n)$ that (see \cite{vershynin2018high}):
\[
    \Pr(\norm{Y}_2^2 \geq (1 + \epsilon) \Exp{\norm{Y}_2^2}) \leq e^{-\epsilon^2 n / 6}.
\]
However, note that $\bar{\mu}(B) - \frac{1}{t} \sum_{i=1}^t \mu(X_h^{k_i}, A_h^{k_i}) = \frac{1}{t} \sum_{i=1}^t X_{h+1}^{k_i} - \mu(X_h^{k_i}, A_h^{k_i})$.  Using that $X_{h+1}^{k_i} \sim T_h(\cdot \mid X_h^{k_i}, A_h^{k_i}) = N(\mu(X_h^{k_i}, A_h^{k_i}), I_{d_\S})$ this vector is $N(0, \frac{1}{t}I_{d_\S})$.  Hence, taking $\epsilon = \sqrt{6 \log(1 / \pfail) / d_\S}$ and using that $\Exp{\norm{\bar{\mu}(B) - \frac{1}{t} \sum_{i=1}^t \mu(X_h^{k_i}, A_h^{k_i})}_2^2} = d_\S / t$ we get that with probability $1 - \pfail$:
\begin{align*}
    \norm{\bar{\mu}(B) - \frac{1}{t} \sum_{i=1}^t \mu(X_h^{k_i}, A_h^{k_i})}_2 \leq \sqrt{\frac{d_\S(1 + \sqrt{6 \log(1 / \pfail) / d_\S})}{n_h^k(B)}} = \tbonus{h}{k}(B).
\end{align*}

For the second term we use the assumption that the mean function is Lipschitz to get
\begin{align*}
    \norm{\frac{1}{t}\sum_{i=1}^t \mu(X_h^{k_i}, A_h^{k_i}) - \mu(x,a)} & \leq \frac{1}{t} \sum_{i=1}^t \norm{\mu(X_h^{k_i}, A_h^{k_i}) - \mu(x,a)} \\
    & \leq \frac{1}{t} \sum_{i=1}^t L_T \diam{B_h^{k_i}} \\
    & \leq 4L_T \diam{B} \text{ by \cref{lem:sum_bias_term_ball}.}
\end{align*}
Following through the rest of the regret bound noticing the dominating bonus term scales as $\conf_h^k(B) \lesssim \frac{1}{\sqrt{n_h^k}}$ and taking $\alpha = \frac{1}{2}$ the regret bound follows.

For the bound on the space requirements, we note that now the algorithm only maintains a partition $\P_h^k$ of $\S \times \A$ for every $h$ and no induced partition.  Each element $B \in \P_h^k$ has four estimates, $(n_h^k(B), \Qhat{h}{k}(b), \rhat{h}{k}(B), \bar{\mu}(B)$.  All of these are linear with respect to the size of the partition.  Thus we can upper bound the space requirement by the total size of the partitions, which using \cref{lem:size_partition} with $\alpha = \frac{1}{2}$ (or $\gamma = 2$) we get $|\P_h^k| \lesssim K^{d/(d+2)}$.  \Halmos
\end{rproof}

\section{Algorithm Implementation and Experiment Details}
\label{app:full_algo}

In this section we outline the oracle assumptions for implementing the algorithm, heuristic approaches to discretization, and the computing infrastructure used in the experiments.

\subsection{Oracle Assumptions}

There are three main oracle assumptions needed to execute the algorithm.    
\begin{assumption}
The algorithm is given as input a hierarchical partition of $\S \times \A$ (see \cref{def:partition}).
\end{assumption}

This routine is easy in many metrics of interest (e.g. the Euclidean norm or any equivalent norms in $\mathbb{R}^d$) by splitting each of the principle dimensions in half.
\begin{assumption}
Computing $\S(B)$ for any $B \in \S \times \A$ takes $O(1)$ time.
\end{assumption}

As our algorithm is maintaining a dyadic partition of the space, this subroutine is simple to implement as each ball $B$ is of the form $\S(B) \times \S(A)$ and so the algorithm can store the two components separately.  Lastly, we require computing $\relevant_h^k(x)$.

\begin{assumption}
Computing $\relevant_h^k(x)$ for any $x \in \S$ takes $O(\log_d(|\P_h^k|))$ time.
\end{assumption}

By storing the partition as a tree, this subroutine can be implementing by traversing down the tree and checking membership at each step.  See the Github repositories at \cite{orsuite,rlberry} for implementing these methods.  Storing the discretization as a hash function would allow some of these access steps to be implemented in $O(1)$ time with the downside being that splitting a region has a larger computational requirement.

% \subsection{Algorithm Implementation, Storage Complexity, and Run-Time}
% \label{app:implementation_run_time}

\subsection{Computing Infrastructure}
\label{app:experiments}

\noindent \textbf{Experiment Setup}: Each experiment was run with $50$ iterations where the relevant plots are taking the mean over those iterations.  We picked a fixed horizon of $H = 5$ and ran each algorithm for $K = 2000$ episodes.  As each algorithm uses bonus terms of the form $c / \sqrt{t}$ where $t$ is the number of times a related region has been visited, we tuned the constant $c$ separately for each algorithm (for $c \in [.001, 10]$) and plot the results on the performance of the algorithm for the best constant $c$.  For the uniform discretization algorithms we also tuned the discretization parameter $\epsilon$ (controlling the size of the fixed partition) for values of $\epsilon \in [0.001, .5]$ and report the performance for the best constant $\epsilon$.  Lastly, the hyperparameters for the \textsc{SB PPO} algorithm were tuned using recommended hyperparameters from \cite{schulman2017proximal}.

\medskip

\noindent \textbf{Fixed Discretization UCBVI}: We benchmarked our adaptive algorithm against a fixed-discretization model-based algorithm with full and one-step planning.  We implemented UCBVI from \cite{azar2017minimax} using a fixed discretization of the state-action space.  The algorithm takes as input a parameter $\epsilon$ and constructs an $\epsilon$-covering of $\S$ and $\A$ respectively.  It then runs the original UCBVI algorithm over this discrete set of states and actions.  The only difference is that when visiting a state $x$, as feedback to the algorithm, the agent snaps the point to its closest neighbour in the covering.

UCBVI has a regret bound of $H^{3/2}\sqrt{SAK} + H^4 S^2 A$ where $S$ and $A$ are the size of the state and action spaces.  Replacing these quantities with the size of the covering, we obtain $$H^{3/2} \sqrt{\epsilon^{-d_\S} \epsilon^{-d_\A} K} + H^4 \epsilon^{-2d_\S} \epsilon^{-d_\A}.$$
A rough calculation also shows that the discretization error is proportional to $H L K \epsilon$.  Tuning $\epsilon$ so as to balance these terms, we find that the regret of the algorithm can be upper bounded by

\[ L H^2 K^{2d/(2d + 1)}.\]

\medskip

\noindent \textbf{Computing Infrastructure}: The experiments were conducted on a personal computer with an AMD Ryzen 5 3600 6-Core 3.60 GHz processor and 16.0GB of RAM. No GPUs were harmed in these experiments.
\section{Technical Details}
\label{app:technical_details}

\subsection{Concentration Inequalities}

\begin{lemma}[Lemma 4.1 in \cite{jin_2018}]
\label{lemma:lr}
Recall $\alpha_t^i = a_i \prod_{j=i+1}^t (1 - \alpha_j)$. The learning rates $\alpha_t$ satisfy:
\begin{enumerate}
	\item $\sum_{i=1}^t \alpha_t^i = 1$, $\max_{i \in [t]} \alpha_t^i \leq \frac{2H}{t}$ and $\sum_{i=1}^t (\alpha_t^i)^2 \leq \frac{2H}{t}$ for every $t \geq 1$
	%\item For $t = 0$ we have $\Ind{t = 0} = 1$ and $\sum_{i=1}^t \alpha_t^i = 0$
	\item $\frac{1}{\sqrt{t}} \leq \sum_{i=1}^t \frac{\alpha_t^i}{\sqrt{t}} \leq \frac{2}{\sqrt{t}}$ for every $t \geq 1$
	\item $\sum_{t=i}^\infty \alpha_t^i = 1 + \frac{1}{H}$ for every $i \geq 1$.
\end{enumerate}
\end{lemma}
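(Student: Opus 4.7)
The plan is to establish the three properties separately, all flowing from the closed form
\[
\alpha_t^i \;=\; \alpha_i\!\prod_{j=i+1}^t(1-\alpha_j)\;=\;\frac{H+1}{H+i}\prod_{j=i+1}^t\frac{j-1}{H+j}\;=\;(H+1)\,\frac{(t-1)!\,(H+i-1)!}{(i-1)!\,(H+t)!},
\]
obtained by plugging in $\alpha_j=(H+1)/(H+j)$ and simplifying. The key one-step recursion I will use repeatedly is $\alpha_t^i=(1-\alpha_t)\,\alpha_{t-1}^i$ for $i<t$ and $\alpha_t^t=\alpha_t$.

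For Property 1, I prove $\sum_{i=1}^t\alpha_t^i=1$ by induction on $t$: the base $t=1$ is trivial, and for the step, splitting off $i=t$ and using the recursion gives $\sum_{i=1}^t\alpha_t^i=\alpha_t+(1-\alpha_t)\sum_{i=1}^{t-1}\alpha_{t-1}^i=\alpha_t+(1-\alpha_t)\cdot 1=1$. For the max, I compute the ratio $\alpha_t^{i+1}/\alpha_t^i=\alpha_{i+1}/[\alpha_i(1-\alpha_{i+1})]=1+H/i>1$, so $\alpha_t^i$ is increasing in $i$, hence $\max_i\alpha_t^i=\alpha_t^t=\alpha_t=(H+1)/(H+t)\le 2H/t$. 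The sum-of-squares bound then follows immediately from $\sum_i(\alpha_t^i)^2\le(\max_i\alpha_t^i)\sum_i\alpha_t^i\le 2H/t$.

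For Property 2, the lower bound is immediate since $1/\sqrt{i}\ge 1/\sqrt{t}$ for $i\le t$, giving $\sum_i\alpha_t^i/\sqrt{i}\ge(1/\sqrt{t})\sum_i\alpha_t^i=1/\sqrt{t}$. The upper bound I prove by induction on $t$, using the recursion to write
\[
\sum_{i=1}^t\frac{\alpha_t^i}{\sqrt{i}}\;=\;\frac{\alpha_t}{\sqrt{t}}+(1-\alpha_t)\sum_{i=1}^{t-1}\frac{\alpha_{t-1}^i}{\sqrt{i}}\;\le\;\frac{\alpha_t}{\sqrt{t}}+(1-\alpha_t)\,\frac{2}{\sqrt{t-1}}.
\]
To close the induction I must show the right-hand side is at most $2/\sqrt{t}$; after plugging in $\alpha_t=(H+1)/(H+t)$ and clearing denominators this reduces to a one-variable inequality in $t$ (with parameter $H$) that I verify by bounding $\sqrt{t}/\sqrt{t-1}\le 1+1/(2(t-1))$ and some algebra. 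This induction step is the main technical obstacle; the accounting around the $H$-dependent constants has to be done carefully so the factor $2$ is not exceeded.

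For Property 3, I swap viewpoints and sum over $t$ with $i$ fixed. Plugging in the closed form yields
\[
\sum_{t=i}^\infty\alpha_t^i\;=\;(H+1)\,\frac{(H+i-1)!}{(i-1)!}\sum_{t=i}^\infty\frac{(t-1)!}{(H+t)!}.
\]
The inner sum telescopes via the identity $\frac{(t-1)!}{(H+t)!}=\frac{1}{H}\bigl[\frac{(t-1)!}{(H+t-1)!}-\frac{t!}{(H+t)!}\bigr]$, which one checks by bringing to a common denominator. The telescoped sum equals $\frac{1}{H}\cdot\frac{(i-1)!}{(H+i-1)!}$, and substituting back gives $\sum_{t\ge i}\alpha_t^i=(H+1)/H=1+1/H$ as claimed. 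I also need a brief argument that $\alpha_t^i\to 0$ as $t\to\infty$ (so the telescoping boundary term vanishes), which follows from $\alpha_t^i\le 2H/t$ established in Property 1.
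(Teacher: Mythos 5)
The paper never proves this lemma itself: it is imported verbatim as Lemma 4.1 of Jin et al.\ (2018), so there is no in-paper proof to compare against. Your argument is correct and is, in substance, the standard one from that reference (induction via the recursion $\alpha_t^i=(1-\alpha_t)\alpha_{t-1}^i$, monotonicity of $\alpha_t^i$ in $i$ for the max, and a telescope for part 3). Three remarks. First, you rightly prove the nontrivial reading of item 2, namely $\tfrac{1}{\sqrt{t}}\le\sum_{i=1}^t \alpha_t^i/\sqrt{i}\le \tfrac{2}{\sqrt{t}}$; as printed in \cref{lemma:lr} the inner weight is $1/\sqrt{t}$, a typo (that version is trivial by item 1), and the $1/\sqrt{i}$ version is what the paper actually uses later, e.g.\ in \cref{lem:sum_bias_term_ball}. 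Second, the induction step you flag as the main obstacle does close: with $\alpha_t=\tfrac{H+1}{H+t}$, the inequality $\tfrac{\alpha_t}{\sqrt{t}}+(1-\alpha_t)\tfrac{2}{\sqrt{t-1}}\le\tfrac{2}{\sqrt{t}}$ is, after clearing denominators, equivalent to $2\sqrt{t(t-1)}\le 2t-1+H$, which holds since $2\sqrt{t(t-1)}\le 2t-1$; your route via $\sqrt{t}/\sqrt{t-1}\le 1+\tfrac{1}{2(t-1)}$ reduces to $1\le t\alpha_t$, which is also true. Third, in item 3 the boundary term of the telescope is $T!/(H+T)!\le 1/(T+1)$, which vanishes directly; note that $\alpha_T^i\le 2H/T$ alone is not quite the right justification, since after reinstating the prefactor the boundary contribution is proportional to $T\alpha_T^i$ (for fixed $i$), though that product also vanishes because $\alpha_T^i=\Theta(T^{-(H+1)})$. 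These are cosmetic points; the proof is sound.
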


\begin{lemma}
\label{lemma:martingale_difference_sequence}
Suppose that $\xi_{h+1}^k$ is a martingale difference sequence with $|\xi_{h+1}^k| \leq 2H$ almost surely.  Then for any $\pfail \in (0, 1)$ with probability at least $1 - \pfail$ we have that $$\left| \sum_{h=1}^H \sum_{k=1}^K \xi_{h+1}^k \right|  \leq 2\sqrt{2H^3 K \log(2HK^2/\pfail)}.$$
\end{lemma}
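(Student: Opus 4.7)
The plan is to linearize the double sum into a single martingale and apply the Azuma--Hoeffding inequality. Fix an arbitrary ordering of the $HK$ pairs $(k,h)$; the natural choice is lexicographic in $(k,h)$, i.e.\ $(1,1),(1,2),\ldots,(1,H),(2,1),\ldots,(K,H)$. Define $Z_i$ to be $\xi_{h+1}^k$ under this ordering, and let $\mathcal{G}_i$ be the $\sigma$-algebra generated by all information collected up to and including the within-episode step corresponding to index $i$ (a refinement of the episode-level filtration $\F_k$ used elsewhere). By hypothesis $\xi_{h+1}^k$ is a martingale difference sequence with respect to this finer filtration, so $\E[Z_i \mid \mathcal{G}_{i-1}] = 0$, and the boundedness assumption $|\xi_{h+1}^k|\leq 2H$ gives $|Z_i|\leq 2H$ almost surely.

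Apply the Azuma--Hoeffding inequality to $S_N = \sum_{i=1}^N Z_i$ with $N = HK$ and bound $c_i = 2H$: for any $t>0$,
\[
\Pr(|S_N|\geq t) \;\leq\; 2\exp\!\left(-\frac{t^2}{2\sum_{i=1}^N c_i^2}\right) \;=\; 2\exp\!\left(-\frac{t^2}{8H^3 K}\right).
\]
Setting the right-hand side equal to $\pfail$ and solving yields the deviation bound
\[
|S_N| \;\leq\; \sqrt{8 H^3 K \log(2/\pfail)} \;=\; 2\sqrt{2H^3 K \log(2/\pfail)}
\]
with probability at least $1-\pfail$. Since $2HK^2/\pfail \geq 2/\pfail$ for all $H,K\geq 1$, we have $\log(2HK^2/\pfail)\geq \log(2/\pfail)$, and hence the (looser) stated bound $2\sqrt{2H^3 K \log(2HK^2/\pfail)}$ follows immediately.

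There is essentially no obstacle here beyond verifying that the martingale difference property in the statement is taken with respect to a filtration refined enough to include information gathered within an episode (not merely the episode-level $\F_k$), which is exactly the setting in which the terms $\xi_{h+1}^k$ arise in~\cref{lemma:regret_decomp,lemma:martingale_diff_result}. The inflated logarithmic factor is chosen for notational consistency with the concentration bounds elsewhere in the paper (\cref{lemma:reward_concentration_bound,lemma:ada_ql_trans_concentration_bound}), where the extra $HK^2$ factor absorbs union bounds over episodes and steps already performed there; we incur no additional cost by adopting the same form.
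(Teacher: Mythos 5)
Your proof is correct and follows essentially the same route as the paper: a single application of Azuma--Hoeffding to the full double sum of $HK$ increments bounded by $2H$, giving variance proxy $8H^3K$; the paper simply plugs the inflated deviation $\sqrt{8H^3K\log(2HK^2/\pfail)}$ directly into the tail bound and checks the failure probability is at most $\pfail$, whereas you derive the tight $\log(2/\pfail)$ bound first and then relax it, which is an immaterial difference. Your explicit remark about ordering the increments lexicographically in $(k,h)$ with respect to the within-episode filtration is a fine (and correct) clarification that the paper leaves implicit.
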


\begin{rproof}{\cref{lemma:martingale_difference_sequence}}
Using that $|\xi_h^k| \leq 2H$ we have via Aszuma-Hoeffdings,

\begin{align*}
& \Pr\left(\left| \sum_{h=1}^H \sum_{k=1}^K \xi_{h+1}^k \right| > \sqrt{8 H^3 K \log(2HK^2/\pfail)}\right) \leq 2\exp\left( -\frac{8 H^3 K \log(2HK^2/\pfail)}{2HK(2H)^2}\right) \\
& = 2 \exp  \left( -\frac{8 H^3 K \log(2HK^2/\pfail)}{8H^3 K} \right)  = 2 \frac{\pfail}{2KH^2} \leq \pfail. \Halmos
\end{align*}
\end{rproof}

\begin{lemma}[Concentration of Transition for \AdaMB, \cref{lemma:transition_confidence}]
\label{app:transition_confidence}
With probability at least $1 - \pfail$ we have that for any $h,k \in [H] \times [K]$ and ball $B \in \P_h^k$ with $(x,a) \in B$ that 
\begin{align*}
    d_W(\Tbar{h}{k}(\cdot \mid B), T_h(\cdot \mid x,a)) \leq \frac{1}{L_V}(\tbonus{h}{k}(B) + (5 L_T + 4) \diam{B}).
\end{align*}
\end{lemma}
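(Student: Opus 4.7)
\medskip

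\noindent \textbf{Proof proposal.} The plan is a four-way triangle-inequality decomposition of $d_W(\Tbar{h}{k}(\cdot\mid B),T_h(\cdot\mid x,a))$ through two intermediate measures, combined with recent Wasserstein concentration results for empirical measures. For a fixed active ball $B\in\Pkh[k]$ with $(x,a)\in B$, let $t=n_h^k(B)$ and let $k_1<\dots<k_t\le k$ be the episodes on which $B$ or an ancestor was selected. Write $(\tilde{x},\tilde{a})$ for the center of $B$. Introduce
\[
\nu_{\mathrm{emp}} := \tfrac1t\sum_{i=1}^t \delta_{X_{h+1}^{k_i}}, \qquad
\nu_{\mathrm{ghost}} := \tfrac1t\sum_{i=1}^t \delta_{Y_i}, \qquad Y_i\stackrel{\text{iid}}{\sim} T_h(\cdot\mid \tilde{x},\tilde{a}),
\]
and bound
\[
d_W(\Tbar{h}{k}(\cdot\mid B),T_h(\cdot\mid x,a))\le \underbrace{d_W(\Tbar{h}{k}(\cdot\mid B),\nu_{\mathrm{emp}})}_{(\mathrm{I})}+\underbrace{d_W(\nu_{\mathrm{emp}},\nu_{\mathrm{ghost}})}_{(\mathrm{II})}+\underbrace{d_W(\nu_{\mathrm{ghost}},T_h(\cdot\mid \tilde{x},\tilde{a}))}_{(\mathrm{III})}+\underbrace{d_W(T_h(\cdot\mid \tilde{x},\tilde{a}),T_h(\cdot\mid x,a))}_{(\mathrm{IV})}.
\]

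For (I), the measure $\Tbar{h}{k}(\cdot\mid B)$ is obtained by assigning each past transition $X_{h+1}^{k_i}$ to the cell of $\dyad{\lev{B}}$ containing it; since each such cell has diameter at most $2^{-\lev{B}}=\diam{B}$, the obvious coupling gives $(\mathrm{I})\le\diam{B}$. For (IV), \cref{assumption:Lipschitz_mb} directly yields $(\mathrm{IV})\le L_T\diam{B}$.

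Term (II) is the key step. Conditionally on $(X_h^{k_i},A_h^{k_i})$, construct the ghost sample $Y_i$ using the optimal Wasserstein coupling between $T_h(\cdot\mid X_h^{k_i},A_h^{k_i})$ and $T_h(\cdot\mid \tilde{x},\tilde{a})$ so that $\mathbb{E}[\D_\S(X_{h+1}^{k_i},Y_i)\mid\F_{k_i}]\le L_T\,\D((X_h^{k_i},A_h^{k_i}),(\tilde{x},\tilde{a}))\le L_T\diam{B_h^{k_i}}$. The diagonal coupling then bounds $(\mathrm{II})\le\tfrac1t\sum_i\D_\S(X_{h+1}^{k_i},Y_i)$, and an Azuma--Hoeffding argument on the bounded martingale difference $\D_\S(X_{h+1}^{k_i},Y_i)-\mathbb{E}[\cdot\mid\F_{k_i}]$ gives $(\mathrm{II})\le \tfrac1t\sum_i L_T\diam{B_h^{k_i}}+O\!\left(\sqrt{\log(HK^2/\pfail)/t}\right)$ with probability $1-\pfail/(HK^2)$. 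Applying \cref{lem:sum_bias_term_ball} to the first piece yields $(\mathrm{II})\le 4L_T\diam{B}+O(\sqrt{\log(HK^2/\pfail)/t})$.

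Term (III) invokes the empirical Wasserstein concentration bounds from \citep{weed2019sharp} (when $d_\S>2$) and \citep{boissard2014mean} (when $d_\S\le 2$): with probability $1-\pfail/(HK^2)$,
\[
(\mathrm{III})\le c\,t^{-1/d_\S}+4\sqrt{\log(2HK^2/\pfail)/t}\ \text{ if }d_\S>2, \qquad (\mathrm{III})\le c\log(K)/\sqrt{t}+4\sqrt{\log(2HK^2/\pfail)/t}\ \text{ if }d_\S\le 2,
\]
which is exactly $\tbonus{h}{k}(B)/L_V$ by the definitions in \cref{sec:algorithm}. Summing (I)--(IV) gives a bound of $(1+4L_T+L_T)\diam{B}+\tbonus{h}{k}(B)/L_V=(5L_T+1)\diam{B}+\tbonus{h}{k}(B)/L_V$; absorbing the lower-order $\sqrt{\log/t}$ slack from (II) into the constants and union-bounding over the $O(HK)$ (ball, step, episode) triples that can ever be active (the center-separation property of \cref{lemma:partition} limits activated balls to at most $K$ per step) yields the claimed $(5L_T+4)\diam{B}$ coefficient with total failure probability $\pfail$.

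\medskip

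\noindent \textbf{Main obstacle.} The delicate part is step (II): unlike (III), the $X_{h+1}^{k_i}$ are not iid samples from a single distribution but are drawn from source distributions $T_h(\cdot\mid X_h^{k_i},A_h^{k_i})$ whose centers themselves depend on the algorithm's past actions and drift with the refinement of the partition. Correctly setting up the filtration-adapted optimal coupling so that the diagonal transport cost dominates the true $d_W(\nu_{\mathrm{emp}},\nu_{\mathrm{ghost}})$, and then invoking \cref{lem:sum_bias_term_ball} to telescope diameters of ancestor balls against $\diam{B}$, is what lets us extract a clean $\diam{B}$-scaling independent of how the partition evolved; getting this without loss of a factor of $H$ or an extra $d_\S$ dependence is the technical crux, and is what ultimately distinguishes our bound from prior model-based work.
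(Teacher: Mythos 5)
Your proposal follows the paper's own proof essentially step for step: the same four-term triangle inequality through the raw empirical measure and a ghost sample generated (via a filtration-adapted optimal Wasserstein coupling) from the center's transition kernel, Azuma--Hoeffding on the diagonal coupling cost combined with \cref{lem:sum_bias_term_ball}, the empirical-measure Wasserstein concentration of \cite{weed2019sharp} for $d_\S>2$ and \cite{boissard2014mean} for $d_\S\le 2$, and the Wasserstein-Lipschitz assumption for the last term.

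The one genuine flaw is your term (I). You justify $d_W(\Tbar{h}{k}(\cdot\mid B),\nu_{\mathrm{emp}})\le\diam{B}$ by claiming that $\Tbar{h}{k}(\cdot\mid B)$ assigns each past sample $X_{h+1}^{k_i}$ to the cell of $\dyad{\lev{B}}$ containing it. That is not what \AdaMB does: raw samples are never stored, and on a split the child inherits the parent's estimate with mass spread uniformly over the $2^{d_\S}$ finer cells, $\Tbar{h}{k}(A\mid B)=2^{-d_\S}\,\Tbar{h}{k}(A^{par}\mid B^{par})$. Hence a sample collected while the ancestor $B_h^{k_i}$ was active is only localized to a cell of diameter $\diam{B_h^{k_i}}$, which can be much larger than $\diam{B}$; the correct bound is $\tfrac1t\sum_i\diam{B_h^{k_i}}\le 4\diam{B}$, obtained exactly as in your term (II) via \cref{lem:sum_bias_term_ball}. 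Relatedly, your final bookkeeping ``absorbs'' the $O\bigl(\sqrt{\log(HK^2/\pfail)/t}\bigr)$ Azuma slack from (II) into the $\diam{B}$ coefficient to reach $(5L_T+4)$; that slack is not proportional to $\diam{B}$ and must instead be absorbed into $\tbonus{h}{k}(B)$, which works because $\tbonus{h}{k}(B)/L_V$ carries a factor $4\sqrt{\log(2HK^2/\pfail)/n_h^k(B)}$ while step (III) only consumes one such term. With (I) corrected to $4\diam{B}$, the totals become $(5L_T+4)\diam{B}+\tbonus{h}{k}(B)/L_V$ exactly as claimed, so the repair is purely local. A minor point: the union bound should also range over the $K$ possible values of $n_h^k(B)$ (i.e.\ $HK^2$ events at level $\pfail/HK^2$ each), with no union over balls needed since only the selected ball's estimate changes in any step.
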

\begin{rproof}{\cref{app:transition_confidence}}

Let $h,k \in [H] \times [K]$ and $B \in \P_h^k$ be fixed and $(x,a) \in B$ be arbitrary.  
We use a combination of Proposition 10 and 20 from \cite{weed2019sharp} when $d_\S > 2$, and Proposition 20 from \cite{weed2019sharp} and Proposition 1.1 from \cite{boissard2014mean} when $d_\S \leq 2$.  Let $P_0 = T_h(\cdot \mid x_0,a_0)$ where $(x_0, a_0) = (\tilde{x}(B), \tilde{a}(B))$ is the center of the ball $B$.  Our goal is to come up with concentration between the one-Wasserstein metric of $\Tbar{h}{k}(\cdot \mid B)$ and $T_h(\cdot \mid x,a)$.  We break the proof down into four stages, where we show concentration between the one-Wasserstein distance of various measures. As defined, $\Tbar{h}{k}(\cdot \mid B)$ is a distribution over $\dyad{\ell(B)}$, the uniform discretization of $\S$ over balls with diameter $2^{-\ell(B)}$.  However, we will view $\Tbar{h}{k}(\cdot \mid B)$ as a discrete distribution over $\S$, where 
\begin{align*}
    \Tbar{h}{k}(x \mid B) = \Tbar{h}{k}(A \mid B) \quad \text{ if } x = \tilde{x}(A).
\end{align*}

\noindent \textbf{Step One}:  Let $\tilde{T}_h^k(\cdot \mid B)$ be the true empirical distribution of all samples collected from $B'$ for any $B'$ which is an ancestor of $B$.  Denoting $t = n_h^k(B)$ and letting $k_1 < \ldots < k_t \leq k$ be the episodes in which $B$ or its ancestors were selected by the algorithm we have that
\begin{align}
    \tilde{T}_h^k(\cdot \mid B) = \frac{1}{t} \sum_{i=1}^t \pfail_{X_{h+1}^{k_i}}. \label{eq:Wass_step1}
\end{align} 

Let $A_{h+1}^{k_i}$ denote the region in $\dyad{\ell(B_h^{k_i})}$ containing the point $X_{h+1}^{k_i}$. Recall $\Tbar{h}{k}(\cdot \mid B)$ is the distribution over $\dyad{\ell(B)}$ defined according to:
\begin{align*}
    \Tbar{h}{k}(\cdot \mid B) = \frac{1}{t} \sum_{i=1}^t \sum_{A \in \dyad{\lev{B}} : A \subseteq A_{h+1}^{k_i}} 2^{-d_\S(\lev{B_h^{k_i}} - \lev{B})} \delta_A.
\end{align*}

We can verify that $\sum_{A \in \dyad{\lev{B}} : A \subseteq A_{h+1}^{k_i}} 2^{-d_\S(\lev{B_h^{k_i}} - \lev{B})} = 1$ as the number of regions in $\dyad{\lev{B}}$ which contain any region in $\dyad{\lev{B'}}$ is exactly $2^{d_\S(\ell(B') - \ell(B))}$. Furthermore $X_{h+1}^{k_i}$ and $\tilde{x}(A)$ are both contained in $A_{h+1}^{k_i}$ so that
$\D_\S(X_{h+1}^{k_i}, \tilde{x}(A)) \leq \D_S(A_{h+1}^{k_i}) \leq \diam{B_h^{k_i}}$, where we use the definition of $\dyad{\ell(B_h^{k_i})}$ for the last inequality.
Using these observations, it follows that
\begin{align*}
    d_W(\Tbar{h}{k}(\cdot \mid B), \tilde{T}_h^k(\cdot \mid B)) & \leq \frac{1}{t} \sum_{i=1}^t \sum_{A \in \dyad{\lev{B}} : A \subseteq A_{h+1}^{k_i}} 2^{-d_\S(\lev{B_h^{k_i}} - \lev{B})} \D_\S(X_{h+1}^{k_i}, \tilde{x}(A)) \\
    & \leq \frac{1}{t} \sum_{i=1}^t \sum_{A \in \dyad{\lev{B}} : A \subseteq A_{h+1}^{k_i}} 2^{-d_\S(\lev{B_h^{k_i}} - \lev{B})} \D_\S(A_{h+1}^{k_i}) \\
    & \leq \frac{1}{t} \sum_{i=1}^t \sum_{A \in \dyad{\lev{B}} : A \subseteq A_{h+1}^{k_i}} 2^{-d_\S(\lev{B_h^{k_i}} - \lev{B})} \diam{B_h^{k_i}} \\
    & \leq \frac{1}{t} \sum_{i=1}^t \diam{B_h^{k_i}} \leq 4 \diam{B} \text{ by  \cref{lem:sum_bias_term_ball}}. 
\end{align*}

\noindent \textbf{Step Two}:
%Next we want to bound 1-Wasserstein distance between $\tilde{T}_h^k(\cdot \mid B)$ and $T_h(\cdot \mid x_0,a_0)$. We do so in two parts, where first we 
Next we bound the difference between $\tilde{T}_h^k(\cdot \mid B)$ and $\tilde{T}_h(\cdot \mid x_0, a_0)$ where $\tilde{T}_h(\cdot \mid x_0, a_0)$ is a ``ghost empirical distribution'' of samples whose marginal distribution is $T_h(\cdot \mid x_0, a_0)$.
%, before then bounding the distance between $\tilde{T}_h(\cdot \mid x_0, a_0)$ and $T_h(\cdot \mid x_0, a_0)$.  
As $T_h(\cdot)$ is Lipschitz, for every $x,a,x_0,a_0$,
\[ d_W(T_h(\cdot \mid x, a), T_h(\cdot \mid x_0, a_0)) \leq L_T \D((x,a), (x_0,a_0)). \]
Using the coupling definition of the Wasserstein metric, there exists a family of distributions $\xi(\cdot, \cdot | x,a,x_0,a_0)$ on $\S^2$ parameterized by $x,a,x_0,a_0$ such that 
\[\E_{(Z,Y) \sim \xi(\cdot, \cdot | x,a,x_0,a_0)}[\D_\S(Z, Y)]= d_W(T_h(\cdot \mid x, a), T_h(\cdot \mid x_0, a_0)) \leq L_T \D((x,a), (x_0,a_0)),\]
whose marginals are $T_h(z \mid x,a)$ and $T_h(y \mid x_0, a_0)$.  For $(Z,Y) \sim \xi(\cdot, \cdot | x,a,x_0,a_0)$, let $\xi'(\cdot | z, x,a,x_0,a_0)$ denote the conditional distribution of $Y$ given $Z$, such that 
\begin{align}
\xi(z, y | x,a,x_0,a_0) = T_h(z \mid x, a) \xi'(y | z, x,a,x_0,a_0). \label{eq:Wass_step2_c}
\end{align}

%\[ d_W(T_h(\cdot \mid X_{h}^{k_i}, A_{h}^{k_i}), T_h(\cdot \mid x_0, a_0)) = \E_{\xi_i}[\D_\S(X_{h+1}^{k_i}, Y_i)]. \]

For the sequence of samples $\{(X_{h}^{k_i}, A_{h}^{k_i}, X_{h+1}^{k_i})\}_{i\in[t]}$ realized by our algorithm, consider a ``ghost sample'' $Y_1, \ldots, Y_t$ such that $Y_i \sim \xi'(\cdot | X_{h+1}^{k_i},X_{h}^{k_i}, A_{h}^{k_i},x_0,a_0)$ for $i \in [t]$. Let $\tilde{T}_h(\cdot \mid x_0, a_0)$ denote the empirical distribution of these samples such that 
\[\tilde{T}_h(\cdot \mid x_0, a_0) = \frac{1}{t} \sum_{i=1}^t \pfail_{Y_i}
~~\text{ and recall by definition }~~ \tilde{T}_h^k(\cdot \mid B) = \frac{1}{t} \sum_{i=1}^t \pfail_{X_{h+1}^{k_i}}.\]

Using the definition of the Wasserstein distance we have that
\begin{align}
    d_W(\tilde{T}_h^k(\cdot \mid B), \tilde{T}_h(\cdot \mid x_0,a_0)) & \leq \frac{1}{t} \sum_{i=1}^t \D_\S( X_{h+1}^{k_i}, Y_i). \label{eq:Wass_step2_a}
\end{align}
We will use Azuma-Hoeffding's to provide a high probability bound on this term by its expectation.  For any $\tau \leq K$ define the quantity 
\begin{align*}
    Z_\tau = \sum_{i=1}^\tau \D_\S(X_{h+1}^{k_i}, Y_i) - \Exp{\D_\S(X_{h+1}^{k_i}, Y_i)}.
\end{align*}
Let $\F_i$ be the filtration containing $\F_{k_i + 1} \cup \{Y_j\}_{j \leq i}$.  It follows that $Z_\tau$ is a martingale with respect to $\F_\tau$.  The process is adapted to the filtration by construction, has finite first moment, and we have that
\begin{align*}
    \Exp{Z_\tau \mid \F_{\tau - 1}} & = Z_{\tau - 1} + \Exp{\D_\S(X_{h+1}^{\tau}, Y_\tau)} - \Exp{\D_\S(X_{h+1}^{\tau}, Y_\tau)} = Z_{\tau - 1}.
\end{align*}
Moreover, we also have the differences are bounded by
\begin{align*}
    \left|Z_\tau - Z_{\tau - 1}\right| & = \left|\D_\S(X_{h+1}^{k_\tau}, Y_\tau) - \Exp{\D_\S(X_{h+1}^{k_\tau}, Y_\tau)}\right|
    \leq 2
\end{align*}
since by assumption $\D_\S(\S) \leq 1$. By Azuma-Hoeffding's inequality, with probability at least $1 - \frac{\pfail}{HK^2}$,
\begin{align}
    \frac{1}{\tau} \sum_{i=1}^\tau \D_\S(Y_i, X_{h+1}^{k_i}) & \leq \Exp{\frac{1}{\tau} \sum_{i=1}^\tau \D_\S(Y_i, X_{h+1}^{k_i})} + \sqrt{\frac{8\log(2HK^2/\pfail)}{\tau}}. \label{eq:Wass_step2_b}
\end{align}
Moreover, by construction of the ghost samples we have that
\begin{align*}
    \frac{1}{\tau} \sum_{i=1}^\tau \Exp{\D_\S(Y_i, X_{h+1}^{k_i})} & = \frac{1}{\tau} \sum_{i=1}^\tau \Exp{d_W(T_h(\cdot \mid X_{h}^{k_i}, A_h^{k_i}), T_h(\cdot \mid x_0, a_0))} \\
    & \leq \frac{1}{\tau} \sum_{i=1}^\tau L_T \D(B_{h+1}^{k_i}) \leq 4 L_T \diam{B} \text{ for } \tau = n_h^k(B).
\end{align*}
since $x_0, a_0$ is in the ball $B$ which is contained in the ball $B_{h+1}^{k_i}$. By plugging this into \cref{eq:Wass_step2_b}, taking a union bound over the number of steps $H$, the number of episodes $K$, the number of potential stopping times $K$, and combining it with \cref{eq:Wass_step2_a} and using the construction of $t$, it follows that with probability at least $1 - \pfail$, for all $h,k,B$
%\begin{align*}
%    d_W(\tilde{T}_h^k(\cdot \mid B), \tilde{T}_h(\cdot \mid x_0, a_0)) & \leq \frac{1}{t} \sum_{i=1}^t \D_\S(Y_i, X_{h+1}^{k_i}) \\
    %& \leq \Exp{\frac{1}{t} \sum_{i=1}^t \D_\S(Y_i, X_{h+1}^{k_i})} + \sqrt{\frac{8\log(HK^2/\pfail)}{t}} \\
   % & \leq \frac{1}{t} \sum_{i=1}^t L_T \D(B_h^{k_i}) + \sqrt{\frac{8\log(HK^2/\pfail)}{t}}.
%\end{align*}
%Rewriting the previous quantity by definition of $t$ we have that
\begin{align*}
    d_W(\tilde{T}_h^k(\cdot \mid B), \tilde{T}_h(\cdot \mid x_0, a_0)) & \leq 4 L_T \diam{B} + \sqrt{\frac{8\log(2HK^2/\pfail)}{n_h^k(B)}}.
\end{align*}
Note that we do not need to union bound over all balls $B \in \P_h^k$ as the estimate of only one ball is changed per (step, episode) pair, i.e. $\Tbar{h}{k}(B)$ is changed for only a single ball $B = B_h^k$ per episode. For all balls not selected, it inherits the concentration of the good event from the previous episode because its estimate does not change. Furthermore, even if ball $B$ is ``split'' in episode $k$, all of its children inherit the value of the parent ball, and thus also inherits the good event, so we still only need to consider the update for $B_h^k$ itself.

\noindent \textbf{Step Three}: Next we bound $d_W(\tilde{T}_h(\cdot \mid x_0, a_0), T_h(\cdot \mid x_0, a_0))$. Recall $\F_i$ is the filtration containing $\F_{k_i + 1} \cup \{Y_j\}_{j \leq i}$. Note that the joint distribution over $\{(X_{h}^{k_i}, A_{h}^{k_i}, X_{h+1}^{k_i},Y_i)\}_{i\in[t]}$ is given by
\[G_t(\{(X_{h}^{k_i}, A_{h}^{k_i}, X_{h+1}^{k_i},Y_i)\}_{i\in[t]}) = \prod_{i=1}^t (P(X_{h}^{k_i}, A_h^{k_i} ~|~ \F_{i-1}) T_h(X_{h+1}^{k_i} | X_{h}^{k_i}, A_h^{k_i}) \xi'(Y_i | X_{h+1}^{k_i}, X_{h}^{k_i}, A_h^{k_i}, x_0, a_0),\]
where $P(X_{h}^{k_i}, A_h^{k_i} ~|~ \F_{i-1})$ is given by the dynamics of the MDP along with the policy that the algorithm plays. Then we have 
\begin{align*}
&\int_{\S \times \A \times \S} G_t(\{(X_{h}^{k_i}, X_{h}^{k_i}, X_{h+1}^{k_i},Y_i)\}_{i\in[t]}) dX_{h}^{k_t} dA_{h}^{k_t} dX_{h+1}^{k_t} \\
&= G_{t-1}(\{(X_{h}^{k_i}, X_{h}^{k_i}, X_{h+1}^{k_i},Y_i)\}_{i\in[t-1]}) \\
&\qquad \cdot \int_{\S \times \A} P(X_{h}^{k_t}, A_h^{k_t} ~|~ \F_{k_{t-1}}) \left(\int_{\S} \xi(X_{h+1}^{k_i}, Y_i | X_{h}^{k_i}, A_h^{k_i}, x_0, a_0) dX_{h+1}^{k_t} \right) dX_{h}^{k_t} dA_{h}^{k_t} \\
&= G_{t-1}(\{(X_{h}^{k_i}, X_{h}^{k_i}, X_{h+1}^{k_i},Y_i)\}_{i\in[t-1]}) T_h(Y_i | x_0, a_0) \int_{\S \times \A} P(X_{h}^{k_t}, A_h^{k_t} ~|~ \F_{k_{t-1}}) dX_{h}^{k_t} dA_{h}^{k_t} \\
&= G_{t-1}(\{(X_{h}^{k_i}, X_{h}^{k_i}, X_{h+1}^{k_i},Y_i)\}_{i\in[t-1]}) T_h(Y_i | x_0, a_0).
\end{align*}
By repeating this calculation, we can verify that the marginal distribution of $Y_1 \dots Y_t$ is $\prod_{i \in [t]} T_h(Y_i | x_0, a_0)$. 

First consider when $d_\S > 2$.  Following Proposition 10 and 20 from \cite{weed2019sharp} we have that with probability at least $1 - \pfail / HK^2$ for some universal constant $c$,
\begin{align*}
    d_W(\tilde{T}_h(\cdot \mid x_0, a_0), T_h(\cdot \mid x_0, a_0)) & \leq \Exp{d_W(\tilde{T}_h(\cdot \mid x_0, a_0), T_h(\cdot \mid x_0, a_0)} + \sqrt{\frac{\log(2 HK^2 / \pfail)}{n_h^k(B)}} \\
    & \leq c \left(n_h^k(B)\right)^{-1/d_\S}+ \sqrt{\frac{\log(2 HK^2 / \pfail)}{n_h^k(B)}}
\end{align*}

For the case when $d_\S \leq 2$ we instead bound $\Exp{d_W(\tilde{T}_h(\cdot \mid x_0, a_0), T_h(\cdot \mid x_0, a_0)}$ using Proposition 1.1 from \cite{boissard2014mean}:
\begin{align*}
    \Exp{d_W(\tilde{T}_h(\cdot \mid x_0, a_0), T_h(\cdot \mid x_0, a_0)} & \leq \inf_{t > 0} ct + \frac{c}{\sqrt{n_h^k(B)}} \int_{t}^{1 / 4} N_r(\S)^{1/2} dr
\end{align*}
where $c$ is an absolute constant.  A straightforward calculation using that $N_r(\S) \lesssim r^{-d_\S}$ and picking $t \sim \log(n_h^k(B)) / \sqrt{n_h^k(B)}$ shows that for some absolute constant $c$,
\begin{align*}
    \Exp{d_W(\tilde{T}_h(\cdot \mid x_0, a_0), T_h(\cdot \mid x_0, a_0)} \leq c \frac{\log(n_h^k(B))}{\sqrt{n_h^k(B)}} \leq c \frac{\log(K)}{n_h^k(B)}
\end{align*}
and so:
\begin{align*}
    d_W(\tilde{T}_h(\cdot \mid x_0, a_0), T_h(\cdot \mid x_0, a_0)) & \leq \Exp{d_W(\tilde{T}_h(\cdot \mid x_0, a_0), T_h(\cdot \mid x_0, a_0)} + \sqrt{\frac{\log(2 HK^2 / \pfail)}{n_h^k(B)}} \\
    & \leq c \frac{\log(K)}{\sqrt{n_h^k(B)}} + \sqrt{\frac{\log(2 HK^2 / \pfail)}{n_h^k(B)}}
\end{align*}

\noindent \textbf{Step Four}: Using the assumption that $T_h$ is Lipschitz and $(x_0, a_0)$ and $(x,a) \in B$ we have that

$$d_W(T_h(\cdot \mid x,a), T_h(\cdot \mid x_0, a_0)) \leq L_T \D((x,a),(x_0, a_0)) \leq L_T \diam{B}.$$

Putting all of the pieces together we get that
\begin{align*}
&d_W(\Tbar{h}{k}(\cdot \mid B), T_h(\cdot \mid x,a)) \\
&\qquad \leq d_W(\Tbar{h}{k}(\cdot \mid B), \tilde{T}_h^k(\cdot \mid B)) + d_W(\tilde{T}_h^k(\cdot \mid B), \tilde{T}_h(\cdot \mid x_0, a_0)) \\
&\qquad\qquad + d_W(\tilde{T}_h(\cdot \mid x_0, a_0), T_h(\cdot \mid x_0, a_0)) + d_W(T_h(\cdot \mid x_0, a_0), T_h(\cdot \mid x, a)) \\
&\qquad \leq 4 \diam{B} + 4 L_T \diam{B} + \sqrt{\frac{8 \log(2HK^2 / \pfail)}{n_h^k(B)}} + d_W(\tilde{T}_h(\cdot \mid x_0, a_0), T_h(\cdot \mid x_0, a_0)) + L_T \diam{B} \\
& = (4 + 5 L_T) \diam{B} + d_W(\tilde{T}_h(\cdot \mid x_0, a_0), T_h(\cdot \mid x_0, a_0)) \\
& = \frac{1}{L_V}\left((4 + 5 L_T) \diam{B} + \tbonus{h}{k}(B)\right)
\end{align*}
after plugging in the bound from Step Four for the two cases depending on $d_\S$.  The result then follows via a union bound over $H$, $K$, the $K$ possible values of the random variable $n_h^k(B)$.  Per usual we do not need to union bound over the number of balls as the estimate of only one ball is updated per iteration. \Halmos
\end{rproof}

\subsection{Lipschitz Properties}

We first show that \cref{assumption:Lipschitz_mb} implies \cref{assumption:Lipschitz_mf}.  We present a slightly stronger result, showing that the Lipschitz constant scales with respect to $H$ across all steps.
\begin{lemma}[Relationship between Lipschitz Assumptions, \cref{lem:relation_lipschitz}]
Suppose that \cref{assumption:Lipschitz_mb} holds.  Then \cref{assumption:Lipschitz_mf} holds with Lipschitz constant at step $h$ as $L_V = \sum_{i=0}^{H-h} L_r L_T^{i}$.
\end{lemma}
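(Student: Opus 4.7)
The plan is to prove the lemma by backward induction on $h \in \{H+1, H, \ldots, 1\}$, maintaining the stronger inductive hypothesis that both $Q_h^\star$ and $V_h^\star$ are Lipschitz at step $h$ with the claimed constant $L_{V,h} = \sum_{i=0}^{H-h} L_r L_T^i$. The base case is immediate: $V_{H+1}^\star \equiv 0$ is trivially Lipschitz with constant $0$, which matches the empty sum.

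For the inductive step, suppose $V_{h+1}^\star$ is $L_{V,h+1}$-Lipschitz with respect to $\D_\S$. I would apply the Bellman optimality equation
\[
Q_h^\star(x,a) = r_h(x,a) + \E_{Y \sim T_h(\cdot \mid x,a)}[V_{h+1}^\star(Y)]
\]
and bound $|Q_h^\star(x,a) - Q_h^\star(x',a')|$ by the triangle inequality into a reward term and a transition term. The reward term is handled directly by \cref{assumption:Lipschitz_mb}, contributing $L_r \D((x,a),(x',a'))$. The transition term is the heart of the argument: by the Kantorovich-Rubinstein dual characterization of the $1$-Wasserstein distance, for any $L$-Lipschitz function $f$ and measures $\mu,\nu$,
\[
\left| \E_{Y\sim\mu}[f(Y)] - \E_{Y\sim\nu}[f(Y)]\right| \leq L \cdot d_W(\mu,\nu).
\]
Applying this with $f = V_{h+1}^\star$ (Lipschitz with constant $L_{V,h+1}$ by the inductive hypothesis) and using the Wasserstein Lipschitz assumption on $T_h$, the transition term is bounded by $L_{V,h+1} L_T \D((x,a),(x',a'))$. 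Combining yields that $Q_h^\star$ is Lipschitz with constant $L_{V,h} = L_r + L_T L_{V,h+1}$.

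To extend the Lipschitz property from $Q_h^\star$ to $V_h^\star$, I would use $V_h^\star(x) = \max_{a \in \A} Q_h^\star(x,a)$ together with the standard fact that the pointwise supremum of a uniformly Lipschitz family preserves the Lipschitz constant: for any $x, x' \in \S$,
\[
|V_h^\star(x) - V_h^\star(x')| \leq \sup_{a \in \A} |Q_h^\star(x,a) - Q_h^\star(x',a)| \leq L_{V,h}\, \D_\S(x,x'),
\]
where the last step uses that the product metric satisfies $\D((x,a),(x',a)) \leq \D_\S(x,x')$. Unrolling the recursion $L_{V,h} = L_r + L_T L_{V,h+1}$ with terminal condition $L_{V,H+1} = 0$ gives the closed form $L_{V,h} = \sum_{i=0}^{H-h} L_r L_T^i$, as claimed.

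There is no real obstacle here; the only subtlety is ensuring that the induction is set up to track both $Q_h^\star$ and $V_h^\star$ simultaneously (since the $Q$-step requires Lipschitzness of $V_{h+1}^\star$, and the $V$-step requires Lipschitzness of $Q_h^\star$), and making sure the Wasserstein duality argument is invoked with the correct Lipschitz constant at each level. The existence of the optimal deterministic policy $\pi^\star$ is invoked only implicitly via the Bellman optimality equation, which was already stated in the preliminaries.
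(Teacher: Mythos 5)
Your proposal is correct and follows essentially the same route as the paper: backward induction on $h$, splitting the Bellman equation into a reward term bounded by $L_r$ and a transition term bounded via the Kantorovich--Rubinstein dual form of the Wasserstein distance (the paper invokes this same duality, cited from Gibbs--Su, after rescaling $V_{h+1}^\star$ to be $1$-Lipschitz), and then passing from $Q_h^\star$ to $V_h^\star$ by the fact that a supremum over a uniformly Lipschitz family preserves the Lipschitz constant. No substantive differences.
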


\noindent We begin with the following lemma.
\begin{lemma}
\label{lemma:Lipschitz}
Suppose that $f : \S \times \A \rightarrow \mathbb{R}$ is $L$ Lipschitz and uniformly bounded.  Then $g(x) = \sup_{a \in \A} f(x,a)$ is also $L$ Lipschitz.
\end{lemma}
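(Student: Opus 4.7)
The plan is to give the standard $\varepsilon$-sup / two-sided inequality argument. Fix arbitrary $x, x' \in \S$. The goal is to show
\[
|g(x) - g(x')| \leq L\, \D_\S(x, x').
\]

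First I would show the one-sided bound $g(x) \leq g(x') + L\,\D_\S(x,x')$. For any $a \in \A$, the $L$-Lipschitz property of $f$ with respect to the product metric $\D$ gives
\[
f(x,a) \leq f(x',a) + L\,\D((x,a),(x',a)) \leq f(x',a) + L\,\D_\S(x,x'),
\]
where the second inequality uses the sub-additivity of the product metric assumed in \cref{sec:assumptions} together with $\D_\A(a,a) = 0$. Taking the supremum over $a \in \A$ on both sides yields $g(x) \leq g(x') + L\,\D_\S(x,x')$. (If the supremum is not attained, one can alternatively fix $\varepsilon > 0$, choose $a_\varepsilon$ with $f(x,a_\varepsilon) \geq g(x) - \varepsilon$, and let $\varepsilon \downarrow 0$; uniform boundedness of $f$ ensures $g$ is finite so this is harmless.)

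By the symmetric argument (swapping the roles of $x$ and $x'$) we also get $g(x') \leq g(x) + L\,\D_\S(x,x')$, and combining the two inequalities gives the Lipschitz bound. The main subtlety — really the only thing to be careful about — is ensuring the step $\D((x,a),(x',a)) \leq \D_\S(x,x')$ is justified from the product-metric assumption in the preliminaries; everything else is routine. Uniform boundedness is used only to rule out $g(x) = +\infty$, so that the subtraction $g(x) - g(x')$ is well-defined.
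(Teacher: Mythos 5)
Your proposal is correct and follows essentially the same route as the paper's proof: the one-sided inequality $f(x,a) \leq f(x',a) + L\,\D_\S(x,x')$ obtained from the product-metric bound $\D((x,a),(x',a)) \leq \D_\S(x,x')$, followed by taking the supremum over $a$ and swapping the roles of $x$ and $x'$. The extra remarks about the $\varepsilon$-sup argument and finiteness of $g$ are harmless refinements of the same idea.
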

\begin{proof}
Fix any $x_1$ and $x_2 \in \S$.  First notice that $|f(x_1, a) - f(x_2, a)| \leq L \D((x_1, a), (x_2,a)) \leq L \D_\S(x_1, x_2)$ by choice of product metric.

Thus, for any $a \in \A$ we have that $$f(x_1, a) \leq f(x_2, a) + L \D_\S(x_1, x_2) \leq g(x_2) + L \D_\S(x_1, x_2).$$  However, as this is true for any $a \in \A$ we see that $g(x_1) \leq g(x_2) + L \D_\S(x_1, x_2)$.  Swapping the role of $x_1$ and $x_2$ in the inequality shows $|g(x_1) - g(x_2)| \leq L \D_\S(x_1, x_2)$ as needed.
\end{proof}

\begin{rproof}{\cref{lem:relation_lipschitz}}
We show this by induction on $h$.  For the base case when $h = H+1$ then $V_{H+1}^\star = Q_{H+1}^\star = 0$ and so the result trivially follows.

Similarly when $h = H$ then by Equation~\ref{eqn:bellman_equation} we have that $Q_H^\star(x,a) = r_H(x,a)$.  Thus, $|Q_H^\star(x,a) - Q_H^\star(x',a')| = |r_H(x,a) - r_H(x',a')| \leq L_r \D((x,a),(x',a'))$ by assumption.  Moreover, $V_H^\star(x) = \max_{a \in \A} Q_H^\star(x,a)$ is $L_r$ Lipschitz by Equation~\ref{eqn:bellman_equation} and Lemma~\ref{lemma:Lipschitz}.

For the step case we assume that $Q_{h+1}^\star$ and $V_{h+1}^\star$ are both $\sum_{i=0}^{H-h-1} L_r L_T^{i}$ Lipschitz and show the result for $Q_h^\star$ and $V_h^\star$.  Indeed,
\begin{align*}
    & |Q_h^\star(x,a) - Q_h^\star(x', a')| = |r_h(x,a) - r_h(x', a') + \Exp{V_{h+1}^\star(Y) \mid x,a} - \Exp{V_{h+1}^\star(Y) \mid x' a'}| \\
	& \leq |r_h(x,a) - r_h(x', a')| + |\Exp{V_{h+1}^\star(Y) \mid x,a} - \Exp{V_{h+1}^\star(Y) \mid x' a'}| \\
	& \leq L_r \D((x,a),(x',a')) + \left|\int V_{h+1}^\star(\cdot) \, dT_h(\cdot \mid x,a) - \int V_{h+1}^\star(\cdot) \, dTr_h(\cdot \mid x', a')\right|
\end{align*}
Now denoting by $K = \sum_{i=0}^{H-h-1} L_1 L_2^{i}$ by the induction hypothesis and the properties of Wasserstein metric (Equation 3 from \cite{gibbs2002choosing}) we have that
\begin{align*}
|Q_h^\star(x,a) - Q_h^\star(x', a')| & \leq L_1 \D((x,a),(x',a')) \\
& \qquad + K \left|\int \frac{1}{K} V_{h+1}^\star(\cdot) \, d T_h(\cdot \mid x,a) - \int \frac{1}{K} V_{h+1}^\star(\cdot) \, dT_h(\cdot \mid x', a')\right| \\
	& \leq L_r \D((x,a),(x',a')) + K d_W(T_h(\cdot \mid x, a), T_h(\cdot \mid x', a')) \\
	& \leq L_r \D((x,a),(x',a')) + K L_T \D((x,a), (x',a')) \\
	& = (L_r + K L_T)\D((x,a),(x',a')).
\end{align*}
Noting that by definition of $K$ we have $L_r + K L_T = \sum_{i=0}^{H-h-1} L_r L_T^{i}$.  To show that $V_h^\star$ is also Lipschitz we simply use Lemma~\ref{lemma:Lipschitz}. 
\end{rproof}

\subsection{Zooming Dimension Examples}

\begin{lemma}[Example Bounds on Zooming Dimension, \cref{lem:zooming_dim_value}]
The following examples show improved scaling of the zooming dimension over the ambient dimension.
\begin{itemize}
    \item {Linear $Q_h^\star$:} Suppose that $Q_h^\star(x,a) = \theta^\top (x,a)$ for some vector $\theta \in \mathbb{R}^{d_\S + d_\A}$ with $\S \subset \mathbb{R}^{d_\S}$ and $\A \subset \mathbb{R}^{d_\A}$ under any $\ell_p$ norm.  Then $z_h \leq d_\S + d_\A - \norm{\theta_{\A}}_0$.  %In this setting the optimal action corresponds to the all $1$'s vector, independent of the state.  Using that the condition defining $Z_h^r$ is independent of the states and coordinates of $\A$ such that $\theta_a = 0$ the zooming dimension is $d_\S + d_\A - \norm{\theta_{\A}}_{0}$ while the ambient dimension is $d_\S + d_\A$.
    \item {Low-Dimensional Optimality:} Suppose that there exists a set $Y \subset \A$ which contains all optimal or near-optimal actions for every state.  Then $z_h \leq d_\S + d_Y$.  
    \item {Strongly Concave:} Suppose that the metric space is $\S = [0,1]^{d_\S}$ and $\A = [0,1]^{d_\A}$ under any $\ell_p$ metric.  If $Q_h^\star(x,a)$ is $C^2$ smooth, and for all $x \in \S$ we have that $Q_h^\star(x, \cdot)$ has a unique maxima and is strongly concave in a neighborhood around the maxima, then $z_h \leq d_\S + \frac{d_\A}{2}.$
\end{itemize}
\end{lemma}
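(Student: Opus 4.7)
The three parts share a common strategy: in each case I would produce an explicit upper bound on the packing number $N_r(Z_h^r)$ of the near-optimal set $Z_h^r = \{(x,a) : \gap_h(x,a) \leq C_L(H+1)r\}$ and read off the zooming dimension from $N_r(Z_h^r) \lesssim r^{-d}$. Throughout, I will use sub-multiplicativity of packing numbers on product spaces (under the product metric) and only need to identify where the gap constraint genuinely restricts the coordinates.

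For the linear case, I would compute $\gap_h(x,a) = \theta_\A^\top (a^\star - a)$ where $a^\star \in \arg\max_{a \in \A}\theta_\A^\top a$, since $V_h^\star(x) = \theta_\S^\top x + \max_{a}\theta_\A^\top a$ is achieved at the same $a^\star$ independent of $x$. Let $k = \|\theta_\A\|_0$ and, WLOG, assume the $k$ nonzero coordinates are the first ones with $\theta_{\A,i} > 0$ (other signs and coordinates handled by relabeling). Then $a^\star_i = 1$ for $i\leq k$, and the constraint $\gap_h(x,a)\leq C r$ reduces to $\sum_{i\leq k}\theta_{\A,i}(1-a_i)\leq Cr$, which (since each summand is non-negative) forces each $a_i\in [1 - Cr/\theta_{\A,i},\,1]$ for $i\leq k$. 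These are intervals of length $O(r)$, so their joint $r$-packing number is $O(1)$. The remaining $d_\A - k$ coordinates of $a$ and the full state $x$ are unconstrained, giving $N_r(Z_h^r) \leq O(1)\cdot r^{-(d_\A-k)}\cdot r^{-d_\S}$, hence $z_h \leq d_\S + d_\A - \|\theta_\A\|_0$. The second case is immediate: since $Z_h^r\subset \S\times Y$ by hypothesis, sub-multiplicativity gives $N_r(Z_h^r)\leq N_r(\S)\,N_r(Y) \lesssim r^{-(d_\S+d_Y)}$.

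The strongly concave case is the main obstacle. For each $x$, let $a^\star(x) \in \arg\max_a Q_h^\star(x,a)$, which is unique by assumption. Strong concavity in a neighborhood around $a^\star(x)$ gives a uniform $\mu > 0$ such that locally $\gap_h(x,a)\geq \tfrac{\mu}{2}\|a - a^\star(x)\|^2$, so locally the constraint $\gap_h(x,a)\leq Cr$ forces $\|a - a^\star(x)\|\leq c'\sqrt{r}$. To rule out far-away near-optimal points, I would use compactness of $\S\times\A$ combined with $C^2$ smoothness and uniqueness of the maximizer: the continuous function $x\mapsto \min\{\gap_h(x,a) : \|a - a^\star(x)\|\geq \rho\}$ is strictly positive for each fixed $\rho>0$ and attains its minimum, so for $r$ smaller than an absolute threshold, $Z_h^r$ is contained entirely in the local quadratic regime $\{(x,a):\|a - a^\star(x)\|\leq c'\sqrt{r}\}$ (the finite-$r$ case is absorbed into the constant $c_h$ in \cref{def:zooming}).

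Finally, I build a joint $r$-cover. Take an $r$-net $N_\S$ of $\S$ of size $\lesssim r^{-d_\S}$. For each $x_0\in N_\S$, an $r$-net of the ball $B(a^\star(x_0),\, c'\sqrt{r}+L_{a^\star}r)$ has size $\lesssim r^{-d_\A/2}$, where $L_{a^\star}$ is a Lipschitz constant for $a^\star(\cdot)$ obtained from the implicit function theorem applied to $\nabla_a Q_h^\star(x,a^\star(x))=0$ using the $C^2$ smoothness and the strong-concavity Hessian lower bound. Any $(x,a)\in Z_h^r$ satisfies $\|a - a^\star(x_0)\|\leq c'\sqrt{r} + L_{a^\star}\D_\S(x,x_0)$ for the nearest $x_0\in N_\S$, so it lies within $r$ of a point in the product net. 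Taking packings instead of covers (they differ by constants) gives $N_r(Z_h^r)\lesssim r^{-(d_\S + d_\A/2)}$ and hence $z_h\leq d_\S + d_\A/2$. The hard part here is the separation step ruling out distant near-maximizers uniformly in $x$; everything downstream is a straightforward quadratic-approximation plus product-net argument.
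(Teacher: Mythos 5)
Your proposal is correct and follows essentially the same route as the paper's proof: exploit the state-independent optimizer and only cover the genuinely constrained action coordinates in the linear case, use $Z_h^r \subset \S \times Y$ with sub-multiplicativity in the low-dimensional case, and combine the quadratic growth $\gap_h(x,a) \gtrsim \mu \norm{a - \pi_h^\star(x)}^2$ with Lipschitz continuity of $x \mapsto \pi_h^\star(x)$ to build a product net of size $O\bigl(r^{-(d_\S + d_\A/2)}\bigr)$ in the strongly concave case. Your explicit compactness argument ruling out far-away near-maximizers and the implicit-function-theorem justification of the Lipschitz argmax are points the paper leaves implicit (absorbed into the strong-concavity threshold and the constant $c_h$), so they refine rather than depart from the paper's argument.
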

\begin{rproof}{\cref{lem:zooming_dim_value}}
For all of the following derivations we let $c$ be a (possibly changing) constant from line-to-line independent of the value of $r$.  We also use the constant-order equivalence of the packing and covering numbers.

\noindent \textbf{Linear $Q_h^\star$:}  First suppose that $Q_h^\star(x,a) = \theta^\top (x,a)$ for some $\theta \in \mathbb{R}^{d_\S + d_\A}$ with $\S \subset \mathbb{R}^{d_\S}$ and $\A \subset \mathbb{R}^{d_\A}$.  First notice that the optimal action satisfies $\pi_h^\star(x) = \argmax_{a \in \A} Q_h^\star(x,a) = \argmax_{a \in \A} \theta^\top (x,a) = \argmax_{a \in \A} \theta_\A^\top a$ where $\theta_\A$ is the indices of $\theta$ corresponding to the action space.  Denote this action as $a^\star$, and notice it is independent of $x$.  Thus we have that:
\begin{align*}
    Z_h^r & = \{ (x,a) \in \S \times \A : \gap_h(x,a) \leq cr\} \\
    & = \{ (x,a) \in \S \times \A : \theta_\A^\top(a^\star - a) \leq c r\}.
\end{align*}
It is easy to see that the definition of $Z_h^r$ is independent of the state $x$ and the indices of $a$ such that $\theta_\A = 0$.  We construct an $r$-covering of $Z_h^r$ as follows.  First let $S$ be an $r-$covering of $\S$, and $A_1$ an $r-$covering of $\A_{J}$ where $J$ denotes the indices of the action space such that $\theta_\A$ is zero.  Consider the final set $C = \{(x,a) : x \in S, a \in \A_{J}\}$.  We have that $|C| \leq cr^{-(d_\S + d_\A - \norm{\theta_\A}_0)}$.  To see that $z_h \leq d_\S + d_\A - \norm{\theta_\A}_0$ it suffices now to show that $C$ covers $Z_h^r$.  Indeed, given any $(x,a) \in Z_h^r$ let $x_i$ be the point in $S$ closest to $x$ and $a_j$ the point in $\A_{J}$ closest to $a_J$ (i.e. the sub-vector corresponding to indexes where $\theta_\A = 0$).  Consider the final tuple $(x_i, (a_j, a^\star_{\neg J}))$ where we abuse notation and let $(a_j, a^\star_{\neg J})$ be the vector $a$ whose components in $J$ are equal to $a_j$ and components not in $J$ are equal to $a^\star$.  It is easy to see that:
\begin{align*}
    \D((x,a), (x_i, (a_j, a^\star_{\neg J}))) \leq \D_\S(x_i, x) + \D_\A(a, (a_j, a^\star_{\neg J})) \leq 3r.
\end{align*}
Indeed, the first term is bounded by $r$ since $S$ is an $r$-covering of $\S$.  The second term is bounded by $2r$ since the distance from $a_J$ to $a_j$ is bounded by $r$ by construction of $\A_{J}$.  For the last component we use that $(x,a) \in Z_h^r$ so that $r \geq \gap_h(x,a) = \theta_\A^\top(a^\star_{\neg J} - a_{\neg J}) \geq \norm{\theta_{\A}}_{min} \norm{a^\star_{\neg J} - a_{\neg J}}$.

\noindent \textbf{Low-Dimensional Optimality:} Suppose that there exists a set $Y \subset \A$ which contains all of the optimal or near-optimal actions for every state.  Let $d_Y$ be the covering dimension of $Y$.  By assumption we know that there exists an $r_{thresh}$ such that if $(x,a) \in Z_h^r$ for $r \leq r_{thresh}$ then $a \in Y$.  By letting the constant in the zooming dimension calculation be $r_{thresh}^{d_Y}$ we see that $N_r(Z_h^r) \leq r_{thresh}^{d_Y} r^{-(d_\S + d_Y)}$ for all $r > 0$ as required.

\noindent \textbf{Strongly Concave:} Suppose that the metric space is $\S = [0,1]^{d_\S}$ and $\A = [0,1]^{d_\A}$ under any $\ell_p$ metric,  $Q_h^\star(x,a)$ is $C^2$ smooth, $Q_h^\star(x, \cdot)$ has a unique maxima, and is strongly concave in a neighborhood around the maxima.  First notice that for any $(x,a) \in Z_h^r$ for $r$ less than the threshold for $Q_h^\star$ being strongly concave that:
\[
r \geq \gap_h(x,a) = Q_h^\star(x, \pi_h^\star(x)) - Q_h^\star(x,a) \geq \frac{\mu}{2} \norm{\pi^\star(x) - a}_2^2
\]
where $\mu$ is the strong concavitiy parameter.  We construct an $r$ covering for $Z_h^r$ as follows.  Let $S$ be an $r$ covering of $\S$.  For every $x \in S$ denote by $A_x$ as an $r$ covering of the $\ell_2$ ball centered at $\pi_h^\star(x)$ with radius $\sqrt{r}$.  Consider the final set $C = \{(x,a) \mid x \in S, a \in A_x\}$.  It is easy to see that $|C| \leq cr^{-(d_\S + d_\A / 2)}$ since the size of the $r$ covering of the $\ell_2$ ball with radius $\sqrt{r}$ scales as $r^{-d_\A / 2}$.  To show that $z_h \leq d_\S + d_\A / 2$ it suffices to show that $C$ covers the set $Z_h^r$.

Let $(x,a) \in Z_h^r$ be arbitrary.  Set $x_i \in S$ to be the point closest to $x$.  Set $a_j \in A_{x_i}$ the point closest to $\pi_h^\star(x_i)$ in $A_{x_i}$.  Then we have that:
\begin{align*}
    \D((x,a), (x_i, a_j)) & \leq \D_\S(x, x_i) + \D_\A(a, a_j) \\
    & \leq r + \D_\A(a, \pi_h^\star(x)) + \D_\A(\pi_h^\star(x), \pi_h^\star(x_i)) + \D_\A(\pi_h^\star(x_i), a_j) \\
    & \leq 4r.
\end{align*}
The first inequality uses that $(x,a) \in Z_h^r$ so that $a$ is $r$ close to $\pi_h^\star(x)$ in the neighborhood.  The second inequality uses the Lipschitz continuity of the argmax of a $C^2$ strongly concave function.  The last uses that $a_j$ is $r$ close to $\pi_h^\star(x_i)$.

\end{rproof}

\begin{lemma}[Lower Bound on Zooming Dimension, \cref{lem:zoom_d_s}]
For any $h$ we have that $z_h \geq d_\S - 1$.
\end{lemma}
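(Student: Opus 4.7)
The plan is to exploit the fact that the graph of the optimal policy is always contained in the near-optimal set $Z_h^r$, so the packing number of $Z_h^r$ inherits a lower bound from the packing number of $\S$ itself. Concretely, fix any $r>0$ and note that for every $x\in\S$ we have $\gap_h(x,\pi_h^\star(x)) = V_h^\star(x)-Q_h^\star(x,\pi_h^\star(x))=0 \leq C_L(H+1)r$, so the graph $\Gamma_h := \{(x,\pi_h^\star(x)) : x\in\S\}$ lies inside $Z_h^r$. In particular $N_r(Z_h^r)\geq N_r(\Gamma_h)$ for every $r$.

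Next I would show $N_r(\Gamma_h)\geq N_r(\S)$. The key observation is that under the product metric on $\S\times\A$, the projection $(x,a)\mapsto x$ is $1$-Lipschitz, i.e.\ $\D_\S(x,x')\leq \D((x,a),(x',a'))$. Given any $r$-packing $\{x_1,\dots,x_N\}$ of $\S$ (so $\D_\S(x_i,x_j)\geq r$ for all $i\neq j$), the lifted collection $\{(x_i,\pi_h^\star(x_i))\}_{i=1}^N\subset \Gamma_h$ satisfies $\D((x_i,\pi_h^\star(x_i)),(x_j,\pi_h^\star(x_j)))\geq \D_\S(x_i,x_j)\geq r$, and is therefore an $r$-packing of $\Gamma_h$. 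Chaining the two inequalities gives $N_r(Z_h^r)\geq N_r(\S)$ for every $r>0$.

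The final step is to convert this packing-number comparison into the claimed lower bound on the zooming dimension. By definition of $d_\S$ as the minimal exponent for which there exists $c$ with $N_r(\S)\leq c\, r^{-d_\S}$ for all $r$, for any exponent $d< d_\S$ and any constant $c>0$ there must exist some scale $r$ with $N_r(\S)>c\,r^{-d}$. Combined with $N_r(Z_h^r)\geq N_r(\S)$, no constant $c_h$ can satisfy $N_r(Z_h^r)\leq c_h r^{-d}$ uniformly in $r$, so $d$ is not in the set defining $z_h$. Taking the infimum gives $z_h\geq d$ for every $d<d_\S$, which in particular yields $z_h\geq d_\S-1$.

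The only subtlety I anticipate is the projection-is-$1$-Lipschitz step: one must justify $\D((x,a),(x',a'))\geq \D_\S(x,x')$ under the paper's ``sub-additive'' convention for the product metric. For the canonical choices ($\D=\max(\D_\S,\D_\A)$ or $\D=\D_\S+\D_\A$) this is immediate, but in full generality one may need to invoke it as part of the metric-structure assumption. Everything else is a short chain of elementary inequalities, so this metric compatibility point is really the only nontrivial ingredient beyond the observation $\gap_h(x,\pi_h^\star(x))=0$.
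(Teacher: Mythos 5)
Your proposal is correct and follows essentially the same route as the paper: the graph $\{(x,\pi_h^\star(x)) : x \in \S\}$ lies in $Z_h^r$ for every $r$, its packing numbers dominate those of $\S$, and the dimension definitions then give $z_h \geq d_\S - 1$. The only differences are cosmetic — you lift an $r$-packing of $\S$ into the graph (correctly flagging the needed compatibility $\D((x,a),(x',a')) \geq \D_\S(x,x')$, which the paper uses implicitly when relating packings of $\S^{opt}$ and $\S$), and your contrapositive endgame yields $z_h \geq d$ for every $d < d_\S$, slightly sharper than the paper's pointwise bound $N_r(\S) \geq c r^{-(d_\S-1)}$.
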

\begin{rproof}{\cref{lem:zoom_d_s}}
Consider the set $\S^{opt} = \{(x, \pi_h^\star(x)) \mid x \in \S\}$.  By definition of the $\gap$ we have that $\S^{opt} \subset Z_h^r$ for any $r > 0$.  Trivially then $N_r(Z_h^r) \geq N_r(\S^{opt})$.  However, any packing of $\S^{opt}$ can be converted into a packing of $\S$ by simply removing the component corresponding to the action.  Thus, $N_r(\S^{opt}) \geq N_r(\S)$.  Since the covering dimension of the state space is $d_\S$ we have $N_r(\S) \geq cr^{-(d_\S - 1)}$ for any $r > 0$.  Combining all of these inequalities we see that $N_r(Z_h^r) \geq cr^{-(d_\S - 1)}$.  However, using the definition of the zooming dimension we also have that $N_r(Z_h^r) \leq cr^{-z_h}$.  These two inequalities implies that $z_h \geq d_\S - 1$ as required. \Halmos
\end{rproof}

\subsection{Lower Bound}

\begin{theorem}[Regret Lower Bound, \cref{thm:lower_bound}]
Let $(\S, \D_\S)$ and $(A, \D_\A)$ be arbitrary metric spaces, and $\D$ be the product metric.  Fix an arbitrary time horizon $K$ and number of steps $H$.  There exists a distribution $\I$ over problem instances on $(\S \times \A, \D)$ such that for any algorithm: \[
\E_{\I}[\regret(K)] \geq \Omega\left(\sum_{h=1}^H K^{\frac{z_h + 1}{z_h + 2}} / \log(K)\right)
\]
\end{theorem}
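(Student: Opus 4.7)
The plan is to reduce the RL lower bound to $H$ independent copies of the contextual bandit lower bound from \cite{slivkins_2014}, by constructing MDPs in which transitions are completely decoupled from actions.

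First I would build the family of hard instances as follows. For each $h \in [H]$, let $\I_h$ be an instance of the hard contextual bandit family from Theorem 10 of \cite{slivkins_2014} over $(\S \times \A, \D)$ with zooming dimension $z_h$: roughly, the mean reward is $\tfrac{1}{2}$ almost everywhere, with a small ``bump'' of size $\varepsilon_h$ planted on a ball of radius $\varepsilon_h$ whose center is drawn uniformly from an $\varepsilon_h$-packing of $\S \times \A$, where $\varepsilon_h$ is tuned so that the packing number scales like $\varepsilon_h^{-z_h}$. Now stack these: define the MDP with reward $r_h = r^{\I_h}$ at step $h$, and take the transition kernel $T_h(\cdot \mid x,a) \equiv \mu_{h+1}$ to be \emph{independent} of $(x,a)$, where $\mu_{h+1}$ is any fixed distribution on $\S$ (for example, uniform on a maximal packing of $\S$). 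The product distribution $\I = \I_1 \otimes \cdots \otimes \I_H$ is our hard distribution.

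Next I would verify the structural claims. Because transitions are independent of $(x,a)$, the Bellman equations give $Q_h^\star(x,a) = r_h(x,a) + c_h$ for a constant $c_h = \E_{Y \sim \mu_{h+1}}[V_{h+1}^\star(Y)]$, so $\gap_h(x,a) = \max_{a'} r_h(x, a') - r_h(x,a)$ coincides exactly with the \emph{bandit gap} of $\I_h$ at context $x$. Consequently the step-$h$ zooming dimension of the MDP agrees with the bandit zooming dimension of $\I_h$, and both \cref{assumption:Lipschitz_mb} and \cref{assumption:Lipschitz_mf} inherit from the Lipschitzness of the bump reward. Moreover, regret decomposes additively across steps:
\begin{align*}
\regret(K) = \sum_{k=1}^K \bigl(V_1^\star(X_1^k) - V_1^{\pi^k}(X_1^k)\bigr) = \sum_{h=1}^H \sum_{k=1}^K \E\bigl[\gap_h(X_h^k, A_h^k) \,\big|\, X_1^k\bigr],
\end{align*}
by a standard performance-difference identity (using that $T_h$ is action-free so the visitation distribution at step $h$ is determined entirely by $\mu_h$, independent of $\pi^k$).

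Then I would invoke the bandit lower bound step by step. Fix $h$ and condition on the randomness of $\I_{h'}$ for $h' \ne h$ and on the trajectory data outside step $h$. Because rewards at step $h$ depend only on $\I_h$ and actions at step $h$, and because the context $X_h^k$ is sampled from a fixed distribution independent of the algorithm's behavior at step $h$, the induced process at step $h$ is exactly a contextual bandit algorithm running on $\I_h$. Applying the Slivkins lower bound yields
\begin{align*}
\E_{\I_h}\Bigl[ \sum_{k=1}^K \gap_h(X_h^k, A_h^k) \Bigr] \geq \Omega\bigl(K^{(z_h+1)/(z_h+2)} / \log K\bigr).
\end{align*}
Summing over $h$ and taking the outer expectation over $\I$ yields the claim.

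The main obstacle I anticipate is the per-step reduction to a contextual bandit: one has to argue that the algorithm's choice of $A_h^k$, although adapted to the full history including rewards and transitions at other steps, gains no useful information about the needle location in $\I_h$ from data at steps $h' \ne h$. This follows from the product structure of $\I$ and the fact that the contexts $X_h^k$ are drawn from $\mu_h$ independently of $\I_h$, but the argument needs to be made carefully by constructing, for each RL algorithm, an explicit contextual-bandit algorithm for $\I_h$ whose regret upper-bounds the step-$h$ contribution. A minor additional subtlety is ensuring the Lipschitz constants and covering constants in each $\I_h$ are uniformly bounded so that the hidden constants in the $\Omega(\cdot)$ do not degrade with $H$.
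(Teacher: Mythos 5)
Your proposal is correct and follows essentially the same route as the paper: stack $H$ independent copies of the hard contextual bandit instances from \cite{slivkins_2014} into an $H$-step MDP whose transitions are independent of the state-action pair (equal to the context distributions), note that the regret then decomposes additively across steps with the step-$h$ gap and zooming dimension coinciding with those of the $h$-th bandit instance, and apply the bandit lower bound at each step. The extra care you flag about the per-step reduction (no information leakage across steps due to the product structure) is exactly the point the paper treats informally, so nothing essential differs.
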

\begin{rproof}{\cref{thm:lower_bound}}
Theorem 5.1 in \cite{slivkins_2014} shows that there is a collection of problem instances $\I$ when $H = 1$ such that any algorithm has regret $\E_\I[\regret(K)] \geq \Omega(K^{\frac{z_h + 1}{z_h+2}} / \log(K))$.  We can take the set of problem instances $\I$ to create a set of $H$-step MDPs as follows.  First, let $I_1, \ldots, I_H$ be an arbitrary selection of problem instances where each $I = (\mu, T_\S)$ where $\mu$ is the reward distribution and $T_\S$ is the context distribution.  We construct the MDP as follows:
\begin{itemize}
    \item Reward function $R_h = \mu_h$ where $\mu_h$ is the reward distribution for problem instance $I_h$
    \item Transition distribution $T_h(\cdot \mid x,a) = T^h_\S$ where $T^h_\S$ is the context distribution for problem instance $I_h$
\end{itemize}
It is easy to see that this generates an $H$-step MDP where the transition distribution is independent from the state-action pair.  Let $\tilde{I}$ be the set of problem instances constructed in this way.  Note that the regret for any algorithm can be decomposed into the sum of the regret for specific steps $h$ as the transition and rewards are independent of the state-action pairs and across steps $h$.  Moreover, the zooming dimension at step $h$ $z_h$ corresponds to the zooming dimension for the reward function $\mu_h$.  Letting $\regret_h(K)$ be the regret for the step-$h$ process we have
\[
\E_{\tilde{\I}}[\regret(K)] = \sum_{h=1}^H \E_{\I}[\regret_h(K)] \geq \Omega\left(\sum_{h=1}^H K^{\frac{z_h + 1}{z_h + 2}} / \log(K)\right)
\]
as required. \Halmos
\end{rproof}

See \cite{slivkins_2014} for more discussion on the lower bound construction used and a description of the problem instances.  At a high level, each reward function $\mu_h$ can be thought of as a ``needle in a haystack'' where the haystack corresponds to actions with reward $\frac{1}{2}$ and the needle is a specific ball whose reward is slightly higher.

\begin{figure}
\centering     %%% not \center
\subfigure[Laplace $d=1, \alpha=0$]{\label{fig:1a}\includegraphics[width=.4\linewidth]{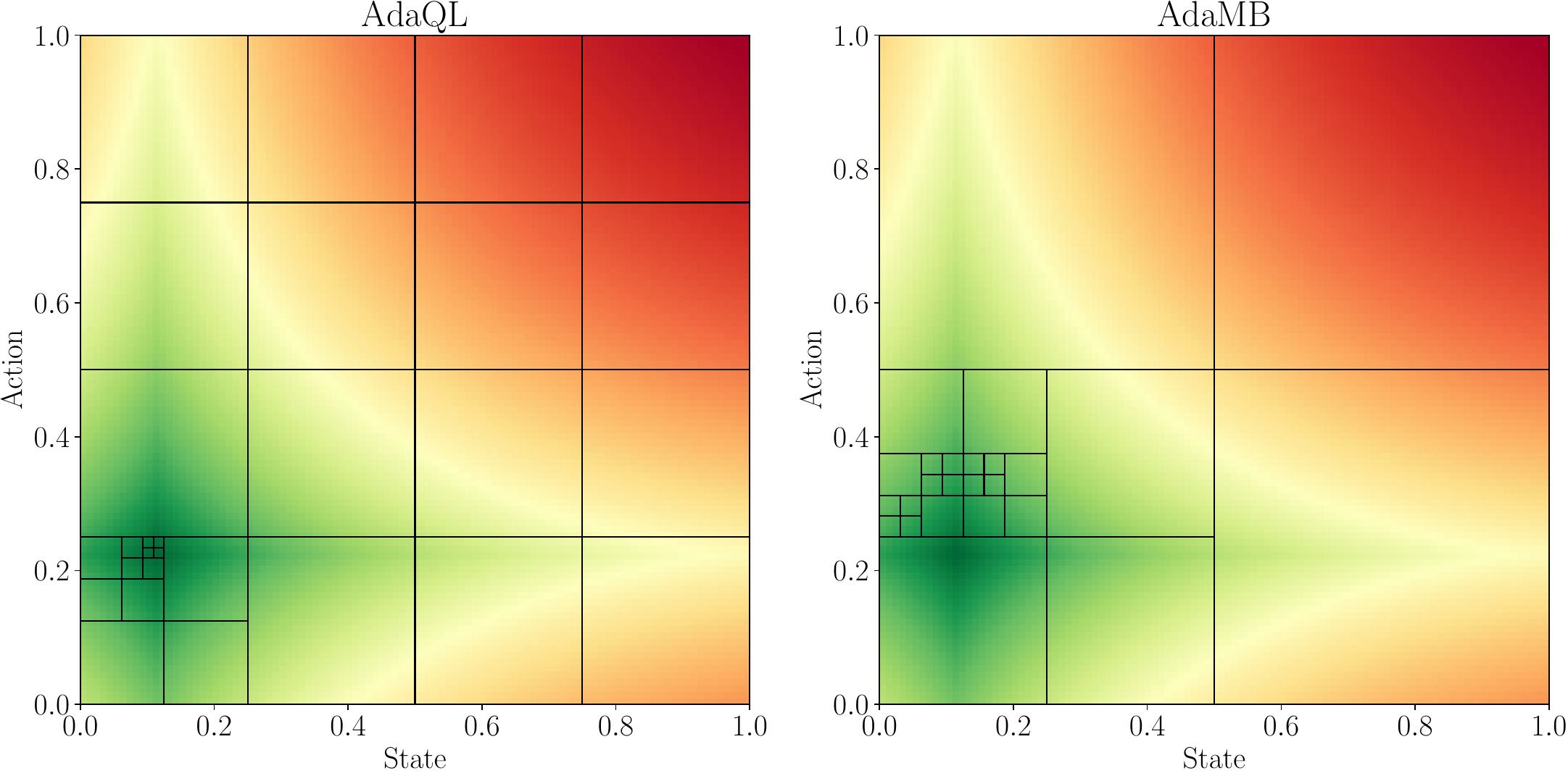}}
\subfigure[Quadratic $d=1, \alpha=0$]{\label{fig:1b}\includegraphics[width=.4\linewidth]{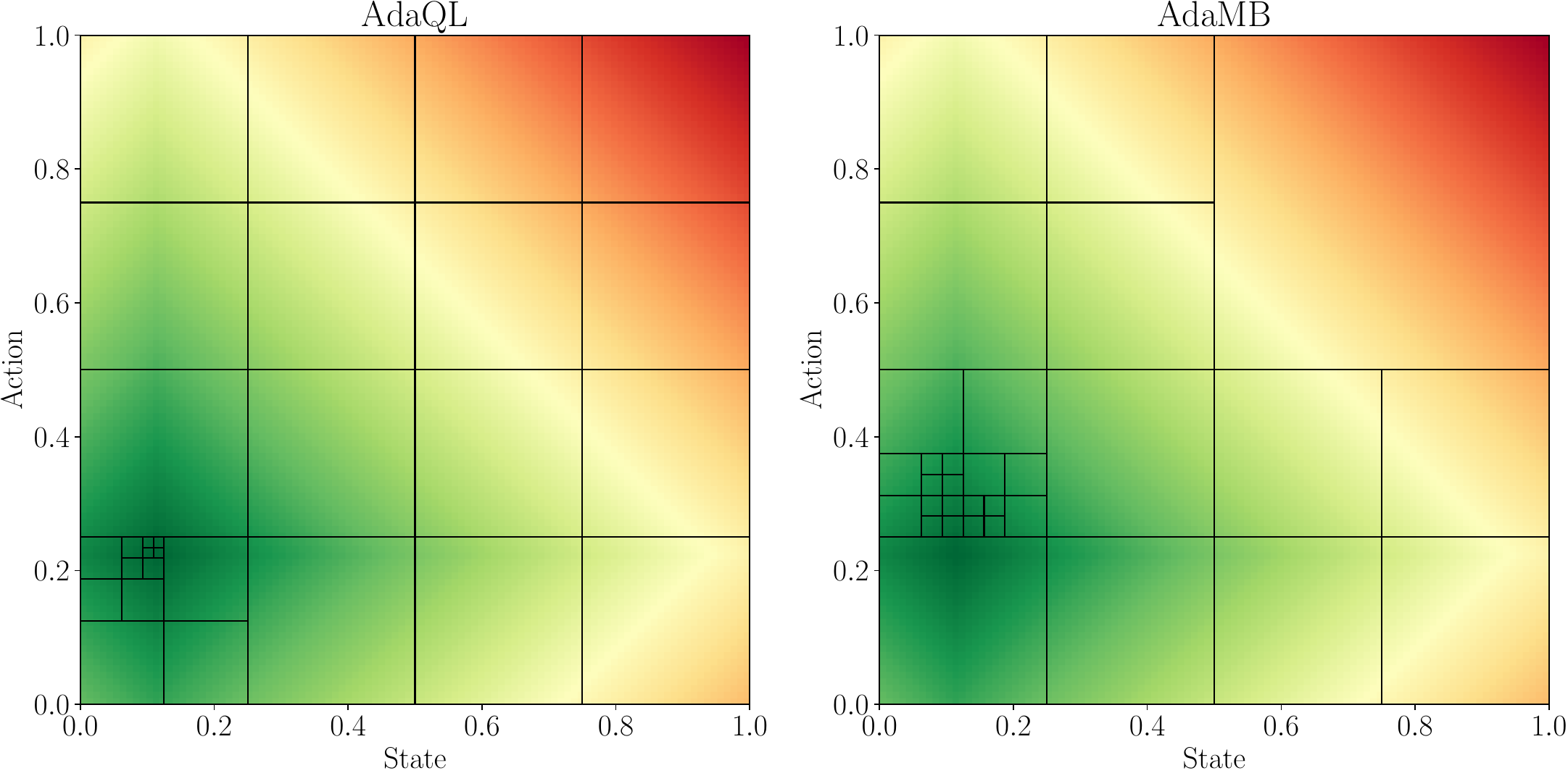}}
\subfigure[Laplace $d=1, \alpha=0.1$]{\label{fig:1c}\includegraphics[width=.4\linewidth]{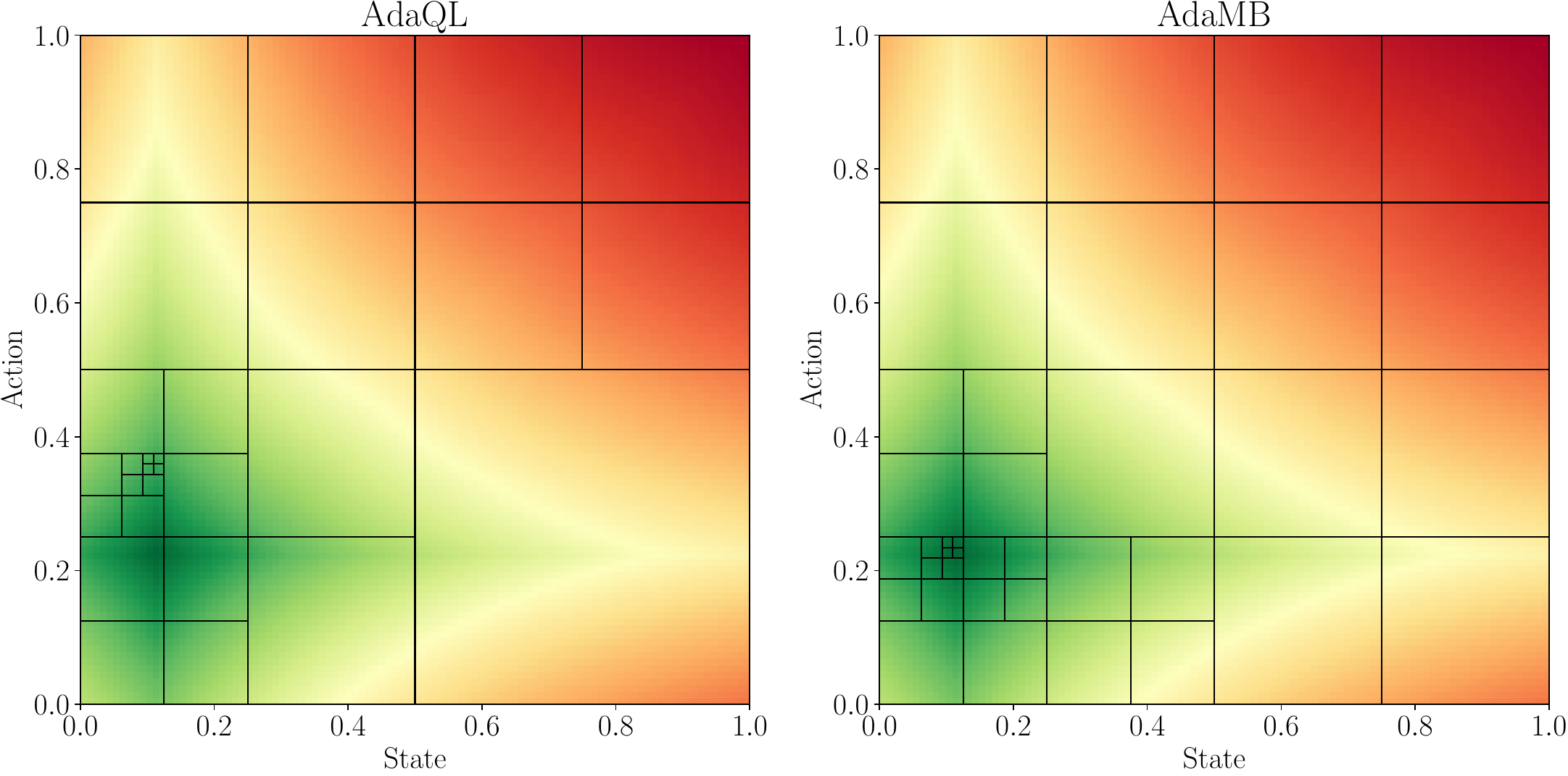}}
\subfigure[Quadratic $d=1, \alpha=0.1$]{\label{fig:1d}\includegraphics[width=.4\linewidth]{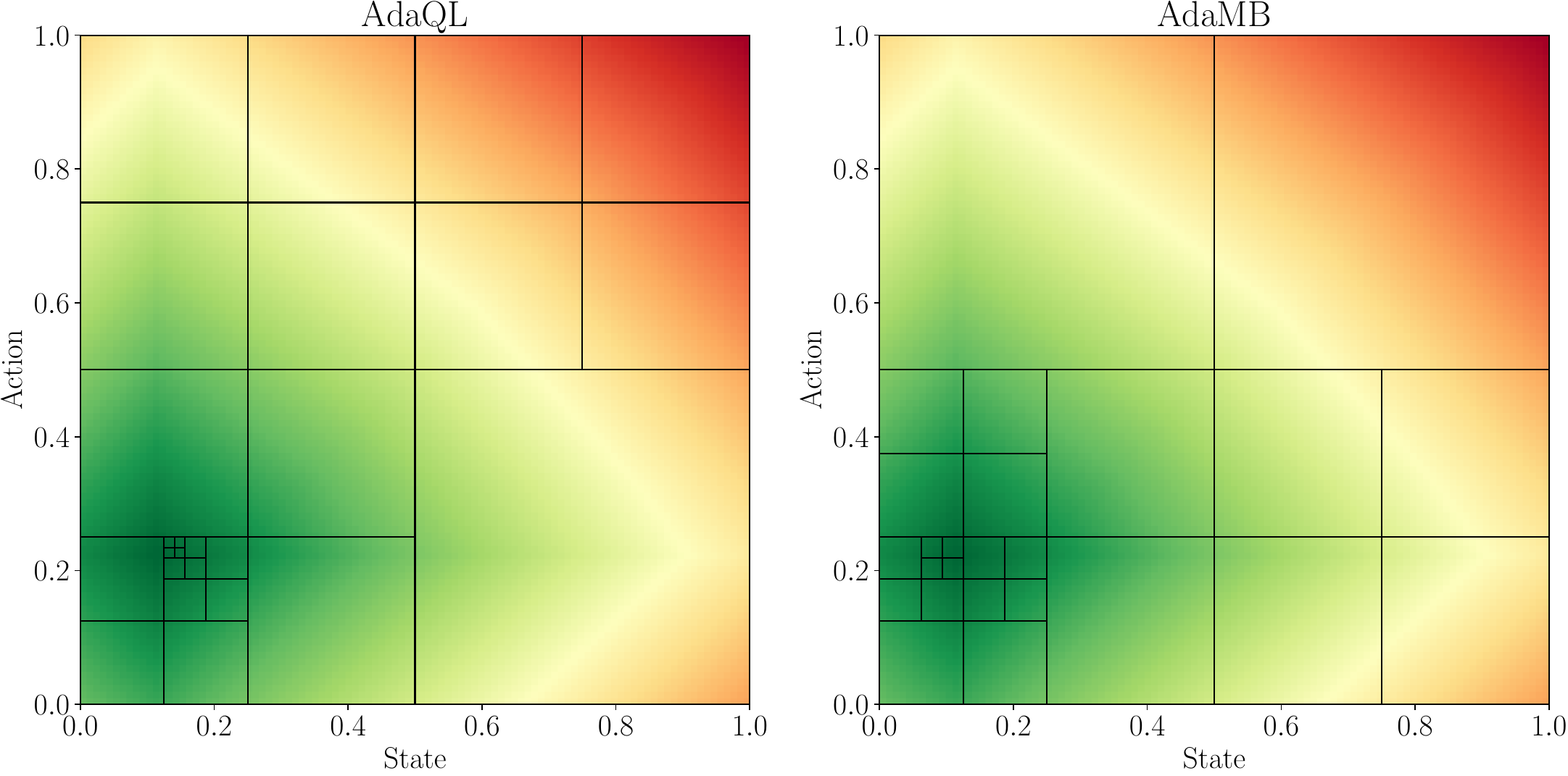}}
\subfigure[Laplace $d=1, \alpha=0.5$]{\label{fig:1e}\includegraphics[width=.4\linewidth]{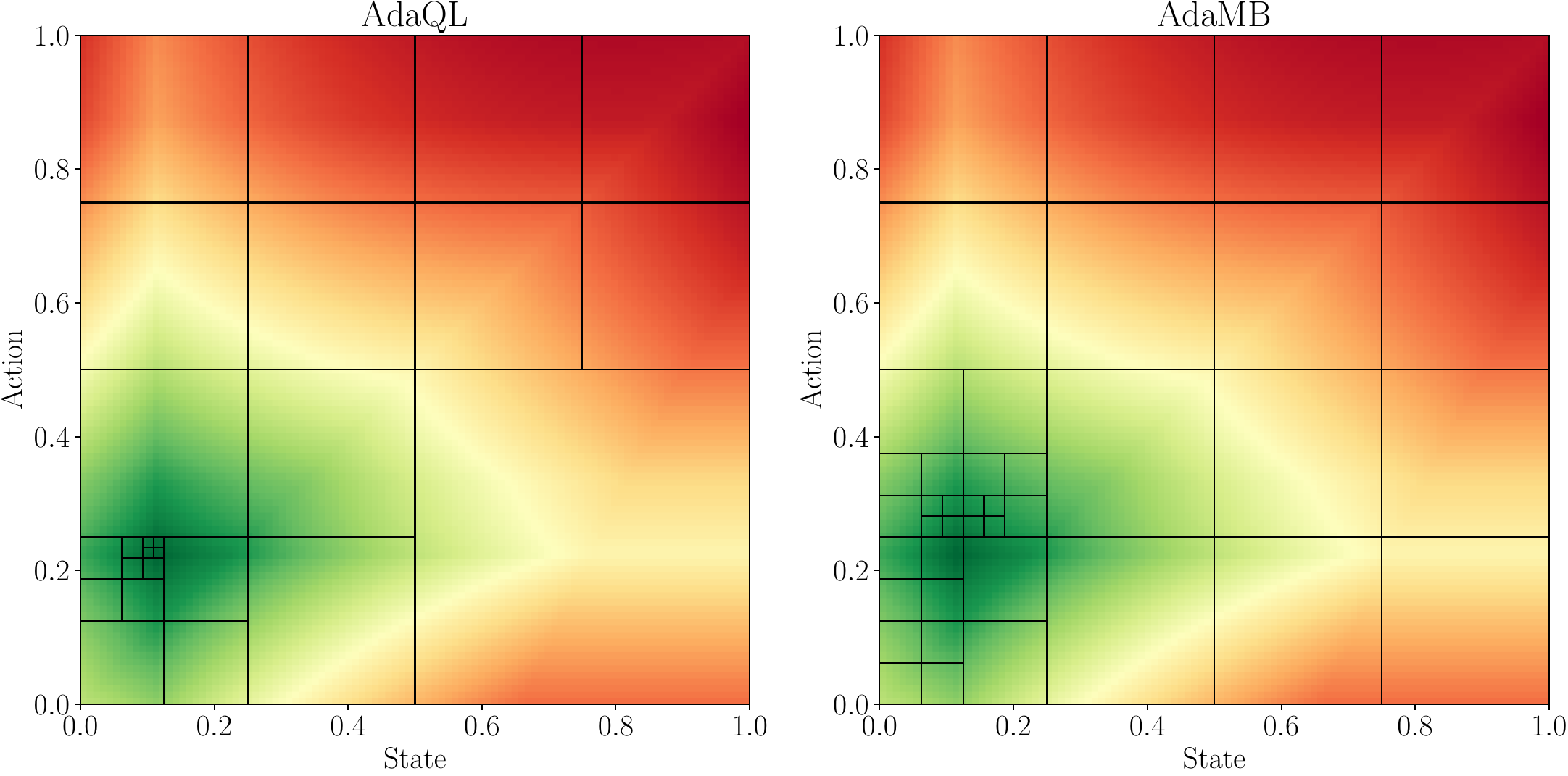}}
\subfigure[Quadratic $d=1, \alpha=0.5$]{\label{fig:1f}\includegraphics[width=.4\linewidth]{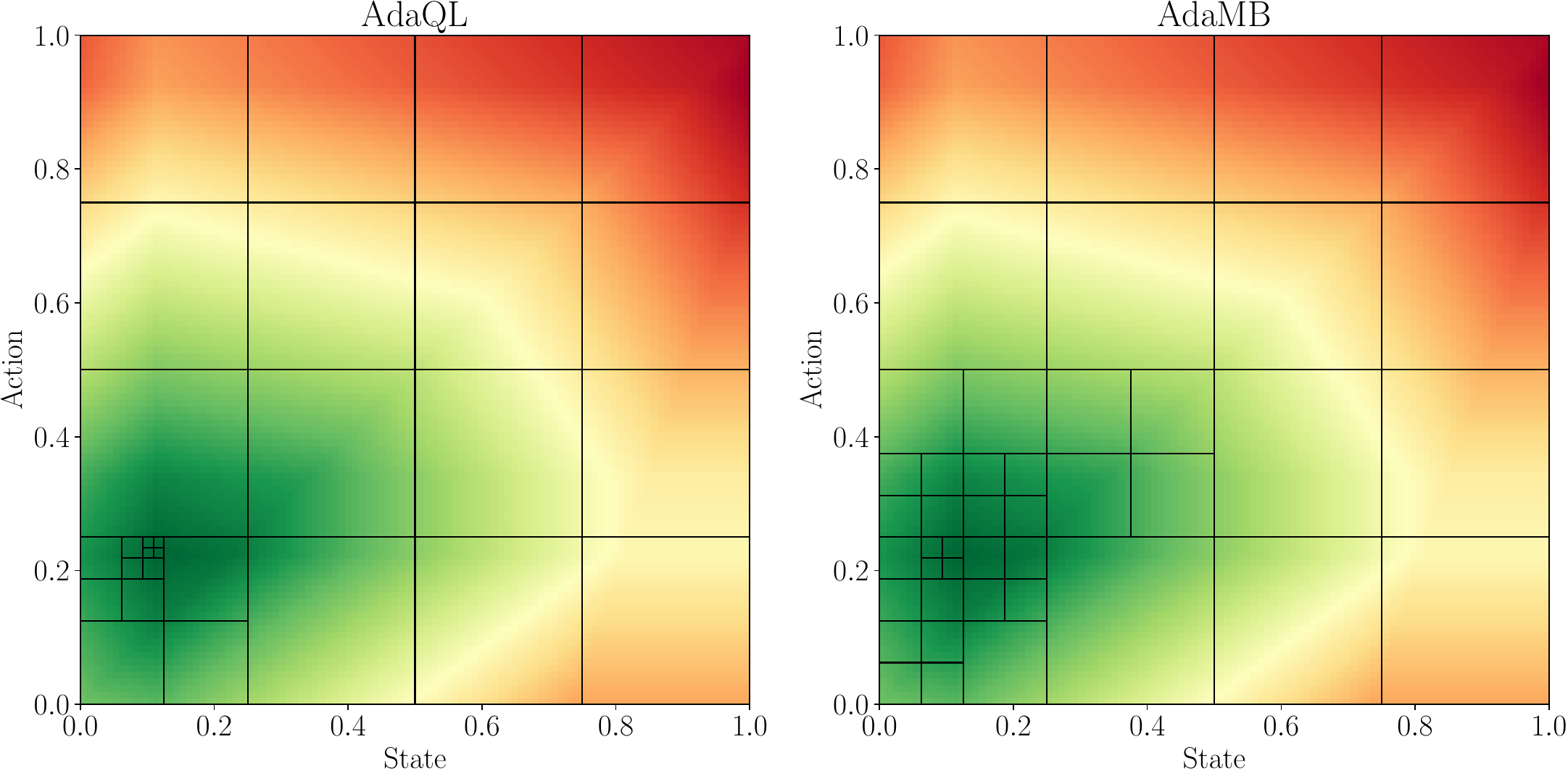}}
\caption{Comparison of the discretization observed between \AdaMB and \AdaQL for the oil environment with $d = 1$ at step $h = 2$.  The underlying colors correspond to the true $Q_2^\star$ function where green corresponds to a higher value.  In all of the results we see that the adaptive discretization algorithms maintain a level of discretization proportional to the underlying $Q_h^\star$ value.  This leads to sample complexity gains as the algorithm quickly learns where the set of near-optimal state action pairs are, and space and time complexity improvements by not maintaining a fine discretization across the entire space.  For the settings when $\alpha = 0$ where we have concrete zooming dimension improvements (as in \cref{fig:1a}) these zooming dimension guarantees justifies the improvements on the adaptive algorithms.}
\label{fig:disc_oil}
\end{figure}

\begin{table}[!tb]
\setlength\tabcolsep{0pt} % let LaTeX compute intercolumn whitespace
\centering
\begin{tabular*}{\columnwidth}{@{\extracolsep{\fill}}lcccc}
\toprule
  Algorithm Type  & Oil ($d=1$) & Oil ($d=2$) & Ambulance ($k=1$) & Ambulance ($k=2$) \\
\midrule
  Model-Based & 42\% & 36\% & 51\% & 16\% \\
  Model-Free & 35\% & 37\% & 52\% & 41\% \\
\bottomrule
\end{tabular*}
\caption{\em Comparison of the size of the uniform discretization to the adaptive discretization algorithms.  Each value corresponds to the average size of the partition for the adaptive algorithm divided by the size of the partition for the uniform algorithm.  Lower values corresponds to larger improvements and a smaller partition.} 
\label{tab:size_partition}
\end{table}

\begin{figure}
\centering     %%% not \center
\subfigure[Beta $k=1, \alpha=0$]{\label{fig:2a}\includegraphics[width=.4\linewidth]{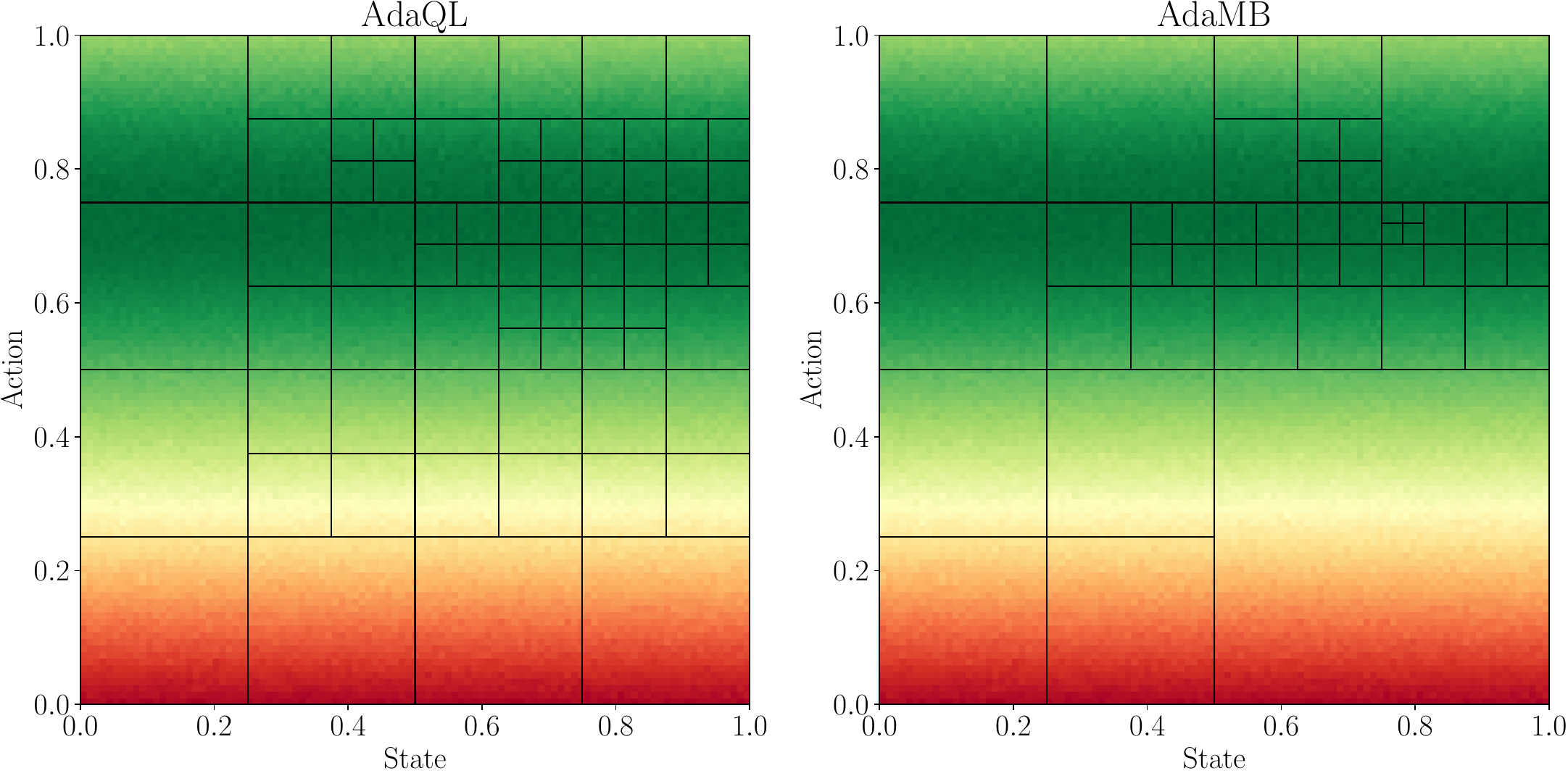}}
\subfigure[Shifting $k=1, \alpha=0$]{\label{fig:2b}\includegraphics[width=.4\linewidth]{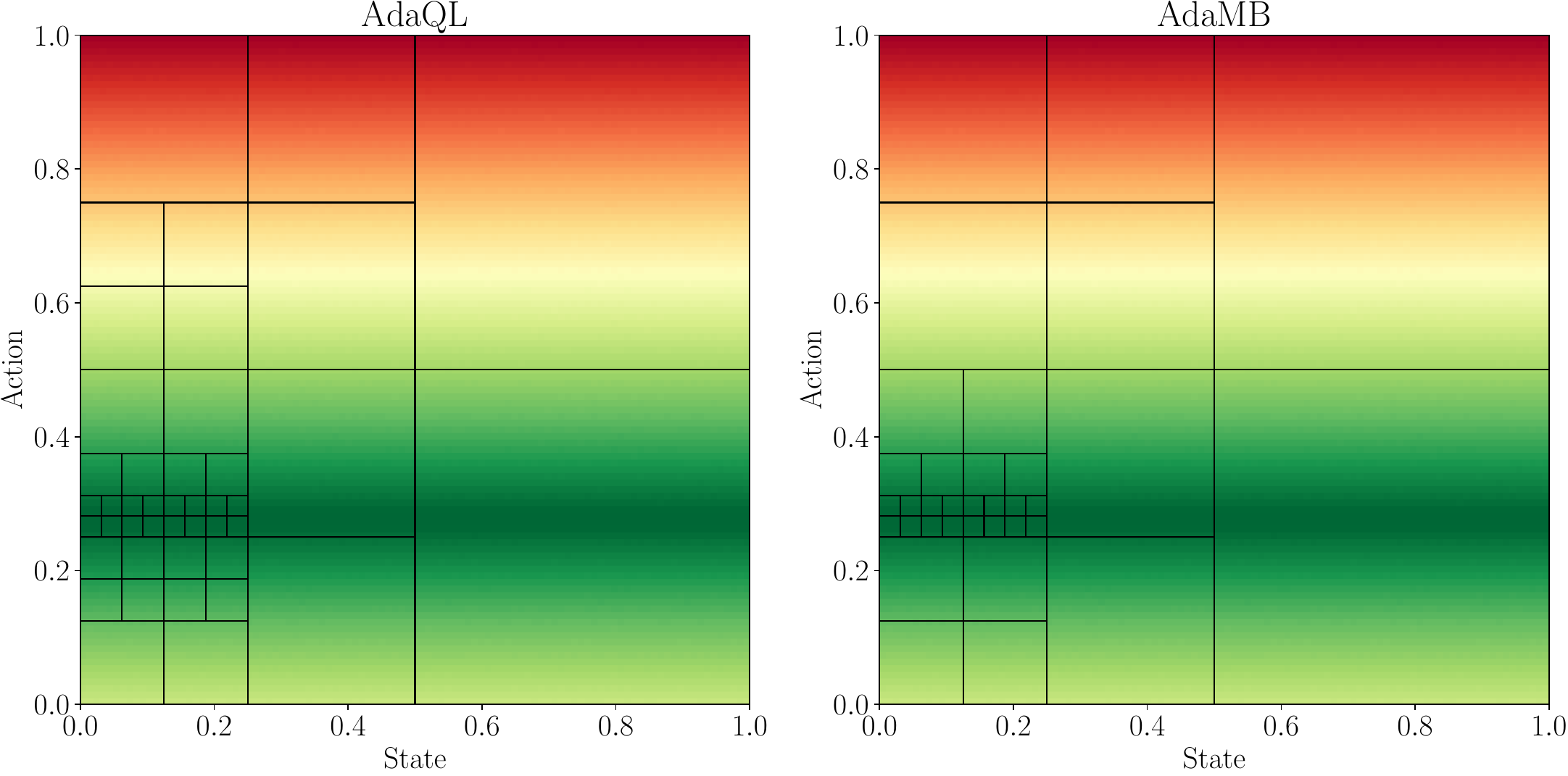}}
\subfigure[Beta $k=1, \alpha=0.25$]{\label{fig:2c}\includegraphics[width=.4\linewidth]{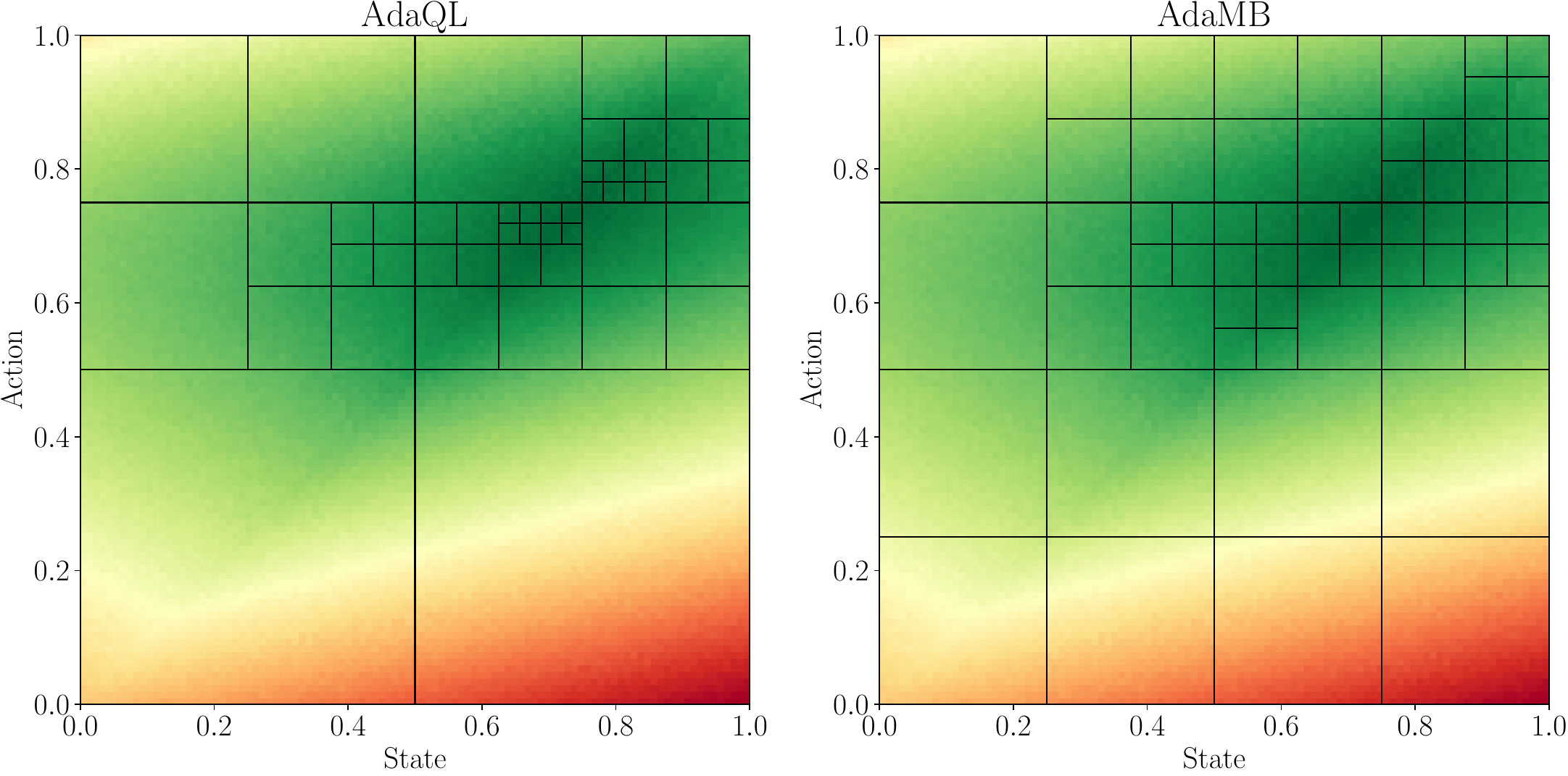}}
\subfigure[Shifting $k=1, \alpha=0.25$]{\label{fig:2d}\includegraphics[width=.4\linewidth]{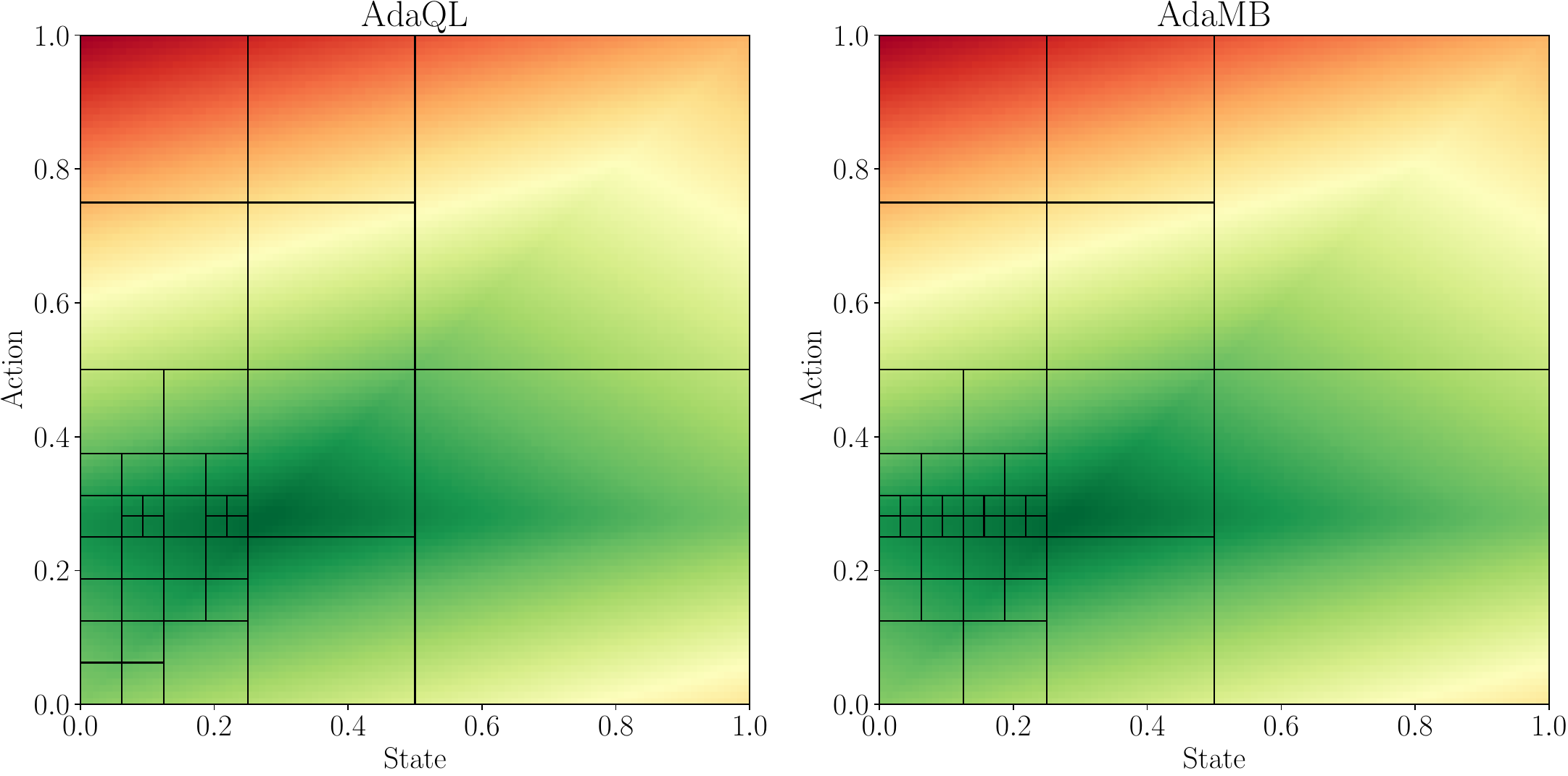}}
\subfigure[Beta $k=1, \alpha=1$]{\label{fig:2e}\includegraphics[width=.4\linewidth]{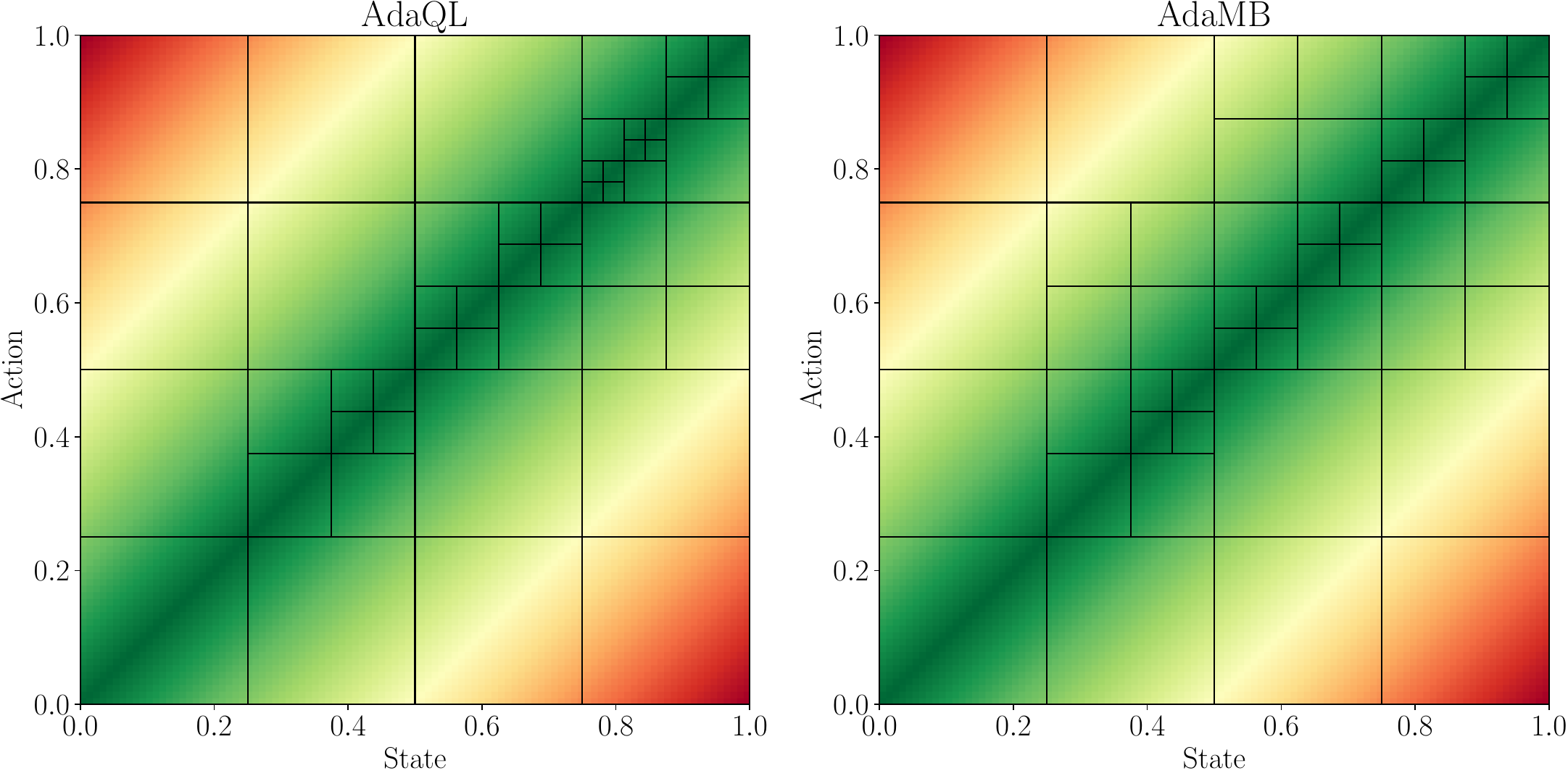}}
\subfigure[Shifting $k=1, \alpha=1$]{\label{fig:2f}\includegraphics[width=.4\linewidth]{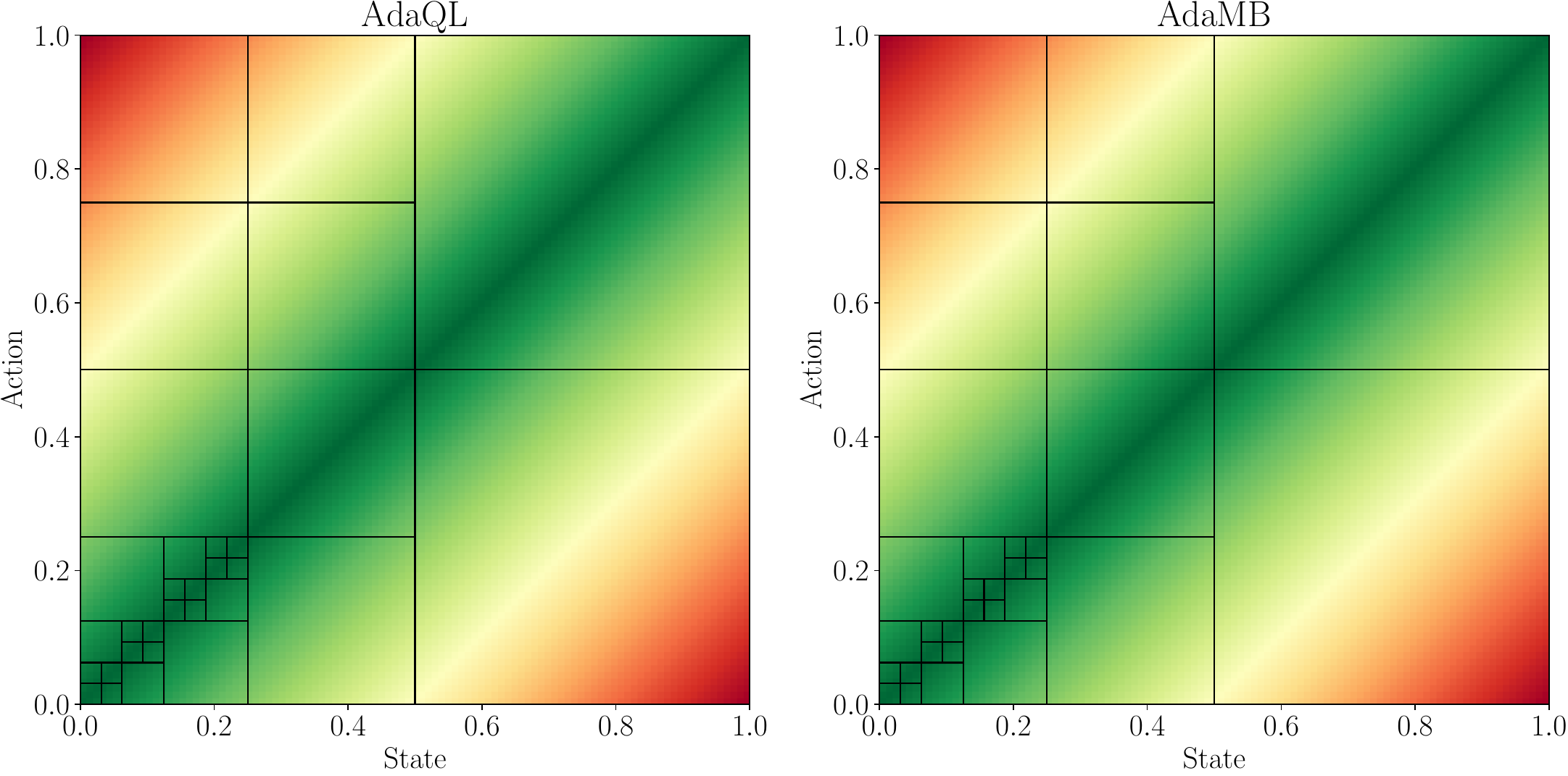}}
\caption{Comparison of the discretization observed between \AdaMB and \AdaQL for the ambulance environment with $k = 1$ at step $h = 2$.  While the zooming dimension gives no improvements on the state-space dependence, empirically we see the algorithm only maintaining a discretization on states induced by the visitation distribution of the optimal policy.}
\label{fig:disc_amb}
\end{figure}

\begin{figure}
\centering     %%% not \center
\subfigure[Laplace $d=1, \alpha=0$]{\label{fig:3a}\includegraphics[width=.49\linewidth]{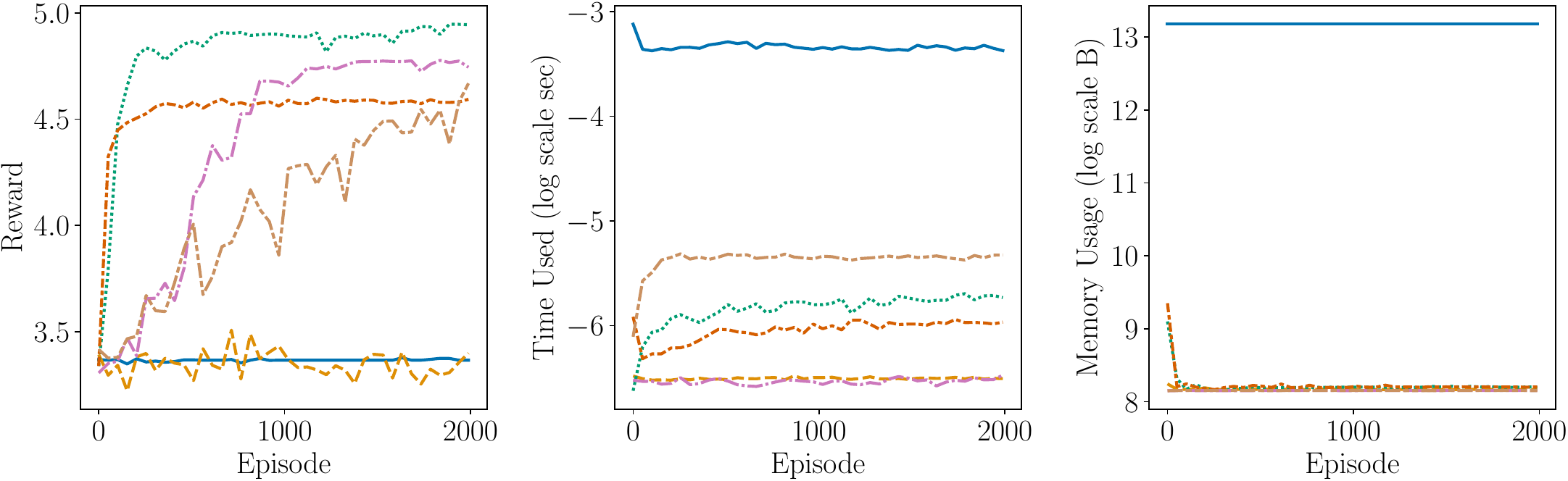}}
\subfigure[Laplace $d=1, \alpha = 0.1$]{\label{fig:3b}\includegraphics[width=.49\linewidth]{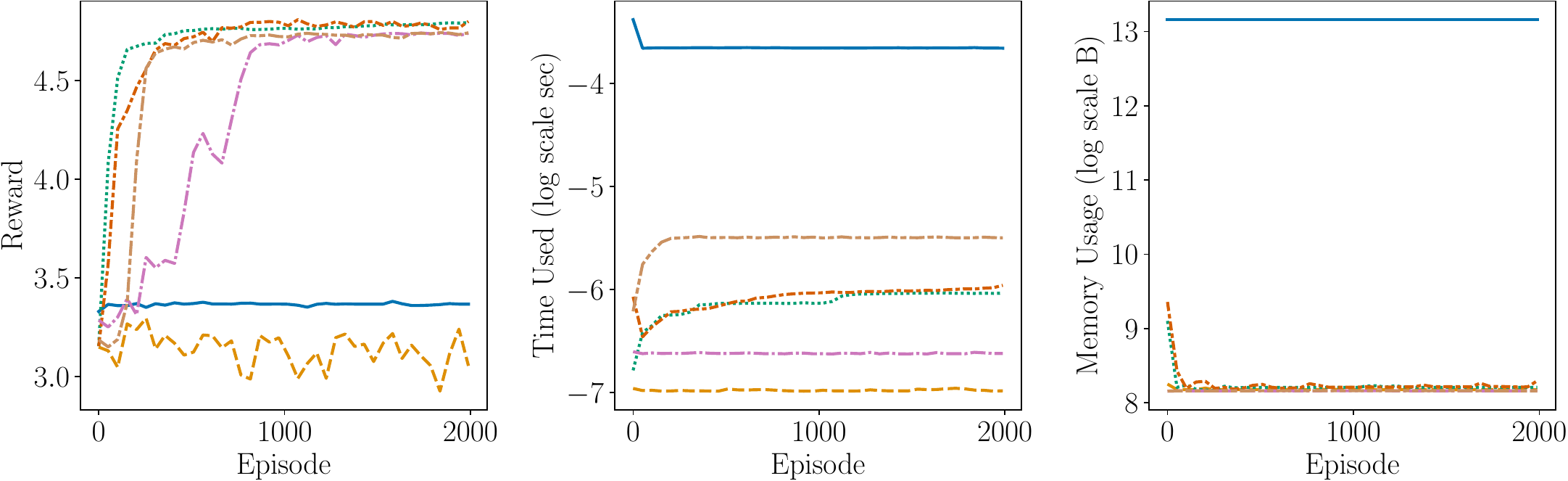}}
\subfigure[Laplace $d=1, \alpha=0.5$]{\label{fig:3c}\includegraphics[width=.49\linewidth]{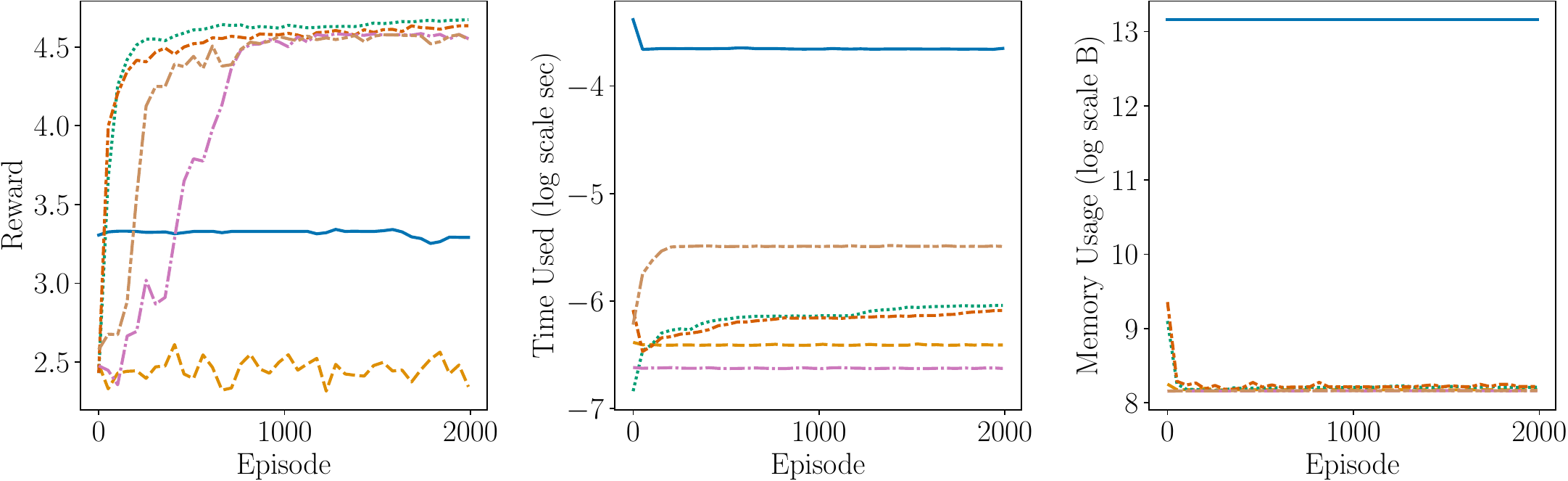}}
\subfigure[Laplace $d=2, \alpha=0$]{\label{fig:3d}\includegraphics[width=.49\linewidth]{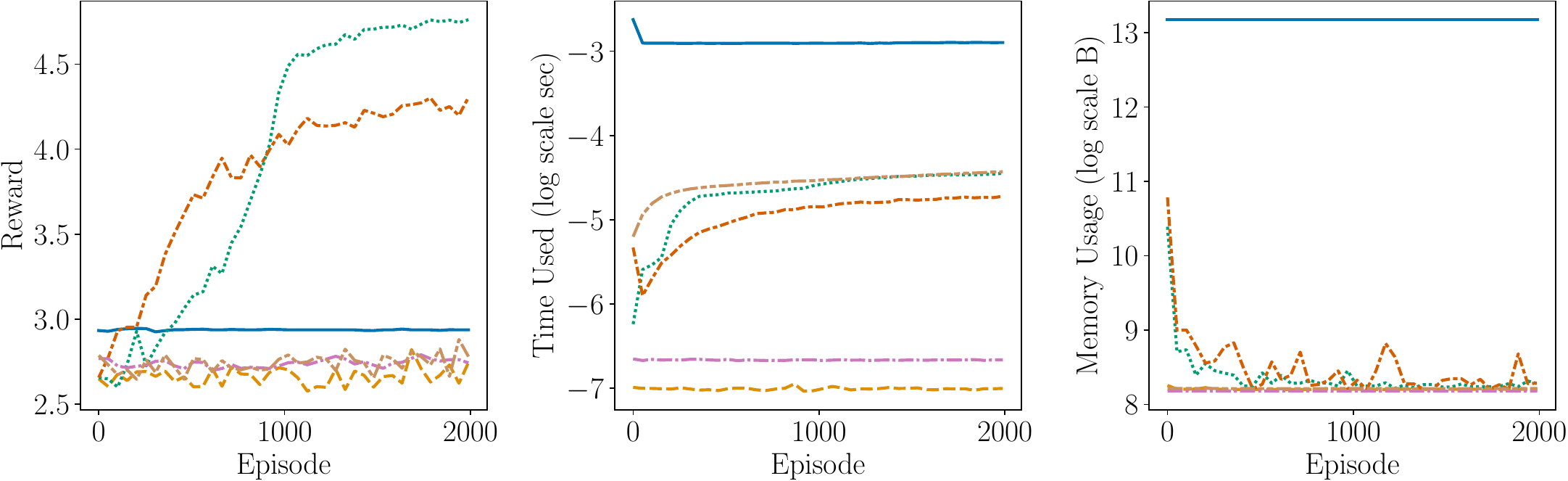}}
\subfigure[Laplace $d=2, \alpha=0.1$]{\label{fig:3e}\includegraphics[width=.49\linewidth]{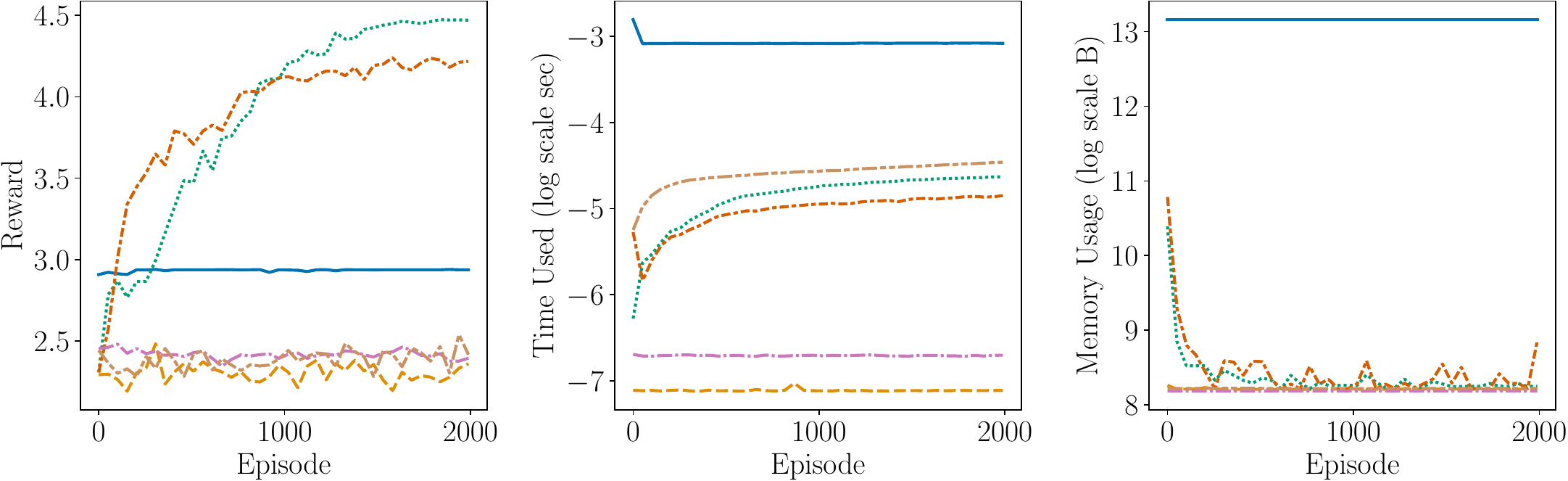}}
\subfigure[Laplace $d=2, \alpha=0.5$]{\label{fig:3f}\includegraphics[width=.49\linewidth]{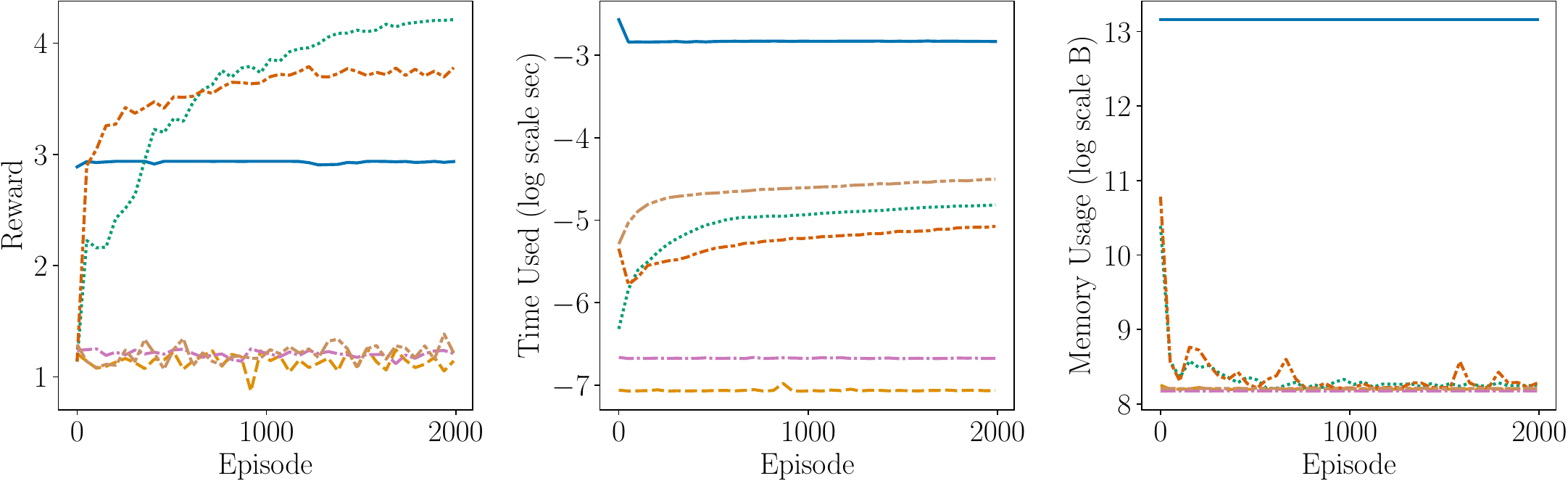}}
\subfigure[{Legend}]{\label{fig:3g}\includegraphics[width=.7\linewidth]{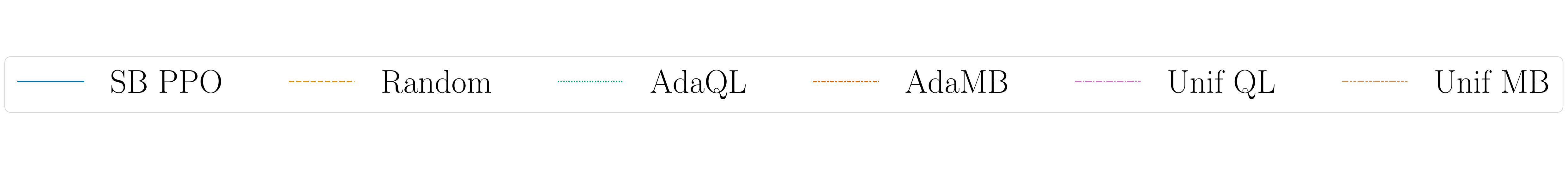}}
\caption{Comparison of the performance (including average reward, time complexity, space complexity) between \AdaMB, \AdaQL, \EpsMB, \EpsQL, \textsc{Random}, and \textsc{SB PPO} for the oil environment with Laplace rewards.  We see that the adaptive discretization algorithms outperform their uniform discretization counterparts, with \AdaMB and \AdaQL achieving similar levels of performance. \textsc{SB PPO} does not have enough episodes in order to learn any signal, so its performance is essentially that of a randomized algorithm.  When $\alpha = 0$ (as in \cref{fig:3a}) we note that the zooming dimension gives improved guarantees for the sample complexity, providing potential justification of the improved performance of the adaptive algorithms.}
\label{fig:oil_perf}
\end{figure}

\begin{figure}
\centering     %%% not \center
\subfigure[Quadratic $d=1, \alpha=0.1$]{\label{fig:4a}\includegraphics[width=.49\linewidth]{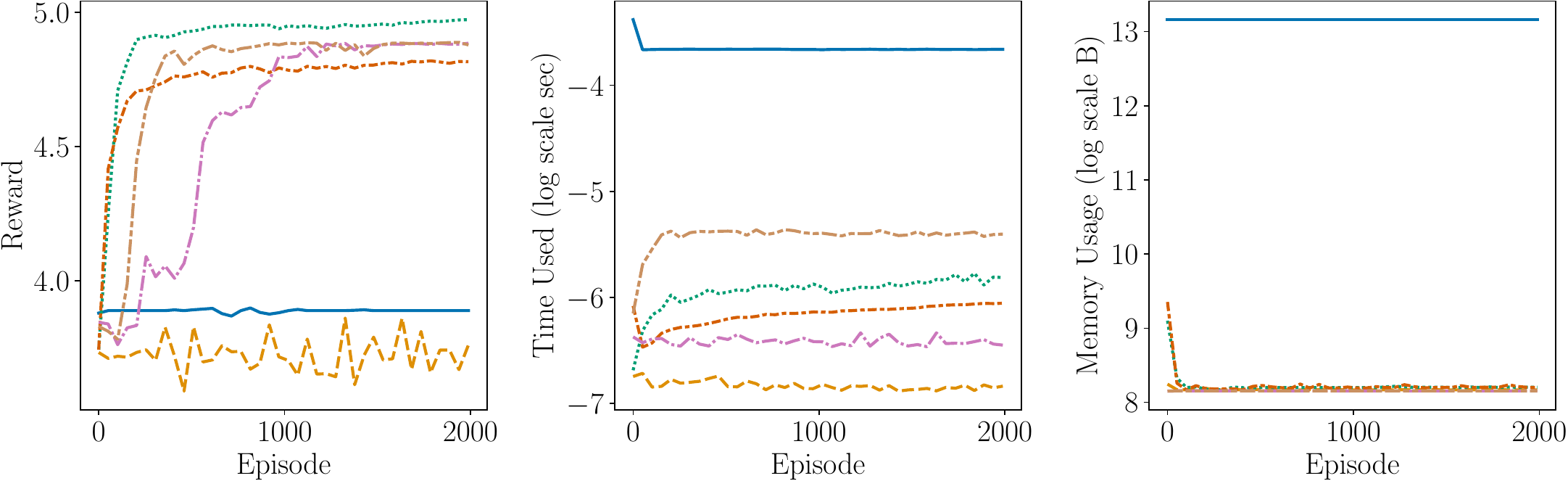}}
\subfigure[Quadratic $d=1, \alpha=0.1$]{\label{fig:4b}\includegraphics[width=.49\linewidth]{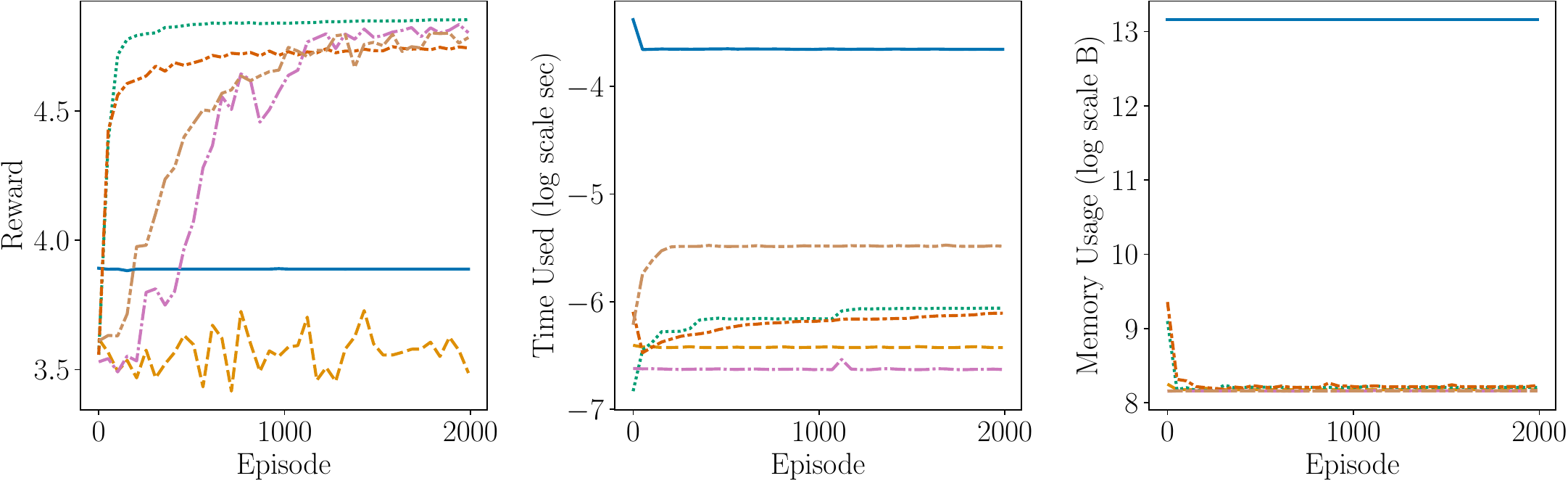}}
\subfigure[Quadratic $d=1, \alpha=0.5$]{\label{fig:4c}\includegraphics[width=.49\linewidth]{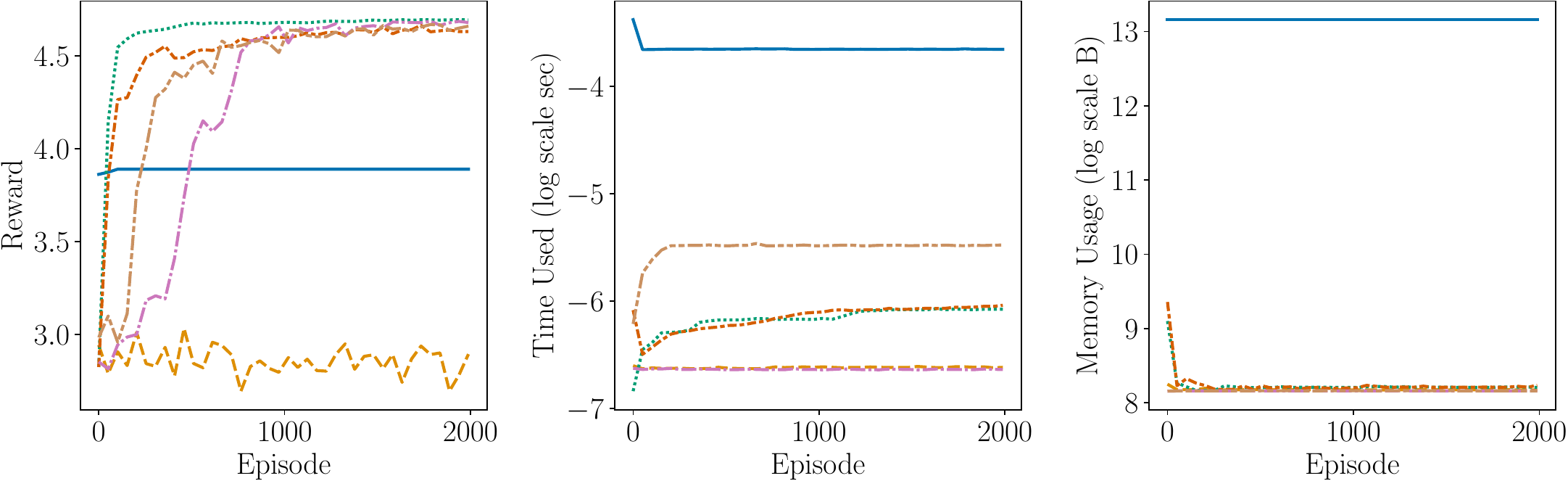}}
\subfigure[Quadratic $d=2, \alpha=0$]{\label{fig:4d}\includegraphics[width=.49\linewidth]{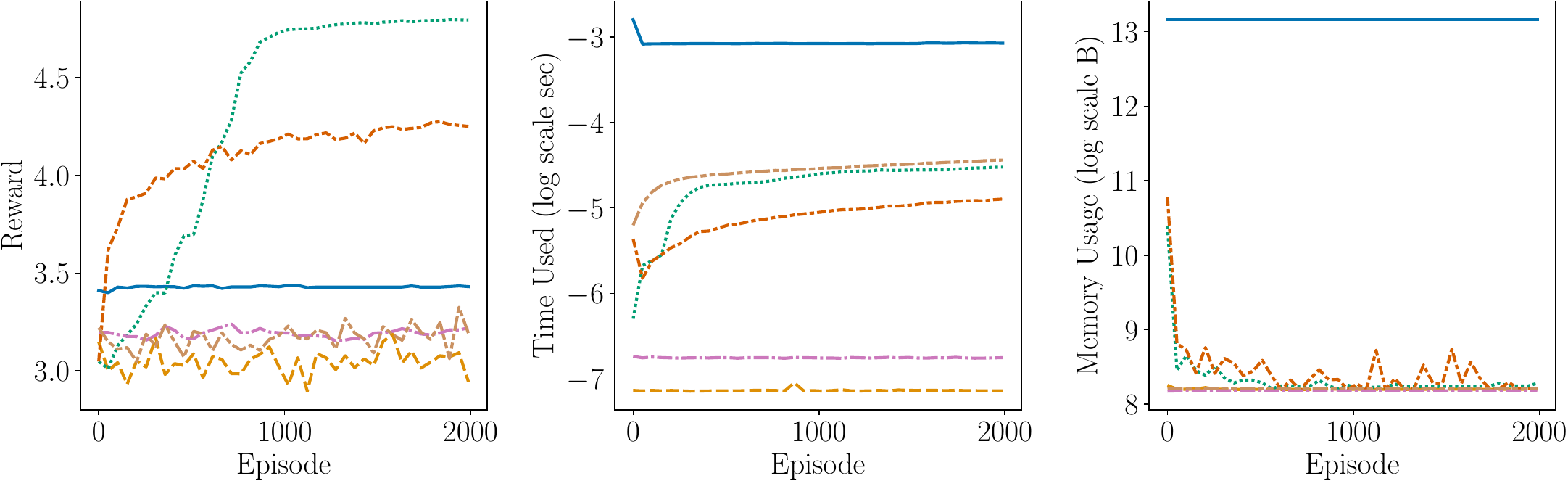}}
\subfigure[Quadratic $d=2, \alpha=0.1$]{\label{fig:4e}\includegraphics[width=.49\linewidth]{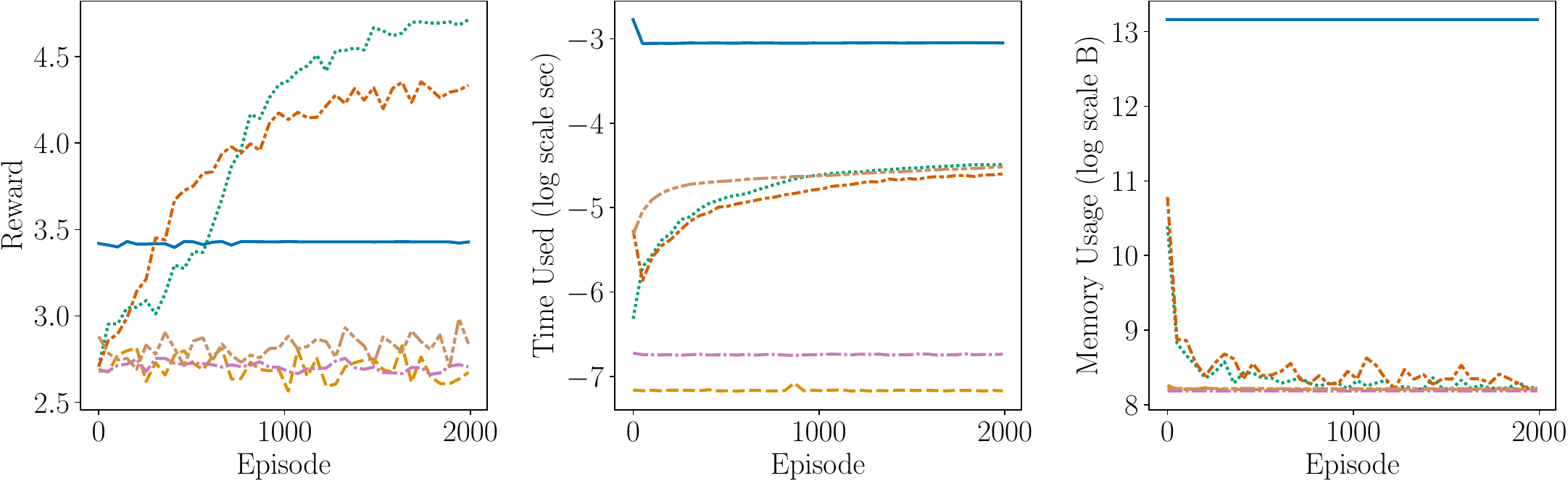}}
\subfigure[Quadratic $d=2, \alpha=0.5$]{\label{fig:4f}\includegraphics[width=.49\linewidth]{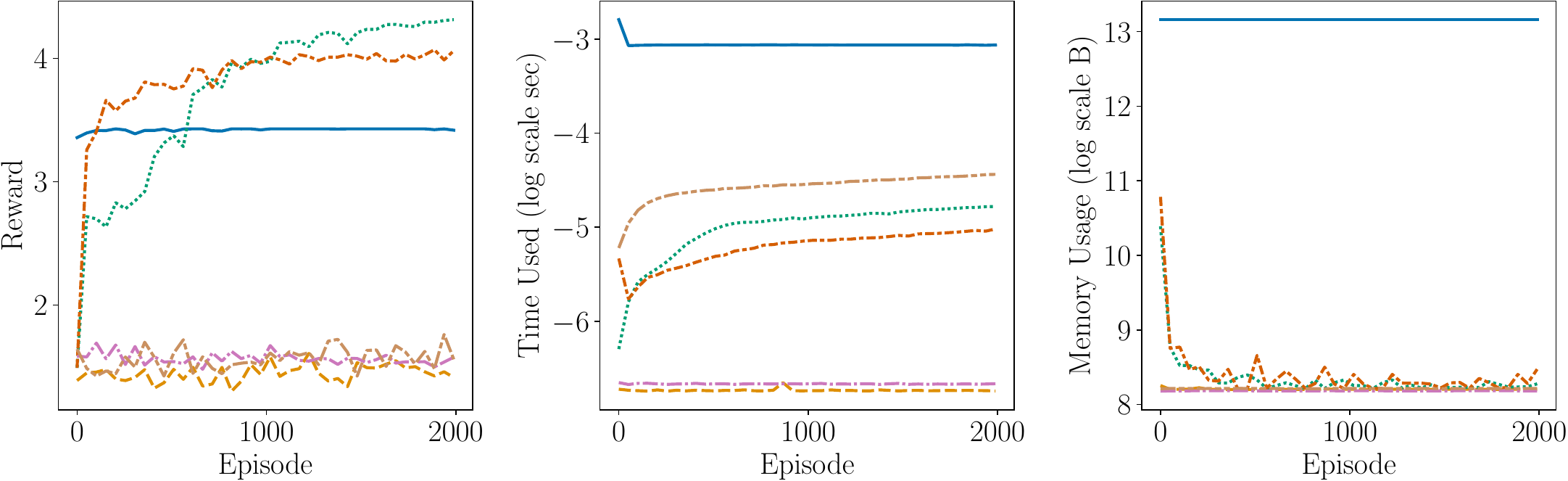}}
\subfigure[{Legend}]{\label{fig:4g}\includegraphics[width=.7\linewidth]{figures/OIL_LABEL_ONLY.pdf}}
\caption{Comparison of the performance (including average reward, time complexity, space complexity) between \AdaMB, \AdaQL, \EpsMB, \EpsQL, \textsc{Random}, and \textsc{SB PPO} for the oil environment with quadratic.  When $d = 2$ (as in \cref{fig:4e}) we see that the adaptive algorithms drastically outperform all other algorithms.  This can be attributed to the adaptive algorithms maintaining a smaller partition of the space, hence requiring exponentially less samples used for exploration.}
\label{fig:oil_quadratic_perf}
\end{figure}

\begin{figure}
\centering     %%% not \center
\subfigure[Beta $k=1, \alpha=0$]{\label{fig:5a}\includegraphics[width=.49\linewidth]{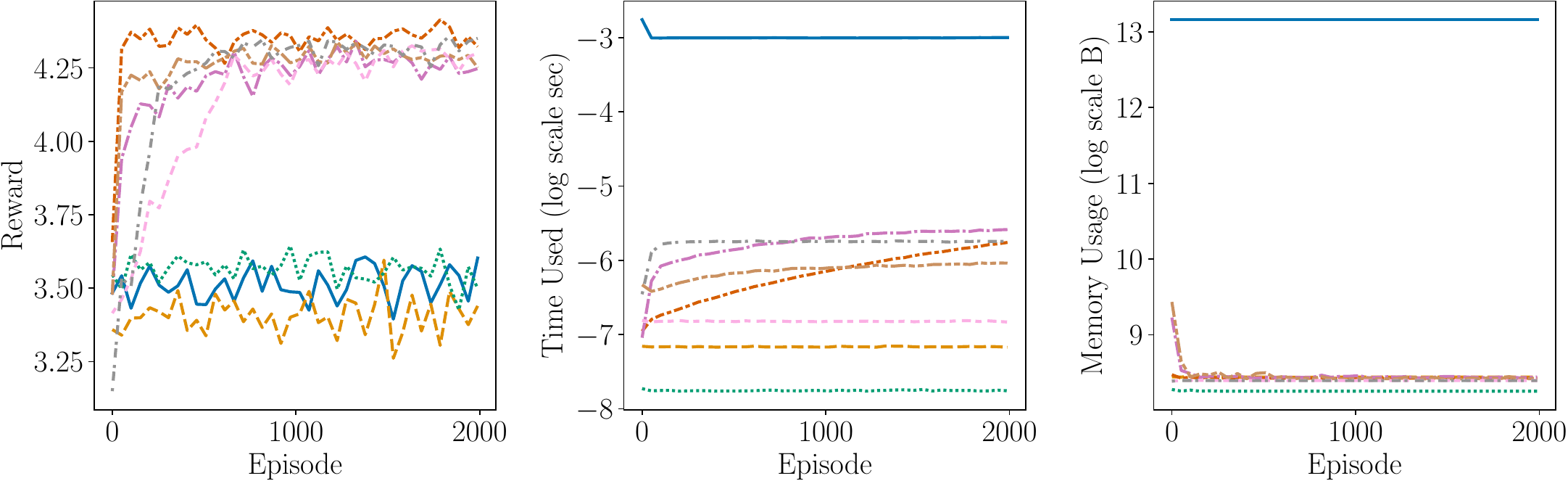}}
\subfigure[Beta $k=1, \alpha = 0.25$]{\label{fig:5b}\includegraphics[width=.49\linewidth]{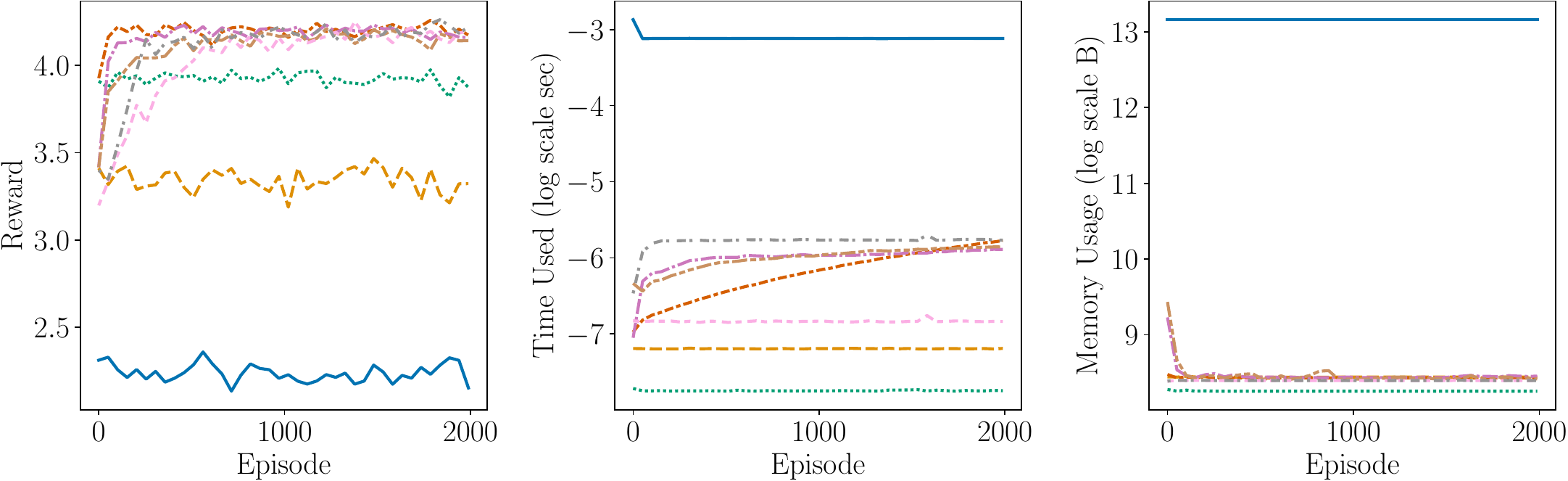}}
\subfigure[Beta $k=1, \alpha=1$]{\label{fig:5c}\includegraphics[width=.49\linewidth]{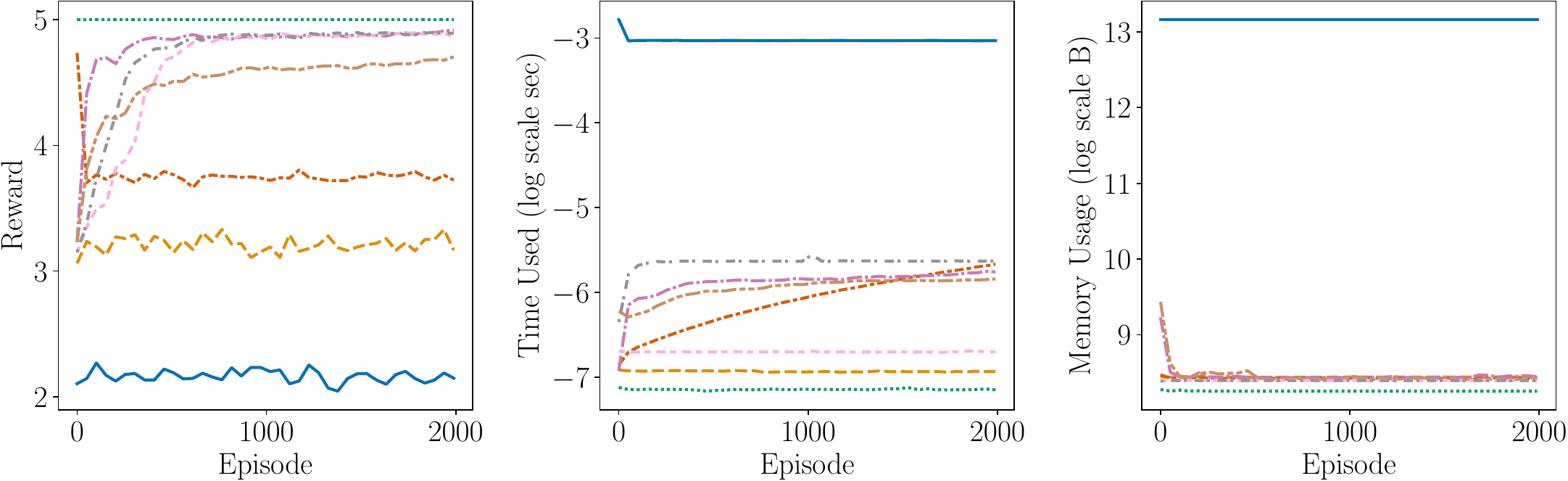}}
\subfigure[Beta $k=2, \alpha=0$]{\label{fig:5d}\includegraphics[width=.49\linewidth]{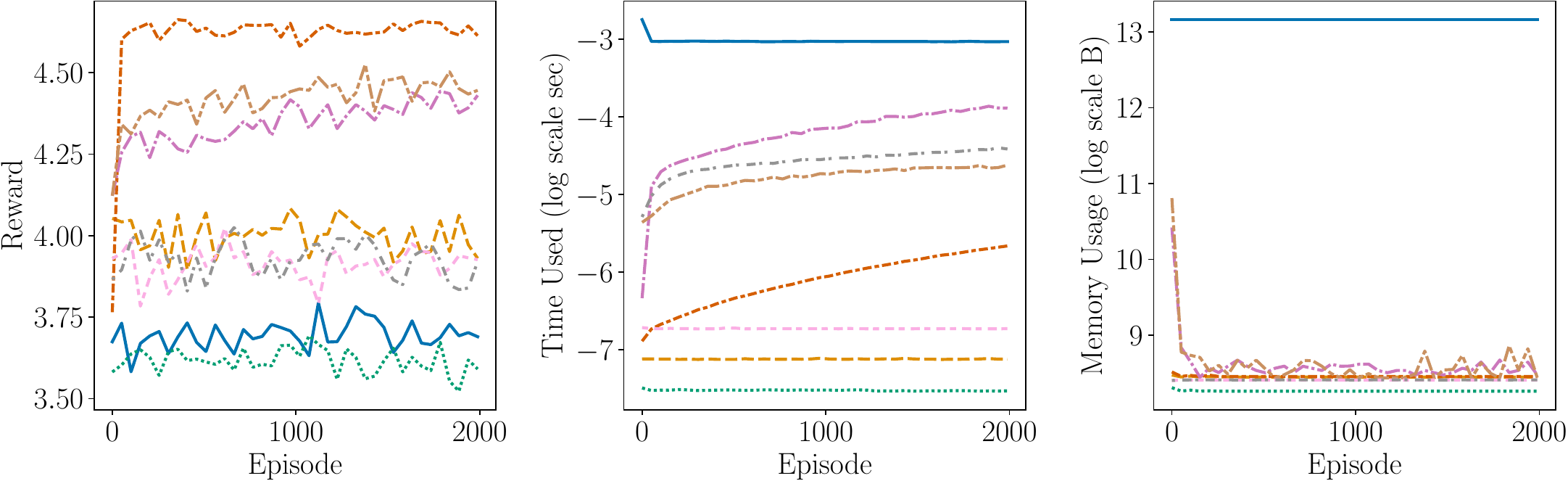}}
\subfigure[Beta $k=2, \alpha=0.25$]{\label{fig:5e}\includegraphics[width=.49\linewidth]{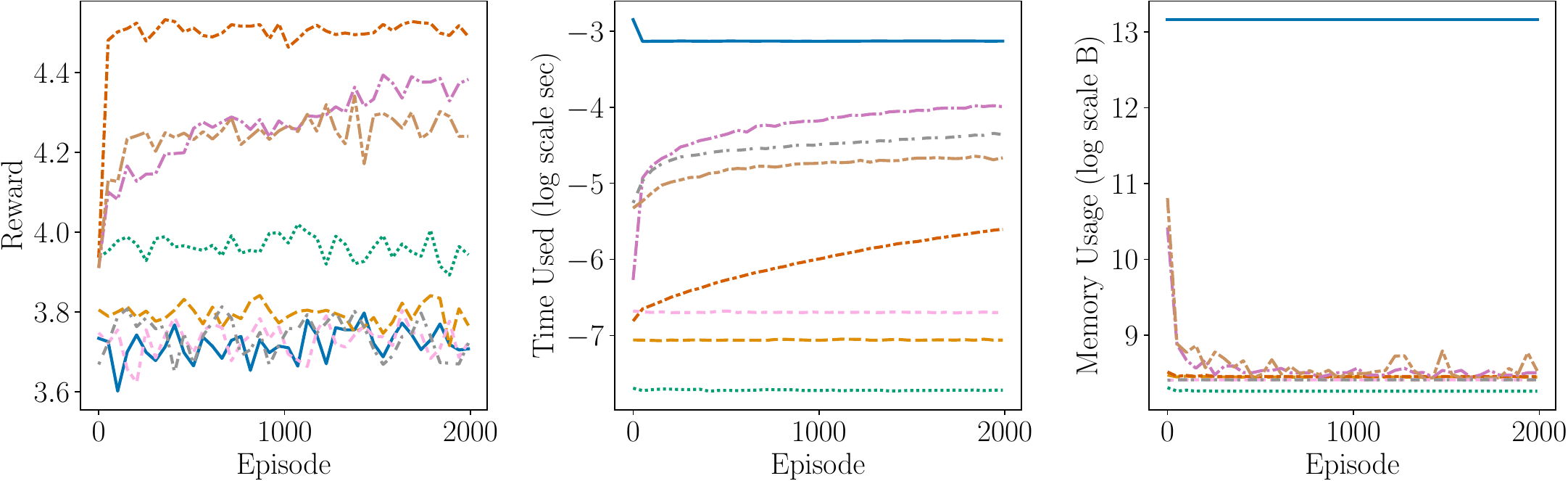}}
\subfigure[Beta $k=2, \alpha=1$]{\label{fig:5f}\includegraphics[width=.49\linewidth]{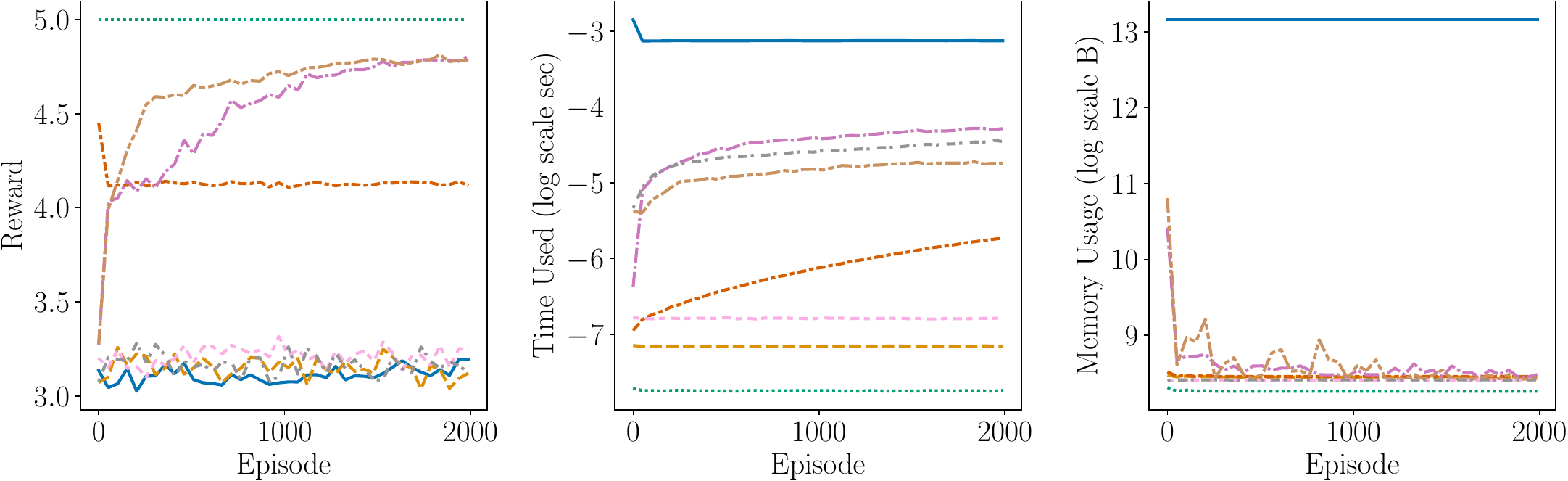}}
\subfigure[Legend]{\label{fig:5g}\includegraphics[width=.7\linewidth]{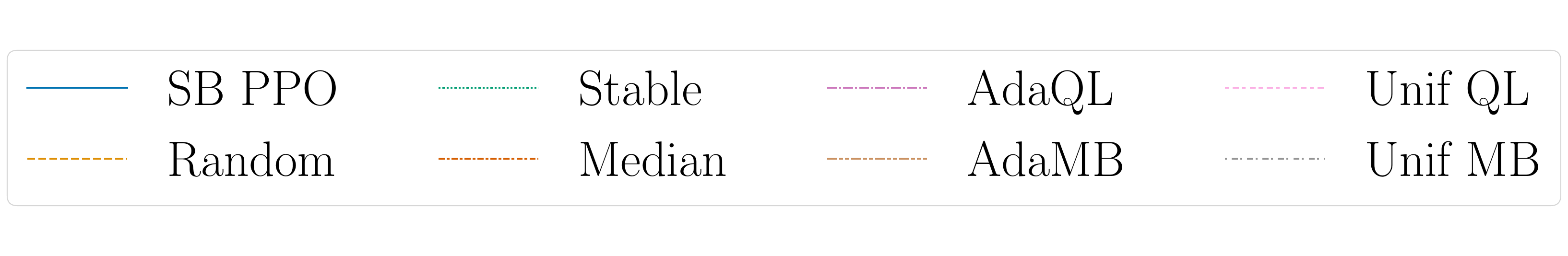}}
\caption{Comparison of the performance (including average reward, time complexity, space complexity) between \AdaMB, \AdaQL, \EpsMB, \EpsQL, \textsc{Random}, \textsc{Median}, \textsc{Stable}, and \textsc{SB PPO} for the ambulance environment with Beta arrivals.  With more ambulances (as in \cref{fig:5e}) we see that the adaptive algorithms outperform all other benchmarks except the \textsc{Median} algorithm (designed to work well for this particular problem instance).}
\label{fig:amb_perf}
\end{figure}

\begin{figure}
\centering     %%% not \center
\subfigure[Shifting $k=1, \alpha=0$]{\label{fig:6a}\includegraphics[width=.49\linewidth]{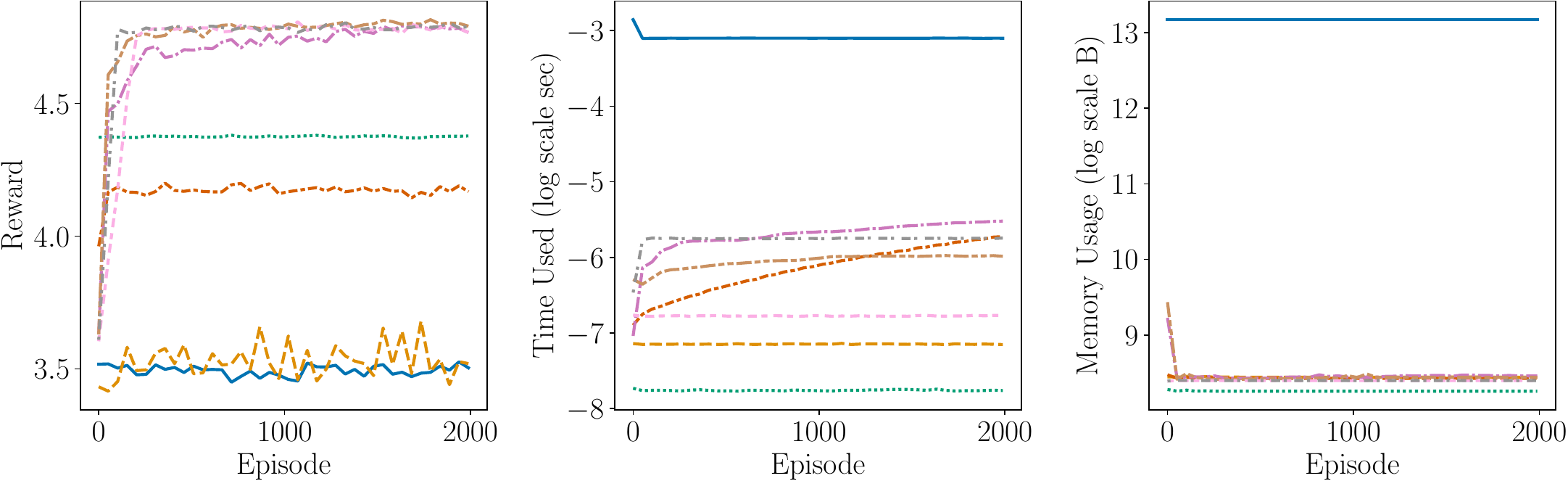}}
\subfigure[Shifting $k=1, \alpha=0.25$]{\label{fig:6b}\includegraphics[width=.49\linewidth]{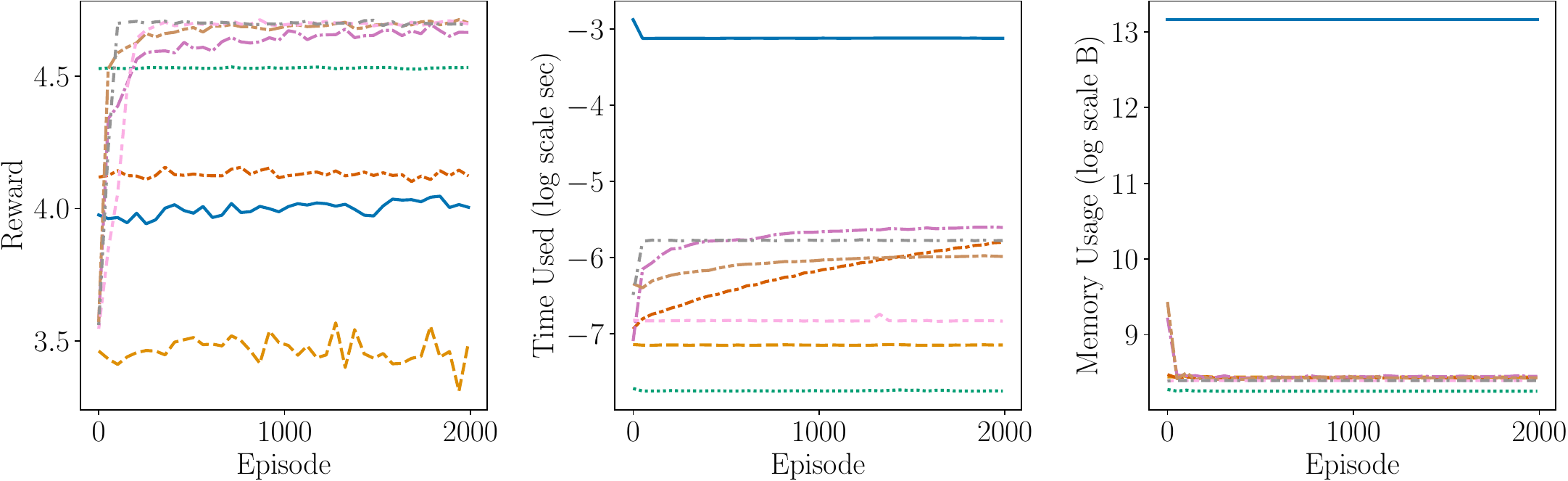}}
\subfigure[Shifting $k=1, \alpha=1$]{\label{fig:6c}\includegraphics[width=.49\linewidth]{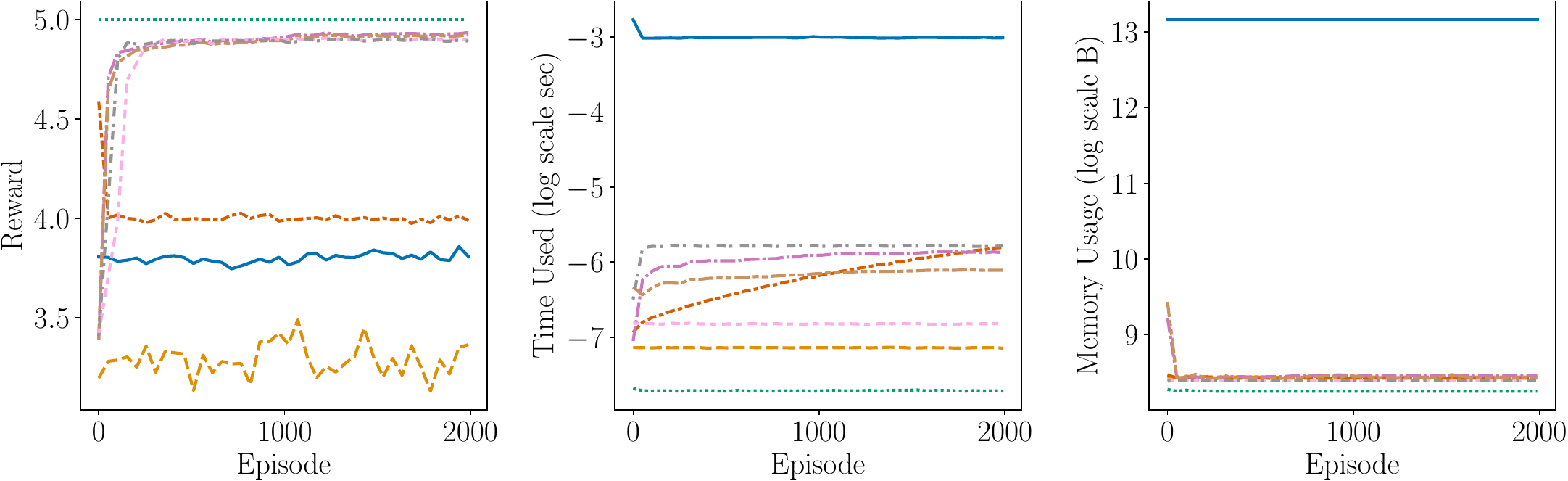}}
\subfigure[Shifting $k=2, \alpha=0$]{\label{fig:6d}\includegraphics[width=.49\linewidth]{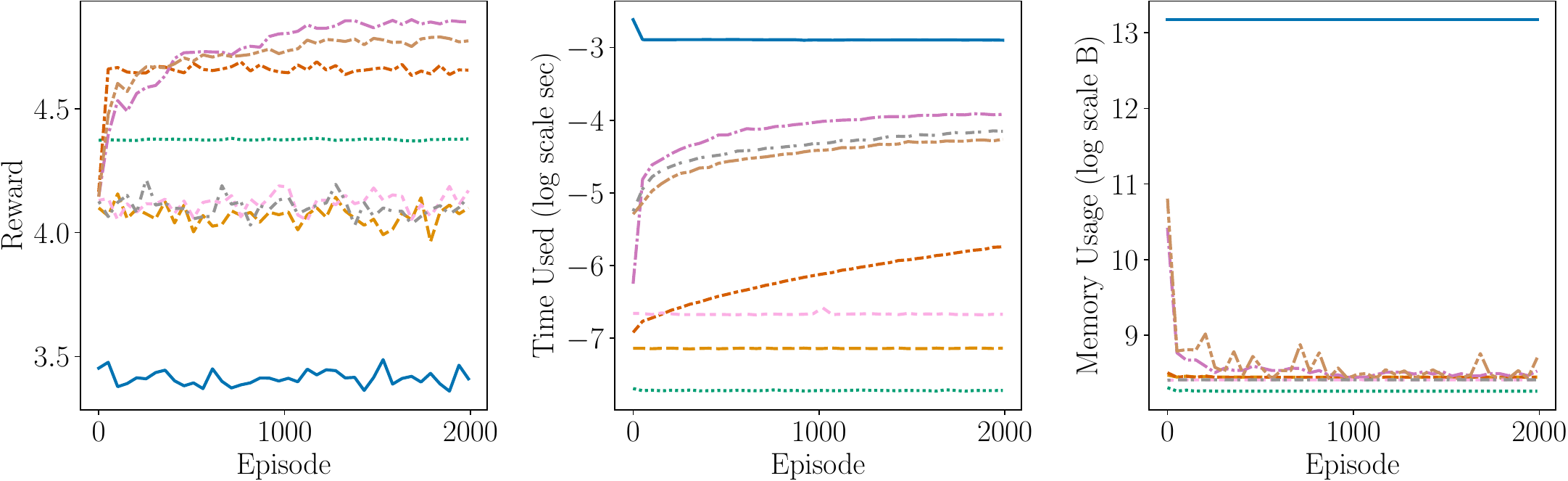}}
\subfigure[Shifting $k=2, \alpha=0.25$]{\label{fig:6e}\includegraphics[width=.49\linewidth]{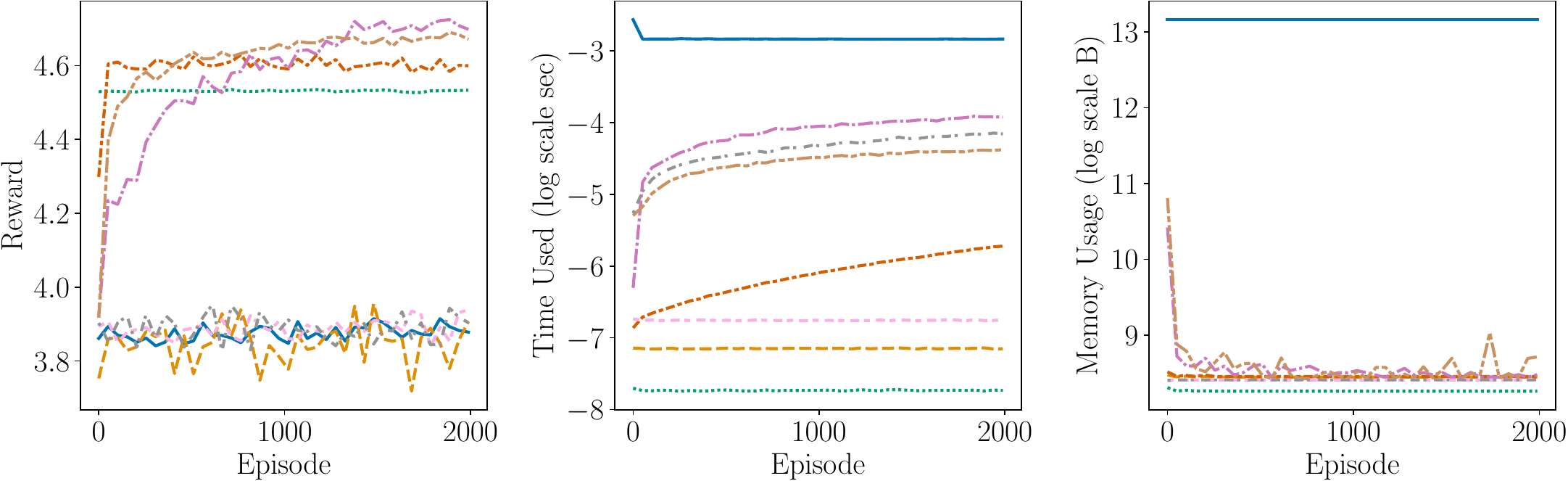}}
\subfigure[Shifting $k=2, \alpha=1$]{\label{fig:6f}\includegraphics[width=.49\linewidth]{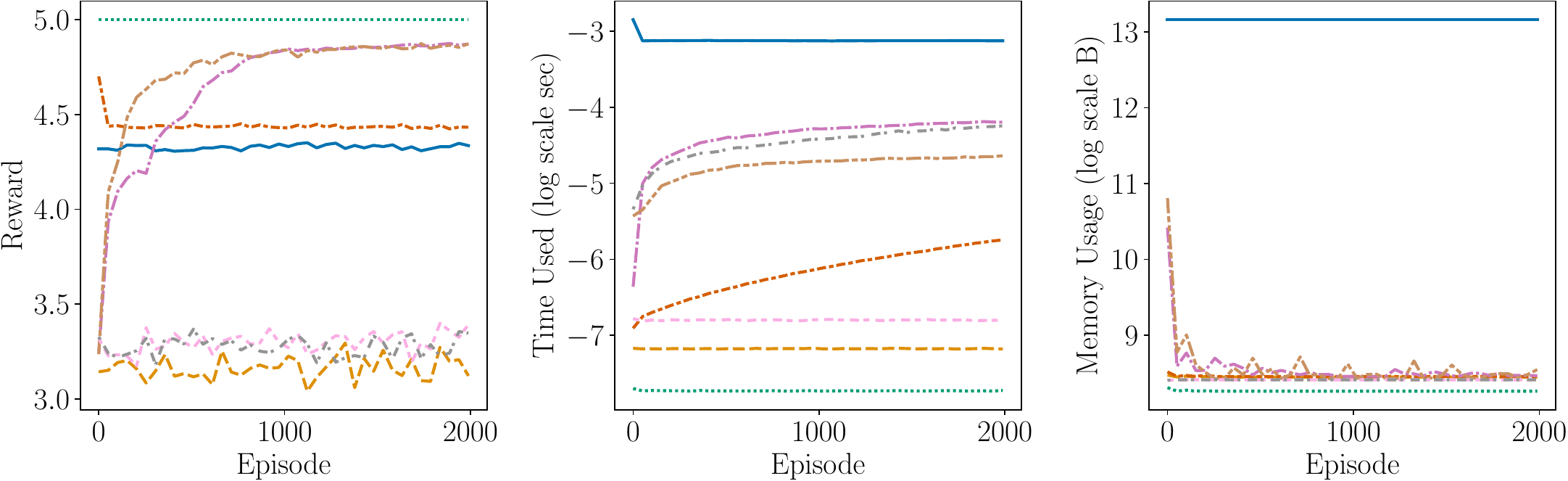}}
\subfigure[Legend]{\label{fig:6g}\includegraphics[width=.7\linewidth]{figures/AMB_LABEL_ONLY.pdf}}
\caption{Comparison of the performance (including average reward, time complexity, space complexity) between \AdaMB, \AdaQL, \EpsMB, \EpsQL, \textsc{Random}, \textsc{Median}, \textsc{Stable}, and \textsc{SB PPO} for the ambulance environment with shifting arrivals.  The experiment results highlight that \AdaMB and \AdaQL improve upon all of the other baseline algorithms.  However, we note that in settings when $\alpha = 0$ and we have explicit bounds on the zooming dimension (as in \cref{fig:6a}) that there is improved gains on the adaptive algorithms over the uniform counterparts.}
\label{fig:amb_perf_shifting}
\end{figure}

\clearpage

% Acknowledgments here

\end{document}